\documentclass{hld2026} %

\usepackage{titletoc}
\usepackage{booktabs}
\usepackage{amsmath}
\usepackage{listings}
\usepackage{umoline}
\usepackage{array}
\usepackage{mleftright}
\usepackage{amsfonts}
\usepackage{dirtytalk}
\usepackage{enumitem} 
\usepackage{mathtools}
\usepackage[english]{babel}
\usepackage{graphicx} 
\usepackage{bbm}

\DeclareMathOperator*{\argmin}{argmin}

\title[Effects of width-dependent Hyperparameters and $\ell_2$-regularization on two-layer ReLU]{Effects of width-dependent model hyperparameters and $\ell_2$-regularization on the loss landscape of two-layer ReLU networks}

\hldauthor{%
\Name{Haruka Eshima} \Email{haruka.eshima@oist.jp}\\
\addr {Okinawa Institute of Science and Technology, Okinawa, Japan} 
\AND
\Name{Makoto Yamada} \Email{makoto.yamada@oist.jp}\\
\addr {Okinawa Institute of Science and Technology, Okinawa, Japan} 
}

\begin{document}
\maketitle

\begin{abstract}
Understanding deep neural networks remains a central challenge in machine learning. In particular, the theoretical properties of even two-layer ReLU networks, especially in the presence of weight decay, remain poorly understood. To this end, we derive a sufficient condition on the hyperparameter settings under which the global minima collapse to the zero solution. Interestingly, our experiments reveal that using AdamW as an optimizer prevents the collapse of the learned parameters, whereas using SGD does not, which may help explain the success of AdamW in deep learning training. In addition, when restricting the input dimension to one, we derive an analytical solution for the globally optimal parameter sets of two-layer ReLU networks and show that $\ell_2$-regularization has a width-invariant effect on connectivity, but its dimensionality-reducing effect becomes stronger as the network width increases. These results provide insight into how width-dependent hyperparameters influence the geometry of regularized loss landscapes. 

\end{abstract}
\begin{keywords}
Loss Landscape, Two-layer ReLU Networks, L2-reguralization
\end{keywords}
\section{Introduction}
Loss landscape analysis characterizes quantitative variations of the loss function in finite-width machine learning models, and studying the loss landscape of two-layer linear unit activation networks (ReLU) \cite{Nair2010RectifiedLU} has been a key topic of interest in the machine learning community \citep{boursier2025benignity,karhadkar2024mildly, wang2022the, kim_exploring_2025, sonodaRidgeRegressionOverparametrized2021}. 
Recent work has extended the theoretical analysis to $\ell_2$-regularized two-layer ReLU networks, including characterizations of global optima and their geometric structure \citep{boursier2025benignity, kim_exploring_2025}. However, the effect of explicit width-dependent hyperparameter scalings \cite{yang_feature_2022, ghosh2026understanding, kosson2026weight} on these regularized loss landscapes does not appear to have been studied in a systematic way. 

In this paper, we analyze the global minima of two-layer ReLU neural networks for $\ell_2$-regularized loss in a width-dependent hyperparameter setting. We show that, when the $\ell_2$-regularization coefficient scales faster than the scaling of the network, zero is the unique global minimum for a sufficiently overparametrized model, and experimentally show that whether the learned parameters collapse to zero depends on the choice of optimizer. We further analyze the effects of $\ell_2$-regularization on the dimension and connectivity of global minima for one-dimensional input. Our results imply that the effect of $\ell_2$-regularization on the connectivity of global minima is negligible regardless of how the model width is scaled up, whereas its dimensional reduction effect becomes stronger.\\
\\
\textbf{Problem Setting.} The network model $f_{\theta}:\mathbb{R}^d\rightarrow\mathbb{R}$ is 
\begin{equation}\label{eq:g_2ReLu}
    f_{\theta}(x)=\frac{1}{\alpha}\sum_{j=1}^{m}(xu_{j})_{+}\omega_{j},
\end{equation}
where the trainable parameter of the model is $\theta\!=\!(u_i,\omega_i)_{i=1}^{m}\!\in \!(\mathbb{R}^d \times \mathbb{R})^m\!\eqqcolon\!\Theta$. $U\!\coloneq\!(u_1,~...~,u_m)\!\in\!\mathbb{R}^{d\times m}$ and $W\!\coloneq\!(\omega_1,~...~,\omega_m)^{T}\!\in\!\mathbb{R}^{m}$ are the first and second layer weights, $m$ is the number of hidden neurons, $(\cdot)_{+}=\max\{\cdot,0\}$ is the ReLU activation, and the hyperparameter $1/\alpha>0$ is the scaling factor.  Each component in the parameter is initialized independently by $u_{i,j}^{0}\sim N(0, \tau_1^2)$ and $\omega_i^{0}\sim N(0, \tau_2^2)$ where $\tau_1,\tau_2\in\mathbb{R}$ are hyperparameters. By abuse of notation, for input data $X\in\mathbb{R}^{n\times d}$, we write $f_{\theta}(X)$ to denote the output of the model $(f_{\theta}(x_1),~...~,f_{\theta}(x_n))^{T}\in\mathbb{R}^{n}$. We aim to analyze the effects of adding $\ell_2$-regularization $\frac{\beta}{2}\!\sum_{j=1}^{m}\!(u_j^{2}+\omega_{j}^{2})$ to the loss function $\ell(\theta)$. The value of the hyperparameter $\beta>0$ matches the weight decay coefficient for the gradient descent method. We introduce width-dependent hyperparameter setting $\alpha=m^{a}$, $\tau_1=m^{-b_1}$, $\tau_2=m^{-b_2}$, $\beta=m^{-\delta}$.
\section{Effects of width-dependent $\ell_2$-reguralization on loss landscape} 
\subsection{Collapse of global optima due to $\ell_2$-regularization}\label{subsec:collapse}
Theorem~\ref{thm:general_convex_zero_solution} shows that $\delta<a$ is a sufficient condition for a hyperparameter setting with which only the zero weight is the globally optimal parameter for the model \eqref{eq:g_2ReLu} with sufficiently large $m$. This collapse of global optima to zero happens for most of the convex loss functions $\ell(\theta)$ used in practice. We will discuss this result with numerical experiments in Section~\ref{sec:ne}.
\begin{theorem}\label{thm:general_convex_zero_solution}
We consider minimizing the following loss function 
\begin{equation}\label{eq:general_prob}
\min_{\theta\in\Theta}L_{\ell}(\theta)= \ell(\theta)+\!\frac{\beta}{2}\sum_{j=1}^{m}(\|u_j\|_2^{2}+\omega_{j}^{2})
\end{equation}
where $\ell$ is finite on $\Theta$, and can be written as a convex function of $f_{\theta}(X)$. For example, $\ell(\theta)=\|f_{\theta}(X)-Y\|^2$. \\ If $\delta\!<\!a$, then for sufficiently large $m$,
$\argmin_{\theta\in\Theta} L_{\ell}(\theta)\!=\!\{(0,0)_{i=1}^{m}\}$.
\end{theorem}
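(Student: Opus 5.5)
The plan is to compare $L_{\ell}(\theta)$ directly with $L_{\ell}(0)$. We linearize the convex data-fit term at the zero output and bound the network output by a rescaled "path norm". The $\ell_2$ penalty then dominates that path norm once $\beta \gg 1/\alpha$, which is exactly the regime $\delta<a$.

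First, write $\ell(\theta)=g(f_\theta(X))$ with $g:\mathbb{R}^n\to\mathbb{R}$ convex and finite. We assume, as in the example $g(z)=\|z-Y\|^2$, that $g$ is finite on all of $\mathbb{R}^n$ and does not depend on $m$. Then $\partial g(0)\neq\emptyset$, so we fix a subgradient $s\in\partial g(0)$. This gives
\begin{equation*}
\ell(\theta)\ \ge\ g(0)+\langle s, f_\theta(X)\rangle\ \ge\ \ell(0)-\|s\|_2\,\|f_\theta(X)\|_2 .
\end{equation*}
Here $\ell(0)=g(0)$ because $f_0\equiv 0$.

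Second, we bound the output. Using $(t)_+\le |t|$ row-wise, we get $\|(Xu_j)_+\|_2\le\|Xu_j\|_2\le\|X\|_{\mathrm{op}}\|u_j\|_2$. Hence
\begin{equation*}
\|f_\theta(X)\|_2\le \frac{1}{\alpha}\sum_{j=1}^m \|(Xu_j)_+\|_2\,|\omega_j| \le \frac{\|X\|_{\mathrm{op}}}{\alpha}\sum_{j=1}^m \|u_j\|_2|\omega_j| .
\end{equation*}
For the regularizer, AM--GM gives
\begin{equation*}
\tfrac{1}{2}\bigl(\|u_j\|_2^2+\omega_j^2\bigr)=\|u_j\|_2|\omega_j|+\tfrac12\bigl(\|u_j\|_2-|\omega_j|\bigr)^2 .
\end{equation*}
Combining the three displays yields
\begin{equation*}
L_\ell(\theta)-L_\ell(0)\ \ge\ \Bigl(\beta-\frac{C}{\alpha}\Bigr)\sum_{j=1}^m\|u_j\|_2|\omega_j| \;+\;\frac{\beta}{2}\sum_{j=1}^m\bigl(\|u_j\|_2-|\omega_j|\bigr)^2,
\end{equation*}
where $C=\|s\|_2\|X\|_{\mathrm{op}}$ is independent of $m$.

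Third, we plug in the scalings. We have $\beta-C/\alpha=m^{-\delta}-Cm^{-a}=m^{-\delta}(1-Cm^{\delta-a})$, and this is strictly positive for all sufficiently large $m$ because $\delta<a$. For such $m$ both terms on the right are nonnegative, so $0$ is a global minimizer. Now suppose equality $L_\ell(\theta)=L_\ell(0)$ holds. Then every term must vanish, so $\|u_j\|_2|\omega_j|=0$ and $\|u_j\|_2=|\omega_j|$ for all $j$. This forces $u_j=0$ and $\omega_j=0$ for every $j$, which gives uniqueness of the minimizer.

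The main obstacle is the first step, namely securing a subgradient of $g$ at $0$ that is uniform in $m$. This requires $g$ to be finite and convex on a neighborhood of $0$ in $\mathbb{R}^n$, not merely on the image of $\theta\mapsto f_\theta(X)$. Since $X$, $Y$ and $g$ do not depend on $m$, I would state this as the precise reading of the hypothesis "convex function of $f_\theta(X)$" and check it for the standard losses. For the square loss it holds with $s=-2Y$. Once it holds, everything else is an elementary inequality, and the width dependence enters only through the ratio $\beta\alpha=m^{a-\delta}\to\infty$.
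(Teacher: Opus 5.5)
Your proposal is correct and follows essentially the same route as the paper: fix a subgradient of the convex loss at the zero output (the paper obtains it from the supporting hyperplane theorem, assuming as you do that the convex function is finite on all of $\mathbb{R}^n$), bound $\|f_\theta(X)\|_2\le\frac{\|X\|_{\mathrm{op}}}{\alpha}\sum_j\|u_j\|_2|\omega_j|$, and use AM--GM so that $\alpha\beta=m^{a-\delta}\to\infty$ lets the penalty dominate. The only difference is cosmetic: the paper applies AM--GM as an inequality to get $L_\ell(\theta)\ge L_\ell(0)+\frac12\bigl(\beta-\|s\|_2\|X\|_{\mathrm{op}}/\alpha\bigr)\sum_j(\|u_j\|_2^2+\omega_j^2)$, which gives uniqueness at once, whereas you keep the remainder $(\|u_j\|_2-|\omega_j|)^2$ and deduce uniqueness from both sums vanishing.
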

Theorem~\ref{thm:general_convex_zero_solution} and experimental results (Section~\ref {sec:ne} and Appendix~\ref{app:ne}) tell us undesirable values of the weight decay coefficient $\beta$ for the gradient descent algorithm.
\subsection{Change in dimension and connectivity of global optima due to $\ell_2$-regularization}\label{subsec:go}
When $\delta>a$, the regularization is not necessary strong enough to force the global minima to collapse, and our focus becomes how $\ell_2$-regularization changes the geometry of the nontrivial global minima. This question is motivated by recent work showing that $\ell_2$-regularized two-layer ReLU networks admit exact convex formulations, which is used to study optimal solution sets \cite{pilanci_neural_2020,mishkin2023optimal}. 
We specifically focus on the one-dimensional ReLU setting \eqref{eq:g_min_problem}. This setting exposes geometric structure that is difficult to access in the higher-dimensional setting considered by \citep{pilanci_neural_2020,mishkin2023optimal}, which then allows us to derive closed-form descriptions of the $\ell_2$-regularized or unregularized global minima (Theorem~\ref{prop:sqrd_loss},~\ref{thm:global_optim_explicit_main}) and to compare their dimension, boundedness, and connectivity directly. 
\begin{equation}\label{eq:g_min_problem}\min_{\theta \in \mathbb{R}^{2m}} L(\theta)\!=\! \frac{1}{2}\left\|\frac{1}{\alpha}\!\sum_{j=1}^{m}(Xu_j)_{+}\!\omega_{j}\!-\!Y\right\|_{2}^{2}+\frac{\beta}{2}\sum_{j=1}^{m}(u_j^{2}+\omega_{j}^{2}). 
\end{equation}
For notations, we introduce $D(S)\!\coloneq\!\operatorname{Diag}(\mathbf{1}[X\geq 0])$ and $D(S^{c})\!\coloneq\!\operatorname{Diag}(\mathbf{1}[X\leq 0])$ where $\mathbf{1}[X\leq0]$ is an indicator vector with $(\mathbf{1}[X\leq0])_i\!=\!1[x_i\le0]$. Also, $X_S\!\coloneq\!\operatorname{Diag}(\mathbf{1}[X\geq 0])X$ and $X_{S^{c}}\!\coloneq\!\operatorname{Diag}(\mathbf{1}[X\leq0])X$. To prevent trivial solutions, we assume that $X$ has at least one positive element and one negative element (so $\|X_S\|_2\neq0$ and $\|X_{S^c}\|_2\neq0$), that the input training data $X$ and the output training data $Y$ are independent of $m$, and that $0<\min\{|X_S^{T}Y|,~|X_{S^C}^{T}Y|\}$.
\begin{theorem}\label{prop:sqrd_loss}
The set of globally optimal parameters $\varphi^*(m)\coloneq\{\theta\in\mathbb{R}^{2m}~:~\theta\!=\!\operatorname{argmin}_{\theta\in\mathbb{R}^{2m}}L(\theta)$ with $\beta=0\}$ for unregularized squared loss is 
\begin{equation}
\varphi^*(m) \!=\!
\Bigl\{(u_j,\omega_j)_{j=1}^{m}\ \Bigm|\ 
\sum_{j:\,u_j\ge0} u_j\omega_j
= \frac{\alpha X_S^{\top}Y}{\lVert X_S\rVert_2^2}, 
\sum_{j:\,u_j\le0} u_j\omega_j
= \frac{\alpha X_{S^{c}}^{\top}Y}{\lVert X_{S^{c}}\rVert_2^2}
\Bigr\}.
\end{equation}
\end{theorem}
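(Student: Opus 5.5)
The approach is to reduce the problem to an unconstrained least-squares problem in two scalar variables. This works because, for one-dimensional input, the network output depends on $\theta$ only through two aggregated quantities.

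\textbf{Step 1: the output depends on two scalars.} Fix a neuron $j$. If $u_j\ge 0$, then $(x_iu_j)_+=u_j\,x_i\,1[x_i\ge0]$, so $(Xu_j)_+=u_jX_S$. If $u_j\le 0$, then $(x_iu_j)_+=u_jx_i$ exactly when $x_i\le 0$, so $(Xu_j)_+=u_jX_{S^c}$. When $u_j=0$ both expressions vanish, so counting such neurons in both index sets is harmless. Define
\begin{equation*}
p(\theta)\coloneq\sum_{j:\,u_j\ge0}u_j\omega_j,\qquad q(\theta)\coloneq\sum_{j:\,u_j\le0}u_j\omega_j .
\end{equation*}
Then $f_\theta(X)=\frac1\alpha\bigl(p(\theta)X_S+q(\theta)X_{S^c}\bigr)$. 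With $\beta=0$ the loss becomes
\begin{equation*}
L(\theta)=\tfrac12\bigl\|\tfrac1\alpha(pX_S+qX_{S^c})-Y\bigr\|_2^2=:g(p,q).
\end{equation*}

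\textbf{Step 2: minimize $g$ over $\mathbb{R}^2$.} The vectors $X_S$ and $X_{S^c}$ have disjoint supports, since entries with $x_i=0$ are zero in both. Hence $X_S^\top X_{S^c}=0$. Expanding $g$, the cross term vanishes and the problem decouples into two one-dimensional strictly convex quadratics, using $\|X_S\|_2,\|X_{S^c}\|_2\neq0$. Their unique minimizers are
\begin{equation*}
p^*=\frac{\alpha X_S^\top Y}{\|X_S\|_2^2},\qquad q^*=\frac{\alpha X_{S^c}^\top Y}{\|X_{S^c}\|_2^2}.
\end{equation*}

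\textbf{Step 3: the pair $(p^*,q^*)$ is attainable.} Assume $m\ge2$. Take $u_1=1,\ \omega_1=p^*$ and $u_2=-1,\ \omega_2=-q^*$, and set all other neurons to zero. This gives $p(\theta)=p^*$ and $q(\theta)=q^*$. Therefore $\min_\theta L=\min_{(p,q)}g=g(p^*,q^*)$. Since $g$ has a unique minimizer, $\theta$ is a global minimizer of $L$ if and only if $(p(\theta),q(\theta))=(p^*,q^*)$, which is exactly the set $\varphi^*(m)$ in the statement.

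\textbf{Main obstacle.} The only delicate point is attainability, specifically the width requirement. For $m=1$, a single neuron has either $u_1\ge0$ or $u_1\le0$. It therefore cannot make both $p$ and $q$ nonzero, and the assumption $\min\{|X_S^\top Y|,|X_{S^c}^\top Y|\}>0$ forces both $p^*$ and $q^*$ to be nonzero. In that case the set in the statement is empty and the true minimizers differ. So I would state explicitly that the result holds for $m\ge 2$, which is implicit in the width-scaling regime of the paper.

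The remaining checks are minor: that neurons with $u_j=0$ are consistently handled by both sums, and that $\operatorname{argmin}$ is read as the set of minimizers.
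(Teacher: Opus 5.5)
Your proof is correct and follows essentially the same route as the paper: rewrite the unregularized loss as a least-squares problem in the two aggregated scalars $\sum_{u_j\ge0}u_j\omega_j$ and $\sum_{u_j\le0}u_j\omega_j$, solve the decoupled normal equations, and characterize the minimizers as exactly those $\theta$ whose two sums equal the optimal values. Your explicit checks that $X_S^\top X_{S^c}=0$ and that the optimal pair is attainable when $m\ge2$ fill in steps the paper leaves implicit. Your remark about $m=1$ is also correct: there the true set of minimizers is nonempty, while the displayed set is empty.
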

Adding $\ell_2$-regularization ($\beta>0$) restricts the solution set because we need $|u_i|=|\omega_i|~\forall i\in[m]$ for all globally optimal parameters $(u_i,\omega_i)_{i=1}^{m}$. (See the proof of Theorem~\ref{thm:global_optim_explicit_main} in Appendix~\ref{app:proof2.2}.) 
The visualization of global minima in Figure~\ref{fig:main_varphi_3} (details are in Appendix~\ref{app:visualizations}) illustrates that adding $\ell_2$-regularization significantly restricts the set of globally optimal parameters.
\begin{theorem}
  \label{thm:global_optim_explicit_main}
  The set of globally optimal parameters $\Theta^*(m)\coloneq\{\theta\in\mathbb{R}^{2m}~:~\theta\!=\!\operatorname{argmin}_{\theta\in\mathbb{R}^{2m}}L(\theta)$ with $\beta>0\}$ for $\ell_2$-regularized squared loss is 
  \begin{equation}
\Theta^{*}(m)
=\Bigl\{(u_i,\omega_i)_{i=1}^{m}\;\Bigm|\;
\sum_{i:\,u_i\ge0} u_i^{2}=|\gamma_P^*|,
\sum_{i:\,u_i\le0} u_i^{2}=|\gamma_N^*|,\;
\omega_i=
\begin{cases}
\operatorname{sign}(\gamma_P^*)\,u_i & (u_i>0),\\
\operatorname{sign}(\gamma_N^*)\,u_i & (u_i<0),\\
0 & (u_i=0)
\end{cases}
\Bigr\}
\end{equation}
where $\gamma_P^*=\alpha\frac{\mathcal{S}_{\alpha\beta}(X_{S}^TY)}{\|X_{S}\|_2^2}$ and $\gamma_N^*=\alpha\frac{\mathcal{S}_{\alpha\beta}(X_{S^c}^TY)}{\|X_{S^c}\|_2^2}$. ($\mathcal{S}_{\alpha\beta}(b):=\textnormal{sign}(b)\max(|b|-\alpha\beta,0)=\textnormal{sign}(b)\max(|b|-m^{a-\delta},0)$ is the soft-thresholding operator.)
\end{theorem}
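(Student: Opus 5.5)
The plan is to reduce the nonconvex problem \eqref{eq:g_min_problem} to two decoupled scalar convex problems, one for each sign pattern of the inputs. Since $d=1$, each $u_j\in\mathbb{R}$ and $(Xu_j)_+ = u_j X_S$ if $u_j\ge 0$ and $(Xu_j)_+ = u_j X_{S^c}$ if $u_j\le 0$. Set $P=\{j:u_j>0\}$, $N=\{j:u_j<0\}$, $\gamma_P=\sum_{j\in P}u_j\omega_j$ and $\gamma_N=\sum_{j\in N}u_j\omega_j$. Then
\begin{equation*}
f_\theta(X)=\tfrac{1}{\alpha}\bigl(\gamma_P X_S+\gamma_N X_{S^c}\bigr).
\end{equation*}
The data term depends on $\theta$ only through $(\gamma_P,\gamma_N)$, which is also what underlies Theorem~\ref{prop:sqrd_loss}. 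Neurons with $u_j=0$ do not affect the output and only add $\beta\omega_j^2/2$ to the penalty, so at any optimum they satisfy $\omega_j=0$.

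\textbf{Step 1 (lower bound on the penalty).} By AM--GM, $\tfrac12(u_j^2+\omega_j^2)\ge |u_j\omega_j|$, with equality iff $|u_j|=|\omega_j|$. Summing and applying the triangle inequality gives
\begin{equation*}
\tfrac12\sum_{j\in P}(u_j^2+\omega_j^2)\ge |\gamma_P|,
\end{equation*}
with equality iff $|u_j|=|\omega_j|$ and all $u_j\omega_j$, $j\in P$, share the sign of $\gamma_P$. The same holds for $N$. Hence $L(\theta)\ge G(\gamma_P,\gamma_N)$, where
\begin{equation*}
G(\gamma_P,\gamma_N)=\tfrac12\bigl\|\tfrac{1}{\alpha}(\gamma_P X_S+\gamma_N X_{S^c})-Y\bigr\|_2^2+\beta(|\gamma_P|+|\gamma_N|).
\end{equation*}
The bound is attained: given any $(\gamma_P,\gamma_N)$, put $u_1=\sqrt{|\gamma_P|}$, $\omega_1=\operatorname{sign}(\gamma_P)u_1$, and similarly use a second neuron with negative $u$ for $\gamma_N$. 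This needs $m\ge 2$. So $\min L=\min G$, and $\theta$ is optimal iff $(\gamma_P,\gamma_N)$ minimizes $G$ and the equality conditions hold.

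\textbf{Step 2 (solve the scalar lasso).} $X_S$ and $X_{S^c}$ have disjoint supports (a zero entry of $X$ contributes zero to both), so they are orthogonal. The quadratic therefore separates:
\begin{equation*}
G=\tfrac{\|X_S\|^2}{2\alpha^2}\gamma_P^2-\tfrac{1}{\alpha}\gamma_P X_S^TY+\beta|\gamma_P|+(\text{same in }N)+\text{const}.
\end{equation*}
Each piece is a strictly convex one-dimensional lasso. Its unique minimizer is $\gamma_P^*=\alpha\,\mathcal{S}_{\alpha\beta}(X_S^TY)/\|X_S\|_2^2$, obtained by multiplying through by $\alpha^2$: the threshold becomes $\alpha^2\beta/\alpha=\alpha\beta$ against $X_S^TY$. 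The formula for $\gamma_N^*$ is analogous.

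\textbf{Step 3 (translate back).} Optimality forces $\gamma_P=\gamma_P^*$ and, for $j\in P$, $\omega_j=\operatorname{sign}(\gamma_P^*)u_j$. Then $\gamma_P=\operatorname{sign}(\gamma_P^*)\sum_{P}u_j^2$, so $\sum_{P}u_j^2=|\gamma_P^*|$. The $N$ side is identical, and $u_j=0$ forces $\omega_j=0$. Conversely, any $\theta$ of this form attains $G(\gamma_P^*,\gamma_N^*)$, which gives the stated set.

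\textbf{Main obstacle.} The main difficulty is the bookkeeping in the equality case. The sign of $\omega_j$ must be pinned down, and in the case $\gamma_P^*=0$ the conditions must still force all $u_j=0$ on that side; this follows because then $\sum u_j^2=0$. One also needs the convention $\operatorname{sign}(0)$ arbitrary, so that $u_i\ge0$ versus $u_i>0$ in the statement is consistent. Finally, the step from the triangle-inequality equality to a common sign of the $u_j\omega_j$ has to be argued carefully.
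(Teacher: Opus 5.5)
Your proposal is correct and follows essentially the same route as the paper's proof. Both arguments express the data term through $(\gamma_P,\gamma_N)$, bound the penalty below by $\beta(|\gamma_P|+|\gamma_N|)$ using AM--GM and the triangle inequality, solve the resulting separable scalar lasso by soft-thresholding, and then read off the equality cases.

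Your explicit attainment step and your note that $X_S$ and $X_{S^c}$ are orthogonal fill in points the paper leaves implicit. Your width caveat is also real. The identity $\min L=\min G$ needs at least one neuron for each nonzero component of $(\gamma_P^*,\gamma_N^*)$. For $m=1$ with both components nonzero, the displayed set is empty, yet $L$ is coercive and so still has a minimizer. The statement therefore holds only once that width condition is met.
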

The ReLU output is decomposed into the sum of different contributions based on activation patterns, and the original minimization problem \eqref{eq:general_prob} can be written as a convex problem \citep{pilanci_neural_2020, ergen2021global}. The proofs of Theorem \ref{prop:sqrd_loss}, \ref{thm:global_optim_explicit_main} (Appendix \ref{app:proof2.2}) utilize the fact that we can remove the constraints for the convex problem introduced in \citep{pilanci_neural_2020, ergen2021global} if the data is one-dimensional (Proposition \ref{prop:convex_main}). This simplification is not feasible for general $d$-dimensional input because the activation patterns are more complicated.
\begin{figure}[t]
\centering
\begin{minipage}[b]{0.24\textwidth}
    \centering
    \includegraphics[width=\linewidth]{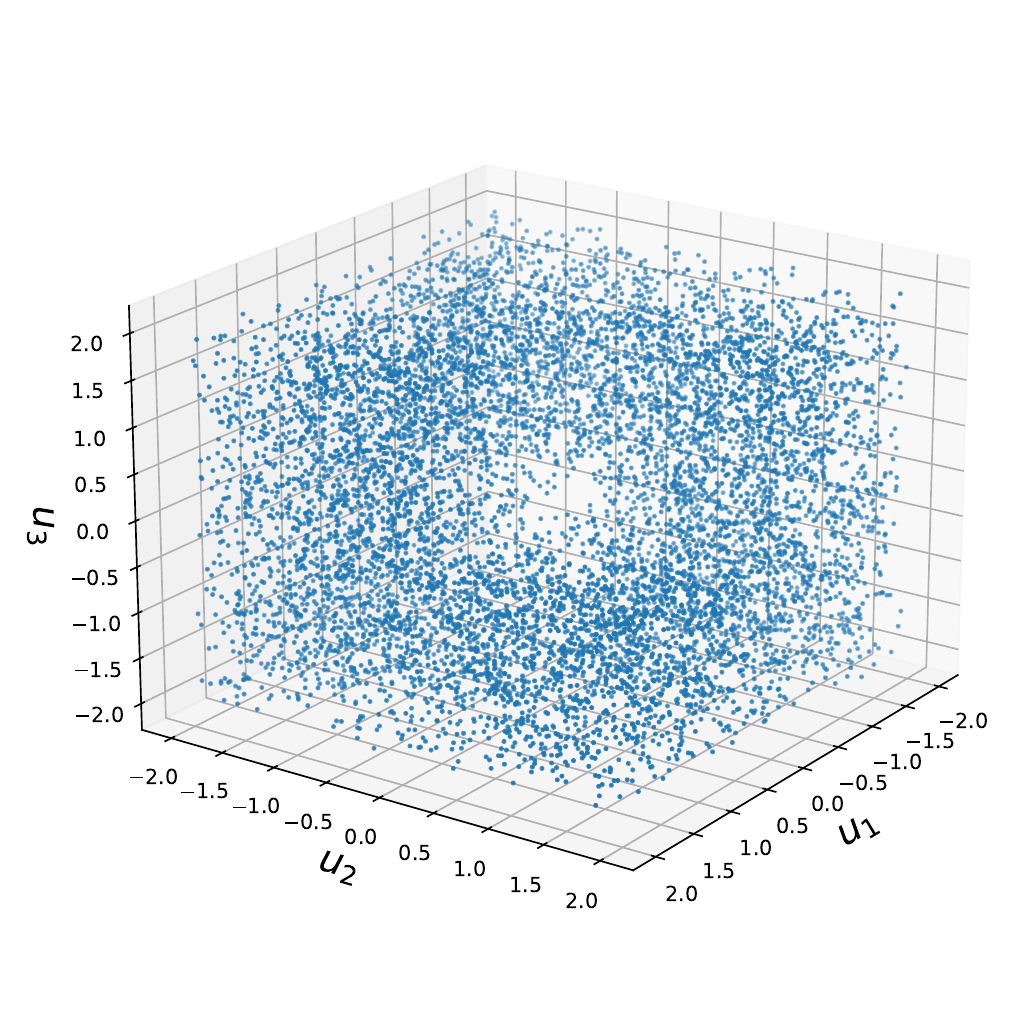}

    \small (a) Projection of $\varphi^*(m)$ in $u$-coordinates
\end{minipage}\hfill
\begin{minipage}[b]{0.24\textwidth}
    \centering
    \includegraphics[width=\linewidth]{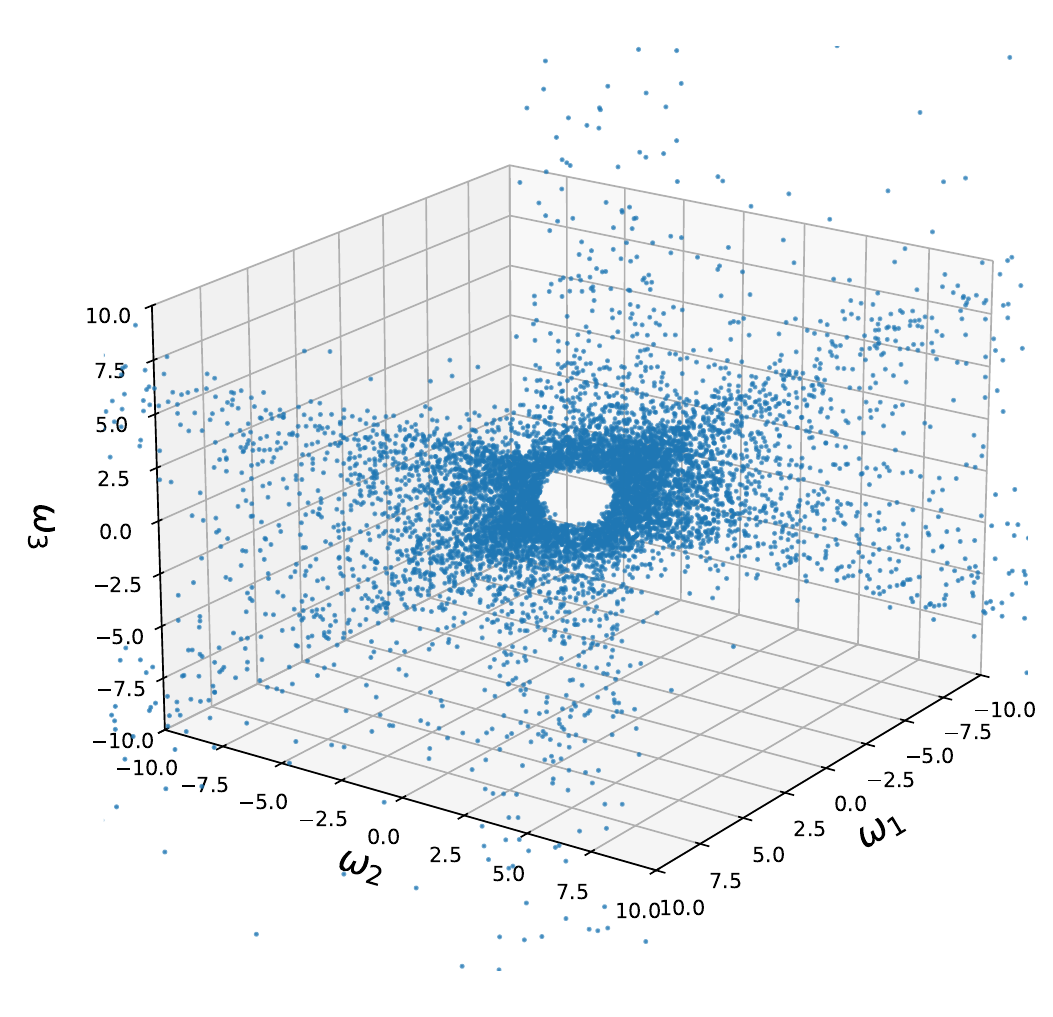}

    \small (b) Projection of $\varphi^*(3)$ in $\omega$-coordinates
\end{minipage}\hfill
\begin{minipage}[b]{0.24\textwidth}
    \centering
    \includegraphics[width=\linewidth]{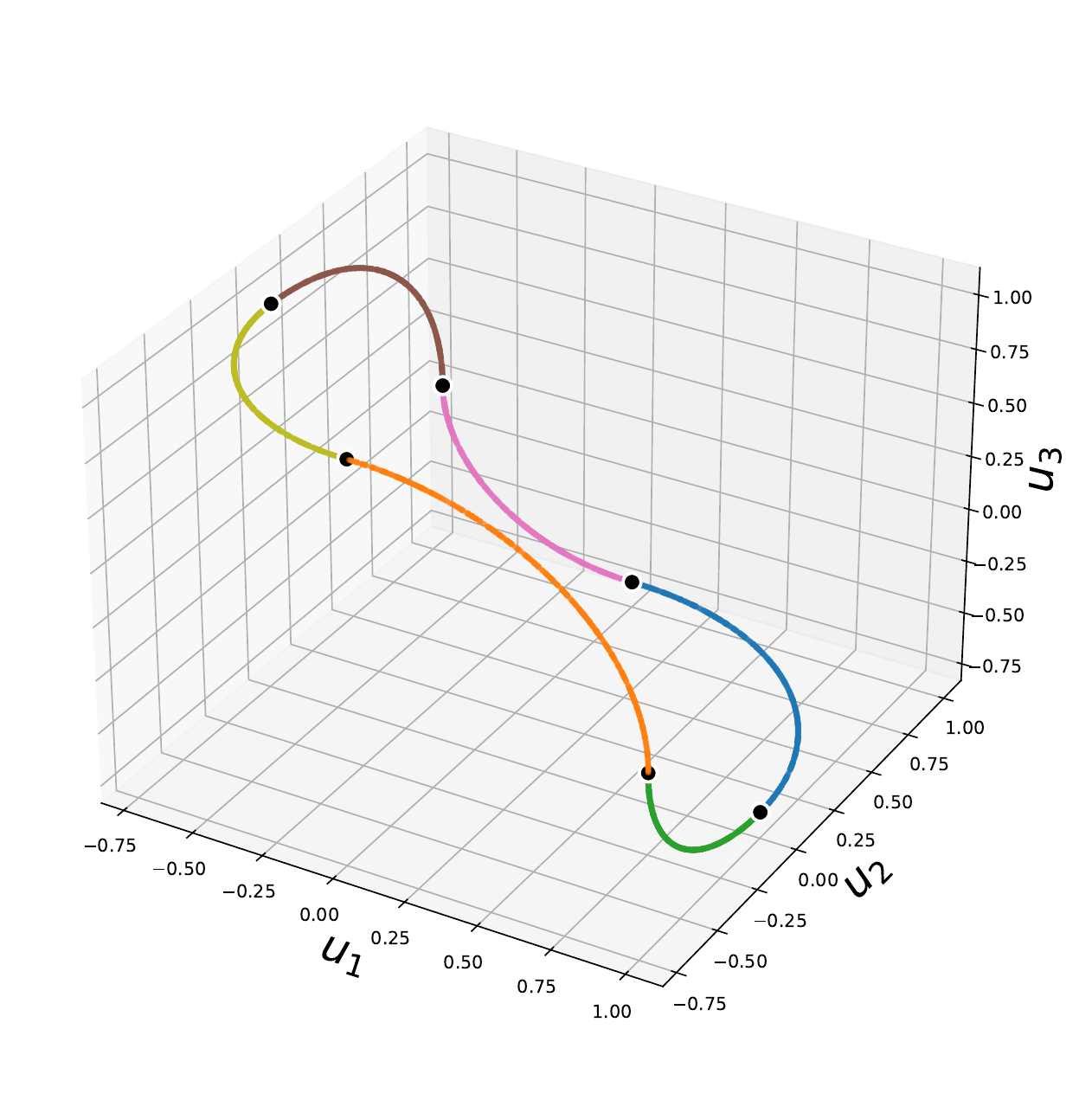}

    \small (c) Projection of $\Theta^*(3)$ in $u$-coordinates
\end{minipage}\hfill
\begin{minipage}[b]{0.24\textwidth}
    \centering
    \includegraphics[width=\linewidth]{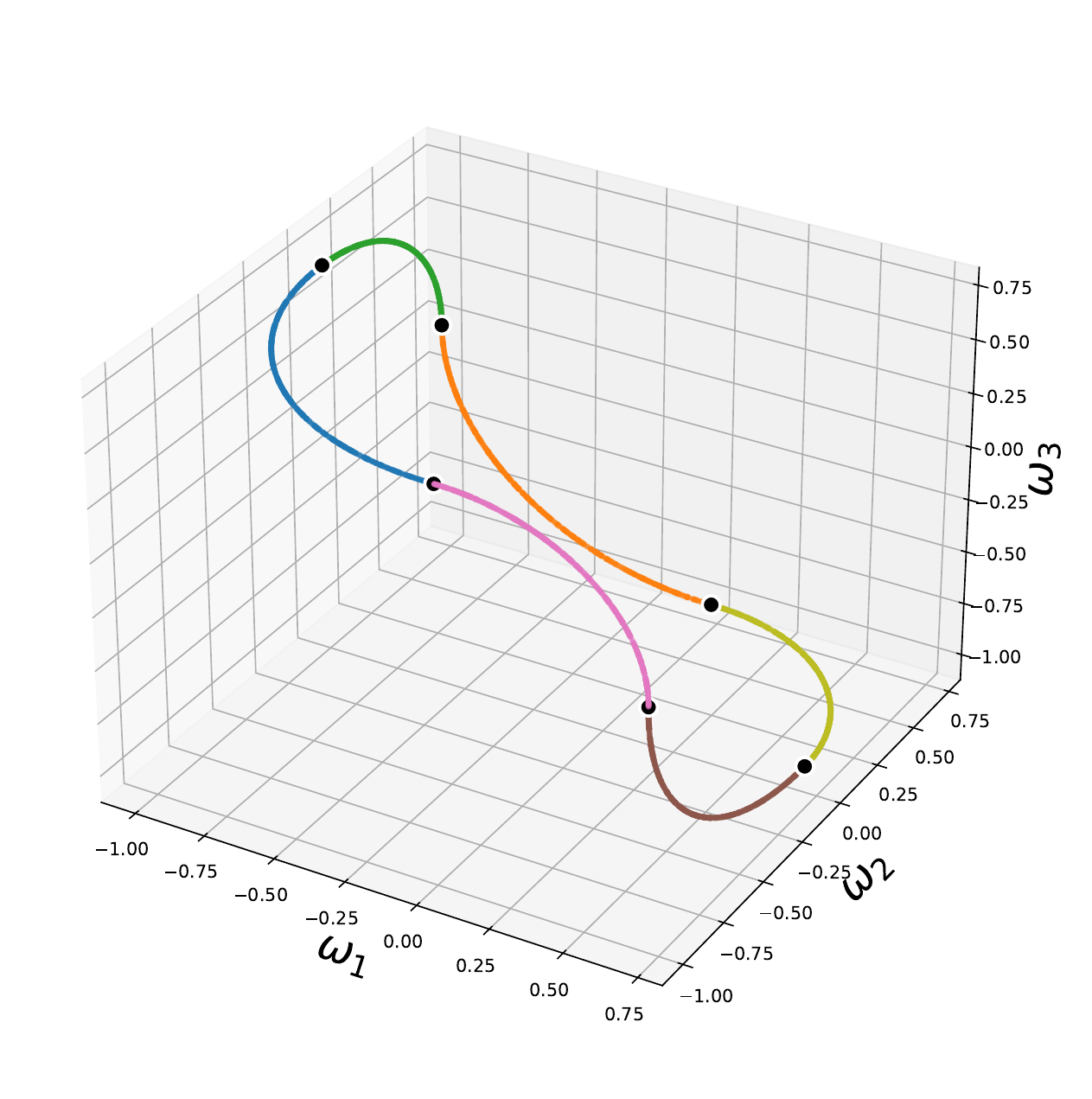}

    \small (d) Projection of $\Theta^*(3)$ in $\omega$-coordinates
\end{minipage}

\caption{Illustration of global optima of width-3 two-layer ReLU neural netwwork \eqref{eq:g_2ReLu} in the parameter space.
More details of this figure are in Appendix~\ref{app:visualizations}.}
\label{fig:main_varphi_3}
\end{figure}
\subsubsection{Connectivity}\label{subsubsec:conn}
In this section, we discuss how the connectivity of globally optimal solutions $\varphi^*(m)$ and $\Theta^*(m)$ changes with respect to $m$. For a set $S$, we say $x, y \in S$ are \textbf{connected} in $S$ if $\exists$ a continuous function $f : [0,1]\rightarrow S$ that satisfies $f (0) = x$ and $f (1) = y$. We say $S$ is \textbf{connected} if, for any two points $x, y \in S$, $x$ and $y$ are connected in $S$. Theorem~\ref{thm:noweight_connectivity_main} shows phase transitional behavior of the connectivity of the global minima for the unregularized squared loss.
\begin{theorem} 
\label{thm:noweight_connectivity_main}
We have the following connectivity results for the solution set $\varphi^*(\theta)$ to the unregularized squared loss.\\
(1) For $m=1$, $\varphi^*(m)=\emptyset$.\\
(2) For $m=2$, $\varphi^*(m)$ has exactly 2 connected components.\\
(3) For $m\ge 3$, $\varphi^*(m)$ is connected.
\end{theorem}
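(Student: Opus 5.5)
The plan is to work directly from the explicit description of $\varphi^*(m)$ in Theorem~\ref{prop:sqrd_loss}. Write $c_P\coloneq\alpha X_S^\top Y/\|X_S\|_2^2$ and $c_N\coloneq\alpha X_{S^c}^\top Y/\|X_{S^c}\|_2^2$. Both are nonzero by the standing assumption $0<\min\{|X_S^\top Y|,|X_{S^c}^\top Y|\}$. A point lies in $\varphi^*(m)$ iff
\[
\sum_{j:u_j>0}u_j\omega_j=c_P \quad\text{and}\quad \sum_{j:u_j<0}u_j\omega_j=c_N.
\]
Indices with $u_j=0$ contribute nothing to either sum, so $\omega_j$ is free there. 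Each component of $\varphi^*(m)$ will be described through the sign pattern $\sigma(\theta)\in\{+,-,0\}^m$ of $U$.

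For (1), a single neuron has either $u_1\ge 0$ or $u_1\le 0$. If $u_1>0$, the negative sum is empty and equals $0\ne c_N$. If $u_1<0$, the positive sum fails in the same way. If $u_1=0$, both sums vanish. Hence $\varphi^*(1)=\emptyset$.

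For (2), the argument first shows that one neuron must be strictly positive and the other strictly negative. Otherwise one of the two sums would be empty or zero. So
\[
\varphi^*(2)=A\cup B,
\]
where $A=\{u_1>0,\ u_2<0,\ u_1\omega_1=c_P,\ u_2\omega_2=c_N\}$ and $B$ is the same set with the indices swapped. Each of $A$ and $B$ is homeomorphic to $(0,\infty)^2$ via $(u_1,u_2)\mapsto(u_1,c_P/u_1,u_2,c_N/u_2)$, hence connected. The sets $A$ and $B$ are disjoint. Each is also open in $\varphi^*(2)$, because it is cut out by the strict inequalities on $u_1$ and $u_2$. So there are exactly two components.

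For (3), let $m\ge3$. I will show every point is connected to a fixed canonical point
\[
\theta_0=\bigl(u_1=1,\ \omega_1=c_P;\ u_2=-1,\ \omega_2=-c_N;\ u_j=\omega_j=0\ (j\ge3)\bigr).
\]
Several moves stay inside $\varphi^*(m)$.

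\begin{enumerate}
\item[(i)] Within a fixed sign pattern, scale each active pair continuously as $(u_j,\omega_j)\mapsto(tu_j,\omega_j/t)$. Between actively signed neurons of the same sign, transfer the product $u_j\omega_j$ linearly while keeping the sum fixed. This lets a neuron's product be driven to $0$ by sending $\omega_j\to0$ with $u_j$ fixed.
\item[(ii)] Once $\omega_j=0$, move $u_j$ continuously through $0$ to any value or sign. The products stay $0$ throughout, so both constraints remain satisfied.
\end{enumerate}

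Using these moves, any $\theta\in\varphi^*(m)$ can be reduced as follows. Concentrate the whole positive product on one positive neuron and the whole negative product on one negative neuron. Zero out and deactivate all others by (i) then (ii). Then relabel so that the positive neuron is index $1$ and the negative neuron is index $2$.

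The main obstacle is this relabeling, i.e.\ permuting which indices carry the two active roles. That is exactly what fails for $m=2$. With a third free neuron $k$ (with $u_k=\omega_k=0$), the active role can be passed along. Activate $u_k>0$ with $\omega_k=0$, transfer the positive product from neuron $i$ to neuron $k$ by move (i), then deactivate neuron $i$. Composing such transfers realizes any transposition of roles, e.g.\ swapping which indices hold the positive and negative neurons. This connects every point to $\theta_0$, and the continuity of each step is routine.
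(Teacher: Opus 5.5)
Your proposal is correct and follows essentially the same route as the paper: concentrate the products onto one strictly positive and one strictly negative neuron with $u$ fixed, deactivate and rescale the rest to reach a two-neuron configuration, and then use a spare zero neuron to move the active roles between indices. The differences are minor: the paper does each role swap with a single path $(\sqrt{1-s},A\sqrt{1-s}),(\sqrt{s},A\sqrt{s})$ and appeals to transpositions through a fixed index generating $S_m$, whereas you activate, transfer and deactivate; and your openness argument in (2) is slightly more complete than the paper's bare disjointness remark (the parametrizing domain there should be $(0,\infty)\times(-\infty,0)$ rather than $(0,\infty)^2$, which is harmless).
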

If we add the $\ell_2$-regularization term, we find critical widths $\underline{M}$ and $\overline{M}$ for phase
transition behaviors, which depend on the hyperparameter setting.
\begin{theorem}\label{thm:hyp_connectivity_main}
Assume $\delta>a$. Define $M^*(m)=1[|X_{S}^TY|>m^{a-\delta}]+1[|X_{S^c}^TY|>m^{a-\delta}]$, $\underline{M}=\min\{m\in\mathbb{N}_{\ge 1}~|~M^*(m)\geq1\}$ and $\overline{M}=\min\{m\in\mathbb{N}_{\ge 1}~|~M^*(m)=2\}$.
$\underline{M}$ and $\overline{M}$ are well-defined because $M^*(m)\in\{0,1,2\}$ is increasing with $m$ and $M^*(m)=2$ for sufficiently large $m$.\\
We have the following connectivity results for the solution set $\Theta^*(\theta)$ to the $\ell_2$-reguralized loss.\\
(1) For $m<\underline{M}$, $\Theta^*(m)$ is a singleton ($\{(0,0)_{i=1}^{m}\}$).\\
(2) If $\underline{M}=\overline{M}= m=1$, $\Theta^*(m)=\emptyset$.\\
(3) If $\underline{M}\le m=1<\overline{M}$ or $\underline{M}\le\overline{M}\le m=2$, $\Theta^*(m)$ is a finite set.\\
(4) Otherwise, $\Theta^*(m)$ is connected.
\end{theorem}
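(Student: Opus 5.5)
The plan is to derive everything from the explicit description of $\Theta^*(m)$ in Theorem~\ref{thm:global_optim_explicit_main}. First I relate $M^*(m)$ to the number of nonzero thresholds. By the definition of the soft-thresholding operator, $\gamma_P^*\neq 0$ if and only if $|X_S^TY|>\alpha\beta=m^{a-\delta}$, and likewise $\gamma_N^*\neq0$ if and only if $|X_{S^c}^TY|>m^{a-\delta}$. Hence $M^*(m)$ counts how many of $\gamma_P^*,\gamma_N^*$ are nonzero. Because $\delta>a$, $m^{a-\delta}$ decreases to $0$. Together with the standing assumption $\min\{|X_S^TY|,|X_{S^c}^TY|\}>0$, this gives monotonicity and well-definedness of $\underline M,\overline M$.

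In every case $\omega$ is a continuous function of $u$, namely $\omega_i=\operatorname{sign}(\gamma^*)u_i$ with the sign chosen by the sign of $u_i$. This map is continuous because both branches vanish at $u_i=0$. So $\Theta^*(m)$ is homeomorphic to its projection $A\subset\mathbb{R}^m$ onto $u$-coordinates, and it suffices to study
$$A=\Bigl\{u\in\mathbb{R}^m:\ \textstyle\sum_{u_i>0}u_i^2=|\gamma_P^*|,\ \sum_{u_i<0}u_i^2=|\gamma_N^*|\Bigr\}.$$
Zero coordinates contribute nothing to either sum, so they can be ignored.

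\textbf{Case $M^*(m)=0$} (i.e.\ $m<\underline M$). Both sums vanish, so $u=0$ and $\omega=0$, which is the singleton in (1).

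\textbf{Case $M^*(m)=1$.} Say $\gamma_N^*=0\neq\gamma_P^*$. Then no coordinate is negative, and $A$ is the part of the sphere of radius $\sqrt{|\gamma_P^*|}$ lying in the closed nonnegative orthant.
For $m=1$ this is the single point $\sqrt{|\gamma_P^*|}$, a finite set; this is the first alternative of (3).
For $m\ge2$ it is connected. Given two points, take the segment between them, which stays in the orthant and avoids $0$, and normalize it back to the sphere.

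\textbf{Case $M^*(m)=2$.} Both a positive and a negative coordinate are required.
For $m=1$ this is impossible, which gives (2).
For $m=2$ exactly one coordinate is positive and one is negative, with forced magnitudes. This yields the two points $(\sqrt{|\gamma_P^*|},-\sqrt{|\gamma_N^*|})$ and its swap, a finite set; this is the second alternative of (3).
For $m\ge3$ I decompose $A$ by sign pattern $(P,N)$, where $P,N$ are disjoint and both nonempty. Each piece $\{u: u_P\ge0,\ u_N\le0,\ u_{(P\cup N)^c}=0,\ \|u_P\|^2=|\gamma_P^*|,\ \|u_N\|^2=|\gamma_N^*|\}$ is a product of two orthant-sphere pieces. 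Each factor is connected by the normalization argument above, so each piece is connected.

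The main obstacle is linking different patterns. The key move transfers one index $i$ from $P$ to $N$ when $|P|\ge2$. Within $P$, continuously rotate the mass off $u_i$ onto another index of $P$ until $u_i=0$. The point then lies in the piece for $(P\setminus\{i\},N)$ and also, with $u_i=0$, in the piece for $(P\setminus\{i\},N\cup\{i\})$. Inside the latter piece one can grow $u_i$ negative while shrinking the other $N$-coordinates.

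Any pattern can first be reduced to a pattern with $|P|=|N|=1$ by zeroing surplus coordinates inside its own piece. With $m\ge3$, any two such minimal patterns can then be joined by a chain of single-index moves. For example, $(\{i\},\{j\})\to(\{i\},\{j,k\})\to(\{i\},\{k\})$ changes one label at a time using a third index $k$. This needs $m\ge3$, which is exactly why $m=2$ fails to be connected.

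Finally, (4) collects all remaining cases: $M^*(m)=1$ with $m\ge2$, and $M^*(m)=2$ with $m\ge3$. Both have been shown connected, and the homeomorphism $u\mapsto(u,\omega(u))$ transports connectivity from $A$ to $\Theta^*(m)$.
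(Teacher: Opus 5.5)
Your cases (1), (3) and (4) are correct, and the skeleton matches the paper's. You use the explicit form of Theorem~\ref{thm:global_optim_explicit_main}, read $M^*(m)$ as the number of nonzero entries among $\gamma_P^*,\gamma_N^*$, pass to the $u$-projection through the continuous graph map $u\mapsto(u,\omega(u))$, and split on $(m,M^*(m))$. The paper packages this as Theorem~\ref{thm:connectivity}, stated in terms of $M^*$, and then translates the conditions via monotonicity of $M^*(m)$.

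The genuine difference is the connectivity proof in (4). The paper fixes the minimal points $\sqrt{|\gamma_P^*|}\,e_i-\sqrt{|\gamma_N^*|}\,e_j$. It connects every point to one of them by explicit two-coordinate ``merging'' rotations, and connects minimal points to one another by transpositions routed through a zero coordinate, which is where $m>M^*$ enters. You instead cover the $u$-projection by closed sign-pattern pieces. Each is a product of two orthant caps of spheres, hence connected by radially normalizing segments, and you glue the pieces along the minimal points they share, relabeling one index at a time through a third index $k$.

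The two arguments correspond: moving inside a piece is merging, and your chain through $k$ is a transposition through an empty slot. Yours needs no explicit path formulas and is sign-preserving for free. The paper's rotation toward angle $0$, by contrast, has to be aimed at angle $\pi$ when both merged coordinates are negative. On the other hand, the paper's minimal-solution and permutation language is what carries over to its convex-formulation proof and to the higher-dimensional setting of \citet{kim_exploring_2025}, where no sphere-cap description exists.

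The step that fails, in exactly the same way as in the paper, is ``for $m=1$ this is impossible, which gives (2).'' Theorem~\ref{thm:global_optim_explicit_main} only shows that its explicit set is where $L$ attains the lower bound given by the optimal value of \eqref{eq:gamma_min}. That set equals the argmin only when it is nonempty, i.e.\ when $m\ge M^*(m)$. Since $\beta>0$, we have $L(\theta)\ge\frac{\beta}{2}\|\theta\|_2^2$ and $L$ is continuous, so the argmin is never empty.

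Concretely, take $m=1$ with $M^*(1)=2$. The objective of \eqref{eq:gamma_min} separates in $(\gamma_P,\gamma_N)$ because $X_S^{T}X_{S^c}=0$. A single neuron realizes only $(\gamma_P,0)$ (if $u>0$) or $(0,\gamma_N)$ (if $u<0$), and $u=0$ is strictly worse. So the minimizers are the balanced one-neuron fits $\bigl(\sqrt{|\gamma_P^*|},\operatorname{sign}(\gamma_P^*)\sqrt{|\gamma_P^*|}\bigr)$ and/or $\bigl(-\sqrt{|\gamma_N^*|},-\operatorname{sign}(\gamma_N^*)\sqrt{|\gamma_N^*|}\bigr)$, whichever attains the smaller objective. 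For instance, $X=(1,-1)^{T}$, $Y=(2,2)^{T}$ gives $\underline{M}=\overline{M}=1$ and $\Theta^*(1)=\{(1,1),(-1,1)\}$.

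So item (2) as stated is false and cannot be proved; the correct conclusion there is a nonempty finite set. Your other cases are unaffected, because in them the explicit set is nonempty and hence equals the argmin.
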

Theorem~\ref{thm:hyp_connectivity_main} implies that there are only three connectivity behaviors for $\Theta^*(m)\neq\emptyset$. As \citet[Theorem 2]{kim_exploring_2025} showed, this is not always the case for general $d$-dimensional input data. We provide explanations for this limited connectivity result for the one-dimensional input data from a perspective on the roles of unnecessary neurons (Appendix~\ref{app:neurons}) and a convex formulation of neural networks introduced by \citet{pilanci_neural_2020} (Appendix~\ref{app:convex}).
\subsubsection{Dimension}\label{subsubsec:dim}
In this section, we compare the dimensionality and bounds for the set of optimal parameters. For a subset $A\subset\mathbb{R}^{2m}$, we define the \textbf{dimension} of $A$ 
denoted by $\dim(A)$ to be the maximum $k$ such that $A$ contains a $k$-dimensional
embedded $C^1$ submanifold (equivalently, the maximal stratum dimension). 
For unregularized squared loss, the globally optimal parameters are dense and spread out without a bound outside the sphere of radius $\|\theta^*\|_2$, where $\theta^*\!\in\!\Theta^*(\theta)$, in the parameter space $\mathbb{R}^{2m}$. The dimensional difference between $\varphi^*(m)$ and the parameter space $\mathbb{R}^{2m}$ is two, which is independent of $m$ as long as $m$ is sufficiently large. 
\begin{proposition}\label{prop:sqrd_dim_main}For sufficiently large $m$, $\dim(\varphi^*(m))= 2m-2$. 
\end{proposition}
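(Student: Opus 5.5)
We would start from the explicit description of $\varphi^*(m)$ in Theorem~\ref{prop:sqrd_loss}. Write $c_P=\alpha X_S^{\top}Y/\|X_S\|_2^2$ and $c_N=\alpha X_{S^c}^{\top}Y/\|X_{S^c}\|_2^2$. Both are nonzero by the standing assumption $0<\min\{|X_S^TY|,|X_{S^c}^TY|\}$. The set $\varphi^*(m)$ is then a finite union of pieces indexed by sign patterns. For a partition $[m]=P\sqcup N\sqcup Z$, consider the region where $u_j>0$ for $j\in P$, $u_j<0$ for $j\in N$ and $u_j=0$ for $j\in Z$. On that region the defining equations become
\[
g_P(\theta)=\sum_{j\in P}u_j\omega_j=c_P,\qquad g_N(\theta)=\sum_{j\in N}u_j\omega_j=c_N .
\]
The $\omega_j$ with $j\in Z$ are free. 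The plan is to prove the upper bound $\dim\le 2m-2$ and the lower bound $\dim\ge 2m-2$ separately.

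For the \textbf{lower bound}, fix $m\ge 2$ and take the open stratum with $P=\{1,\dots,k\}$, $N=\{k+1,\dots,m\}$, $Z=\emptyset$, where $1\le k\le m-1$. This region is an open subset of $\mathbb{R}^{2m}$. There $\nabla g_P=(\omega_j,u_j)_{j\in P}$ and $\nabla g_N=(\omega_j,u_j)_{j\in N}$. These gradients have disjoint supports and are nonzero because $u_j\neq 0$. Hence $(g_P,g_N)$ is a submersion on this region. By the regular value theorem, the intersection of $\varphi^*(m)$ with the region is an embedded $C^1$ (indeed smooth) submanifold of dimension $2m-2$. It is also nonempty: for instance set $u_j=1$ on $P$ and $u_j=-1$ on $N$, then choose the $\omega_j$ to meet both sums. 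This gives $\dim(\varphi^*(m))\ge 2m-2$.

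For the \textbf{upper bound}, the key step is to note that $\varphi^*(m)$ is contained in the union of the strata above. On each stratum with $|Z|=z$, both $P$ and $N$ must be nonempty, since otherwise the corresponding sum would be $0\neq c_P,c_N$. The same submersion argument, applied inside the open subset of the coordinate subspace $\{u_j=0,\ j\in Z\}$, which has dimension $2m-z$, shows that each stratum is a smooth manifold of dimension $2m-z-2\le 2m-2$. One subtlety is that the theorem's conditions use $u_j\ge0$ and $u_j\le 0$, so indices with $u_j=0$ appear in both sums. This is harmless because those terms vanish.

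The main obstacle is converting ``a finite union of manifolds of dimension $\le 2m-2$'' into ``contains no embedded $C^1$ submanifold of dimension $>2m-2$''. For this we would use the standard fact that a countable union of $C^1$ submanifolds of dimension $\le k$ has Hausdorff dimension $\le k$. An embedded $C^1$ submanifold of dimension $k+1$ contains an open set of positive $(k+1)$-dimensional Hausdorff measure, so it cannot lie inside such a union. Combined with the lower bound, this gives $\dim(\varphi^*(m))=2m-2$ for all $m\ge 2$, which covers ``sufficiently large $m$''.
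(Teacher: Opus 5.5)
Your proposal is correct and follows essentially the same route as the paper: stratify $\varphi^*(m)$ by sign pattern, and apply the regular value theorem on the open orthants, where the $\omega_j,\omega_k$ partial derivatives $u_j,u_k\neq 0$ give rank $2$, to get a $(2m-2)$-dimensional piece. Then note that strata with some $u_j=0$ have strictly smaller dimension. Your additions are the explicit nonemptiness check and the Hausdorff-measure argument showing that a finite union of manifolds of dimension at most $2m-2$ cannot contain an embedded $C^1$ submanifold of dimension $2m-1$; these make rigorous two points the paper only asserts.
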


\begin{proposition}\label{prop:sqrd_unbound_main}
For sufficiently large $m$,~$\varphi^*(m)$ is unbounded. Especially, $\forall n\geq\frac{a}{2}$, $\exists\varphi_n^*(m)\subset\varphi^*(m)~s.t.~\forall\phi_n^*\in\varphi_n^*(m),~\|\phi^*_n\|_2=\Theta\!(m^n)$ and $\dim(\varphi_n^*(m))= 2m-2$.
\end{proposition}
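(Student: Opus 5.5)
We work directly from the explicit description of $\varphi^*(m)$ in Theorem~\ref{prop:sqrd_loss}. Set
\[
c_P:=\frac{\alpha X_S^{\top}Y}{\|X_S\|_2^2}=\Theta(m^{a}), \qquad c_N:=\frac{\alpha X_{S^c}^{\top}Y}{\|X_{S^c}\|_2^2}=\Theta(m^{a}).
\]
These are nonzero by the standing assumption $\min\{|X_S^TY|,|X_{S^c}^TY|\}>0$, and they grow like $m^a$ because $X,Y$ do not depend on $m$. For a given $n\ge a/2$, the plan is to write down an explicit open piece of $\varphi^*(m)$ that is a $(2m-2)$-dimensional embedded submanifold and on which every point has norm of order $m^n$. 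Unboundedness then follows immediately, for instance by taking $n$ arbitrarily large at fixed $m$. One could also observe that the free coordinates below can be made arbitrarily large.

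\textbf{Construction.} Let $m\ge 4$. Fix the sign pattern
\[
u_1,u_2>0,\qquad u_3,u_4<0,\qquad \text{the remaining } u_j \text{ nonzero of either sign},
\]
so that membership in $\varphi^*(m)$ reduces to two smooth equations, $\sum_{u_j>0}u_j\omega_j=c_P$ and $\sum_{u_j<0}u_j\omega_j=c_N$. Within each sign class, solve the equation for one second-layer weight, taking $\omega_1$ in the positive class and $\omega_3$ in the negative class:
\[
\omega_1=\frac{c_P-\sum_{j\neq1,\,u_j>0}u_j\omega_j}{u_1}.
\]
The formula for $\omega_3$ is analogous. This exhibits the set as the graph of a smooth map over an open subset of $\mathbb{R}^{2m-2}$, with coordinates all $u_j$ and all $\omega_j$ except $\omega_1,\omega_3$. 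Hence it is an embedded $C^1$ submanifold of dimension $2m-2$. It is also consistent with Proposition~\ref{prop:sqrd_dim_main}, so $\dim=2m-2$ holds exactly.

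\textbf{Norm scaling.} Restrict the free coordinates to a box. Take $u_1\in(m^{n},2m^{n})$, $u_3\in(-2m^n,-m^n)$, and all other free coordinates in $(1,2)$ or $(-2,-1)$ according to the prescribed signs. Boxes are open, so dimension $2m-2$ is preserved. Call the resulting set $\varphi_n^*(m)$.

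Then $\|\phi\|_2\ge |u_1|\ge m^n$. For the upper bound, the solved coordinates satisfy
\[
|\omega_1|\le \frac{|c_P|+O(m)}{m^n}=O\!\left(m^{a-n}+m^{1-n}\right),
\]
and the bound for $\omega_3$ is analogous. The free coordinates contribute $O(\sqrt m)$ in total, and $u_1,u_3$ contribute $O(m^n)$.

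\textbf{Main obstacle.} The difficulty is making the $\Theta(m^n)$ claim uniform over every point of $\varphi_n^*(m)$, in particular ensuring that the $O(\sqrt m)$ contribution of the free coordinates and the size of $\omega_1,\omega_3$ do not exceed $m^n$. This is exactly why $n\ge a/2$ appears. The natural norm of a balanced solution is of order $\sqrt{|c_P|+|c_N|}=\Theta(m^{a/2})$, and when $n\ge a/2$ we get $m^{a-n}\le m^{n}$.

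To control the free coordinates, we shrink the box. Instead of $(1,2)$, take the intervals $(\varepsilon_m,2\varepsilon_m)$ with $\varepsilon_m=m^{-1}$. Then
\[
\sum_j u_j\omega_j=O(m^{-1}),\qquad \text{free-coordinate norm}=O(m^{-1/2}),
\]
so $|\omega_1|\le (|c_P|+O(m^{-1}))/m^n=O(m^{a-n})\le O(m^n)$. Altogether $m^n\le\|\phi\|_2\le C\,m^n$ uniformly on $\varphi_n^*(m)$, which gives the required $\Theta(m^n)$ behavior and completes the proof.
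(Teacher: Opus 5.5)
Your proposal is correct and takes essentially the paper's route. You fix an open sign orthant, give one positive and one negative neuron $|u|$ of order $m^n$, solve the two defining equations for their second-layer weights, and confine all remaining coordinates to a small open box; the resulting piece is a $(2m-2)$-dimensional graph (the paper phrases this as an open subset of the regular level set $\varphi^*(m)\cap\mathcal{U}$) on which $\|\phi\|_2=\Theta(m^n)$ because $m^{a-n}\le m^n$.

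The only real difference is the size of the small box: $m^{-1}$ for you versus $m^{(a-1)/2}$ in the paper. Yours tacitly needs $n\ge -1/2$ to absorb the $O(m^{-1/2})$ contribution, which is automatic in the intended regime $a\ge 0$, and a box of size $m^{n-1}$ would remove even this restriction.
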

We find that adding the $\ell_2$-regularization term reduces the dimension of the set of optimal parameters by $m$ and imposes a bound on the set. There is always ($m+2$)-dimensional difference between $\Theta^*(m)$ and the parameter space $\mathbb{R}^{2m}$. The dimensional difference grows with order $m$ by increasing the width $m$.
\begin{proposition}\label{prop:global_reg_dim_main}
Under Assumption $\delta>a$, for sufficiently large $m$,~$\dim(\Theta^*(m))=m-2$.
\end{proposition}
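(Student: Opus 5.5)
We rely on the explicit description of $\Theta^*(m)$ in Theorem~\ref{thm:global_optim_explicit_main}. The dimension will come from decomposing $\Theta^*(m)$ into strata indexed by sign patterns of $u$, computing each stratum's dimension, and taking the maximum.

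\textbf{Both thresholds are nonzero.} Since $\delta>a$, we have $m^{a-\delta}\to0$. Because $0<\min\{|X_S^TY|,|X_{S^c}^TY|\}$ and the data do not depend on $m$, for all sufficiently large $m$ (namely $m\ge\overline{M}$) both $|X_S^TY|>m^{a-\delta}$ and $|X_{S^c}^TY|>m^{a-\delta}$ hold. Then $\gamma_P^*\neq0$ and $\gamma_N^*\neq0$, and we write $r_P=|\gamma_P^*|>0$ and $r_N=|\gamma_N^*|>0$.

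\textbf{Stratification.} Partition $[m]=P\sqcup N\sqcup Z$ and let $A_{P,N}$ be the set of $\theta\in\Theta^*(m)$ with $u_i>0$ on $P$, $u_i<0$ on $N$, and $u_i=0$ on $Z$. The constraints $\sum_{P}u_i^2=r_P>0$ and $\sum_N u_i^2=r_N>0$ force $P,N\neq\emptyset$. On $A_{P,N}$ every $\omega_i$ is a fixed linear function of $u_i$: $\omega_i=\operatorname{sign}(\gamma_P^*)u_i$ on $P$, $\omega_i=\operatorname{sign}(\gamma_N^*)u_i$ on $N$, and $\omega_i=0$ on $Z$. Hence $A_{P,N}$ is the image, under a linear injective (graph) map, of
\[
\{u_P\in\mathbb{R}^{|P|}_{>0}:\|u_P\|^2=r_P\}\times\{u_N\in\mathbb{R}^{|N|}_{<0}:\|u_N\|^2=r_N\}\times\{0\}^{Z}.
\]
This is an open subset of $S^{|P|-1}\times S^{|N|-1}$ and is nonempty. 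It is therefore an embedded $C^1$ submanifold of dimension $|P|+|N|-2$. The maximum over strata is $m-2$, attained when $Z=\emptyset$ (this requires $m\ge2$), which gives $\dim\Theta^*(m)\ge m-2$.

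\textbf{Upper bound.} Since $\Theta^*(m)$ is the finite union of the $A_{P,N}$, it remains to show that it contains no embedded $C^1$ submanifold of dimension greater than $m-2$. I would first try to bound the ambient set directly. $\Theta^*(m)$ lies in the graph of a map of $u$ that is piecewise linear, so I would project onto $u$-coordinates. This projection is injective and Lipschitz on $\Theta^*(m)$, and its inverse is continuous. The projected set lies in $\{u:\sum_{u_i\ge0}u_i^2=r_P,\ \sum_{u_i\le0}u_i^2=r_N\}$, which is a finite union of pieces of products of spheres, each of dimension at most $m-2$.

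A $k$-dimensional $C^1$ submanifold of $\Theta^*(m)$ has an open subset of positive $k$-dimensional Hausdorff measure. Its Lipschitz-injective image then has positive $\mathcal{H}^k$ measure while lying in a finite union of sets of Hausdorff dimension at most $m-2$. This forces $k\le m-2$. An alternative is invariance of domain: some open piece of the manifold lies in a single stratum, because strata are relatively open in their closures, and the piece then sits inside an $(m-2)$-manifold.

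\textbf{Main obstacle.} The delicate step is this upper bound, specifically handling submanifolds that cross between strata. I expect the Hausdorff-measure argument to dispose of it cleanly. The remaining steps are routine bookkeeping once $\gamma_P^*,\gamma_N^*\ne0$ is secured.
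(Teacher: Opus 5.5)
Your proposal is correct and follows the paper's route: the explicit form of $\Theta^*(m)$, the fact that $\omega$ is a function of $u$, $\gamma_P^*,\gamma_N^*\neq0$ for large $m$ because $m^{a-\delta}\to0$, and one sphere constraint per sign class on disjoint index sets. Your stratum-by-stratum computation and explicit upper bound make rigorous what the paper only asserts by counting independent constraints.

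One fix is needed in the Hausdorff step. An injective Lipschitz map need not preserve positive $\mathcal{H}^k$ measure; for example, $x\mapsto\int_0^x\mathbf{1}_{C^c}$ for a fat Cantor set $C$ is injective and 1-Lipschitz but sends $C$ to a null set. What you actually need, and what holds, is that the inverse $u\mapsto(u,\omega(u))$ given by the sign rule is Lipschitz. This follows from $|\omega_i(a)-\omega_i(b)|\le|a-b|$ coordinatewise, and it gives $\mathcal{H}^k(\Theta^*(m))\le 2^{k/2}\,\mathcal{H}^k\bigl(\text{$u$-projection of }\Theta^*(m)\bigr)=0$ for $k>m-2$. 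Your invariance-of-domain alternative is also sound.
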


\begin{proposition}\label{thm:global_opt_location_main}
Under Assumption $\delta>a$, for sufficiently large $m$, $\Theta^*(m)$ is bounded and\\ $\forall\,\theta^*\in\Theta^*(m)$, $\|\theta^*\|_2= \sqrt{2\alpha\left(
\frac{|\mathcal{S}_{\alpha\beta}(X_{S}^{\top}Y)|}{\|X_{S}\|_2^{2}}
+\frac{|\mathcal{S}_{\alpha\beta}(X_{S^{c}}^{\top}Y)|}{\|X_{S^{c}}\|_2^{2}}
\right)}= \Theta\!\big(m^{\frac{a}{2}}\big)$.
\end{proposition}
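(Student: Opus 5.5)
Our plan is to read everything off the explicit description of $\Theta^*(m)$ in Theorem~\ref{thm:global_optim_explicit_main} and then do asymptotics in $m$.

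\textbf{Step 1: the norm is constant on $\Theta^*(m)$.} Take $\theta^*=(u_i,\omega_i)_{i=1}^m\in\Theta^*(m)$. Theorem~\ref{thm:global_optim_explicit_main} gives $\omega_i=\pm u_i$ for every $i$ (and $\omega_i=0$ when $u_i=0$), so $\omega_i^2=u_i^2$. Hence
\begin{equation*}
\|\theta^*\|_2^2=\sum_{i}(u_i^2+\omega_i^2)=2\sum_i u_i^2 .
\end{equation*}
Split the last sum into indices with $u_i>0$ and with $u_i<0$; indices with $u_i=0$ contribute nothing, so the overlap at $u_i=0$ in the two constraint sets causes no double counting. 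The two constraints then give $\sum_i u_i^2=|\gamma_P^*|+|\gamma_N^*|$. Substituting $\gamma_P^*=\alpha\,\mathcal{S}_{\alpha\beta}(X_S^\top Y)/\|X_S\|_2^2$ and the analogous $\gamma_N^*$ yields exactly the displayed closed form. Because every point of $\Theta^*(m)$ has this same norm, $\Theta^*(m)$ lies on a sphere and is bounded.

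\textbf{Step 2: the asymptotic order.} Since $\delta>a$, we have $\alpha\beta=m^{a-\delta}\to0$. The data are independent of $m$ and the standing assumption gives $\min\{|X_S^\top Y|,|X_{S^c}^\top Y|\}>0$. So for sufficiently large $m$ (namely $m\ge\overline{M}$ from Theorem~\ref{thm:hyp_connectivity_main}), both thresholds are strictly active, and
\begin{equation*}
|\mathcal{S}_{\alpha\beta}(X_S^\top Y)|=|X_S^\top Y|-m^{a-\delta}\to|X_S^\top Y|>0,
\end{equation*}
and similarly for $S^c$. The bracketed sum $C(m)$ is therefore positive and converges to the positive constant $|X_S^\top Y|/\|X_S\|_2^2+|X_{S^c}^\top Y|/\|X_{S^c}\|_2^2$. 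In particular $C(m)$ is bounded above and below by positive constants for large $m$. Hence $\|\theta^*\|_2=\sqrt{2m^aC(m)}=\Theta(m^{a/2})$, uniformly over $\theta^*\in\Theta^*(m)$.

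\textbf{Main obstacle.} The only delicate point is making sure the sum $\sum_i u_i^2$ is split correctly by sign, so that zero coordinates are not counted twice. The explicit form in Theorem~\ref{thm:global_optim_explicit_main} already forces $\omega_i=0$ whenever $u_i=0$, and such indices add nothing to either sum, which settles it. Everything else is direct substitution. One should also note that $\Theta^*(m)\neq\emptyset$ for large $m$, which holds by Theorem~\ref{thm:hyp_connectivity_main}(4), so the claim is not vacuous.
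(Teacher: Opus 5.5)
Your proposal is correct and takes essentially the same route as the paper: both read off $\omega_i^2=u_i^2$ from the explicit form of $\Theta^*(m)$ in Theorem~\ref{thm:global_optim_explicit_main}, which gives $\|\theta^*\|_2^2=2(|\gamma_P^*|+|\gamma_N^*|)$ at every point of $\Theta^*(m)$. Your Step 2 spells out what the paper only asserts: because $\alpha\beta=m^{a-\delta}\to 0$, the bracketed factor tends to the positive constant $|X_S^\top Y|/\|X_S\|_2^2+|X_{S^c}^\top Y|/\|X_{S^c}\|_2^2$, which yields the $\Theta(m^{a/2})$ order.
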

\section{Numerical Experiments}\label{sec:ne}
\begin{figure}[t]
\centering
\begin{minipage}[t]{0.49\textwidth}
    \centering
    \includegraphics[width=\linewidth]{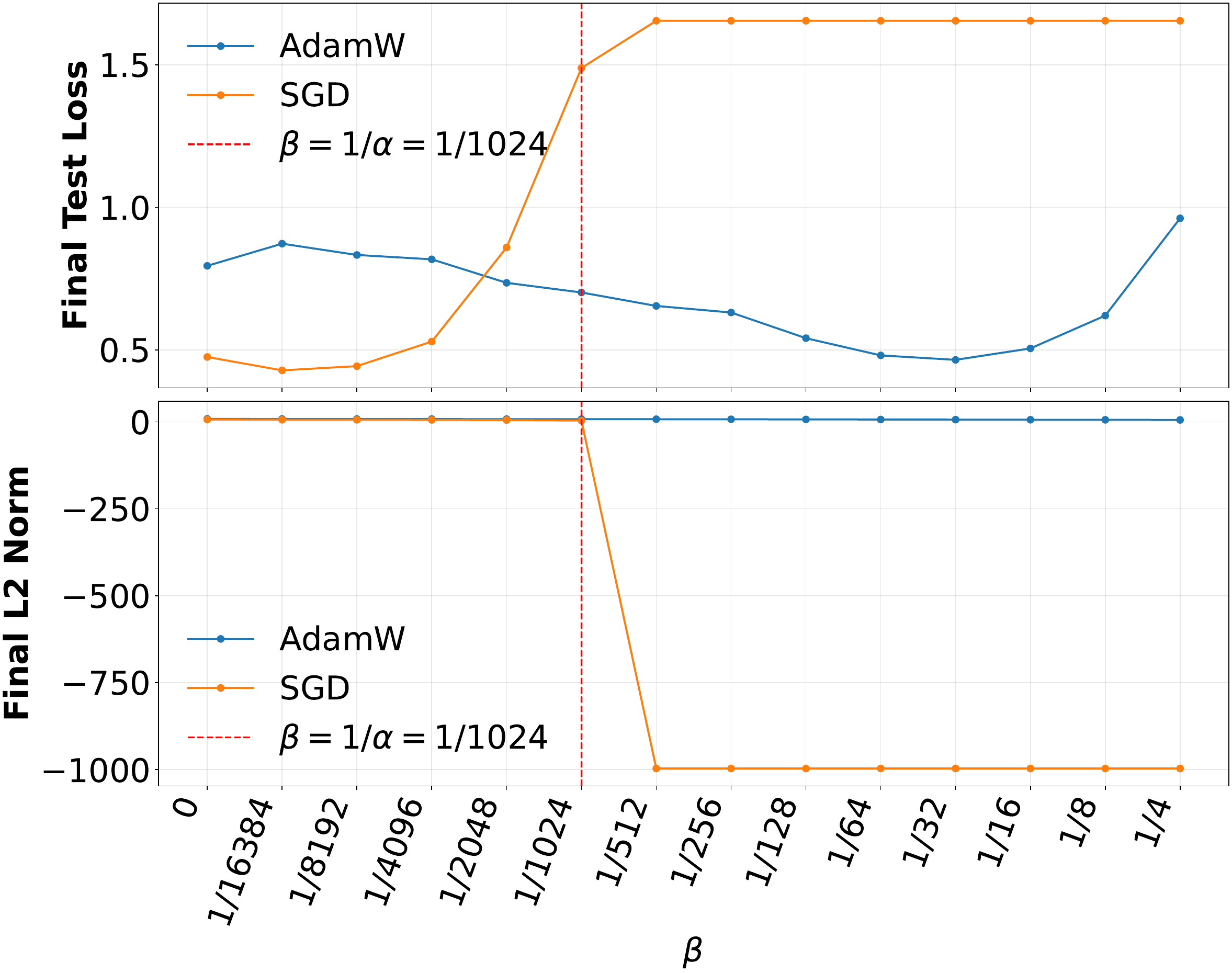}
\end{minipage}\hfill
\begin{minipage}[t]{0.51\textwidth}
    \centering
    \includegraphics[width=\linewidth]{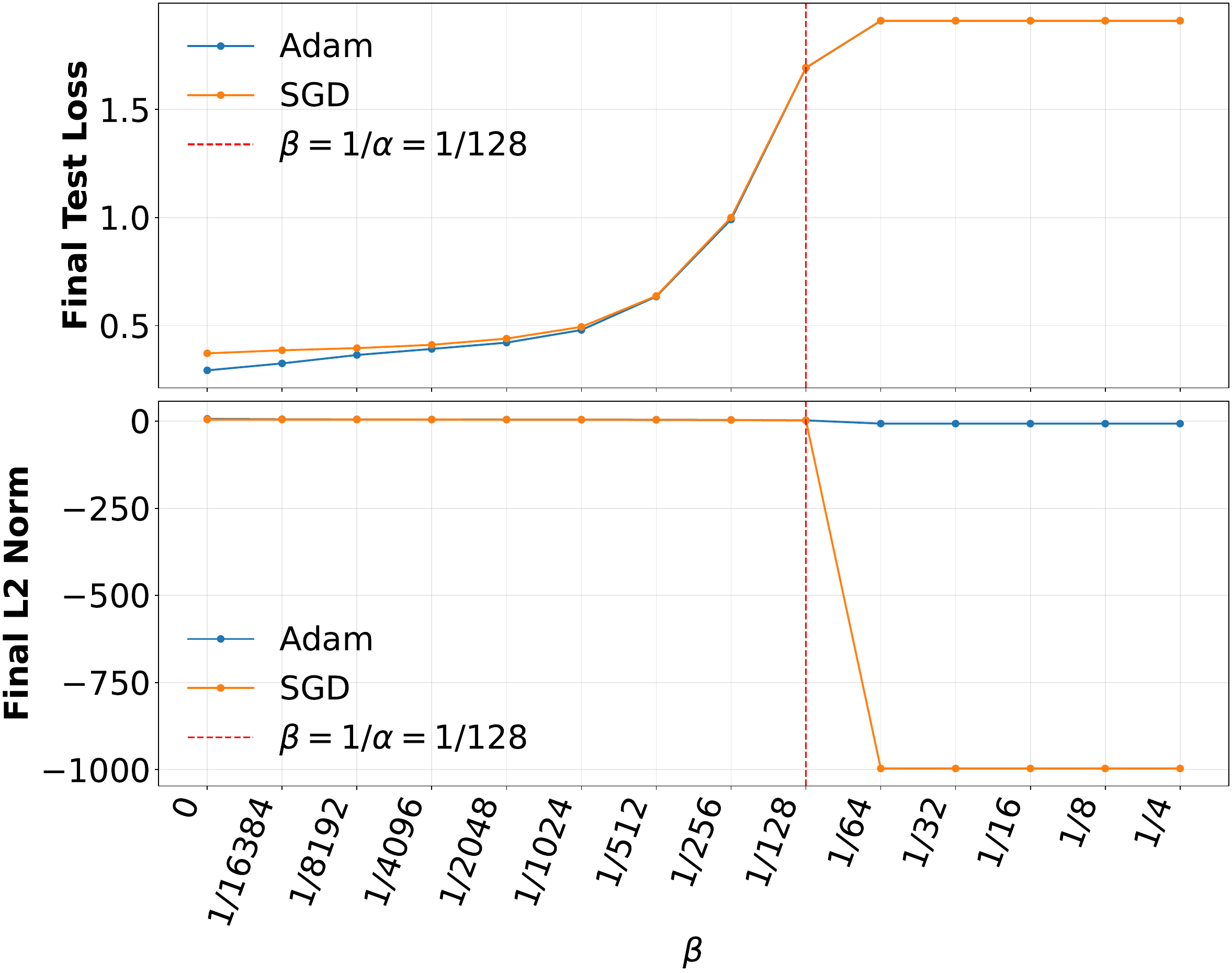}
\end{minipage}
\caption{Final Test loss (above) and Final $\ell_2$-norm of parameters (bottom, $y$-axis values are taken in logarithmic base 2) after training two-layer ReLU networks initialized by $\tau_1$=$\tau_2$=0.01 ($b_1$=$b_2$=0) and trained on Yacht Hydrodynamics \cite{yacht_hydrodynamics_243}. $x$ axis is taken over \textbf{weight decay coefficient} $\beta$ in logarithmic scale, network has width 1024 and  scaled by $1/\alpha=1/1024$ and trained with SGD vs AdamW for $10^5$ epochs (left). $\beta$ is \textbf{$\ell_2$-regularization coefficient} as in \eqref{eq:general_prob}, network has width 128 and scaled by $1/\alpha=1/128$ and trained with SGD vs Adam for $10^6$ epochs (right) The final test loss does not include $\ell_2$-regularization term. }
\label{fig:main_collapse}
\end{figure}
Theorem~\ref{thm:general_convex_zero_solution} itself does not inform us whether the learned parameter collapses to zero. For behaviors of learned parameters, several works discuss convergence to global minima using Adam \cite{kingma2014adam} or stochastic gradient descent (SGD) (details are in Appendix~\ref{app:related}).\\
\\
We trained two-layer ReLU neural networks \eqref{eq:g_2ReLu} with different widths ($m$= 64, 128, 256, 512, 1024, 2048) using the Yacht Hydrodynamics data \cite{yacht_hydrodynamics_243} (squared loss), and MNIST \cite{MNIST} (cross entropy loss). We trained them with SGD or AdamW \cite{loshchilov2018decoupled} using $\beta$ as weight decay coefficient, with a fixed learning rate ($\eta$=0.01). (More details are in Appendix~\ref{app:ne}.) Our numerical experiments show that when $\delta>a$ (i.e. $\beta>\frac{1}{\alpha}$), the learned parameters by SGD collapse to zero as expected by Theorem~\ref{thm:general_convex_zero_solution}, while using AdamW \cite{loshchilov2018decoupled} prevents the learned parameters from collapsing to zero (e.g. Figure~\ref{fig:main_collapse} left). The same phenomenon is seen across different hyperparameter settings or widths. The results for other hyperparameter settings, different widths, and for MNIST are shown in Appendix~\ref{app:ne}.\\
\\
We then trained two-layer ReLU neural networks \eqref{eq:g_2ReLu} using the Yacht Hydrodynamics data \cite{yacht_hydrodynamics_243} (squared loss). We trained them with SGD or Adam \cite{kingma2014adam} using $\beta$ as the coefficient for $\ell_2$-regularization as in \eqref{eq:general_prob}, with a fixed learning rate ($\eta=0.01$). Our numerical experiments show that when $\delta>a$ (i.e. $\beta>\frac{1}{\alpha}$), the learned parameters by SGD collapse to zero, which is the global minima shown in Theorem~\ref{thm:general_convex_zero_solution}, while using Adam \cite{kingma2014adam} prevents the learned parameters from collapsing to zero (Figure~\ref{fig:main_collapse} right).
\section{Conclusion}
We analyzed the set of globally optimal parameters for two-layer ReLU networks under width-dependent hyperparameters and $\ell_2$-regularization, and derived a sufficient condition for the collapse to the zero-weight solution, showing that width, output scaling, and weight decay can qualitatively affect the regularized loss landscape. For one-dimensional inputs, we explicitly characterized the globally optimal parameter sets, showing that $\ell_2$-regularization has a width-invariant effect on connectivity while its dimensionality-reducing effect strengthens with width. Experiments show that SGD follows the predicted collapse, whereas AdamW or Adam prevent it; this may be related to decoupled weight decay, which is not equivalent to optimizing the explicit $\ell_2$-regularized objective for adaptive methods \cite{loshchilov2018decoupled}, and to AdamW's implicit bias toward $\ell_\infty$-constrained optimization \cite{xie2024implicit}. Explaining this optimizer-dependent behavior theoretically remains as future work.

\bibliography{ICML26}
\newpage
\clearpage
\appendix
\section*{Appendix}
\addcontentsline{toc}{section}{Appendix}

\startcontents[appendix]
\printcontents[appendix]{}{1}{\section*{Appendix Contents}}

\section{Related Work}\label{app:related}
\textbf{Geometrical Analysis of Global minima.}\\
There are many works analyzing the geometrical properties of the loss landscape \citep{achour2024loss, wu2025loss, silva2025hide}. Analyzing global minima has also been an active research topic \cite{Cooper, zhao_understanding_2025,kuditipudi_explaining_2019, simsek_geometry_2021}. For example, \citet{simsek_geometry_2021} explicitly describes the manifold of global minima. 

\citet{Cooper} analyzes the dimension of the manifold in global optima for overparameterized neural networks. \citet{zhao_understanding_2025} and \citet{kuditipudi_explaining_2019} study global optima connectivity.
The global optima for two-layer ReLU neural networks trained with $\ell_2$-regularized loss are analyzed by \citet{kim_exploring_2025}. While they use the convex formulation of neural networks introduced by \citet{pilanci_neural_2020} nicely to avoid the difficulty of analyzing non-convex loss, they did not provide an analytic solution for the globally optimal parameter set nor take into account the effects of scaling hyperparameters with respect to width. Inspired by their work, we explicitly consider the effects of width-dependent hyperparameter setting.\\
\\
\textbf{Convergence to Global minima.}\\
Although the learned parameters do not necessarily converge to a global minimum, there are several works confirming the convergence to global minima \citep{li2017convergence, li2020learning, sonodaRidgeRegressionOverparametrized2021, akiyama21a, ChizatB18, sirignano2020mean}. \citet{akiyama21a} shows that, under a specific teacher-student setting, an overparameterized two-layer ReLU student trained with sparse/path-norm regularization, which is closely related to $\ell_2$-regularization, and norm-dependent gradient descent can recover the teacher parameters with high probability. On the other hand, \citet{j.2018on} shows that the Adam \citep{kingma2014adam} algorithm fails to converge to a global optimum. We experimentally demonstrated differences in convergence results depending on the choice of optimizers in a setting distinct from previous work.\\
\\
\textbf{Hyperparameter setting and Training Dynamics.}\\
Different training dynamics behaviors are observed to be dependent on a choice of hyperparmeters. In one regime (linear regime, lazy training, kernel regime), the training can be approximated by kernel gradient descent \citep{chizat_lazy_2020, eilers2024generalized}. In the other regime, nerual networks learn features beyond their initialization \citep{frei2023random, dandi2024two}. A well-known example in this regime is mean-field regime \citep{mei2018mean,mei2019mean}, and the training admits feature learning \citep{yang_feature_2022} or learns adaptively from samples \citep{williams_gradient_2019}. These training dynamics regimes relate to the performance of neural networks. For example, the poor generalization performance of lazy training has been reported. \citep{chizat_lazy_2020, buffelli2024exact}. The hyperparameter settings for our numerical experiments includes both for the kernel regime and for the mean-field regime.

\section{Proofs for main}

\subsection{Proofs for Subsection~\ref{subsec:collapse}}
\begin{theorem}\label{thm:app_1}(Theorem \ref{thm:general_convex_zero_solution} in main)
We consider minimizing the following loss function
\[\min_{\theta\in\Theta}L_{\ell}(\theta)\!=\! \ell(\theta)+\frac{\beta}{2}\!\sum_{j=1}^{m}(u_j^{2}+\omega_{j}^{2}).\]
where $\ell$ is finite on $\Theta$ and can be written as a convex function of $f_{\theta}(X)$. For example, $\ell(\theta)=\|f_{\theta}(X)-Y\|^2$. \\ If $\delta<a$, then there exists $m_0\in\mathbb{N}$ such that for all $m\ge m_0$,
$\argmin_{\theta\in\Theta} L_{\ell}(\theta)=\{(0,0)_{i=1}^{m}\}$.
\end{theorem}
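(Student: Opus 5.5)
The plan is to compare every $\theta$ directly with the zero parameter, using a first-order lower bound on the convex outer loss together with an AM--GM bound on the network output. Write $\ell(\theta)=g(f_\theta(X))$ with $g:\mathbb{R}^n\to\mathbb{R}$ convex and finite. I will assume, as in the squared-loss example, that $g$ depends only on the data $(X,Y)$, which do not depend on $m$. Since $g$ is convex and finite on $\mathbb{R}^n$, it has a subgradient $s\in\partial g(0)$, and
\[
g(z)\ \ge\ g(0)+\langle s,z\rangle\ \ge\ g(0)-\|s\|_2\|z\|_2\qquad\forall z\in\mathbb{R}^n .
\]
The vector $s$ is independent of $m$. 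Also $f_0(X)=0$, so $L_\ell(0)=g(0)$.

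Next I bound the output size by the regularizer. Set $R(\theta)=\frac12\sum_{j=1}^m(\|u_j\|_2^2+\omega_j^2)$. For each sample $x_i$, using $|(x_iu_j)_+|\le\|x_i\|_2\|u_j\|_2$ and $\|u_j\|_2|\omega_j|\le\frac12(\|u_j\|_2^2+\omega_j^2)$, we get
\[
|f_\theta(x_i)|\le\frac{1}{\alpha}\sum_{j=1}^m\|x_i\|_2\|u_j\|_2|\omega_j|\le\frac{\|x_i\|_2}{\alpha}R(\theta).
\]
Summing over $i$ gives $\|f_\theta(X)\|_2\le \frac{C_X}{\alpha}R(\theta)$ with $C_X=\|X\|_F$, which is independent of $m$. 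Combining the two bounds,
\[
L_\ell(\theta)\ \ge\ g(0)-\frac{\|s\|_2C_X}{\alpha}R(\theta)+\beta R(\theta)
= L_\ell(0)+R(\theta)\Bigl(m^{-\delta}-\|s\|_2C_X\,m^{-a}\Bigr).
\]

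Finally, the bracket equals $m^{-a}\bigl(m^{a-\delta}-\|s\|_2C_X\bigr)$. Since $\delta<a$, we have $m^{a-\delta}\to\infty$, so there is $m_0$ with $m^{a-\delta}>\|s\|_2C_X$ for all $m\ge m_0$. For such $m$ and any $\theta\neq0$ we have $R(\theta)>0$, hence $L_\ell(\theta)>L_\ell(0)$. This shows that $0$ is a minimizer and that it is the unique one, which is the claim.

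The main obstacle is the first step: the existence of a subgradient at $0$ with $m$-independent norm. This needs $g$ to be finite and convex on a neighborhood of $0$, or on all of $\mathbb{R}^n$, and not to depend on $m$. If $g$ were only finite on the range of $\theta\mapsto f_\theta(X)$, I would first argue that this range is a convex cone containing $0$. It is a cone because $f_{c\theta}=c^2f_\theta$, and it is closed under addition because we can add neurons in parallel for large $m$. I would then either work on its affine hull or use the directional derivative $g'(0;\cdot)$, which is bounded on the unit sphere of that cone by convexity, in place of $\langle s,\cdot\rangle$.
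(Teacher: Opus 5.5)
Your proof is correct and is essentially the paper's argument: a subgradient inequality for the convex outer loss at $f_0(X)=0$, the AM--GM bound $\|f_\theta(X)\|_2\le \frac{C}{2\alpha}\sum_{j}(\|u_j\|_2^2+\omega_j^2)$, and then $\alpha\beta=m^{a-\delta}\to\infty$ (the paper takes $C=\|X\|_{\mathrm{op}}$, you take $C=\|X\|_F$). The paper proves existence of the subgradient via a supporting hyperplane to the epigraph where you cite it as standard, while you make explicit two points the paper leaves implicit: the subgradient must be chosen independently of $m$, and if $\ell$ is only known to be finite on $\Theta$ (rather than the outer loss being finite on all of $\mathbb{R}^n$), the restriction-to-the-range argument you sketch is needed.
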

\begin{proof}
Theorem~\ref{thm:app_1} is proven by the following lemmas. Lemma~\ref{lem:l1} proves that Theorem~\ref{thm:app_1} holds when $\partial \tilde{\ell}(f_0(X))\neq\emptyset$. Lemma~\ref{lem:l2} proves that $\partial \tilde{\ell}(f_0(X))\neq\emptyset$ holds when $\ell$ is finite on $\Theta$ and can be written as a convex function of $f_{\theta}(X)$.
\end{proof}
We now prove Lemma~\ref{lem:l1} and Lemma~\ref{lem:l2}. 
\begin{lemma}\label{lem:l1}
Assume that $\partial \tilde{\ell}(f_0(X))\neq\emptyset$. If $\delta<a$, then there exists $m_0\in\mathbb{N}$ such that for all $m\ge m_0$,
\[
\argmin_{\theta\in\Theta} L_{\ell}(\theta)=\{(0,0)_{i=1}^{m}\}.
\]
\end{lemma}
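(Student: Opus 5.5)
Write $\ell(\theta)=\tilde{\ell}(f_\theta(X))$ with $\tilde{\ell}:\mathbb{R}^n\to\mathbb{R}$ convex. The plan is to show directly that $L_\ell(\theta)>L_\ell(0)$ for every $\theta\neq 0$ once $m$ is large. Note that $f_0(X)=0$.

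\textbf{Step 1: linear lower bound on the loss.} Pick a subgradient $g\in\partial\tilde{\ell}(0)$, which exists by assumption. By convexity,
\[
\ell(\theta)\ge \ell(0)+g^{\top}f_\theta(X)\ge \ell(0)-\|g\|_2\,\|f_\theta(X)\|_2 .
\]
Crucially, $g$ depends only on $\tilde{\ell}$ and the data, so it is fixed independently of $m$. I am treating $\tilde{\ell}$, like $X$ and $Y$, as $m$-independent.

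\textbf{Step 2: bound the network output by the regularizer.} For each sample,
\[
|f_\theta(x_i)|\le \frac{1}{\alpha}\sum_j |x_iu_j|\,|\omega_j|\le \frac{\|x_i\|_2}{\alpha}\sum_j \|u_j\|_2|\omega_j| .
\]
By AM--GM, $\|u_j\|_2|\omega_j|\le\frac12(\|u_j\|_2^2+\omega_j^2)$. Writing $R(\theta)=\frac12\sum_j(\|u_j\|_2^2+\omega_j^2)$ and $C=\|g\|_2\|X\|_F$, this gives
\[
\|f_\theta(X)\|_2\le \frac{\|X\|_F}{\alpha}\cdot 2R(\theta),
\qquad\text{hence}\qquad
L_\ell(\theta)-L_\ell(0)\ge \Bigl(\beta-\frac{2C}{\alpha}\Bigr)R(\theta).
\]

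\textbf{Step 3: use the width scaling.} Since $\beta=m^{-\delta}$ and $\alpha=m^{a}$,
\[
\beta-\frac{2C}{\alpha}=m^{-\delta}\bigl(1-2C\,m^{\delta-a}\bigr).
\]
Because $\delta<a$, the factor $m^{\delta-a}\to 0$. So there is $m_0$ such that $\beta>2C/\alpha$ for all $m\ge m_0$. For $\theta\ne0$ we have $R(\theta)>0$, so $L_\ell(\theta)>L_\ell(0)$ strictly. Thus zero is the unique minimizer, and the argmin is $\{(0,0)_{i=1}^m\}$.

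\textbf{Main obstacle.} The delicate point is the uniformity in $m$ of the constant $C$. This needs the subgradient at $0$ of the fixed convex function $\tilde{\ell}$ on $\mathbb{R}^n$, not of $\ell$ on $\Theta$. That is exactly why the hypothesis is phrased via $\partial\tilde{\ell}(f_0(X))$, and why it must be understood that $\tilde{\ell}$ does not vary with $m$.

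The remaining steps are elementary inequalities, apart from one care point: the ReLU bound $|(xu)_+|\le|xu|$, together with Cauchy--Schwarz for $d>1$.
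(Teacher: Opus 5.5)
Your proof is correct and follows the paper's argument essentially step for step: a fixed subgradient $g_0\in\partial\tilde{\ell}(f_0(X))$ gives a linear lower bound, the output norm is controlled via $\sum_j\|u_j\|_2|\omega_j|\le\frac12\sum_j(\|u_j\|_2^2+\omega_j^2)$, and $\alpha\beta=m^{a-\delta}\to\infty$ makes the coefficient of the regularizer positive for large $m$. The differences are cosmetic. You use $\|X\|_F$ where the paper uses $\|X\|_{\mathrm{op}}$, and you carry an unnecessary extra factor of $2$, since already $\sum_j\|u_j\|_2|\omega_j|\le R(\theta)$. You also state explicitly that the subgradient is independent of $m$, which the paper uses tacitly.
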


\begin{proof}
$\ell(\theta)=\tilde{\ell}(f_{\theta}(X))$ for some convex function $\tilde{\ell}$.
Fix an element $g_0\in\partial \tilde{\ell}(f_{0}(X))$. \\
By the subgradient inequality, $\forall z\in\mathbb{R}^n,~\tilde{\ell}(z)\ge \tilde{\ell}(f_0(X))+\langle g_0,z\rangle.$
By applying this inequality to $z=f_{\theta}(X)$, we obtain
\[
L_{\ell}(\theta)
\ge
\tilde{\ell}(f_0(X))
+
\langle g_0,f_{\theta}(X)\rangle
+
\frac{\beta}{2}\sum_{j=1}^{m}\bigl(\|u_j\|_2^2+\omega_j^2\bigr).
\]
Now we are going to lower bound $\langle g_0,f_{\theta}(X)\rangle$.\\ For each $j$, by the operator norm inequality for a matrix $X$, $\|(Xu_j)_+\|_2
\le
\|Xu_j\|_2
\le
\|X\|_{\mathrm{op}}\|u_j\|_2.$
Hence, 
\[
\|f_{\theta}(X)\|_2
\le
\frac{1}{\alpha}\sum_{j=1}^{m}|\omega_j|\,\|(Xu_j)_+\|_2
\le
\frac{\|X\|_{\mathrm{op}}}{\alpha}\sum_{j=1}^{m}|\omega_j|\,\|u_j\|_2.
\]
Using the basic inequality $2ab\le a^2+b^2$, we further obtain$
\sum_{j=1}^{m}|\omega_j|\,\|u_j\|_2
\le
\frac12\sum_{j=1}^{m}\bigl(\|u_j\|_2^2+\omega_j^2\bigr).$\\
Therefore,
\[
\|f_{\theta}(X)\|_2
\le
\frac{\|X\|_{\mathrm{op}}}{2\alpha}
\sum_{j=1}^{m}\bigl(\|u_j\|_2^2+\omega_j^2\bigr).
\]
It follows that
\[
\langle g_0,f_{\theta}(X)\rangle
\ge
-
\|g_0\|_2\,\|f_{\theta}(X)\|_2
\ge
-
\frac{\|g_0\|_2\|X\|_{\mathrm{op}}}{2\alpha}
\sum_{j=1}^{m}\bigl(\|u_j\|_2^2+\omega_j^2\bigr).
\]
By substituting this into the lower bound for $L_{\ell}(\theta)$, we obtain
\[
L_{\ell}(\theta)
\ge
\tilde{\ell}(f_0(X))
+
\frac12
\left(
\beta-\frac{\|g_0\|_2\|X\|_{\mathrm{op}}}{\alpha}
\right)
\sum_{j=1}^{m}\bigl(\|u_j\|_2^2+\omega_j^2\bigr).
\]
In our hyperparameter setting, $\alpha\beta=m^{a-\delta}$, so 
the assumption $\delta<a$ implies $
\alpha\beta \to \infty\text{ as }m\to\infty$.
Therefore, there exists $m_0\in\mathbb{N}$ such that for all $m\ge m_0$, $
\beta>\frac{\|g_0\|_2\|X\|_{\mathrm{op}}}{\alpha}.$
For such $m$, we have
\[
L_{\ell}(\theta)
\ge
\tilde{\ell}(f_0(X))
+
c_m
\sum_{j=1}^{m}\bigl(\|u_j\|_2^2+\omega_j^2\bigr) \text{ for some positive constant $c_m>0$. }
\]
On the other hand,
\[
L_{\ell}((0,0)_{i=1}^{m})=\tilde{\ell}(f_0(X)).
\]
Thus, for every nonzero $\theta$,
\[
L_{\ell}(\theta)>L_{\ell}((0,0)_{i=1}^{m}).
\]
Hence, $(0,0)_{i=1}^{m}$ is the unique global minimizer of $L_{\ell}$ for all sufficiently large $m$.
\end{proof}

\begin{lemma}\label{lem:l2}
If $\tilde{\ell}$ is finite and convex on all of $\mathbb{R}^n$, the assumption $\partial\tilde{\ell}(f_0(X))\neq\emptyset$ holds. If $\tilde{\ell}$ is differentiable at $0$, one may take $g_0=\nabla \tilde{\ell}(f_0(X))$ in the proof of Lemma~\ref{lem:l1}.
\end{lemma}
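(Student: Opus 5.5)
The first assertion is a standard fact of convex analysis: a finite convex function on all of $\mathbb{R}^n$ has a nonempty subdifferential at every point. I will prove it at the point $z_0\coloneq f_0(X)$ and then treat the differentiable case separately. Note first that $f_0(X)=0$, since every neuron vanishes at $\theta=(0,0)_{i=1}^m$, so $z_0=0\in\mathbb{R}^n$.

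For nonemptiness, I will use the supporting hyperplane theorem on the epigraph. Set $\operatorname{epi}\tilde{\ell}\coloneq\{(z,t)\,:\,t\ge\tilde{\ell}(z)\}$. This is a convex subset of $\mathbb{R}^{n+1}$ because $\tilde{\ell}$ is convex, and $(z_0,\tilde{\ell}(z_0))$ lies on its boundary. There is therefore a nonzero $(g,s)$ with
\[
\langle g,z\rangle+st\le\langle g,z_0\rangle+s\,\tilde{\ell}(z_0)\quad\text{for all }(z,t)\in\operatorname{epi}\tilde{\ell}.
\]
Letting $t\to\infty$ forces $s\le0$. The main obstacle is to rule out $s=0$ (a vertical hyperplane), and this is exactly where finiteness on all of $\mathbb{R}^n$ is needed. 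If $s=0$, then $\langle g,z-z_0\rangle\le0$ for every $z\in\mathbb{R}^n$, because $\operatorname{dom}\tilde{\ell}=\mathbb{R}^n$. Taking $z=z_0+g$ gives $\|g\|^2\le0$, so $g=0$, which contradicts $(g,s)\neq0$. Hence $s<0$. Dividing by $-s$, setting $g_0\coloneq g/(-s)$, and taking $t=\tilde{\ell}(z)$ yields
\[
\tilde{\ell}(z)\ge\tilde{\ell}(z_0)+\langle g_0,z-z_0\rangle\quad\text{for all }z.
\]
This says $g_0\in\partial\tilde{\ell}(z_0)$. Since $z_0=0$, it is exactly the inequality used in Lemma~\ref{lem:l1}.

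Alternatively, I could cite the standard theorem that a convex function is subdifferentiable on the interior of its domain, since $z_0\in\operatorname{int}\operatorname{dom}\tilde{\ell}=\mathbb{R}^n$.

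For the differentiable case, I will show that the gradient satisfies the subgradient inequality. Fix $z$. For $t\in(0,1]$, convexity gives
\[
\tilde{\ell}(z_0+t(z-z_0))\le(1-t)\tilde{\ell}(z_0)+t\tilde{\ell}(z).
\]
Rearranging,
\[
\frac{\tilde{\ell}(z_0+t(z-z_0))-\tilde{\ell}(z_0)}{t}\le\tilde{\ell}(z)-\tilde{\ell}(z_0).
\]
Letting $t\downarrow0$, the left-hand side tends to $\langle\nabla\tilde{\ell}(z_0),z-z_0\rangle$. Hence $\nabla\tilde{\ell}(z_0)\in\partial\tilde{\ell}(z_0)$, so we may take $g_0=\nabla\tilde{\ell}(f_0(X))$ in the proof of Lemma~\ref{lem:l1}. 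For the squared loss example, this gives $g_0=-2Y$.

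Finally, to connect with Theorem~\ref{thm:app_1}: the hypothesis that $\ell$ is finite on $\Theta$ must supply a convex $\tilde{\ell}$ that is finite on all of $\mathbb{R}^n$. The image $\{f_\theta(X)\}$ is a cone that need not be all of $\mathbb{R}^n$, so I would read the hypothesis as $\tilde{\ell}$ being a finite convex function on $\mathbb{R}^n$, as all practical losses are.
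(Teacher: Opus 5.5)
Your proof is correct and takes the same route as the paper: you take a supporting hyperplane to the convex epigraph at $(f_0(X),\tilde{\ell}(f_0(X)))$ and rule out the vertical case using finiteness of $\tilde{\ell}$ on all of $\mathbb{R}^n$. The differences are minor: you test $z=z_0+g$ where the paper tests $\pm z$, you omit the paper's closedness argument for the epigraph (which is not needed), and in the differentiable case you prove via the difference quotient that the gradient satisfies the subgradient inequality, whereas the paper simply asserts $\partial\tilde{\ell}(f_0(X))=\{\nabla\tilde{\ell}(f_0(X))\}$.
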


\begin{proof}
Recall that the subdifferential of $\tilde{\ell}$ at 0 is
$\partial \tilde{\ell}(f_0(X))
=
\left\{
g\in\mathbb{R}^n:\;
\tilde{\ell}(z)\ge \tilde{\ell}(f_0(X))+\langle g,z\rangle
\quad\forall z\in\mathbb{R}^n
\right\}.$\\
If $\tilde{\ell}$ is differentiable at $0$, 
$\partial\tilde{\ell}(f_0(X))=\{\nabla\tilde{\ell}(f_0(X))\}$.\\
\\
Now we assume more generally that $\tilde{\ell}:\mathbb{R}^n\to\mathbb{R}$ is finite and convex. Consider the epigraph
\[
\operatorname{epi}(\tilde{\ell})
=
\{(z,r)\in\mathbb{R}^n\times\mathbb{R}: r\ge \tilde{\ell}(z)\}.
\]
As $\tilde{\ell}$ is convex and finite everywhere, $\tilde{\ell}$ is locally Lipschitz at every point, and so is continuous on all of $\mathbb{R}^n$. Take any sequence $(z_k,r_k)\in\operatorname{epi}(\tilde{\ell})$ such that $(z_k,r_k)\rightarrow (z,r)\in\mathbb{R}^n\times\mathbb{R}$. Then, $r_k\geq\tilde{\ell}(z_k)$ for all $k$, and by continuity of $\tilde{\ell}$, $r\geq\tilde{\ell}(z)$. Hence, $(z,r)\in\operatorname{epi}(\tilde{\ell})$, and so $\operatorname{epi}(\tilde{\ell})$ is closed.\\
\\
As $\tilde{\ell}$ is convex, $\operatorname{epi}(\tilde{\ell})$ is a convex set. The point $(0,\tilde{\ell}(f_0(X)))$ lies on the boundary of $\operatorname{epi}(\tilde{\ell})$. By the supporting hyperplane theorem, there exist non-zero $(a,b)\neq (0,0)\in\mathbb{R}^n\times\mathbb{R}$ such that
\[
\forall (z,r)\in \operatorname{epi}(\tilde{\ell}),~\langle a,z\rangle + br \ge 
\langle a,0\rangle + b\tilde{\ell}(f_0(X)).
\]
We prove $b>0$ by contradiction. If $b<0$, then for any fixed $z$, letting $r\to\infty$ contradicts the inequality above. If $b=0$, $\langle a,z\rangle \ge 0$ for all $z\in\mathbb{R}^n$.
Applying this also to $-z$ yields $\langle a,z\rangle=0$ for all $z$. This leads to $a=0$, a contradiction. Therefore, $b>0$.\\
\\
For every $z\in\mathbb{R}^n$, the point $(z,\tilde{\ell}(z))$ belongs to $\operatorname{epi}(\tilde{\ell})$, so $\langle a,z\rangle + b\tilde{\ell}(z)\ge b\tilde{\ell}(f_0(X))$.
By rearranging the equations,
\[
\forall z\in\mathbb{R}^n,~\tilde{\ell}(z)\ge \tilde{\ell}(f_0(X))+\left\langle -\frac{a}{b},\,z\right\rangle.
\]
Hence, $-\frac{a}{b}\in \partial \tilde{\ell}(f_0(X))$ and therefore $\partial \tilde{\ell}(f_0(X))\neq\emptyset$.
\end{proof}

\subsection{Proofs for Subsection~\ref{subsec:go}}\label{app:proof2.2}
\begin{theorem}\label{prop:sqrd_loss_app} (Theorem~\ref{prop:sqrd_loss} in main)
The set of global optimal parameters 
    \[
    \varphi^*(m)=\{\theta\in\mathbb{R}^{2m}~:~\arg\min_{\theta\in\mathbb{R}^{2m}}\frac{1}{2}\left\|
f_{\theta}(x) - Y
\right\|_{2}^{2}\}\]
for squared loss is 
\begin{equation*}
\begin{aligned}
\varphi^*(m) \coloneq
\Bigl\{(u_j,\omega_j)_{j=1}^{m}\ \Bigm|\ 
& \sum_{j:\,u_j\ge0} u_j\omega_j
= \frac{\alpha X_S^{\top}Y}{\lVert X_S\rVert_2^2}, 
\sum_{j:\,u_j\le0} u_j\omega_j
= \frac{\alpha X_{S^{c}}^{\top}Y}{\lVert X_{S^{c}}\rVert_2^2}
\Bigr\}.
\end{aligned}
\end{equation*}
\end{theorem}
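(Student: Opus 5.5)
Since the input is one-dimensional ($d=1$), I reduce the network output to a two-parameter linear model. For $u_j\ge 0$ we have $(Xu_j)_+ = u_j (X)_+ = u_j X_S$, because $(x_iu_j)_+ = u_j x_i$ when $x_i\ge 0$ and $0$ otherwise. For $u_j\le 0$ we have $(Xu_j)_+ = u_j X_{S^c}$, since $x_iu_j\ge0$ exactly when $x_i\le 0$ and then $(x_iu_j)_+=u_jx_i$. Neurons with $u_j=0$ contribute nothing, so their double assignment is harmless. Hence
\[
f_\theta(X)=\frac{1}{\alpha}\Bigl(p\,X_S+q\,X_{S^c}\Bigr),\qquad p\coloneq\sum_{j:\,u_j\ge0}u_j\omega_j,\quad q\coloneq\sum_{j:\,u_j\le0}u_j\omega_j .
\]
The loss $\frac12\|f_\theta(X)-Y\|_2^2$ therefore depends on $\theta$ only through $(p,q)\in\mathbb{R}^2$. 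This is the constraint-free convex reformulation mentioned in Proposition~\ref{prop:convex_main}.

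Next I solve the reduced problem $\min_{p,q}\frac12\|\frac{1}{\alpha}(pX_S+qX_{S^c})-Y\|_2^2$. The vectors $X_S$ and $X_{S^c}$ have disjoint supports, apart from coordinates with $x_i=0$, where both entries vanish. They are therefore orthogonal, and both are nonzero by assumption. The objective is a strictly convex quadratic in $(p,q)$ with diagonal Hessian $\frac{1}{\alpha^2}\operatorname{Diag}(\|X_S\|_2^2,\|X_{S^c}\|_2^2)$. Setting the gradient to zero gives the unique minimizer
\[
p^*=\frac{\alpha X_S^\top Y}{\|X_S\|_2^2},\qquad q^*=\frac{\alpha X_{S^c}^\top Y}{\|X_{S^c}\|_2^2}.
\]

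To pass back to $\theta$, I argue two inclusions. First, if $\theta$ attains $(p,q)=(p^*,q^*)$, then $L(\theta)$ equals the reduced minimum, which is a lower bound for $L$ over all $\theta$, so $\theta$ is a global minimizer. Conversely, any $\theta$ with $(p,q)\neq(p^*,q^*)$ has strictly larger loss by uniqueness of the reduced minimizer. Together these show that the argmin set is exactly the set described in the statement, as long as $(p^*,q^*)$ is attainable.

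Attainability is the only subtle point, and it depends on $m$. A single neuron has one sign of $u_j$ and so can realize only one of $p,q$ unless the other target is zero. Since $X_S^\top Y$ and $X_{S^c}^\top Y$ are both nonzero by assumption, for $m=1$ the set in the statement is empty, and the true argmin is also empty: a minimizer would have to reach $(p^*,q^*)$, which is impossible, and the infimum is not attained. This matches Theorem~\ref{thm:noweight_connectivity_main}(1). The characterization therefore holds in the sense that both sides are empty. For $m\ge2$ one neuron with $u>0$ and one with $u<0$ realize any $(p,q)$, so the infimum equals the reduced minimum and both inclusions apply.
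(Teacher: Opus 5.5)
For $m\ge 2$ your argument is correct and is essentially the paper's proof. It uses the same reduction of the loss to a function of $(p,q)=(\gamma_P,\gamma_N)$ and the same decoupled first-order conditions. You also make two steps explicit: the orthogonality $X_S^\top X_{S^c}=0$, which the paper uses tacitly when it writes the two stationarity equations separately, and the attainability of $(p^*,q^*)$, which the paper never checks.

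The $m=1$ paragraph, however, is wrong. The reduced minimum is only a lower bound on $L$. Your converse inclusion shows that $L(\theta)$ exceeds this bound when $(p,q)\neq(p^*,q^*)$; it does not show that $\theta$ is beaten by another parameter. So ``a minimizer would have to reach $(p^*,q^*)$'' does not follow when $(p^*,q^*)$ is unreachable. For $m=1$ the reachable set of $(p,q)$ is the closed set $\{(p,0)\}\cup\{(0,q)\}$. Completing the square gives
\[
\tfrac12\|f_\theta(X)-Y\|_2^2=\frac{\|X_S\|_2^2}{2\alpha^2}(p-p^*)^2+\frac{\|X_{S^c}\|_2^2}{2\alpha^2}(q-q^*)^2+c,
\]
with $c$ independent of $(p,q)$. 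The minimum over the reachable set is therefore attained at $(p^*,0)$ or $(0,q^*)$, whichever of $(X_S^\top Y)^2/\|X_S\|_2^2$ and $(X_{S^c}^\top Y)^2/\|X_{S^c}\|_2^2$ is larger (both in case of a tie). For instance, with $X=(1,-1)^\top$, $Y=(1,1)^\top$, $\alpha=1$, the minimum value $\tfrac12$ is attained at every $(u_1,\omega_1)$ with $u_1>0$ and $u_1\omega_1=1$.

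So at $m=1$ the true argmin is nonempty while the set in the statement is empty. The characterization is false there, and the theorem holds only for $m\ge 2$. The paper's proof has the same blind spot: it only shows the lower bound is attained iff the two sum conditions hold, and its later claim $\varphi^*(1)=\emptyset$ inherits the error. You were right that attainability is the crux, but the resolution at $m=1$ should be ``the statement fails,'' not ``both sides are empty.''
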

\begin{proof}
    By writing
$\gamma_{P}=\sum_{i:u_i\ge0}u_i\omega_i~\text{and~}
\gamma_{N}=\sum_{i:u_i\le0}u_i\omega_i,$
\[
\left\|
f_{\theta}(x) - Y
\right\|_{2}^{2}
=\Big\|\frac{1}{\alpha}\sum_{j=1}^{m}(Xu_j)_{+}\omega_j - Y\Big\|_2^2 
=\|\,\gamma_P \frac{X_S}{\alpha} + \gamma_N \frac{X_{S^c}}{\alpha} - Y\,\|_2^2.
\]
By solving the KKT condition, (LHS) is minimized at $(\gamma_P,~\gamma_N)=(\gamma_P^*,\gamma_N^*)$ where
\begin{align*}
    &0=\gamma_P^*\|\frac{X_S}{\alpha}\|_2^2-\frac{X_S}{\alpha}^{T}Y,~0=\gamma_N^*\|\frac{X_{S^c}}{\alpha}\|_2^2-\frac{X_{S^c}}{\alpha}^{T}Y\\
    \iff~&\gamma_P^*=\frac{\alpha X_S^TY}{\|X_S\|_2^2},~\gamma_N^*=\frac{\alpha X_{S^c}Y}{\|X_{S^c}\|_2^2}.
\end{align*}
Hence, 
\[
\Big\|\frac{1}{\alpha}\sum_{j=1}^{m}(Xu_j)_{+}\omega_j - Y\Big\|_2^2 
\geq\|\,\gamma_P^* \frac{X_S}{\alpha} + \gamma_N^* \frac{X_{S^c}}{\alpha} - Y\,\|_2^2.
\]
with equality iff 
$\sum_{j:\,u_j\ge0} u_j\omega_j
= \frac{\alpha X_S^{\top}Y}{\lVert X_S\rVert_2^2} \text{ and }
\sum_{j:\,u_j\le0} u_j\omega_j
= \frac{\alpha X_{S^{c}}^{\top}Y}{\lVert X_{S^{c}}\rVert_2^2}$.
\end{proof}
For the one-dimensional input case, we prove that the minimum value for the original non-convex problem (\ref{eq:g_min_problem}) with $\beta>0$ equals the minimum value for the convex problem (\ref{eq:gamma_min}) in Lemma~\ref{lem:gamma_min}.
\begin{lemma}\label{lem:gamma_min}
    Consider the optimization problem 
    \begin{align}\label{eq:gamma_min}
\min_{\gamma_P,\gamma_N\in\mathbb{R}}
&~\frac{1}{2}\|\,\gamma_P \frac{X_S}{\alpha} + \gamma_N \frac{X_{S^c}}{\alpha} - Y\,\|_2^2
+ \beta(|\gamma_P| + |\gamma_N|).
\end{align}
    The optimal argument ($\gamma_P^*$, $\gamma_N^*$) is uniquely determined as
    \begin{align}\label{eq:opt_gamma}
        (\gamma_P^*, \gamma_N^*)=\left(\alpha\frac{\mathcal{S}_{\alpha\beta}(X_{S}^TY)}{\|X_{S}\|_2^2}, \alpha\frac{\mathcal{S}_{\alpha\beta}(X_{S^c}^TY)}{\|X_{S^c}\|_2^2}\right).
    \end{align}
    where $\mathcal{S}_\beta(b):=sign(b)\max(|b|-\beta,0)$ is the softmax function.\\
\end{lemma}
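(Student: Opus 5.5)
The key structural fact is that $X_S$ and $X_{S^c}$ have disjoint supports, except on coordinates where $x_i=0$. On those coordinates both vectors vanish anyway, since the entries are $x_i\cdot 1[\cdot]$. Hence $\langle X_S, X_{S^c}\rangle = 0$, and the problem \eqref{eq:gamma_min} decouples into two independent one-dimensional lasso problems, one in $\gamma_P$ and one in $\gamma_N$. The plan is to expand the quadratic, use this orthogonality, and then solve each scalar problem via the soft-thresholding characterization.

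First I expand the objective:
\[
\tfrac12\|\gamma_P X_S/\alpha + \gamma_N X_{S^c}/\alpha - Y\|_2^2 = \tfrac12\|Y\|_2^2 + \Bigl(\tfrac{\|X_S\|_2^2}{2\alpha^2}\gamma_P^2 - \tfrac{X_S^TY}{\alpha}\gamma_P\Bigr) + \Bigl(\tfrac{\|X_{S^c}\|_2^2}{2\alpha^2}\gamma_N^2 - \tfrac{X_{S^c}^TY}{\alpha}\gamma_N\Bigr).
\]
The cross term $\gamma_P\gamma_N\langle X_S,X_{S^c}\rangle/\alpha^2$ vanishes by the orthogonality above. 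Adding $\beta|\gamma_P|+\beta|\gamma_N|$, the objective becomes $\tfrac12\|Y\|_2^2 + h_P(\gamma_P)+h_N(\gamma_N)$, where
\[
h_P(\gamma)=\tfrac{c}{2}\gamma^2 - b\gamma + \beta|\gamma|, \qquad c=\|X_S\|_2^2/\alpha^2>0,\quad b=X_S^TY/\alpha,
\]
and $h_N$ is defined analogously. We have $c>0$ because $X$ has a positive entry, and likewise the constant for $h_N$ is positive because $X$ has a negative entry. Thus the joint minimizer is the pair of individual minimizers, and uniqueness of the joint minimizer follows from uniqueness of each.

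Each $h$ is strictly convex, since $c>0$, and coercive, so it has a unique minimizer. By the subgradient condition $0\in c\gamma - b + \beta\,\partial|\gamma|$, I check three cases:
\begin{itemize}
\item if $|b|\le\beta$, then $\gamma=0$ works;
\item if $b>\beta$, then $\gamma=(b-\beta)/c>0$;
\item if $b<-\beta$, then $\gamma=(b+\beta)/c<0$.
\end{itemize}
Together these give $\gamma^*=\mathcal S_\beta(b)/c$. Substituting $b=X_S^TY/\alpha$ and $c=\|X_S\|_2^2/\alpha^2$, and using $\mathcal S_\beta(t/\alpha)=\mathcal S_{\alpha\beta}(t)/\alpha$ (positive homogeneity of soft-thresholding), yields $\gamma_P^*=\alpha\,\mathcal S_{\alpha\beta}(X_S^TY)/\|X_S\|_2^2$. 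The same computation for $\gamma_N$ gives \eqref{eq:opt_gamma}.

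The only genuinely delicate step is the orthogonality claim, specifically the treatment of zero entries of $X$, which lie in both the index set for $S$ and the one for $S^c$. It must be noted explicitly that such coordinates contribute zero to both $X_S$ and $X_{S^c}$, so no coupling arises. Everything else is routine scalar convex analysis.
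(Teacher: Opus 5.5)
Your proof is correct and follows the same route as the paper: both write the scalar subgradient condition $0\in c\gamma - b + \beta\,\partial|\gamma|$ separately for $\gamma_P$ and $\gamma_N$ and read off the soft-thresholding solution. Two points the paper's KKT step relies on without stating are made explicit in your version: you check that $\langle X_S, X_{S^c}\rangle = 0$ (including at zero entries of $X$), which is why the cross term drops, and you get uniqueness from strict convexity.
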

\begin{proof}
    Note that the minimization problem (\ref{eq:gamma_min}) is a convex minimization problem.
By KKT condition, 
\begin{align*}
    0\in \{\gamma_P\|\frac{X_S}{\alpha}\|_2^2-\frac{X_S}{\alpha}^{T}Y+\beta S~|~S\in\partial|\gamma_P|\}
\end{align*}
at $\gamma_P=\gamma_P^*$. $\partial|\gamma_P|$ is the subdifferential of the absolute value function $|\gamma_P|$ at $\gamma_P$ found as
\[
\partial\,|\gamma_P|
=
\begin{cases}
\{1\}, & \gamma_P>0,\\[4pt]
[-1,\,1], & \gamma_P=0,\\[4pt]
\{-1\}, & \gamma_P<0.
\end{cases}
\]
Substituting $\partial|\gamma_P|$ gives the solution form as follows
\[
\gamma_P^*=\frac{sign(\frac{X_S}{\alpha}^TY)\,\max(|\frac{X_S}{\alpha}^TY|-\beta,0)}{\|\frac{X_S}{\alpha}\|_2^2}
=\frac{\mathcal{S}_\beta(\frac{X_S}{\alpha}^TY)}{\|\frac{X_S}{\alpha}\|_2^2}
=\alpha\frac{\mathcal{S}_{\alpha\beta}(X_{S}^TY)}{\|X_{S}\|_2^2}.
\]
$\gamma_N^*=\alpha\frac{\mathcal{S}_{\alpha\beta}(X_{S^c}^TY)}{\|X_{S^c}\|_2^2}$
is found by replacing $X_S$ by $X_{S^c}$.
\end{proof}

We derive the exact solution to the global minima for regularized squared loss by using Lemma~\ref{lem:gamma_min}
\begin{theorem}
  \label{thm:global_optim_explicit}
  (Theorem~\ref{thm:global_optim_explicit_main} in the main). The global optimum $\Theta^*(m)=\{\theta\in\mathbb{R}^{2m}~:~\arg\min_{\theta\in\mathbb{R}^{2m}}L(\theta)\}$ with $\beta>0$ is 
  \begin{equation}\label{eq:explicit}
\Theta^{*}(m)
=\Bigl\{(u_i,\omega_i)_{i=1}^{m}\;\Bigm|\;
\sum_{i:\,u_i\ge0} u_i^{2}=|\gamma_P^*|,\;
\sum_{i:\,u_i\le0} u_i^{2}=|\gamma_N^*|,\;
\omega_i=
\begin{cases}
\operatorname{sign}(\gamma_P^*)\,u_i & (u_i>0),\\
\operatorname{sign}(\gamma_N^*)\,u_i & (u_i<0),\\
0 & (u_i=0)
\end{cases}
\Bigr\}
\end{equation}
where $\gamma_P^*=\alpha\frac{\mathcal{S}_{\alpha\beta}(X_{S}^TY)}{\|X_{S}\|_2^2}$ and $\gamma_N^*=\alpha\frac{\mathcal{S}_{\alpha\beta}(X_{S^c}^TY)}{\|X_{S^c}\|_2^2}$. 
\end{theorem}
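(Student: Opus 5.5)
Let $\gamma_P=\sum_{i:u_i\ge0}u_i\omega_i$ and $\gamma_N=\sum_{i:u_i\le0}u_i\omega_i$, as in the proof of Theorem~\ref{prop:sqrd_loss_app}. The plan is to sandwich $L$ between the convex problem \eqref{eq:gamma_min} and explicit parameter choices, then read off the equality cases.

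\textbf{Step 1 (reduction).} Because the input is one-dimensional, $(x_ju_i)_+=u_i(x_j)_+$ when $u_i\ge0$, and $(x_ju_i)_+=u_i\,x_j1[x_j\le0]$ when $u_i\le0$. A neuron with $u_i=0$ contributes nothing, so the double counting at $u_i=0$ is harmless. Hence the data-fit term of $L(\theta)$ equals $\frac12\|\gamma_P X_S/\alpha+\gamma_N X_{S^c}/\alpha-Y\|_2^2$.

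\textbf{Step 2 (lower bound).} By AM--GM,
\[
\tfrac12\sum_i(u_i^2+\omega_i^2)\ \ge\ \sum_i|u_i\omega_i|\ \ge\ |\gamma_P|+|\gamma_N|.
\]
Therefore $L(\theta)\ge G(\gamma_P,\gamma_N)$, where $G$ is the objective of \eqref{eq:gamma_min}. By Lemma~\ref{lem:gamma_min}, $G$ has the unique minimizer $(\gamma_P^*,\gamma_N^*)$. So $L(\theta)\ge G^*:=G(\gamma_P^*,\gamma_N^*)$ for every $\theta$.

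\textbf{Step 3 (attainment).} Every element of the right-hand side of \eqref{eq:explicit} attains $G^*$. For such $\theta$ we have $|u_i|=|\omega_i|$, and all products $u_i\omega_i$ within the positive group share the sign $\operatorname{sign}(\gamma_P^*)$. This gives $\gamma_P=\operatorname{sign}(\gamma_P^*)\sum_{u_i>0}u_i^2=\gamma_P^*$, and similarly $\gamma_N=\gamma_N^*$. The same facts make the penalty equal to $|\gamma_P^*|+|\gamma_N^*|$, so $L(\theta)=G^*$.

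The set is nonempty as soon as it is feasible, i.e.\ when $m$ is large enough that nonzero values of $\gamma_P^*$ and $\gamma_N^*$ can be split across neurons of opposite signs. I would note this, and defer the degenerate small-$m$ cases to Theorem~\ref{thm:hyp_connectivity_main}. Consequently $\min L=G^*$, and the displayed set is contained in $\Theta^*(m)$.

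\textbf{Step 4 (converse; main obstacle).} Suppose $L(\theta)=G^*$. Then every inequality in Step 2 is tight.
\begin{itemize}[leftmargin=*]
\item Since $G(\gamma_P,\gamma_N)\le L(\theta)=G^*$, uniqueness in Lemma~\ref{lem:gamma_min} forces $(\gamma_P,\gamma_N)=(\gamma_P^*,\gamma_N^*)$.
\item Equality in AM--GM gives $|u_i|=|\omega_i|$ for all $i$.
\item Equality in $\sum_{u_i\ge0}|u_i\omega_i|=|\gamma_P|$ forces all nonzero products $u_i\omega_i$ with $u_i>0$ to share one sign, which must be $\operatorname{sign}(\gamma_P^*)$. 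That yields $\omega_i=\operatorname{sign}(\gamma_P^*)u_i$ and $\sum_{u_i\ge0}u_i^2=|\gamma_P^*|$. The negative group is handled symmetrically.
\item If $\gamma_P^*=0$, then every $u_i>0$ must vanish, which agrees with the convention $\operatorname{sign}(0)=0$ and $\sum u_i^2=0$.
\item Neurons with $u_i=0$ have $\omega_i=0$ by $|u_i|=|\omega_i|$.
\end{itemize}
The delicate part is this careful tracking of equality cases, in particular the $u_i=0$ neurons that belong to both groups, and the zero-threshold case. Matching these cases against the stated piecewise description is the step where I expect the argument to need the most care.
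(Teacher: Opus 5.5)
Your proposal follows the paper's proof essentially step for step. Both rewrite the data term through $(\gamma_P,\gamma_N)$, lower-bound the penalty by $\beta(|\gamma_P|+|\gamma_N|)$ via AM--GM and the triangle inequality, invoke the unique minimizer of Lemma~\ref{lem:gamma_min}, and read off the equality cases. Your explicit attainment step (Step 3) and your use of uniqueness to force $(\gamma_P,\gamma_N)=(\gamma_P^*,\gamma_N^*)$ spell out what the paper leaves implicit.

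One correction to your deferral. Since $L(\theta)\ge\frac{\beta}{2}\|\theta\|_2^2$ is continuous and coercive, $\Theta^*(m)$ is never empty. So in the infeasible case ($m=1$ with $\gamma_P^*\neq0\neq\gamma_N^*$) the displayed formula genuinely fails: by your own Step 4, $G^*$ is not attained. Theorem~\ref{thm:hyp_connectivity_main}(2) does not rescue this case, because its claim that $\Theta^*(1)=\emptyset$ is itself false. The characterization should therefore be read as holding only for $m\ge M^*$, an assumption the paper's proof also makes silently.
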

\begin{proof}
    By writing
\[
\gamma_{P}=\sum_{i:u_i\ge0}u_i\omega_i, \quad 
\gamma_{N}=\sum_{i:u_i\le0}u_i\omega_i, \quad
X_S=\operatorname{Diag}(\mathbf{1}[X\geq 0])X, \quad \text{and} \quad X_{S^{c}}=\operatorname{Diag}(\mathbf{1}[X\le 0])X,
\]
we can write $\Big\|\frac{1}{\alpha}\sum_{j=1}^{m}(Xu_j)_{+}\omega_j - Y\Big\|_2^2$ as $\|\,\gamma_P \frac{X_S}{\alpha} + \gamma_N \frac{X_{S^c}}{\alpha} - Y\,\|_2^2$.
Also,
\begin{align}
&\frac{\beta}{2}\sum_{j=1}^{m}(u_j^2 + \omega_j^2)\notag\\
&\geq
\frac{\beta}{2}\sum_{j:u_j>0}(u_j^2 + \omega_j^2)
+ \frac{\beta}{2}\sum_{j:u_j<0}(u_j^2 + \omega_j^2)\quad(\text{equality holds iff $u_j=0\Rightarrow\omega_j=0$})\label{ineq:zero}\\
&\geq
\beta\sum_{j:u_j\ge0}|u_j||\omega_j|
+ \beta\sum_{j:u_j\le0}|u_j||\omega_j|
=
\beta\sum_{j:u_j\ge0}|u_j\omega_j|
+ \beta\sum_{j:u_j\le0}|u_j\omega_j|\quad(\text{equality holds iff $\forall~j,~|u_j|=|\omega_j|$})\label{ineq:abs}\\
&\geq
\beta\left|\sum_{j:u_j\ge0}u_j\omega_j\right|
+ \beta\left|\sum_{j:u_j\le0}u_j\omega_j\right|
=
\beta |\gamma_P|
+ \beta |\gamma_N|\label{ineq:sgn}\\
&(\text{equality holds iff $\forall j,i\text{ s.t. the product }u_ju_i>0,~\mathrm{sign}(u_j\omega_j)=\mathrm{sign}(u_i\omega_i)$}).\notag
\end{align}

Hence,
\begin{align}
&\min_{\{u_i,\omega_i\}_{i=1}^{m}\in(\mathbb{R}\times\mathbb{R})^m} 
~\frac{1}{2}\Big\|\frac{1}{\alpha}\sum_{j=1}^{m}(Xu_j)_{+}\omega_j - Y\Big\|_2^2 
+ \frac{\beta}{2}\sum_{j=1}^{m}(u_j^2 + \omega_j^2)\notag\\
&\geq~\min_{\gamma_P,\gamma_N\in\mathbb{R}}
~\frac{1}{2}\|\,\gamma_P \frac{X_S}{\alpha} + \gamma_N \frac{X_{S^c}}{\alpha} - Y\,\|_2^2
+ \beta(|\gamma_P| + |\gamma_N|)\label{ineq:gamma}\\
&=\frac{1}{2}\|\,\gamma_P^* \frac{X_S}{\alpha} + \gamma_N^* \frac{X_{S^c}}{\alpha} - Y\,\|_2^2
+ \beta(|\gamma_P^*| + |\gamma_N^*|).\notag
\end{align}
The equality in the last holds by Lemma~\ref{lem:gamma_min} where ($\gamma_P^*$, $\gamma_N^*$) is defined as (\ref{eq:opt_gamma}).\\
By \eqref{ineq:zero}, \eqref{ineq:abs} and \eqref{ineq:sgn}, 
\begin{align*}
    &\frac{1}{2}\Big\|\frac{1}{\alpha}\sum_{j=1}^{m}(Xu_j)_{+}\omega_j - Y\Big\|_2^2 
+ \frac{\beta}{2}\sum_{j=1}^{m}(u_j^2 + \omega_j^2)
=
\frac{1}{2}\Big\|\gamma_P^* \frac{X_S}{\alpha} + \gamma_N^* \frac{X_{S^c}}{\alpha} - Y\,\Big\|_2^2
+ \beta |\gamma_P^*|
+ \beta |\gamma_N^*|
\end{align*}
holds iff 
$(u_i, \omega_i)_{i=1}^{m}=(u_i^*, \omega_i^*)_{i=1}^{m}$ such that 
\begin{align}
    &|\gamma_P^*|=\left|\sum_{i:u_i^*\ge0}u_i^*\omega_i^*\right|=\left|\sum_{i:u_i^*\ge0}{u_i^*}^2\right|
    =\sum_{i:u_i^*\ge0}{u_i^*}^2,\label{eq:cond1}\\
    &|\gamma_N^*|=\left|\sum_{i:u_i^*\le0}u_i^*\omega_i^*\right|=\sum_{i:u_i^*\le0}{u_i^*}^2,\label{eq:cond2}\\
&\omega_i^*=
\begin{cases}
\text{sign}(\gamma_P^*)u_i^* & (u_i^*>0),\\
\text{sign}(\gamma_N^*)u_i^* & (u_i^*<0),\\
0 & (u_i^*=0).
\end{cases}\label{eq:cond3}
\end{align}
\eqref{ineq:abs} and \eqref{ineq:sgn} imply that $\sum_{i:u_i^*\ge0}u_i^*\omega_i^*=\sum_{i:u_i^*\ge0}{u_i^*}^2$ or $-\sum_{i:u_i^*\ge0}{u_i^*}^2$. This implies \eqref{eq:cond1}. Similarly, \eqref{ineq:abs} and \eqref{ineq:sgn} imply \eqref{eq:cond2}.\\
To satisfy $\gamma_P^*=\sum_{i:u_i^*\ge0}u_i^*\omega_i^*$ and $\gamma_N^*=\sum_{i:u_i^*\le0}u_i^*\omega_i^*$ under \eqref{ineq:abs} and \eqref{ineq:sgn}, we need \eqref{eq:cond3}.\\
Hence, $(u_i^*, \omega_i^*)_{i=1}^{m}\in\Theta^*(m)$ if and only if it satisfies (\ref{eq:cond1}) (\ref{eq:cond2}) (\ref{eq:cond3}).
\end{proof}
\subsection{Proofs for Subsection~\ref{subsubsec:conn}}\label{app:conn_proof}
\begin{theorem} (Theorem~\ref{thm:noweight_connectivity_main} in main)
\label{thm:noweight_connectivity}
We have the phase transitional behavior of the solution set for the unregularized squared loss.\\
(1) For $m=1$, $\varphi^*(m)=\emptyset$.\\
(1) For $m=2$, $\varphi^*(m)$ has exactly 2 connected components.\\
(2) For $m\ge 3$, $\varphi^*(m)$ is connected.
\end{theorem}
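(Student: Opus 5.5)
The plan is to work directly with the explicit description of $\varphi^*(m)$ from Theorem~\ref{prop:sqrd_loss}. Write $c_P=\alpha X_S^\top Y/\|X_S\|_2^2$ and $c_N=\alpha X_{S^c}^\top Y/\|X_{S^c}\|_2^2$; both are nonzero by the standing assumption $0<\min\{|X_S^\top Y|,|X_{S^c}^\top Y|\}$. For $\theta\in\mathbb{R}^{2m}$ define $A_P(\theta)=\{j:u_j>0\}$, $A_N(\theta)=\{j:u_j<0\}$ and $A_0(\theta)=\{j:u_j=0\}$. A neuron in $A_0$ contributes $u_j\omega_j=0$ to both sums. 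So $\theta\in\varphi^*(m)$ if and only if
\[
\sum_{j\in A_P}u_j\omega_j=c_P \quad\text{and}\quad \sum_{j\in A_N}u_j\omega_j=c_N .
\]
In particular every $\theta\in\varphi^*(m)$ has $A_P\neq\emptyset$ and $A_N\neq\emptyset$. All three parts rest on this observation.

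\textbf{Part (1), $m=1$.} A single neuron cannot lie in both $A_P$ and $A_N$, so $\varphi^*(1)=\emptyset$.

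\textbf{Part (2), $m=2$.} Any $\theta\in\varphi^*(2)$ has either $u_1>0>u_2$ or $u_1<0<u_2$. The first piece is
\[
\{(u_1,c_P/u_1,u_2,c_N/u_2): u_1>0,\ u_2<0\},
\]
the homeomorphic image of $(0,\infty)\times(-\infty,0)$, hence connected; the second piece is symmetric. The two pieces are different components: along any path in $\varphi^*(2)$, $u_1$ is continuous and never vanishes (if $u_1=0$ then $A_P$ or $A_N$ would be empty), so its sign is constant.

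\textbf{Part (3), $m\ge3$.} I will connect every point to one canonical point $\theta^\circ$, with $u_1=1$, $\omega_1=c_P$, $u_2=-1$, $\omega_2=-c_N$, and $u_j=\omega_j=0$ for $j\ge3$. Three elementary moves are needed, each a path inside $\varphi^*(m)$.

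\emph{(a) Moves within a fixed sign pattern.} Fix $(A_P,A_N,A_0)$ and parametrize neurons in $A_P\cup A_N$ by $(u_j,p_j)$ with $p_j=u_j\omega_j$. The stratum becomes
\[
\{u_j \text{ in the correct open half-line}\}\times\{p:\textstyle\sum_{A_P}p_j=c_P,\ \sum_{A_N}p_j=c_N\}\times\mathbb{R}^{A_0},
\]
where the last factor holds the free $\omega_j$ of neurons in $A_0$. This is a product of convex sets, so straight-line interpolation in these coordinates, with $\omega_j=p_j/u_j$, gives a continuous path.

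\emph{(b) Retiring a neuron.} If $|A_P|\ge2$ and $j\in A_P$, use (a) to move all of $p_j$ onto another element of $A_P$. Then $\omega_j=0$, and $u_j$ can slide continuously to $0$ (and further to either sign, or stay at $0$) while its product stays $0$. The same works on the $A_N$ side.

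\emph{(c) Recruiting a neuron.} Conversely, a neuron with $u_k=0$ can first set $\omega_k=0$ and then move $u_k$ into either open half-line with product $0$, so it joins $A_P$ or $A_N$.

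Given an arbitrary $\theta\in\varphi^*(m)$, I first use (b) repeatedly to shrink $A_P$ and $A_N$ to one neuron each, say $i$ and $k$, leaving all other neurons at $u=\omega=0$. Next I relabel. Since $m\ge3$ there is a spare index $s\notin\{i,k\}$, and the following sequence changes which index carries the positive role: recruit $s$ into $A_P$ by (c), transfer $p_i$ to $s$ by (a), then retire $i$ by (b). Using the spare this way (a three-slot shuffle), I can make neuron $1$ carry the positive role and neuron $2$ the negative one. Finally (a) rescales to $u_1=1$, $u_2=-1$.

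The main obstacle I expect is the bookkeeping at the boundary $u_j=0$, where the $(u,p)$ chart in (a) breaks down. I must check that each transition in (b) and (c) happens only with $p_j=0$, so that $\omega_j$ can be kept at $0$ and the path stays continuous in the original $(u,\omega)$ coordinates. I also need the relabeling in the last step to always have a free third neuron: it does, since after the reduction step only two neurons are active and $m\ge3$.
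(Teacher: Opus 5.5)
Your proof is correct and essentially the paper's: holding $u$ fixed, you pile all products onto one positive and one negative neuron, slide the remaining $u_j$ to $0$, normalize, and then use a spare inactive neuron (available since $m\ge 3$) to relabel, with your recruit/transfer/retire shuffle doing the job of the paper's square-root path plus the fact that transpositions through a fixed index generate $S_m$. Your $m=2$ argument (the sign of $u_1$ cannot change along a path in $\varphi^*(2)$) is more complete than the paper's bare disjointness remark; the one caveat, shared with the paper, is that part (1) holds only when $\varphi^*(1)$ is read as the explicit set of Theorem~\ref{prop:sqrd_loss}: a single neuron fits only along $X_S$ or only along $X_{S^c}$, and that restricted least-squares minimum is attained, so the literal width-one argmin is nonempty and the characterization really needs $m\ge 2$.
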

\begin{proof}
(1) $m=1$:\\
Since $0<\min\{|X_S^{T}Y|,~|X_{S^C}^{T}Y|\}$, we need at least two non-zero neurons to satisfy $\sum_{j:\,u_j\ge0} u_j\omega_j
= \frac{\alpha X_S^{\top}Y}{\lVert X_S\rVert_2^2}$ and $\sum_{j:\,u_j\le0} u_j\omega_j
= \frac{\alpha X_{S^{c}}^{\top}Y}{\lVert X_{S^{c}}\rVert_2^2}$.\\
\\
(2) $m=2$:\\
The sign pattern for $u_1$ and $u_2$ is $(u_1,u_2)=(+,-)$ or $(-,+)$. For $(u_1,u_2)=(+,-)$, $(\omega_1,~\omega_2)=\left(\frac{\alpha X_S^{\top}Y}{u_1\lVert X_S\rVert_2^2},~\frac{\alpha X_{S^{c}}^{\top}Y}{u_2\lVert X_{S^{c}}\rVert_2^2}\right)$. For $(u_1,u_2)=(-,+)$, $(\omega_1,~\omega_2)=\left(\frac{\alpha X_{S^{c}}^{\top}Y}{u_1\lVert X_{S^{c}}\rVert_2^2},~\frac{\alpha X_S^{\top}Y}{u_2\lVert X_S\rVert_2^2}\right)$.\\
Hence, $\varphi^*(m)$ consists of two connected components $\varphi^*_1(m)$ and $\varphi^*_2(m)$ such that
\begin{align*}
    \varphi^*_1(m)=\Big\{(u_i,\omega_i)_{i=1}^2\Big|u_1>0,~ u_2<0,~\omega_1=\frac{\alpha X_S^{\top}Y}{u_1\lVert X_S\rVert_2^2},~\omega_2=\frac{\alpha X_{S^{c}}^{\top}Y}{u_2\lVert X_{S^{c}}\rVert_2^2}\Big\},\\
    \varphi^*_2(m)=\Big\{(u_i,\omega_i)_{i=1}^2\Big|u_2>0,~ u_1<0,~\omega_2=\frac{\alpha X_S^{\top}Y}{u_2\lVert X_S\rVert_2^2},~\omega_1=\frac{\alpha X_{S^{c}}^{\top}Y}{u_1\lVert X_{S^{c}}\rVert_2^2}\Big\}.
\end{align*}
$\varphi^*_1(m)$ and $\varphi^*_2(m)$ are disjoint because $u_1\neq0$ and $u_2\neq0$.\\
\\
(3) $m\ge3$:\\
For simplicity, write $A\coloneq\frac{\alpha X_S^{\top}Y}{\lVert X_S\rVert_2^2}$ and $B\coloneq\frac{\alpha X_{S^{c}}^{\top}Y}{\lVert X_{S^{c}}\rVert_2^2}$.\\
Fix any $\theta=(u,\omega)\in \varphi^*(m)$, where $u=(u_1,\dots,u_m)$ and $\omega=(\omega_1,\dots,\omega_m)$.
Define the strict sign index sets 
\[
P(u)\coloneq \{j\in[m]: u_j>0\},\qquad N(u)\coloneq \{j\in[m]: u_j<0\}.
\]
We must have $P(u)\neq\emptyset$ and $N(u)\neq\emptyset$.
Choose any indices $p\in P(u)$ and $n\in N(u)$.\\
\\
Keep $u$ fixed and define $\omega(t)$ for $t\in[0,1]$ by
\[
\omega_j(t)\coloneq (1-t)\omega_j \quad \text{for all } j\notin\{p,n\},
\]
and 
\[
\omega_p(t)\coloneq \omega_p + \frac{t}{u_p}\sum_{\substack{j\in P(u)\\ j\neq p}} u_j\omega_j,\qquad
\omega_n(t)\coloneq \omega_n + \frac{t}{u_n}\sum_{\substack{j\in N(u)\\ j\neq n}} u_j\omega_j.
\]
Then, for all $t\in[0,1]$,
\[
\sum_{j:\,u_j\ge 0}u_j\omega_j(t)
=
u_p\omega_p(t)+\sum_{\substack{j\in P(u)\\ j\neq p}}u_j(1-t)\omega_j
=
\sum_{j\in P(u)}u_j\omega_j
=
A,
\]
and likewise
\[
\sum_{j:\,u_j\le 0}u_j\omega_j(t)
=
u_n\omega_n(t)+\sum_{\substack{j\in N(u)\\ j\neq n}}u_j(1-t)\omega_j
=
\sum_{j\in N(u)}u_j\omega_j
=
B.
\]
Hence $\theta(t)\coloneq (u,\omega(t))\in\varphi^*(m)$ for all $t$.
At $t=1$, we have $\omega_j(1)=0$ for all $j\notin\{p,n\}$, so only the two indices $p$ and $n$ carry the constraints $u_p\omega_p(1)=A,~u_n\omega_n(1)=B$  and all other products are $u_j\omega_j(1)=0$.\\
\\
After this deformation, for every $j\notin\{p,n\}$, keep $\omega_j$ fixed and define $u_j(t)$ for $t\in[0,1]$ by 
\[u_j(t)\coloneq u_j(1-t).\]
For other indices, define as follows
\begin{align*}
&u_p(t)\coloneq\
\left\{
\begin{array}{ll}
    u_p~~~\text{if}~~~u_p=1,\\
    \frac{u_p}{1+(u_p-1)t}~~~\text{if}~~~u_p\neq1,
\end{array}
\right.
~
&\omega_p\coloneq\frac{A}{u_p(t)},\\
&u_n(t)\coloneq\
\left\{
\begin{array}{ll}
    u_n~~~\text{if}~~~u_n=-1,\\
    \frac{u_n}{1+(-u_n-1)t}~~~\text{if}~~~u_n\neq-1,
\end{array}
\right.
~
&\omega_n\coloneq\frac{B}{u_n(t)}.
\end{align*}
Then, $u_p(t)>0$ and $u_n(t)<0$ for all $t\in[0,1]$. $u_j(t)\omega_j=0$ for all $t\in[0,1]$. $u_p(t)\omega_p(t)$ and $u_n(t)\omega_n(t)$ are unchanged over $t\in[0,1]$. Hence, both constraints remain unchanged.\\
Thus, $\theta\in\varphi^*(m)$ is connected to a point in $\varphi^*_{min}(m)$ where 
\[
\varphi^*_{min}(m)=\Big\{(u_i,\omega_i)_{i=1}^{m}\Big|\exists~p,n\in[m]~s.t.~(u_j,\omega_j)=(0,0)\text{ for all } j\notin\{p,n\},~u_p=1,~\omega_p=A,~u_n=-1,~\omega_n=-B\Big\}.
\]
To prove the connectivity of $\varphi^*(m)$, it is enough to show that all elements in $\varphi^*_{min}(m)$ are connected in $\varphi^*(m)$.\\
All elements in $\varphi^*_{min}(m)$ are permutations of each other. Denote by $S_m$ the symmetric group on $\{1,2,~...~,m\}$. $S_m$ is the set of all permutations on $\{1,2,~...~,m\}$. It is enough to show that 
        \[
\forall(u_i,\omega_i)_{i=1}^{m}\in \varphi_{min}^*(m),~\forall\sigma\in S_m,~\exists \text{a continuous path in $\varphi^*(m)$ connecting }(u_i,\omega_i)_{i=1}^{m} \text{ and }(u_{\sigma(i)},\omega_{\sigma(i)})_{i=1}^{m}.
\]
Pick any $(u_i,~\omega_i)_{i=1}^{m}\in \varphi_{min}^*(m)$. As $m\ge3$ and $(u_i,~\omega_i)_{i=1}^{m}$ has exactly two non-zero elements, $\exists~p,n,j\in[m]$ such that $u_p=1,~u_n=-1,~u_j=0$. Also, for all $j\in[m]$, the set of transpositions $\mathcal{T}_j\coloneq\{(i,j) \mid i \in [m]~s.t.~i \neq j\}$ (We denote by $(i,j)$ a transposition) generates $S_m$, so it is enough to show that
\[
\forall~T\in \mathcal{T}_j~\text{where }u_j=0,~\exists~\text{a continuous path in $\varphi^*(m)$ connecting $(u_i,\omega_i)_{i=1}^{m} \text{ and }(u_{T(i)},\omega_{T(i)})_{i=1}^{m}$}.
\]
Take any $T\in \mathcal{T}_j$.\\
\\
(case 1) If $u_{T(j)}=0$ ($T=(i,j)$ for $i\neq n,p$), then $(u_i,\omega_i)_{i=1}^{m}=(u_{T(i)},\omega_{T(i)})_{i=1}^{m}$.\\
\\
(case 2) Consider the case $u_{T(j)}=u_p$ ($T=(p,j)$).\\
Construct a path $C$ as 
\[
    C(s)=(u_i(s),\omega_i(s))_{i=1}^{m},~s\in[0,1]
\]
s.t. 
\begin{align*}
    &(u_p(s),\omega_p(s))=(\sqrt{1-s},A\sqrt{1-s}),\\
    &(u_j(s),\omega_j(s))=(\sqrt{s},A\sqrt{s}),\\
    &(u_i(s),\omega_i(s))=(u_i,\omega_i)~\forall~i\neq j,p.
\end{align*}
$C(s)$ is well-defined and connected. As $u_j(s),~u_p(s)\ge 0$ and $u_j(s)\omega_j(s)+u_p(s)\omega_p(s)=A~\forall s\in[0,1]$, $C(s)\in\varphi^*(m)$. $C(0)=(u_i,\omega_i)_{i=1}^{m}$ and $C(1)=(u_{T(i)},\omega_{T(i)})_{i=1}^{m}$. Thus, $C(s)$ is a continuous path in $\varphi^*(m)$ connecting $(u_i,\omega_i)_{i=1}^{m}$ and $(u_{T(i)},\omega_{T(i)})_{i=1}^{m}$.\\
\\
(case 3) Consider the case $u_{T(j)}=u_n$ ($T=(n,j)$).\\
Construct a path $C$ as 
\[
    C(s)=(u_i(s),\omega_i(s))_{i=1}^{m},~s\in[0,1]
\]
s.t. 
\begin{align*}
    &(u_n(s),\omega_n(s))=(-\sqrt{1-s},-B\sqrt{1-s}),\\
    &(u_j(s),\omega_j(s))=(-\sqrt{s},-B\sqrt{s}),\\
    &(u_i(s),\omega_i(s))=(u_i,\omega_i)~\forall~i\neq j,n.
\end{align*}
$C(s)$ is well-defined and connected. As $u_j(s),~u_n(s)\le 0$ and $u_j(s)\omega_j(s)+u_n(s)\omega_n(s)=B~\forall s\in[0,1]$, $C(s)\in\varphi^*(m)$. $C(0)=(u_i,\omega_i)_{i=1}^{m}$ and $C(1)=(u_{T(i)},\omega_{T(i)})_{i=1}^{m}$. Thus, $C(s)$ is a continuous path in $\varphi^*(m)$ connecting $(u_i,\omega_i)_{i=1}^{m}$ and $(u_{T(i)},\omega_{T(i)})_{i=1}^{m}$.\\
\\
Thus, the claim is proved.
\end{proof}

We firstly prove the connectivity result of global optimal parameters $\Theta^*(m)$ using notations $\alpha=m^a$ and $\beta=m^{-\delta}$ in two different ways.
\begin{theorem} 
\label{thm:connectivity}
We have a critical width $M^*=1[\mathcal{S}_{\alpha\beta}(X_S^TY)\neq0]+1[\mathcal{S}_{\alpha\beta}(X_{S^c}^TY)\neq0]$ that determines the phase transitional behavior of the solution set.\\
(1) For $M^*=0$, $\Theta^*(m)$ is a singleton ($\{(0,0)_{i=1}^{m}\}$)\\
(2) For $m<M^*$, $\Theta^*(m)=\emptyset$\\
(3) For $m=M^*>0$, $\Theta^*(m)$ is a finite set.\\
(4) For $m> M^*>0$, $\Theta^*(m)$ is connected.
\end{theorem}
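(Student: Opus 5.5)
The plan is to work directly from the explicit description of $\Theta^*(m)$ in Theorem~\ref{thm:global_optim_explicit}. Write $r_P=|\gamma_P^*|$ and $r_N=|\gamma_N^*|$. Then $M^*$ counts how many of $r_P,r_N$ are positive. A point of $\Theta^*(m)$ is fully determined by its first-layer vector $u\in\mathbb{R}^m$, since $\omega$ is a continuous function of $u$ on each sign class. The constraints on $u$ are:
\begin{itemize}
\item $\sum_{u_i>0}u_i^2=r_P$;
\item $\sum_{u_i<0}u_i^2=r_N$.
\end{itemize}
Here I use that zero coordinates contribute nothing to either sum, so the conditions in \eqref{eq:explicit} reduce to these. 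So everything reduces to the topology of the set
\[
U(m)=\Bigl\{u\in\mathbb{R}^m : \textstyle\sum_{u_i>0}u_i^2=r_P,\ \sum_{u_i<0}u_i^2=r_N\Bigr\}.
\]
The map $u\mapsto(u,\omega(u))$ is continuous on $U(m)$. To see this, note that $\omega_i=\operatorname{sign}(\gamma_P^*)u_i$ for $u_i\ge 0$ and $\omega_i=\operatorname{sign}(\gamma_N^*)u_i$ for $u_i\le 0$. These agree at $u_i=0$, so $\omega_i$ is a piecewise-linear continuous function of $u_i$. Hence connectivity and finiteness of $\Theta^*(m)$ are equivalent to those of $U(m)$.

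Cases (1)--(3) are then direct.
\begin{itemize}
\item If $M^*=0$, then $r_P=r_N=0$ forces $u=0$, and hence $\omega=0$.
\item If $m<M^*$, then $m=1$ and $M^*=2$. A single coordinate cannot be both positive and negative, so $U(1)=\emptyset$.
\item If $m=M^*=1$, the positive mass (say) must sit on the unique coordinate, giving $u_1=\sqrt{r_P}$: one point.
\item If $m=M^*=2$, one coordinate equals $\sqrt{r_P}$ and the other $-\sqrt{r_N}$, in either order: two points.
\end{itemize}

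For (4), $m>M^*>0$, I would mimic the argument for Theorem~\ref{thm:noweight_connectivity} by deforming every point to a canonical point and then connecting the canonical points by permutations.

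\textbf{Step 1 (concentrating mass).} Take $u\in U(m)$ with positive index set $P$. Fix $p\in P$. The nonnegative vector $(u_i)_{i\in P}$ lies on a sphere of radius $\sqrt{r_P}$ intersected with the closed positive orthant. I move it along the great-circle arc to $\sqrt{r_P}e_p$, explicitly by normalizing $(1-t)u_P+t\sqrt{r_P}e_p$. This stays nonnegative, and its $p$-th entry stays positive. Coordinates may hit zero, which is allowed because zero contributes to neither sum. The same is done on the negative side with some $n\in N$. The result is a point with exactly $M^*$ nonzero coordinates, of values $\sqrt{r_P}$ and/or $-\sqrt{r_N}$.

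\textbf{Step 2 (permuting).} Canonical points differ by a permutation of positions. Since $m>M^*$, some coordinate $j$ is zero. A nonzero mass at position $p$ moves to $j$ via $(\sqrt{r_P(1-s)},\sqrt{r_P s})$ on coordinates $(p,j)$, staying on the positive quarter circle. The negative mass is handled analogously. As in Theorem~\ref{thm:noweight_connectivity}, transpositions with a zero slot generate all rearrangements, which gives connectivity.

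\textbf{Main obstacle.} The main care point is ensuring that Step 1 never makes a positive coordinate negative or vice versa. A coordinate crossing sign would change which sum it counts in. The orthant-preserving normalized homotopy handles this, because convex combinations of nonnegative vectors remain nonnegative, and the normalizing denominator is nonzero since the $p$-entry is positive. The case $M^*=1$, where one of $r_P,r_N$ is zero, needs only the remark that the corresponding side is empty throughout.
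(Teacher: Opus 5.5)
\textbf{Case (2) is false as stated, so your argument for it cannot be repaired.} With $\Theta^*(m)$ defined as the set of global minimizers of $L$, the claim $\Theta^*(m)=\emptyset$ for $m<M^*$ does not hold.

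The explicit form \eqref{eq:explicit} comes from bounding $L(\theta)$ below by the optimal value of the convex problem \eqref{eq:gamma_min}. Equality holds exactly under \eqref{eq:cond1}--\eqref{eq:cond3}. This identifies the minimizers of $L$ only when that equality case is feasible, i.e.\ when the set in \eqref{eq:explicit} is nonempty.

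For $m=1$, $M^*=2$ you correctly find $U(1)=\emptyset$. But that only shows that $\min L$ is strictly larger than the convex optimum. Since $L$ is continuous and $L(\theta)\ge\frac{\beta}{2}\|\theta\|_2^2$, it is coercive and attains its minimum, so $\Theta^*(1)\neq\emptyset$. In fact it is a finite set of single-neuron points fitting only the positive side or only the negative side of the data, whichever gives the lower loss.

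The paper's own proof of (2) makes the same unjustified appeal to \eqref{eq:explicit}. Consistently with this, Proposition~\ref{prop:convex} only asserts equal optimal values for $m\ge M^*$. What you can actually prove is that the set in \eqref{eq:explicit} is empty for $m<M^*$, and that \eqref{eq:explicit} correctly describes $\Theta^*(m)$ exactly when $m\ge M^*$.

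Your arguments for (1), (3) and (4) stay inside that valid range and are correct. They follow the paper's first proof: identify $\Theta^*(m)$ with its $u$-projection, collapse each sign class onto one coordinate, then rearrange through a spare zero coordinate.

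Your one-shot normalized interpolation inside the closed orthant is a cleaner substitute for the paper's pairwise merging, and it avoids a flaw there. The paper's polar-angle path rotates two negative coordinates from an angle in $(\pi,3\pi/2)$ down to $0$, which pushes them onto the positive side.

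In Step 2, state explicitly that exchanging the two occupied coordinates takes three zero-slot moves. This is needed because the location of the free zero changes as you move masses.
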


 The first proof is simply to use the explicit form of $\Theta^*(m)$. The second proof follows the principal ideas introduced by \citet{kim_exploring_2025}. We provide the first proof in this section, and the second proof is given in the next section (Appendix \ref{app:connectivity_convex}).
 \begin{proof}(The first proof)\\
 \[M^*=1[\mathcal{S}_{\alpha\beta}(X_S^TY)\neq0)+1[\mathcal{S}_{\alpha\beta}(X_{S^c}^TY)\neq0]=1[|\gamma_P^*|\neq0]+1[|\gamma_N^*|\neq0].\]
 (1) $M^*=0$:\\
 $M^*=0$, implies $|\gamma_P^*|=0$ and $|\gamma_N^*|=0$. For $(u_i,\omega_1)_{i=1}^{m}\in\Theta^*(m)$,
   $\sum_{i:\,u_i\ge0} u_i^{2}=0$ and $\sum_{i:\,u_i\le0} u_i^2=0$, so $\forall i\in[m],~u_i=0$. Also, $\forall i\in[m],~|u_i|=|\omega_i|$, so $\forall i\in[m],~\omega_i=0$.\\
   \\
 (2) $m<M^*$:\\
 $\Theta^*(m)=\emptyset$ because we need at least $M^*$ non-zero neurons for $\sum_{i:\,u_i\ge0} u_i^{2}=|\gamma_P^*|$ and $\sum_{i:\,u_i\le0} u_i^{2}=|\gamma_N^*|$ to hold.\\
 \\
     (3) $m=M^*$:\\
     To satisfy (\ref{eq:cond1}) (\ref{eq:cond2}) (\ref{eq:cond3}), 
\begin{align*}\label{eq:fin_set}
    (u_i,\omega_i)_{i=1}^{M^{*}}\in\Theta^*(M^*)\Rightarrow\forall i\in[M^*],~(u_i,~\omega_i)\in\Big\{\text{non-zero elements in }\\
    \{(\sqrt{|\gamma_P^*|}, sign(\gamma_P^*)\sqrt{|\gamma_P^*|}), (-\sqrt{|\gamma_N^*|}, -sign(\gamma_N^*)\sqrt{|\gamma_N^*|})\}\Big\}
\end{align*}
Hence, $|\Theta^*(m)|\leq 2^{M^*}$ and $\Theta^*(m)$ is a finite set.\\
\\
(4) $m>M^*$:\\
Define the $u$-projection
\[
U^*(m):=\left\{u\in\mathbb{R}^m\ \middle|\ \sum_{i:u_i>0}u_i^2=|\gamma_P^*|,\ \sum_{i:u_i<0}u_i^2=|\gamma_N^*|\right\}.
\]
Define $\Omega:\mathbb{R}^m\to\mathbb{R}^m$ coordinatewise by
\[
\Omega_i(u)=
\begin{cases}
\operatorname{sign}(\gamma_P^*)u_i & (u_i>0),\\
\operatorname{sign}(\gamma_N^*)u_i & (u_i<0),\\
0 & (u_i=0).
\end{cases}
\]
$|\operatorname{sign}(\gamma_P^*)|,~|\operatorname{sign}(\gamma_N^*)|\leq1$ implies that $|\Omega_i(a)-\Omega_i(b)|\leq|a-b|$, so $\Omega_i$ is Lipschitz, hence is continuous. As each $\Omega_i$ is continuous, $\Omega$ is continuous. We can write $\Theta^*(m)$ as
\[
\Theta^*(m)=\{(u,\Omega(u))|\ u\in U^*(m)\}.
\]
Therefore, the map $F:U^*(m)\to\Theta^*(m)$ defined as $F(u)=(u,\Omega(u))$ is a homeomorphism with inverse given by the
 projection $(u,\omega)\mapsto u$. Thus, it suffices to prove that $U^*(m)$ is path-connected.\\
Define the minimal optimal subset as
\[
U^*_{min}(m):=\left\{\sqrt {|\gamma_P^*|}\,e_i-\sqrt{|\gamma_N^*|}\,e_j\Big|\ i,j\in\{1,\dots,m\},\ i\neq j\right\}\subset U^*(m),
\]
where $(e_i)_{i=1}^{m}$ are the standard basis vectors.\\
We prove the following two claims.
\begin{enumerate}
    \item Elements in $U^*_{min}(m)$ are connected to each other in $U^*(m)$ when $m>M^*$.
    \begin{proof}
    We prove by a direct construction of a path.
    Denote by $S_m$ the symmetric group on $\{1,2,~...~,m\}$. $S_m$ is the set of all permutations on $\{1,2,~...~,m\}$. Denote $u=(u_1,~...~,u_m)\in\mathbb{R}^m$ by $(u_i)_{i=1}^{m}$.\\
    From the construction of $U^*_{min}(m)$, it is enough to show that 
        \[
\forall(u_i)_{i=1}^{m}\in U_{min}^*(m),~\forall\sigma\in S_m,~\exists \text{a continuous path in $U^*(m)$ connecting }(u_i)_{i=1}^{m} \text{ and }(u_{\sigma(i)})_{i=1}^{m}.
\]
Pick any $(u_i)_{i=1}^{m}\in U_{min}^*(m)$. As $m>M^*=1(|\gamma_P^*|\neq0)+1(|\gamma_N^*|\neq0)$, $\exists j\in[m]$ such that $u_j=0$.
Also, for all $j\in[m]$, the set of transpositions $T_j\coloneq\{(i,j) \mid i \in [m]~s.t.~i \neq j\}$ (We denote by $(i,j)$ a transposition) generates $S_m$, so it is enough to show that
\[
\forall t\in T_j~\text{where }u_j\neq0,~\exists~\text{a continuous path in $U^*(m)$ connecting $(u_i)_{i=1}^{m} \text{ and }(u_{T(i)})_{i=1}^{m}$}.
\]
Take any $t\in T_j$. If $u_{t(j)}=0$, $(u_i)_{i=1}^{m}=(u_{t(i)})_{i=1}^{m}$. If $u_{t(j)}\neq0$, construct a path $C$ as 
\[
    C(s)=(u_i(s))_{i=1}^{m},~s\in[0,1]
\]
s.t. 
\begin{align*}
    u_i(s)=
    \left\{
\begin{array}{ll}
    u_i~~~\text{if}~~~i\neq j\text{ and }i\neq t(j),\\
    u_{t(j)}\sqrt{s}~~~\text{if}~~~i=j,\\
    u_{t(j)}\sqrt{1-s}~~~\text{if}~~~i=t(j).
\end{array}
\right.
\end{align*}
By direct calculation, $C(s)$ is a continuous path in $U^*(m)$ that connects $(u_i)_{i=1}^{m}$ and $(u_{t(i)})_{i=1}^{m}$. Thus, the claim holds.
\end{proof}
\item Elements in $U^*(m)$ are connected to an element in $U_{min}^*(m)$.
\begin{proof}
We prove by direct construction of a path along which we decrease the number of non-zero elements. Take any $u\in U^*(m)$ and apply the following steps.\\
\\
\textbf{Step 1}\\
Define sets of indices.
\[
P\coloneq\{i\in[m]:u_i>0\},~~N\coloneq\{i\in[m]:u_i<0\}
\]
If $|P|\leq1$ and $|N|\leq1$, $u$ is a point in $U_{min}^*(m)$. If not, move to Step 2.\\
\\
\textbf{Step 2}\\
Since  $|P|>1$ or $|N|>1$, $\exists k<l$ such that $k,l\in P$ or $k,l \in N$. For such $k,l$, construct a merging path $M(s)$ as 
\[
    M(s)=(u_i(s))_{i=1}^{m},~s\in[0,1]
\]
s.t. 
\begin{align*}
    u_i(s)=
    \left\{
\begin{array}{ll}
    u_i~~~\text{if}~~~i\neq k\text{ and }i\neq l,\\
    \sqrt{u_k^2u_l^2}\cos(\theta(1-s))~~~\text{if}~~~i=k,\\
    \sqrt{u_k^2u_l^2}\sin(\theta(1-s))~~~\text{if}~~~i=l.
\end{array}
\right.
\end{align*}
$\theta\in[0,2\pi)$ satisfies $u_k= \sqrt{u_k^2u_l^2}\cos\theta$ and $u_l= \sqrt{u_k^2u_l^2}\sin\theta$. By direct calculation, $M(s)$ is a continuous path in $U^*(m)$ and $M(0)=u$. Apply Step 1 to $M(1)$ and repeat this merging process until we reach a point in $U_{min}^*(m)$. The number of non-zero elements in $M(1)$ is less than that of $M(0)$, so the process terminates in a finite number of steps.
\end{proof}
\end{enumerate}
These two claims prove that $\Theta^*(m)$ is connected for $m>M^*$.
\end{proof}

The second proof is provided in Appendix~\ref{app:connectivity_convex}.

\begin{theorem}\label{thm:hyp_connectivity}
Assume $a<\delta$. Define $M^*(m)=1[|X_{S}^TY|>m^{a-\delta}]+1[|X_{S^c}^TY|>m^{a-\delta}]$. Define
\begin{align*}
&\underline{M}=\min\{m\in\mathbb{N}_{\ge 1}~|~M^*(m)\geq1\},\\
&\overline{M}=\min\{m\in\mathbb{N}_{\ge 1}~|~M^*(m)=2\}.
\end{align*}
$M^*(m)\in\{0,1,2\}$ is increasing with $m$ and $M^*(m)=2$ for sufficiently large $m$. Hence, $\underline{M}$ and $\overline{M}$ are well-defined.\\
We have the following connectivity results.\\
(1) For $m<\underline{M}$, $\Theta^*(m)$ is a singleton ($\{(0,0)_{i=1}^{m}\}$).\\
(2) If $\underline{M}=\overline{M}= m=1$, $\Theta^*(m)=\emptyset$.\\
(3) If $\underline{M}\le m=1<\overline{M}$ or $\underline{M}\le\overline{M}\le m=2$, $\Theta^*(m)$ is a finite set.\\
(4) Otherwise, $\Theta^*(m)$ is connected.
\end{theorem}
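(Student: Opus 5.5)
The plan is to reduce Theorem~\ref{thm:hyp_connectivity} to Theorem~\ref{thm:connectivity}, with $M^*$ replaced by the width-dependent count $M^*(m)$. By definition of the soft-thresholding operator, $\mathcal{S}_{\alpha\beta}(b)\neq0$ if and only if $|b|>\alpha\beta=m^{a-\delta}$. Hence the critical quantity of Theorem~\ref{thm:connectivity} evaluated at width $m$ is exactly
\[
M^*(m)=1[|X_S^TY|>m^{a-\delta}]+1[|X_{S^c}^TY|>m^{a-\delta}],
\]
which also equals $1[\gamma_P^*\neq0]+1[\gamma_N^*\neq0]$ by Theorem~\ref{thm:global_optim_explicit}.

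First I settle well-definedness. Since $a<\delta$, the threshold $m^{a-\delta}$ is strictly decreasing in $m$ and tends to $0$. Each indicator is therefore nondecreasing in $m$. Because $|X_S^TY|$ and $|X_{S^c}^TY|$ are fixed positive numbers by assumption, both indicators eventually equal $1$. So $M^*(m)$ is nondecreasing, takes values in $\{0,1,2\}$, and equals $2$ for large $m$, which makes $\underline{M}\le\overline{M}$ finite.

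Next I translate the four cases of Theorem~\ref{thm:connectivity} at each fixed $m$.

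\textbf{Case (1).} If $m<\underline{M}$, then $M^*(m)=0$ by minimality and monotonicity, so part (1) of Theorem~\ref{thm:connectivity} gives the singleton $\{(0,0)_{i=1}^m\}$.

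\textbf{Empty case.} For $m\ge\underline{M}$ we have $M^*(m)\ge1$, and emptiness happens exactly when $m<M^*(m)$. Since $M^*(m)\le2$, this forces $m=1$ and $M^*(1)=2$, that is $\underline{M}=\overline{M}=1=m$. This is statement (2).

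\textbf{Finite case.} This happens exactly when $m=M^*(m)\ge1$. There are two possibilities:
\begin{itemize}
\item $m=1$ with $M^*(1)=1$, which means $\underline{M}\le1<\overline{M}$;
\item $m=2$ with $M^*(2)=2$, which means $\overline{M}\le2$, and then $\underline{M}\le\overline{M}\le m=2$.
\end{itemize}
For the second possibility I need care. If $\overline{M}=1$, then $M^*(2)=2$ still holds by monotonicity, so the case is still $m=M^*(m)$. This gives statement (3).

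\textbf{Connected case.} All remaining $m\ge\underline{M}$ satisfy $m>M^*(m)>0$, and part (4) of Theorem~\ref{thm:connectivity} gives connectivity. This is statement (4).

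The main obstacle is purely combinatorial bookkeeping. I must check that the case descriptions in (1)--(4) exactly partition $\mathbb{N}_{\ge1}$ and match the conditions $M^*(m)=0$, $m<M^*(m)$, $m=M^*(m)$ and $m>M^*(m)>0$, using monotonicity of $M^*(m)$ at every step. One subtlety is that (2), (3) and (4) implicitly assume $m\ge\underline{M}$. Another is the overlap at $m=1$ when $\underline{M}=\overline{M}=1$: there (2) applies and not (3). I would verify this by enumerating the possible values $(M^*(1),M^*(2))$ and the values of $m\ge3$, where (4) always holds because $m\ge3>2\ge M^*(m)$.
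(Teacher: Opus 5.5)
Your reduction is the same as the paper's. You apply Theorem~\ref{thm:connectivity} at each fixed width with $M^*=M^*(m)$. You then use monotonicity of $M^*(m)$ to translate $M^*(m)=0$, $m<M^*(m)$, $m=M^*(m)$ and $m>M^*(m)>0$ into the stated conditions on $\underline{M}$ and $\overline{M}$. The bookkeeping is correct. You also make explicit that positivity of $|X_S^TY|$ and $|X_{S^c}^TY|$ is what forces $M^*(m)=2$ eventually, which the paper's proof leaves implicit.

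There is, however, a genuine problem with part (2), and the paper's proof has the same problem.

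\textbf{The emptiness step is false.} The step ``emptiness happens exactly when $m<M^*(m)$'' (Theorem~\ref{thm:connectivity}(2)) does not hold for $\Theta^*(m)$ as defined, namely the set of global minimizers of $L$. For $\beta>0$ we have $L(\theta)\ge\frac{\beta}{2}\|\theta\|_2^2$, so $L$ is continuous and coercive and always attains its minimum. The explicit description in Theorem~\ref{thm:global_optim_explicit} identifies the argmin only when the lower bound \eqref{ineq:gamma} is attained. That happens exactly when $m\ge M^*(m)$. For $m=1<2=M^*(1)$ the bound is unattainable, and the minimizers lie elsewhere.

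\textbf{The true minimizers at $m=1$.} A single neuron with $u>0$ produces $\frac{u\omega}{\alpha}X_S$, and one with $u<0$ produces $\frac{u\omega}{\alpha}X_{S^c}$. Hence $\Theta^*(1)$ consists of whichever of $(\sqrt{|\gamma_P^*|},\operatorname{sign}(\gamma_P^*)\sqrt{|\gamma_P^*|})$ and $(-\sqrt{|\gamma_N^*|},-\operatorname{sign}(\gamma_N^*)\sqrt{|\gamma_N^*|})$ has the smaller loss, or both on a tie. For instance, $X=[1,-1]^T$ and $Y=[2,-2]^T$ give $M^*(1)=2$ but $\Theta^*(1)=\{(1,1),(-1,-1)\}$.

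So your argument validly establishes (1), (3) and (4), since there $m\ge M^*(m)$ and the explicit form is legitimate. Part (2) cannot be proved because it is false. The correct conclusion is that $\Theta^*(1)$ is a nonempty finite set. That has to be shown by minimizing the one-neuron problem directly, not by citing Theorem~\ref{thm:connectivity}(2).
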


\begin{proof}
   $m^{a-\delta}$ is strictly decreasing and $m^{a-\delta}\rightarrow 0$ as $m\rightarrow\infty$. Hence, $M^*(m)\in\{0,1,2\}$ is increasing with $m$ and $M^*(m)=2$ for sufficiently large $m$.\\
   As the product $\alpha\beta$ depends on $m$, $\gamma_P^*=\alpha\frac{\mathcal{S}_{\alpha\beta}(X_{S}^TY)}{\|X_{S}\|_2^2}$ and $\gamma_N^*=\alpha\frac{\mathcal{S}_{\alpha\beta}(X_{S^c}^TY)}{\|X_{S^c}\|_2^2}$ depends on $m$.
   \\
   \[
   M^*(m)=1[|X_{S}^TY|>m^{a-\delta}]+1[|X_{S^c}^TY|>m^{a-\delta}]=1[|\gamma_P^*|>0]+1[|\gamma_N^*|>0].
   \]
   Also,
   \begin{align}
       &M^*(m)=0 \iff m<\underline{M},\label{eq:1}\\
       &M^*(m)=1 \iff \underline{M}\le m< \overline{M},\label{eq:2}\\
       &M^*(m)=2 \iff \overline{M}\le m.\label{eq:3}
   \end{align}
   (1) $m<\underline{M}$:\\
   From \eqref{eq:1}, $M^*(m)=0$, so the result from Theorem~\ref{thm:connectivity} (1) implies that $\Theta^*(m)$ is a singleton.\\
   (2) $\underline{M}=\overline{M}=m=1$:\\
   We need at least $M^*(m)$ non-zero neurons for $\sum_{i:\,u_i\ge0} u_i^{2}=|\gamma_P^*|$ and $\sum_{i:\,u_i\le0} u_i^{2}=|\gamma_N^*|$ to hold. Hence, $\Theta^*(m)=\emptyset$ iff $m<M^*(m)$. As $M^*(m)\in\{0,1,2\}$, $m<M^*(m)$ happens iff $m=1$ and $M^(1)=2$. As $1\le\underline{M}\le\overline{M}$ and by \eqref{eq:3}, $M^(1)=2$ holds iff $\underline{M}=\overline{M}=1=m$.\\
   (3) $\underline{M}\le m=1<\overline{M}$ or $\underline{M}\le\overline{M}\le m=2$:\\
   As $M^*(m)\in\{0,1,2\}$, $m=M^*(m)$ iff $1=m=M^*(1)$ or $2=m=M^*(2)$. By \eqref{eq:2}, $1=m=M^*(1)$ iff $\underline{M}\le 1=m< \overline{M}$. By \eqref{eq:3}, $2=m=M^*(2)$ iff $\overline{M}\le m=2$.\\
   \\
   (4) Otherwise:\\
   From the above arguments, $m<M^*(m)$ iff conditions for (2) hold, $m=M^*(m)$ iff conditions for (3) hold, and $M^*(m)=0$ iff conditions for (1) hold. Thus, $m>M^*(m)>0$ for this case. By the result from Theorem~\ref{thm:connectivity} (4), $\Theta^*(m)$ is connected.
\end{proof}
\subsection{Proofs for Subsection~\ref{subsubsec:dim}}
\noindent\textbf{Notation:} For a subset $A\subset\mathbb{R}^d$, we define
$\dim(A)$ to be the maximum $k$ such that $A$ contains a $k$-dimensional
embedded $C^1$ submanifold (equivalently, the maximal stratum dimension).

\begin{lemma}\label{prop:ass_reg}
    Under Assumption $\delta>a$, for sufficiently large $m$, 
    \begin{align*}
    \mathcal{S}_{\alpha\beta}(X_S^{\mathsf T}Y)\neq 0 \wedge \mathcal{S}_{\alpha\beta}(X_{S^c}^{\mathsf T}Y)\neq 0
     \end{align*}
\end{lemma}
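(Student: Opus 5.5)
The plan is to unpack the definition of the soft-thresholding operator and use the width-dependent choice $\alpha\beta=m^{a-\delta}$. By definition, $\mathcal{S}_{\alpha\beta}(b)=\operatorname{sign}(b)\max(|b|-\alpha\beta,0)$. Hence $\mathcal{S}_{\alpha\beta}(b)\neq0$ holds if and only if $b\neq0$ and $|b|>\alpha\beta$. The claim is therefore equivalent to showing that, for all sufficiently large $m$,
\[
|X_S^{\mathsf T}Y|>m^{a-\delta}\quad\text{and}\quad |X_{S^c}^{\mathsf T}Y|>m^{a-\delta}.
\]

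First I would record the standing assumptions of Subsection~\ref{subsec:go}: $X$ and $Y$ do not depend on $m$, and $c:=\min\{|X_S^{\mathsf T}Y|,|X_{S^c}^{\mathsf T}Y|\}>0$. Both inner products are thus fixed positive numbers bounded below by $c$.

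Next, since $\delta>a$, the exponent $a-\delta$ is negative. So $m^{a-\delta}$ is strictly decreasing in $m$ and tends to $0$ as $m\to\infty$. I would choose $m_0$ with $m_0^{a-\delta}<c$; explicitly, any integer $m_0>c^{1/(a-\delta)}$ works, because raising to the negative power $a-\delta$ reverses the inequality. Then for every $m\ge m_0$ we get $m^{a-\delta}\le m_0^{a-\delta}<c\le|X_S^{\mathsf T}Y|$, and likewise for $X_{S^c}^{\mathsf T}Y$. Both soft-thresholded values are therefore nonzero, with signs $\operatorname{sign}(X_S^{\mathsf T}Y)$ and $\operatorname{sign}(X_{S^c}^{\mathsf T}Y)$ respectively.

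There is no real obstacle here. The only points needing care are the direction of the inequality under the negative exponent and the explicit use of the hypothesis $0<\min\{|X_S^{\mathsf T}Y|,|X_{S^c}^{\mathsf T}Y|\}$; without that hypothesis, one coordinate could vanish for every $m$. As a consistency check, the conclusion says exactly that $M^*(m)=2$ for all $m\ge m_0$, which matches the claim in Theorem~\ref{thm:hyp_connectivity} that $M^*(m)=2$ for sufficiently large $m$.
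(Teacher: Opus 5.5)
Your proof is correct and is essentially the paper's argument. Both reduce nonvanishing of the two soft-thresholded values to $m^{a-\delta}<C:=\min\{|X_S^{\mathsf T}Y|,|X_{S^c}^{\mathsf T}Y|\}$ and then use $m^{a-\delta}\to0$ under $\delta>a$; the paper phrases this as $a-\delta<\log_m C$ with $\log_m C\to0$, while you give an explicit threshold $m_0>C^{1/(a-\delta)}$.
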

\begin{proof}
    Denote $\{\min\{|X_S^{T}Y|,~|X_{S^C}^{T}Y|\}$ by $C$. By our assumption, $C$ is positive and independent of $m$. For $m>1$,
    \begin{align*}
    &\mathcal{S}_{\alpha\beta}(X_S^{\mathsf T}Y)\neq 0 \wedge \mathcal{S}_{\alpha\beta}(X_{S^c}^{\mathsf T}Y)\neq 0\\
        \iff~&\beta<\frac{1}{\alpha}\min\{|X_S^{T}Y|,~|X_{S^C}^{T}Y|\}
        \iff~m^{-\delta}<m^{-(a)}C\\
        \iff~&-d<-a+\log_{m}C
        \iff~a-d<\log_mC.
    \end{align*}
    $\log_{m}C\rightarrow0$ as $m\rightarrow\infty$. Under Assumption $\delta>a$, $a-d<\log_mC$ holds for sufficiently learge $m$. 
\end{proof}

\begin{proposition}\label{prop:global_reg_dim}(Proposition~\ref{prop:global_reg_dim_main} in main)
Under Assumption~$\delta>a$, for sufficiently large $m$,
    \[
    \dim(\Theta^*(m))=m-2.
    \]
\end{proposition}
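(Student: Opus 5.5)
By Lemma~\ref{prop:ass_reg}, for sufficiently large $m$ both $\gamma_P^*$ and $\gamma_N^*$ are nonzero. Theorem~\ref{thm:global_optim_explicit} then gives $\Theta^*(m)=\{(u,\Omega(u)) : u\in U^*(m)\}$, with $U^*(m)$ and $\Omega$ as in the first proof of Theorem~\ref{thm:connectivity}. The graph map $F(u)=(u,\Omega(u))$ is a homeomorphism onto $\Theta^*(m)$. On each open sign-stratum $\{u : u_i\neq 0 \ \forall i\}$ of fixed sign pattern, $\Omega$ is linear and hence $C^\infty$. So $F$ restricted to such a stratum is a smooth embedding, and it maps embedded $C^1$ submanifolds of $U^*(m)$ to embedded $C^1$ submanifolds of $\Theta^*(m)$. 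Conversely, composing with the coordinate projection $(u,\omega)\mapsto u$, which is smooth, sends any $C^1$ submanifold of $\Theta^*(m)$ to a set of the same dimension in $U^*(m)$. The projection is injective on the graph, and I would check that it is immersive there by restricting to the open dense part where all $u_i\neq 0$ and $\Omega$ is linear. Hence $\dim\Theta^*(m)=\dim U^*(m)$, and it remains to show $\dim U^*(m)=m-2$.

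\textbf{Lower bound.} Fix a partition $[m]=P\sqcup N$ with $|P|,|N|\ge 1$, say $|P|=k$. Consider the stratum
\[
U_{P,N}=\Bigl\{u : u_i>0\ (i\in P),\ u_i<0\ (i\in N),\ \textstyle\sum_{i\in P}u_i^2=|\gamma_P^*|,\ \sum_{i\in N}u_i^2=|\gamma_N^*|\Bigr\}.
\]
It is the product of an open positive orthant piece of the sphere $S^{k-1}$ of radius $\sqrt{|\gamma_P^*|}$ and an open negative orthant piece of $S^{m-k-1}$ of radius $\sqrt{|\gamma_N^*|}$. Both pieces are nonempty because the radii are positive. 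Equivalently, $U_{P,N}$ is the zero set, inside an open set, of the map $(\sum_{P}u_i^2-|\gamma_P^*|,\ \sum_N u_i^2-|\gamma_N^*|)$. The Jacobian of this map has rows $2u_P$ and $2u_N$, which have disjoint supports and are nonzero, so it has rank $2$. By the regular value theorem, $U_{P,N}$ is an embedded submanifold of dimension $(k-1)+(m-k-1)=m-2$.

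\textbf{Upper bound.} $U^*(m)$ is the finite union, over sign patterns in $\{+,-,0\}^m$, of the analogous strata $U_{P,N,Z}$ with $Z$ the set of zero coordinates. Each stratum is a product of spherical orthant pieces of dimension $(|P|-1)+(|N|-1)=m-|Z|-2\le m-2$, and it is nonempty only when $P,N\neq\emptyset$. Any embedded $C^1$ submanifold $M\subset U^*(m)$ is therefore covered by finitely many sets, each contained in a $C^1$ manifold of dimension at most $m-2$. Such a union has Hausdorff dimension at most $m-2$. Since a $C^1$ manifold of dimension $k$ has Hausdorff dimension $k$, we get $k\le m-2$. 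The definition of $\dim$ as the maximal stratum dimension also makes this immediate.

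The main obstacle is the bookkeeping between $U^*(m)$ and $\Theta^*(m)$, since $\Omega$ is only piecewise linear across the hyperplanes $u_i=0$. The Hausdorff-dimension argument (or Baire category: some stratum must contain an open subset of $M$) avoids any need for global smoothness of $\Omega$.
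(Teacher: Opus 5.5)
Your proof is correct and uses the same core idea as the paper. On $\Theta^*(m)$, $\omega$ is a function of $u$. Once Lemma~\ref{prop:ass_reg} gives $\gamma_P^*,\gamma_N^*\neq0$, the two quadratic constraints act on disjoint index sets and are therefore independent.

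The paper stops at asserting that each constraint ``reduces the dimension by exactly one.'' That is really a heuristic count on the top stratum. It does not justify passing from $u$ to $(u,\Omega(u))$ when $\Omega$ is non-smooth, and it does not rule out the union of sign strata containing a larger submanifold.

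Your version fills both holes, as the maximal-$C^1$-submanifold definition of $\dim$ requires:
\begin{itemize}
\item \emph{Lower bound.} You exhibit an explicit $(m-2)$-manifold $F(U_{P,N})$ through a rank-$2$ Jacobian. This is the same open-orthant/regular-value argument the paper itself uses for $\varphi^*(m)$ in Proposition~\ref{prop:sqrd_dim}.
\item \emph{Upper bound.} You bound the dimension using the finite stratification $U_{P,N,Z}$ plus Hausdorff dimension (or Baire).
\end{itemize}

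The one soft spot is the transfer step. Checking that the projection is immersive ``on the open dense part where all $u_i\neq0$'' does not cover a submanifold lying entirely inside a stratum with some $u_i=0$. You do not actually need that step; either of the following repairs it:
\begin{itemize}
\item The paper's first proof of Theorem~\ref{thm:connectivity} shows $|\Omega_i(a)-\Omega_i(b)|\le|a-b|$. Hence $F$ is bi-Lipschitz onto $\Theta^*(m)$, and Hausdorff dimension transfers directly.
\item $F$ agrees with an injective linear map on each stratum. So $\Theta^*(m)$ is itself a finite union of embedded smooth manifolds $F(U_{P,N,Z})$ of dimension $m-|Z|-2\le m-2$, and you can run the upper bound directly there.
\end{itemize}
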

\begin{proof}
By the explicit form of $\Theta^*(m)$ as in Theorem~\ref{thm:global_optim_explicit}, $\omega_i$ is uniquely determined by $u_i$ for every $i$; hence the degrees of freedom of $\Theta^*(m)$ are entirely captured by the vector $u=(u_1,\dots,u_m)\in\mathbb{R}^m$.
We have two conditions $\sum_{i:\,u_i\ge0} u_i^{2}=|\gamma_P^*|$ and $\sum_{i:\,u_i\le0} u_i^{2}=|\gamma_N^*|$. Since $\gamma_P^*$ (resp.\ $\gamma_N^*$) is a nonzero scalar multiple of $\mathcal{S}_{\alpha\beta}(X_S^{\mathsf T}Y)$ (resp.\ $\mathcal{S}_{\alpha\beta}(X_{S^c}^{\mathsf T}Y)$), we have
\begin{align*}
    \gamma_P^*\neq 0 \wedge \gamma_N^*\neq 0&\iff \mathcal{S}_{\alpha\beta}(X_S^{\mathsf T}Y)\neq 0 \wedge \mathcal{S}_{\alpha\beta}(X_{S^c}^{\mathsf T}Y)\neq 0.
\end{align*}
By Lemma~\ref{prop:ass_reg}, this holds for sufficiently large $m$. Therefore, both of the quadratic equalities impose nontrivial conditions on $u$.\\
\\
To satisfy these equalities, we need both $\{i:u_i>0\}$ and $\{i:u_i<0\}$ to be non-empty. The two constraints are independent because they involve disjoint sets of indices ($\{i:u_i\ge0\}\neq\{i:u_i\le0\}$). Hence, in $\mathbb{R}^m$, the dimension is reduced by exactly one for each condition. 
\end{proof}

\begin{proposition}\label{thm:global_opt_location} (Proposition~\ref{thm:global_opt_location_main} in main)
Under Assumption $\delta>a$, for sufficiently large $m$, $\Theta^*(m)$ is bounded and
\begin{align*}
\forall\,\theta^*\in\Theta^*(m),\quad \|\theta^*\|_2
&= \sqrt{2\alpha\left(
\frac{|\mathcal{S}_{\alpha\beta}(X_{S}^{\top}Y)|}{\|X_{S}\|_2^{2}}
+\frac{|\mathcal{S}_{\alpha\beta}(X_{S^{c}}^{\top}Y)|}{\|X_{S^{c}}\|_2^{2}}
\right)}\\
&= \Theta\!\big(m^{\frac{a}{2}}\big).
\end{align*}
\end{proposition}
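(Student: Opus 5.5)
We read the norm directly off the explicit description of $\Theta^*(m)$ in Theorem~\ref{thm:global_optim_explicit}.

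Take any $\theta^*=(u_i,\omega_i)_{i=1}^m\in\Theta^*(m)$. By \eqref{eq:cond3}, $\omega_i^2=u_i^2$ for every $i$. If $u_i=0$, then $\omega_i=0$, and otherwise $|\operatorname{sign}(\gamma^*_{P/N})|=1$. The last fact uses Lemma~\ref{prop:ass_reg}: for sufficiently large $m$ under $\delta>a$, both $\gamma_P^*$ and $\gamma_N^*$ are nonzero, so neither sign factor vanishes. Hence
\[
\|\theta^*\|_2^2=\sum_{i=1}^m(u_i^2+\omega_i^2)=2\sum_{i=1}^m u_i^2=2\Bigl(\sum_{i:u_i\ge0}u_i^2+\sum_{i:u_i\le0}u_i^2\Bigr).
\]
Indices with $u_i=0$ lie in both index sets, but they contribute zero, so nothing is double counted. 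Substituting the constraints \eqref{eq:cond1}--\eqref{eq:cond2} gives $\|\theta^*\|_2^2=2(|\gamma_P^*|+|\gamma_N^*|)$. Inserting the formulas for $\gamma_P^*$ and $\gamma_N^*$ from Lemma~\ref{lem:gamma_min}, with $\alpha>0$, yields the claimed closed form. Since every point of $\Theta^*(m)$ has this same norm, $\Theta^*(m)$ lies on a sphere and is therefore bounded.

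For the asymptotics, write $|\mathcal{S}_{\alpha\beta}(b)|=\max(|b|-m^{a-\delta},0)$. The quantities $X_S^\top Y$, $X_{S^c}^\top Y$, $\|X_S\|_2$ and $\|X_{S^c}\|_2$ do not depend on $m$. Because $\delta>a$, we have $m^{a-\delta}\to0$, so $|\mathcal{S}_{\alpha\beta}(X_S^\top Y)|\to|X_S^\top Y|>0$, and likewise for $S^c$. Positivity of these limits is the standing assumption $0<\min\{|X_S^\top Y|,|X_{S^c}^\top Y|\}$. Therefore the bracket in the norm formula converges to the positive constant
\[
c\coloneq\frac{|X_S^\top Y|}{\|X_S\|_2^2}+\frac{|X_{S^c}^\top Y|}{\|X_{S^c}\|_2^2}.
\]
For large $m$ the bracket lies in $[c/2,c]$. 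Combined with $\alpha=m^a$, this gives $\|\theta^*\|_2=\Theta(m^{a/2})$.

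The main point of care is the first step. We need to confirm that the characterization \eqref{eq:explicit} forces $|\omega_i|=|u_i|$ for every $i$, including the zero neurons. We also need Lemma~\ref{prop:ass_reg} so that neither sign factor is zero, which is what makes $\omega_i^2=u_i^2$ exact. Everything after that is direct substitution.
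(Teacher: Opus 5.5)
Your proposal is correct and follows the same route as the paper. You read $\|\theta^*\|_2^2=2(|\gamma_P^*|+|\gamma_N^*|)$ off the explicit form in Theorem~\ref{thm:global_optim_explicit}, substitute the soft-thresholded formulas, and conclude $\Theta(m^{a/2})$, adding the convergence-of-the-bracket argument the paper leaves implicit; the appeal to Lemma~\ref{prop:ass_reg} is harmless but unnecessary, since if, say, $\gamma_P^*=0$ then $\sum_{i:u_i\ge0}u_i^2=0$ already excludes any $u_i>0$, so a vanishing sign factor is never used.
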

\begin{proof}
    By the explicit form of $\Theta^*(m)$ in Theorem~\ref{thm:global_optim_explicit_main}, $\forall~\theta^*=(u_i,\omega_i)_{i=1}^{m}\in\Theta^*(m)$,
    \begin{align*}
        \|\theta^*\|_2^2&=2\left(\sum_{i:\,u_i\ge0} u_i^{2}+\sum_{i:\,u_i\le0} u_i^{2}\right)\\
        &=2(|\gamma_P^*|+|\gamma_N^*|)\\
        &=2\alpha\left(
\frac{|\mathcal{S}_{\alpha\beta}(X_{S}^{\top}Y)|}{\|X_{S}\|_2^{2}}
+\frac{|\mathcal{S}_{\alpha\beta}(X_{S^{c}}^{\top}Y)|}{\|X_{S^{c}}\|_2^{2}}
\right)
    \end{align*}
 $\|\theta^*\|_2^2=\Theta\!(m^{a})\Rightarrow\|\theta^*\|_2=\Theta\!(m^{\frac{a}{2}}).$
\end{proof}
For the rest of the proofs, keep in mind that we made an assumption that $0<\min\{|X_S^{T}Y|,~|X_{S^C}^{T}Y|\}$.
\begin{lemma}\label{prop:out_of_sphere}
    Under Assumption $\delta>a$, for sufficiently large $m$,
    \[
    \forall\phi^*\in\varphi^*(m),~\forall \theta^*\in\Theta^*(m),\quad\|\phi^*\|_2\geq\|\theta^*\|_2.
    \]
\end{lemma}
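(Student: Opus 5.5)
The plan is to compare both norms with the common quantity $\sum_i |u_i\omega_i|$. Take any $\phi^*=(u_i,\omega_i)_{i=1}^m\in\varphi^*(m)$. By the AM--GM inequality $u_i^2+\omega_i^2\ge 2|u_i\omega_i|$,
\[
\|\phi^*\|_2^2=\sum_{i=1}^m(u_i^2+\omega_i^2)\ge 2\sum_{i=1}^m|u_i\omega_i|
\ge 2\Bigl(\Bigl|\sum_{i:u_i\ge0}u_i\omega_i\Bigr|+\Bigl|\sum_{i:u_i\le0}u_i\omega_i\Bigr|\Bigr).
\]
An index with $u_i=0$ contributes zero to both sums, so counting it twice is harmless. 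The second inequality is the triangle inequality applied separately to the two index groups. By Theorem~\ref{prop:sqrd_loss}, the two inner sums equal $\alpha X_S^\top Y/\|X_S\|_2^2$ and $\alpha X_{S^c}^\top Y/\|X_{S^c}\|_2^2$. Hence
\[
\|\phi^*\|_2^2\ge 2\alpha\Bigl(\frac{|X_S^\top Y|}{\|X_S\|_2^2}+\frac{|X_{S^c}^\top Y|}{\|X_{S^c}\|_2^2}\Bigr).
\]

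For the other side, take any $\theta^*\in\Theta^*(m)$. By Proposition~\ref{thm:global_opt_location}, valid for large $m$ under $\delta>a$,
\[
\|\theta^*\|_2^2=2\alpha\Bigl(\frac{|\mathcal{S}_{\alpha\beta}(X_S^\top Y)|}{\|X_S\|_2^2}+\frac{|\mathcal{S}_{\alpha\beta}(X_{S^c}^\top Y)|}{\|X_{S^c}\|_2^2}\Bigr).
\]
It remains to use the elementary fact that soft-thresholding shrinks magnitudes. For any real $b$ and $\alpha\beta>0$,
\[
|\mathcal{S}_{\alpha\beta}(b)|=\max(|b|-\alpha\beta,0)\le |b|.
\]
Applying this to $b=X_S^\top Y$ and $b=X_{S^c}^\top Y$ and comparing term by term gives $\|\phi^*\|_2^2\ge\|\theta^*\|_2^2$. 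Taking square roots finishes the proof.

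The only delicate point is bookkeeping. One must check that both constraints of Theorem~\ref{prop:sqrd_loss} can be used at once even though zero neurons lie in both index sets; this works because they contribute zero. One must also note that the "sufficiently large $m$" hypothesis is needed only to invoke the explicit norm formula of Proposition~\ref{thm:global_opt_location}. The argument in fact gives a strict inequality, since $\alpha\beta>0$ and the assumption $0<\min\{|X_S^\top Y|,|X_{S^c}^\top Y|\}$ make the soft-thresholding shrink strictly.
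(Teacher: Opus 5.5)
Your proof is correct and matches the paper's argument step for step: AM--GM, the triangle inequality on the two sign groups, the constraints of Theorem~\ref{prop:sqrd_loss}, and then $|\mathcal{S}_{\alpha\beta}(b)|\le|b|$ combined with the norm formula of Proposition~\ref{thm:global_opt_location}. Your remarks are also correct: zero neurons can be counted in both index sets without harm, and the inequality is in fact strict.
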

\begin{proof}
$\forall~\phi^*\in\varphi^*(m)$,~$\forall~\theta^*\in\Theta^*(m)$,
\begin{align*}
    \|\phi^*\|_2^2&=\sum_{i=1}^{m}u_i^2+\sum_{i=1}^{m}\omega_i^2\\
    &\ge 2\sum_{i=1}^{m}|u_i\omega_i|\quad(\text{by AM-GM inequality)}\\
    &= 2\sum_{i:u_i>0}|u_i\omega_i|+2\sum_{i:u_i<0}|u_i\omega_i|\\
    &\ge 2\left|\sum_{i:u_i>0}u_i\omega_i\right|+2\left|\sum_{i:u_i<0}u_i\omega_i\right|\\
    &=2\left|\frac{\alpha X_S^{\top}Y}{\|X_S\|_2^{2}}\right|+2\left|\frac{\alpha X_{S^c}^{\top}Y}{\|X_{S^c}\|_2^{2}}\right|\quad(\text{by Theorem~\ref{prop:sqrd_loss_app})}\\
    &=2\alpha\left(\frac{|X_S^{\top}Y|}{\|X_S\|_2^{2}}+\frac{|X_{S^c}^{\top}Y|}{\|X_S^c\|_2^{2}}\right)\\
    &\ge 2\alpha\left(
\frac{|\mathcal{S}_{\alpha\beta}(X_{S}^{\top}Y)|}{\|X_{S}\|_2^{2}}
+\frac{|\mathcal{S}_{\alpha\beta}(X_{S^{c}}^{\top}Y)|}{\|X_{S^{c}}\|_2^{2}}
\right)\\
&=\|\theta^*\|_2^2\quad(\text{by Theorem~\ref{thm:global_opt_location})}
\end{align*}
\end{proof}

\begin{proposition}\label{prop:sqrd_dim}(Proposition~\ref{prop:sqrd_dim_main} in main) For sufficiently large $m$,
    \[\mathrm{dim}(\varphi^*(m))= 2m-2.\] 
\end{proposition}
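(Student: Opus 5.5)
By Theorem~\ref{prop:sqrd_loss_app}, $\varphi^*(m)$ is the common zero set in $\mathbb{R}^{2m}$ of two functions:
\[
g_P(\theta)=\sum_{j:u_j\ge0}u_j\omega_j-A,\qquad g_N(\theta)=\sum_{j:u_j\le0}u_j\omega_j-B,
\]
with $A=\alpha X_S^{\top}Y/\|X_S\|_2^2$ and $B=\alpha X_{S^c}^{\top}Y/\|X_{S^c}\|_2^2$. Both $A$ and $B$ are nonzero by the standing assumption $0<\min\{|X_S^{T}Y|,|X_{S^c}^{T}Y|\}$. The plan is to restrict to an open region of $\mathbb{R}^{2m}$ where the sign pattern of $u$ is fixed, so that $g_P,g_N$ become smooth polynomials there. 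I will then prove the lower bound $\dim\ge 2m-2$ via the regular value theorem, and the upper bound $\dim\le 2m-2$ by showing $\varphi^*(m)$ is covered by finitely many $(2m-2)$-dimensional pieces.

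For the lower bound, fix a sign pattern with $P=\{i:u_i>0\}$ and $N=\{i:u_i<0\}$ nonempty and $P\cup N=[m]$; this requires $m\ge2$, which is where "sufficiently large $m$" enters. On the open set $\mathcal{O}_{P,N}=\{\theta: u_i>0\ (i\in P),\ u_i<0\ (i\in N)\}$ the constraints are
\[
G(\theta)=\Big(\sum_{i\in P}u_i\omega_i-A,\ \sum_{i\in N}u_i\omega_i-B\Big),
\]
which is smooth. Its Jacobian contains $\partial/\partial\omega_p=u_p$ for $p\in P$ in the first row only, and $\partial/\partial\omega_n=u_n$ for $n\in N$ in the second row only. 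Both entries are nonzero, so the Jacobian has rank $2$ everywhere on $\mathcal{O}_{P,N}$. The preimage $G^{-1}(0)\cap\mathcal{O}_{P,N}$ is nonempty: take $u_p=1$, $\omega_p=A$, $u_n=-1$, $\omega_n=B$, and other coordinates chosen to satisfy the sums, for instance $\omega_i=0$ off $\{p,n\}$. Hence this preimage is an embedded $C^\infty$ submanifold of dimension $2m-2$ contained in $\varphi^*(m)$, giving $\dim\ge 2m-2$.

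For the upper bound, I partition $\varphi^*(m)$ according to the set $Z=\{i:u_i=0\}$ and the signs on its complement. On each stratum, the coordinates $u_i$ for $i\in Z$ are fixed at $0$, so the stratum sits inside $\{u_Z=0\}\cong\mathbb{R}^{2m-|Z|}$. Within that space, the same Jacobian argument applies to the remaining variables: every point of $\varphi^*(m)$ has at least one strictly positive and one strictly negative $u$ (since $A,B\neq0$), so the constraint map again has rank $2$. Each stratum is therefore a submanifold of dimension $2m-|Z|-2\le 2m-2$. There are finitely many strata, and their union is $\varphi^*(m)$.

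The main obstacle is converting this stratification into the bound required by the paper's definition of $\dim$. A $k$-dimensional $C^1$ submanifold $M\subset\varphi^*(m)$ is a finite union of the sets $M\cap(\text{stratum})$. By Baire's theorem (or a measure argument), one of these intersections has nonempty interior in $M$, so some open piece of $M$ lies inside a single $(2m-|Z|-2)$-dimensional manifold. Invariance of domain then forces $k\le 2m-2$. I expect to state this briefly, consistent with the paper's remark that the definition is "equivalently, the maximal stratum dimension." Combining the two directions gives $\dim(\varphi^*(m))=2m-2$.
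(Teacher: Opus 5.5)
Your proposal is correct and takes essentially the paper's route. For the lower bound you both use a rank-$2$ Jacobian from the $\omega_p,\omega_n$ columns on the open sign-pattern orthants; for the upper bound you both use lower-dimensional strata where some $u_j=0$. Your Baire/measure plus invariance-of-domain step (Baire works because the strata are locally closed) rigorously justifies passing from the stratification to the definition of $\dim$, which the paper only asserts, and you also check that the top stratum is nonempty, which the paper omits. One small slip: in your witness, $u_n=-1$ forces $\omega_n=-B$, not $\omega_n=B$.
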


\begin{proof}
For notations, write
\[
c_1\coloneq\frac{\alpha X_S^{\top}Y}{\lVert X_S\rVert_2^2}
\quad\text{and}\quad
c_2\coloneq\frac{\alpha X_{S^{c}}^{\top}Y}{\lVert X_{S^{c}}\rVert_2^2}.
\]
We assume $m\ge2$. As $0<\min\{|X_S^{\top}Y|,~|X_{S^c}^{\top}Y|\}$, we have $c_1\neq 0$ and $c_2\neq 0$.
Hence any $(u_j,\omega_j)_{j=1}^m\in\varphi^*(m)$ must have at least one strictly positive $u_j$
and at least one strictly negative $u_j$.

Fix a sign pattern by choosing a partition $P,N\subset\{1,\dots,m\}$ with
\[
P\neq\varnothing,\quad N\neq\varnothing,\quad P\cup N=\{1,\dots,m\},\quad P\cap N=\varnothing,
\]
and consider the open orthant
\[
\mathcal{U}_{P,N}\coloneq \Bigl\{(u,\omega)\in\mathbb{R}^{2m}:\ u_j>0\ (j\in P),\ u_j<0\ (j\in N)\Bigr\}.
\]
On $\mathcal{U}_{P,N}$ we have $\{j:u_j\ge 0\}=P$ and $\{j:u_j\le 0\}=N$, so the defining
conditions of $\varphi^*(m)$ become the two smooth equations
\begin{align*}
f_1(u,\omega) &\coloneq \sum_{j\in P} u_j\omega_j - c_1 = 0,\\
f_2(u,\omega) &\coloneq \sum_{j\in N} u_j\omega_j - c_2 = 0.
\end{align*}
Let $F\coloneq(f_1,f_2):\mathbb{R}^{2m}\to\mathbb{R}^2$. Then
\[
\varphi^*(m)\cap \mathcal U_{P,N}
=\bigl\{(u,\omega)\in \mathcal U_{P,N}:\ F(u,\omega)=0\bigr\}
=F^{-1}(0)\cap \mathcal U_{P,N}.
\]

Take any $(u,\omega)\in\mathcal{U}_{P,N}$. Choose $j\in P$ and $k\in N$ (possible since both are nonempty).
Then
\[
\frac{\partial f_1}{\partial \omega_j}(u,\omega)=u_j\neq 0,\qquad
\frac{\partial f_2}{\partial \omega_j}(u,\omega)=0,
\]
and
\[
\frac{\partial f_2}{\partial \omega_k}(u,\omega)=u_k\neq 0,\qquad
\frac{\partial f_1}{\partial \omega_k}(u,\omega)=0.
\]
Hence the $2\times 2$ submatrix of $DF(u,\omega)$ formed by the columns corresponding to
$\omega_j$ and $\omega_k$ is
\[
\begin{pmatrix}
u_j & 0\\
0 & u_k
\end{pmatrix},
\]
which is invertible. Therefore $\mathrm{rank}\,DF(u,\omega)=2$ at every $(u,\omega)\in\mathcal{U}_{P,N}$.
In particular, $0\in\mathbb{R}^2$ is a regular value of $F$ on $\mathcal{U}_{P,N}$.
By the preimage theorem (a version of the implicit function theorem),
$F^{-1}(0)\cap\mathcal{U}_{P,N}$ is a submanifold of $\mathcal{U}_{P,N}$ with codimension $2$.
Hence,
\[
\mathrm{dim}\,\bigl(F^{-1}(0)\cap\mathcal{U}_{P,N}\bigr)=2m-2.
\]

The set $\varphi^*(m)$ is the union over all such admissible sign patterns $(P,N)$
of the pieces $\varphi^*(m)\cap\mathcal{U}_{P,N}$, together with boundary parts where some $u_j=0$.
Each interior piece has dimension $2m-2$, while boundary parts (where at least one additional
equality $u_j=0$ holds) have dimension at most $2m-3$. Therefore, the (maximal) manifold
dimension of $\varphi^*(m)$ is
\[
\dim(\varphi^*(m)) = 2m-2 \quad \text{for } m\ge 2.
\]
\end{proof}

\begin{proposition}\label{prop:sqrd_unbound}(Proposition~\ref{prop:sqrd_unbound_main} in main)
For sufficiently large $m$,~$\varphi^*(m)$ is unbounded. 
Especially, $\forall a\geq\frac{a}{2}$, $\exists\varphi_a^*(m)\subset\varphi^*(m)~s.t.~\forall\phi_a^*\in\varphi_a^*(m),~\|\phi^*_a\|_2=\Theta\!(m^a)$ and $\dim(\varphi_a^*(m))= 2m-2$.
\end{proposition}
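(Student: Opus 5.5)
The plan is to build, for each exponent $n\ge \frac{a}{2}$ (the statement writes it as $a$, overloading the width exponent; I write $n$), an explicit open subset of $\varphi^*(m)$ on which every norm is $\Theta(m^n)$. I then show this subset is a $(2m-2)$-dimensional manifold by reusing the preimage argument from the proof of Proposition~\ref{prop:sqrd_dim}. Unboundedness follows at once, because $n$ can be taken arbitrarily large, or even just from sending one coordinate to infinity along the family below.

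Fix $m\ge 2$, a sign pattern $P=\{1\}$, $N=\{2,\dots,m\}$, and write $c_1,c_2$ as in the proof of Proposition~\ref{prop:sqrd_dim}. Both are $\Theta(m^a)$, since $X,Y$ do not depend on $m$. The constraints are $u_1\omega_1=c_1$ and $\sum_{j\ge2}u_j\omega_j=c_2$. A base point is
\[
u_1=m^n,\quad \omega_1=c_1m^{-n},\quad u_2=-m^n,\quad \omega_2=-c_2m^{-n},\quad u_j=-1,\quad \omega_j=0\ (j\ge 3).
\]
Its norm squared is $2m^{2n}+\Theta(m^{2a-2n})+(m-2)$. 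The condition $n\ge a/2$ gives $2a-2n\le 2n$. To keep the bulk term $m-2$ from dominating, I rescale the auxiliary neurons to $u_j=-m^{-1}$ for $j\ge3$, so they contribute $O(m^{-1})$. The norm is then $\Theta(m^n)$; when $n=a/2$ all three terms are comparable, which is still fine.

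I then define $\varphi_n^*(m)$ as the set of points of $\varphi^*(m)\cap\mathcal U_{P,N}$ that lie within a small relative neighbourhood of this base point. Concretely, I require $u_1\in[\tfrac12 m^n,2m^n]$, $u_2\in[-2m^n,-\tfrac12m^n]$, $|u_j+m^{-1}|<\tfrac12 m^{-1}$, and $|\omega_j|<m^{-1}$ for $j\ge3$, and I solve for $\omega_1=c_1/u_1$ and $\omega_2=(c_2-\sum_{j\ge3}u_j\omega_j)/u_2$. This set is the graph of a smooth map over an open set of the $2m-2$ free coordinates $(u_1,\dots,u_m,\omega_3,\dots,\omega_m)$, so it is an embedded $C^1$ submanifold of dimension $2m-2$. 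Alternatively, it is an open subset of $F^{-1}(0)\cap\mathcal U_{P,N}$, which has codimension $2$ by the regular-value computation already carried out. Combined with Proposition~\ref{prop:sqrd_dim}, the dimension is exactly $2m-2$.

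The main obstacle is the uniform estimate $\|\phi\|_2=\Theta(m^n)$ over the whole neighbourhood, with constants independent of $m$. I would bound each term separately. $u_1^2+u_2^2\in[\tfrac12m^{2n},8m^{2n}]$. $\omega_1^2\le 4c_1^2m^{-2n}=O(m^{2a-2n})=O(m^{2n})$. $|\sum_{j\ge3}u_j\omega_j|\le (m-2)\cdot\tfrac32 m^{-2}=O(m^{-1})$, so $\omega_2^2=O(m^{2a-2n}+m^{-2n-2})$. The auxiliary terms total $O(m^{-1})$. The lower bound comes from $u_1^2\ge \tfrac14 m^{2n}$, and the upper bound is $O(m^{2n})$ because $2a-2n\le 2n$. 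The only delicate point is choosing the auxiliary scales ($m^{-1}$) so that the $m-2$ extra neurons do not spoil the order; that choice is what makes the estimate uniform.
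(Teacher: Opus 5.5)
Your proposal is correct and follows essentially the same route as the paper. Both arguments use the sign pattern $P=\{1\}$, $N=\{2,\dots,m\}$ with $u_1\asymp m^n$ and $u_2\asymp -m^n$, solve for $\omega_1,\omega_2$ from the two constraints, confine the auxiliary neurons $j\ge 3$ to a small box, read off the dimension from an open piece of $F^{-1}(0)\cap\mathcal{U}_{P,N}$, and bound the norm term by term.

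The one real difference is the scale of the auxiliary neurons: you use $m^{-1}$ instead of the paper's $m^{(a-1)/2}$. This makes their contribution negligible and removes the need for the paper's constant $\kappa$ and its lower bound on $\bigl|c_2-\sum_{j\ge 3}u_j\omega_j\bigr|$; from below you only need $u_1^2\ge \tfrac14 m^{2n}$. The price is that you tacitly use $n\ge -\tfrac12$. This is automatic when $a\ge -1$, so it covers every regime the paper considers, whereas the paper's $a$-adapted scale works for every real $a$. Also take the intervals for $u_1,u_2$ open, so that the set is literally an open subset of the manifold.
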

\begin{proof}
    For simplicity, define the nonzero constants $b_1 \coloneqq \frac{X_S^\top Y}{\|X_S\|_2^2}\neq 0,~ 
b_2 \coloneqq \frac{X_{S^c}^\top Y}{\|X_{S^c}\|_2^2}\neq 0$
and write $c_1(m)\coloneqq \alpha b_1,~ c_2(m)\coloneqq \alpha b_2$ where $\alpha=m^{a}.$
Assume $m>2$. Then,$\forall t>0,~\phi(t)=(u_i(t),\omega_i(t))_{i=1}^{m}\in\varphi^*(m)$
where $\phi(t)$ is defined as 
\[
u_1=t,~\omega_1=\frac{c_1(m)}{t},~u_2=-t,~\omega_2=-\frac{c_2(m)}{t},~u_i=0,~\omega_i=0\quad(i\in\{3,~...~,m\}).
\]
Its squared norm is $\|\phi(t)\|_2^2=2t^2+\frac{c_1(m)^2+c_2(m)^2}{t^2}\rightarrow\infty\quad\text{as }t\rightarrow\infty.$
Hence, $\varphi^*(m)$ is unbounded for $m>2$.\\
\\
As in the proof for Proposition~\ref{prop:sqrd_dim}, define a sign pattern partition $P,N\subset\{1,\dots,m\}$ with
\[
P\neq\varnothing,\quad N\neq\varnothing,\quad P\cup N=\{1,\dots,m\},\quad P\cap N=\varnothing,
\]
and consider the open orthant
\[
\mathcal{U}_{P,N}\coloneq \Bigl\{(u,\omega)\in\mathbb{R}^{2m}:\ u_j>0\ (j\in P),\ u_j<0\ (j\in N)\Bigr\}.
\]
We set $P=\{1\}$ and $N=\{2,~...~,m\}$ and denote $\mathcal{U}_{P,N}$ by $\mathcal{U}$. From the proof for Proposition~\ref{prop:sqrd_dim}, $\dim\bigl(\varphi^*(m)\cap\mathcal{U}\bigr)=2m-2.$\\
Pick any $a\ge \frac{a}{2}$. Introduce notations $\beta \coloneqq \min\{|b_1|,|b_2|\}>0$ and $\kappa\coloneqq \sqrt{\frac{\beta}{2}}.$\\
Define $\varphi_a^*(m)$ to be the subset of $\varphi^*(m)\cap\mathcal{U}$ consisting of points satisfying
\[
u_1\in(m^a,2m^a),\qquad u_2\in(-2m^a,-m^a),
\]
and for $j=3,\dots,m$,
\[
u_j\in\Bigl(-\kappa\, m^{\frac{a-1}{2}},\ -\frac{\kappa}{2}\, m^{\frac{a-1}{2}}\Bigr),
\qquad
\omega_j\in\Bigl(-\kappa\, m^{\frac{a-1}{2}},\ \kappa\, m^{\frac{a-1}{2}}\Bigr),
\]
with $(\omega_1,\omega_2)$ determined by the constraints:
\[
\omega_1=\frac{c_1(m)}{u_1},\qquad
\omega_2=\frac{c_2(m)-\sum_{j=3}^m u_j\omega_j}{u_2}.
\]
From the proof for Proposition~\ref{prop:sqrd_dim}, $\varphi^*(m)\cap\mathcal{U}$ is an embedded submanifold of $\mathbb{R}^{2m}$ (by the implicit function theorem), so the manifold topology on $\varphi^*(m)\cap\mathcal{U}$ is the subspace topology inherited from $\mathbb{R}^{2m}$. The set of points satisfying the constraints is open in $\mathbb{R}^{2m}$. Hence, $\varphi_a^*(m)$ is an open subset of the $(2m-2)$-dimensional manifold $\varphi^*(m)\cap\mathcal{U}$. Thus, $\dim(\varphi_a^*(m))=2m-2.$\\
\\
Take any $\phi\in\varphi_a^*(m)$. By construction, $|u_1|+|u_2| = \Theta(m^a)~\Rightarrow~\|\phi\|_2 \ge \sqrt{u_1^2+u_2^2}=\Omega(m^a)$. Also, $|\omega_1|=\left|\frac{c_1(m)}{u_1}\right|=\Theta\!\left(\frac{m^{a}}{m^a}\right)=\Theta\!\bigl(m^{a-a}\bigr)$.
For $j\ge 3$ we have $|u_j\omega_j|\le \kappa^2 m^{a-1}$, hence
\[
\left|\sum_{j=3}^m u_j\omega_j\right|
\le (m-2)\kappa^2 m^{a-1}\le \kappa^2 m^{a}.
\]
Therefore, for all sufficiently large $m$,
\[
\left|c_2(m)-\sum_{j=3}^m u_j\omega_j\right|
\ge |c_2(m)|-\left|\sum_{j=3}^m u_j\omega_j\right|
\ge |c_2(m)|-\kappa^2 m^{a}
\ge \left(|b_2|-\kappa^2\right)m^{a}
\ge \frac{\beta}{2}m^{a}.
\]
Also, 
\[
\left|c_2(m)-\sum_{j=3}^m u_j\omega_j\right|
\le |c_2(m)|+\left|\sum_{j=3}^m u_j\omega_j\right|
\le |c_2(m)|+\kappa^2 m^{a}
\le \left(|b_2|+\kappa^2\right)m^{a}
\le \frac{3|b_2|}{2}m^{a}.
\]
Hence, $\left|c_2(m)-\sum_{j=3}^m u_j\omega_j\right|=\Theta\!(m^{a})$. By using $|u_2|=\Theta(m^a)$, we get $|\omega_2|=\Theta\!\left(\frac{m^{a}}{m^a}\right)=\Theta\!\bigl(m^{a-a}\bigr)$.\\
Moreover, by construction, $|u_j|=|\omega_j|=\Theta\!\left(m^{\frac{a-1}{2}}\right)\quad (j\ge 3)$. Hence, 
\[
\sum_{j=3}^m (u_j^2+\omega_j^2)
=O\!\left(2(m-2)m^{a-1}\right)=O\!\left(m^{a}\right).
\]
Their total contribution satisfies $\left(\sum_{j=3}^m (u_j^2+\omega_j^2)\right)^{1/2}=O\!\left(m^{\frac{a}{2}}\right).$
Since $a\ge \frac{a}{2}$, we have $m^{\frac{a}{2}}\le m^a$, so $\left(\sum_{j=3}^m (u_j^2+\omega_j^2)\right)^{1/2}=O\!\left(m^{a}\right)$. Also,
$a\ge \frac{a}{2}$ implies $a-a\le a$, so $|\omega_1|,|\omega_2|=O(m^a)$. By construction, $|u_1|,|u_2|=O(m^a)$.
Combining these bounds yields
\[
\|\phi\|_2=\left(\sum_{j=1}^m (u_j^2+\omega_j^2)\right)^{1/2}
\le \sqrt{u_1^2+u_2^2} + \sqrt{\omega_1^2+\omega_2^2}
     +\left(\sum_{j=3}^m (u_j^2+\omega_j^2)\right)^{1/2}
= O(m^a).
\]
Together with the lower bound $\|\phi\|_2=\Omega(m^a)$, we conclude $\forall \phi\in\varphi_a^*(m),~\|\phi\|_2=\Theta(m^a)$.
This completes the proof.
\end{proof}
\section{More analysis on connectivity result}
\subsection{Roles of Permutation Symmetry and Inactive Neurons for Connectivity}\label{app:neurons}
Both of the two proofs in Appendix~\ref{app:conn_proof} for Theorem~\ref{thm:hyp_connectivity_main} exploit two basic facts about the neural network model: hidden neurons have a permutation symmetry, and overparameterized models have unnecessary hidden neurons to express an optimal function. In this section, we explain their roles for the connectivity result.

The neurons in the neural network model (\ref{eq:g_2ReLu}) have a permutation symmetry i.e. changing $(u_i, v_i)_{i=1}^{m}$ to $(u_{\pi(i)}, v_{\pi(i)})_{i=1}^m$ for a permutation $\pi$ does not change its output or the training loss (\ref{eq:g_min_problem}). The $\ell_2$ regularization forces any neuron $(u_i,\omega_i)$ in optimal parameter $(u_i, v_i)_{i=1}^{m}\in\Theta^*(m)$ to be an inactive neuron i.e. $(u_i, v_i)=(0,0)$ or an active neuron $u_i\neq0$ and $\omega_i\neq0$. (See Proposition~\ref{prop:basic_result_2} in Appendix.) We define, by applying the concept of minimal optimal networks defined in \citet{kim_exploring_2025} to one-dimensional data, the set of Minimal Optimal Solutions as
\begin{equation}
\label{eq:theta-min}
\begin{aligned}
\Theta^*_{\min}(m) &\coloneq
\Bigl\{(u_j,\omega_j)_{j=1}^{m}\ \Bigm|\ 
\forall\, p \neq q \in [m], \omega_p \omega_q > 0
\Rightarrow
u_pu_q<0
\Bigr\}.
\end{aligned}
\end{equation}
This is the set of parameters that has the least possible number of active neurons. We find that all elements in $\Theta^*_{min}(m)$ are permutations of each other. (See Appendix Lemma~\ref{lem:cvx_conn_1}). In Figure~\ref{fig:main_varphi_3} (c) (d), Minimal Optimal Solutions are denoted by black points.
A key observation from the connectivity result is that once we have an inactive neuron in $(u_i,\omega_i)_{i=1}^{m}\in\Theta^*_{min}(m)$, $\Theta^*(m)$ becomes a connected set. This is because an inactive neuron plays a role in proving that permutations $\{(u_{\pi(i)}, v_{\pi(i)})_{i=1}^m\Big|~\pi\text{ is a permutation on \{1,~...~.m}\}\}$ are connected in $\Theta^*(m)$. (See Collorary~\ref{col:perm} in Appendix.) Additionally, a merging process, a process that is analogous to the merging process defined in \citet{kim_exploring_2025}, proves that all the elements in $\Theta^*(m)$ are connected to a point in $\Theta^*_{min}(m)$. (See Lemma~\ref{lem:cvx_conn_3}). Paths created in the merging process are illustrated by colored paths in Figure~\ref{fig:main_varphi_3} (c) (d).
\subsection{Connectivity and Convex Formulation}\label{app:convex}
In this section, we provide an explanation of why the connectivity phase transition is restricted for the one-dimensional input case (Theorem~\ref{thm:hyp_connectivity_main}) from a perspective on a convex formulation introduced by \citet{pilanci_neural_2020}. \citet{kim_exploring_2025} explored the connectivity of global optimal parameters for general $d$-dimensional input data by applying the convex formulation. We follow their principle strategies, but our proof is simpler because the analysis on one-dimensional input is enough for our purpose. (See Appendix~\ref{app:connectivity_convex}.)

Firstly, we find the convex formulation of the non-convex problem \eqref{eq:g_min_problem}. The convex formulation of the training problem is introduced by \cite{pilanci_neural_2020}. By restricting our attention to one-dimensional input, the convex formulation can be written as follows. By abuse of notation, we denote $(v_1,v_2,t_1,t_2)$ by $(v_i,t_i)_{i=1}^{2}$.
\begin{proposition}\label{prop:convex_main}
Consider the convex problem given as a cone-constrained group LASSO
\newcommand{\A}{(v_1-t_1)D(S) + (v_2-t_2)D(S^{c})}
\begin{equation}
\label{eq:conv_min_main}
    \min_{\substack{v_1, v_2, t_1, t_2\in\mathbb{R}\\v_1,t_1\geq 0\\v_2,t_2\leq 0}}L_{conv}(v_1,v_2,t_1,t_2),
\end{equation}
\begin{equation}\label{eq:conv_loss}
\begin{aligned}
&\text{where }L_{conv}(v_1,v_2,t_1,t_2)=\frac12 \Bigl\lVert
\Bigl((v_1-t_1)D(S) + (v_2-t_2)D(S^{c})\Bigr)\frac{X}{\alpha} - Y
\Bigr\rVert_2^2 \\
&\quad + \beta\bigl(|v_1|+|v_2|+|t_1|+|t_2|\bigr). \\
\end{aligned}
\end{equation}
The convex problem \eqref{eq:conv_min_main} and the non-convex problem \eqref{eq:g_min_problem} have identical optimal value when $m\geq~M^{*}=\sum_{i\in\{1,2\}:v_i^{*}\neq0}1+\sum_{i\in\{1,2\}:t_i^{*}\neq0}1$ where $(v^{*}_i, t^{*}_i)_{i=1}^{2}$ is an optimal solution to (\ref{eq:conv_min_main}). 
\end{proposition}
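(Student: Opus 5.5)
The plan is to show that both optimal values coincide with the optimal value of the two-variable problem \eqref{eq:gamma_min} in Lemma~\ref{lem:gamma_min}. The first step is to compute the convex optimum. In \eqref{eq:conv_min_main}, the data-fit term depends only on the differences $\gamma_P:=v_1-t_1$ and $\gamma_N:=v_2-t_2$. For fixed $(\gamma_P,\gamma_N)$, the penalty $|v_1|+|t_1|$ with $v_1,t_1\ge0$ and $v_1-t_1=\gamma_P$ is minimized exactly when one of $v_1,t_1$ is zero, and its minimum is $|\gamma_P|$. The same holds for $|v_2|+|t_2|$ with value $|\gamma_N|$. The convex problem therefore reduces to \eqref{eq:gamma_min}. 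One point needs care: $D(S)X/\alpha=X_S/\alpha$ and $D(S^c)X/\alpha=X_{S^c}/\alpha$, so the fit terms agree exactly. Lemma~\ref{lem:gamma_min} then gives the optimal value, attained at $(\gamma_P^*,\gamma_N^*)$. Every convex optimum has at most one nonzero entry among $\{v_1,t_1\}$ and at most one among $\{v_2,t_2\}$. Hence $M^*$ equals $1[\gamma_P^*\neq0]+1[\gamma_N^*\neq0]$.

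The second step is the inequality nonconvex $\ge$ convex. This is the chain \eqref{ineq:zero}--\eqref{ineq:sgn} already used in the proof of Theorem~\ref{thm:global_optim_explicit}. For any $\theta$, set $\gamma_P=\sum_{u_j\ge0}u_j\omega_j$ and $\gamma_N=\sum_{u_j\le0}u_j\omega_j$. The network output is then $\gamma_P X_S/\alpha+\gamma_N X_{S^c}/\alpha$, and the regularizer is at least $\beta(|\gamma_P|+|\gamma_N|)$. So $L(\theta)$ is bounded below by the objective of \eqref{eq:gamma_min}, which equals the convex optimal value from the first step. This direction needs no condition on $m$.

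The third step is achievability, and this is where $m\ge M^*$ enters. I would build $\theta$ from a convex optimum using one neuron for each nonzero $\gamma$. If $\gamma_P^*\neq0$, take $u_1=\sqrt{|\gamma_P^*|}$ and $\omega_1=\operatorname{sign}(\gamma_P^*)u_1$. If $\gamma_N^*\neq0$, take a further neuron with $u=-\sqrt{|\gamma_N^*|}$ and $\omega=\operatorname{sign}(\gamma_N^*)u$. Set all remaining neurons to zero. This uses exactly $M^*\le m$ neurons. Each inequality in the chain is then tight: $|u_j|=|\omega_j|$, the signs within each group agree, and zero neurons have $\omega=0$. So $L(\theta)$ equals the convex objective at $(\gamma_P^*,\gamma_N^*)$, and the two values coincide.

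The main obstacle is bookkeeping, not analysis. The cases $u_j=0$, which is counted in both sums, and $M^*=0$ need care; when $M^*=0$ the zero network works for any $m\ge1$. One must also check that $M^*$ is well defined, independent of which convex optimum is chosen. That follows from the uniqueness of $(\gamma_P^*,\gamma_N^*)$ in Lemma~\ref{lem:gamma_min} together with the observation in the first step that optimal $(v_i,t_i)$ have disjoint supports.
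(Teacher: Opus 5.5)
Your proof is correct, but it takes a different route from the paper. The paper does not verify the equality of optimal values directly. It first notes that for one-dimensional $X$ with both signs the only activation patterns are $D(S)$, $D(S^c)$ and $I_n$. It then writes the general cone-constrained group LASSO and invokes Theorem~1 of \citet{pilanci_neural_2020}. Its only own work is checking that the cone constraints $(2D_i-I_n)Xv_i\ge0$, $(2D_i-I_n)Xt_i\ge0$ reduce to $v_1,t_1\ge0$, $v_2,t_2\le0$, $v_3=t_3=0$.

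You prove the equality from scratch. The identity $\min\{v+t:\ v,t\ge0,\ v-t=\gamma\}=|\gamma|$ shows that \eqref{eq:conv_min_main} is just \eqref{eq:gamma_min} written in split variables. The chain \eqref{ineq:zero}--\eqref{ineq:sgn} gives the lower bound for every $m$. An explicit $M^*$-neuron network, which is exactly $\Psi$ of Definition~\ref{def:Psi}, attains it.

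Your route is self-contained and elementary, with no external duality theorem. It also yields, as by-products, Proposition~\ref{prop:convex_soln} (a unique convex optimum with disjoint supports) and Remark~\ref{rmk:equiv} ($M^*=1[\gamma_P^*\neq0]+1[\gamma_N^*\neq0]$). It also establishes that $M^*$ is well defined, which the statement leaves implicit.

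The paper's route instead places the problem inside the general Pilanci--Ergen and Kim et al.\ framework. That framework motivates the later $\Phi$/$\Psi$ and minimal-optimal-solution machinery and extends beyond $d=1$. Your reduction depends on every neuron's output collapsing to a scalar multiple of $X_S$ or $X_{S^c}$, which is a one-dimensional phenomenon.

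Your lower bound reuses the inequality chain from the proof of Theorem~\ref{thm:global_optim_explicit}. This is not circular, since that chain does not use the present proposition. It does mean the convex-formulation proof of Theorem~\ref{thm:connectivity} would no longer be independent of the direct argument, whereas the paper deliberately keeps the two separate.
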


\begin{remark}\label{rmk:equiv}
$M^*$ defined in Theorem~\ref{thm:hyp_connectivity_main} and $M^*$ defined in Proposition~\ref{prop:convex_main} are the same. This is because $\gamma_P^{*}$ and $\gamma_N^{*}$ introduced in Theorem~\ref{thm:global_optim_explicit_main} can be written as $\gamma_P^{*}=v_1^{*}-t_1^{*}$, $\gamma_N^{*}=v_2^{*}-t_2^{*}$ by Proposition~\ref{prop:convex_soln_main}. We call $M^*$ the critical value.
\end{remark}

Since the problem is convex, we can solve (\ref{eq:conv_min_main}) directly to find optimal solutions that satisfy the constraints. It turns out that the solution set is a singleton.
\begin{proposition}\label{prop:convex_soln_main}
The set of global optimal solutions 
\begin{align*}
\mathcal{P}^{*}
=
\left\{
(v_1,v_2,t_1,t_2)\;\middle|\;
\operatorname*{arg\,min}_{\substack{
v_1,v_2,t_1,t_2\in\mathbb{R}\\
v_1,t_1\ge 0\\
v_2,t_2\le 0
}}
L_{\mathrm{conv}}(v_1,v_2,t_1,t_2)
\right\}
\end{align*}
for the convex problem (\ref{eq:conv_min_main}) 
is 
$\mathcal{P}^{*}=\{(v^{*}_1, v^{*}_2, t^{*}_1, t^{*}_2)\}$ where
{\small
\setlength{\jot}{2pt}
\renewcommand{\arraystretch}{1.05}
\[
\begin{aligned}
(v_1^{*}, t_1^{*}) &=
\begin{cases}
\left(\alpha\dfrac{X_S^{\top}Y-\alpha\beta}{\|X_S\|_2^{2}},\,0\right)
& \text{if } X_S^{\top}Y>\alpha\beta,\\
(0,0)
& \text{if } -\alpha\beta\le X_S^{\top}Y\le \alpha\beta,\\
\left(0,\,-\alpha\dfrac{X_S^{\top}Y+\alpha\beta}{\|X_S\|_2^{2}}\right)
& \text{if } X_S^{\top}Y<-\alpha\beta,
\end{cases}
\\[2pt]
(v_2^{*}, t_2^{*}) &=
\begin{cases}
\left(0,\,\alpha\dfrac{-X_{S^{c}}^{\top}Y+\alpha\beta}{\|X_{S^{c}}\|_2^{2}}\right)
& \text{if } X_{S^{c}}^{\top}Y>\alpha\beta,\\
(0,0)
& \text{if } -\alpha\beta\le X_{S^{c}}^{\top}Y\le \alpha\beta,\\
\left(\alpha\dfrac{X_{S^{c}}^{\top}Y+\alpha\beta}{\|X_{S^{c}}\|_2^{2}},\,0\right)
& \text{if } X_{S^{c}}^{\top}Y<-\alpha\beta.
\end{cases}
\end{aligned}
\]
}
\end{proposition}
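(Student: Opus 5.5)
The plan is to exploit the fact that the convex problem \eqref{eq:conv_min_main} separates into two independent two-variable problems. One involves $(v_1,t_1)$ with $v_1\ge0$, $t_1\ge0$; the other involves $(v_2,t_2)$ with $v_2\le0$, $t_2\le0$. This works because $D(S)X$ and $D(S^c)X$ have disjoint supports, apart from coordinates with $x_i=0$, where both vanish. So the squared loss splits as
\[
\tfrac12\bigl\|(v_1-t_1)\tfrac{X_S}{\alpha}-Y_S\bigr\|_2^2+\tfrac12\bigl\|(v_2-t_2)\tfrac{X_{S^c}}{\alpha}-Y_{S^c}\bigr\|_2^2+\text{const},
\]
where $Y_S$ and $Y_{S^c}$ are the restrictions of $Y$ to the coordinates with $x_i>0$ and $x_i<0$. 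Each block is then handled separately.

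For the first block, set $\gamma=v_1-t_1$. On the constraint set $v_1,t_1\ge0$ we have $|v_1|+|t_1|=v_1+t_1\ge|\gamma|$, with equality if and only if $\min(v_1,t_1)=0$. Any pair with both $v_1,t_1>0$ is strictly improved by subtracting $\min(v_1,t_1)$ from both, since this keeps $\gamma$ fixed and strictly lowers the penalty. Hence every minimizer satisfies $v_1t_1=0$, and the block reduces to the one-dimensional scalar LASSO
\[
\min_{\gamma}\ \tfrac12\|\gamma X_S/\alpha - Y\|^2+\beta|\gamma|
\]
(restricted to the $S$-block). Lemma~\ref{lem:gamma_min} already solves this problem: the unique minimizer is $\gamma^*=\alpha\,\mathcal{S}_{\alpha\beta}(X_S^\top Y)/\|X_S\|_2^2$. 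Uniqueness holds because $\|X_S\|_2\neq0$, so the objective is strictly convex in $\gamma$. Then $(v_1^*,t_1^*)=(\gamma^*_+,(-\gamma^*)_+)$ is the unique pair with $v_1-t_1=\gamma^*$ and $\min(v_1,t_1)=0$. Writing out the three sign cases of $X_S^\top Y$ relative to $\pm\alpha\beta$ gives the stated formulas for $(v_1^*,t_1^*)$.

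The second block is handled the same way. Now $v_2,t_2\le0$, so $|v_2|+|t_2|=-(v_2+t_2)\ge|v_2-t_2|$, and the same argument shows every minimizer has $v_2t_2=0$. The optimal $\gamma_N^*=v_2^*-t_2^*$ again comes from Lemma~\ref{lem:gamma_min} applied to $X_{S^c}$. If $\gamma_N^*>0$, we need $t_2<0$ and $v_2=0$, so $t_2^*=-\gamma_N^*=\alpha(-X_{S^c}^\top Y+\alpha\beta)/\|X_{S^c}\|^2$; this matches the case $X_{S^c}^\top Y>\alpha\beta$. If $\gamma_N^*<0$, we need $v_2=\gamma_N^*<0$ and $t_2=0$. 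The main obstacle is this sign bookkeeping, together with verifying the uniqueness claim, i.e.\ that the map $(v,t)\mapsto v-t$ restricted to $\{\min(|v|,|t|)=0\}$ within each cone is injective. Both are elementary.
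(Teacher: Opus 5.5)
Your proof is correct, but it takes a different route from the paper's.

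The paper writes the subgradient stationarity conditions $0\in\partial_{v_1}L_{\mathrm{conv}}$ and $0\in\partial_{t_1}L_{\mathrm{conv}}$ directly in the four-variable problem. These use $\beta\,\partial|v_1|$ and $\beta\,\partial|t_1|$ but include no normal-cone terms for the sign constraints, an omission that happens to be harmless here. The paper then asserts that these conditions, together with $v_1,t_1\ge0$, determine $(v_1^*,t_1^*)$ uniquely, and obtains $(v_2^*,t_2^*)$ by replacing $X_S$ with $X_{S^c}$ and flipping the cone.

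You instead use $|v|+|t|\ge|v-t|$ on each cone, with equality iff $\min(|v|,|t|)=0$, to show (using $\beta>0$) that every minimizer has complementary support. This collapses each block to the scalar LASSO in $\gamma=v_i-t_i$, and you import the soft-thresholding solution from Lemma~\ref{lem:gamma_min}. Uniqueness then follows from strict convexity in $\gamma$ (since $\|X_S\|_2,\|X_{S^c}\|_2\neq0$) plus injectivity of $\gamma\mapsto(v,t)$ on the complementary-support part of each cone.

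What your route buys:
\begin{itemize}
\item It needs no multiplier or constraint-qualification reasoning.
\item It makes the uniqueness claim explicit rather than leaving it implicit in the case analysis.
\item It gives Remark~\ref{rmk:equiv} ($\gamma_P^*=v_1^*-t_1^*$, $\gamma_N^*=v_2^*-t_2^*$) by construction.
\item It reuses the same penalty-reduction inequality the paper uses for Theorem~\ref{thm:global_optim_explicit}.
\item It depends only on Lemma~\ref{lem:gamma_min}, not on the explicit form of $\Theta^*(m)$, so it fits the appendix's aim of deriving connectivity from the convex formulation alone.
\end{itemize}

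The paper's KKT computation, in turn, is a direct calculation on the constrained problem itself and does not pass through the reduced $\gamma$-problem.
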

The staircase connectivity shown in \cite{kim_exploring_2025} arises because we can relate the connectivity results of the cardinarity constraint set $\mathcal{P}^*(m)\subseteq\mathcal{P}^*$ defined as 
\begin{equation}
\label{eq:Pstar-m}
\begin{aligned}
\mathcal{P}^{*}(m)
:= &\Bigl\{(u_i,v_i)_{i=1}^{P}\ \Bigm|\ 
(u_i,v_i)_{i=1}^{P} \in \mathcal{P}^{*}, \sum_{i\in[P]:v_i^{*}\neq0}1+\sum_{i\in[P]:t_i^{*}\neq0}1 \le m
\Bigr\}.
\end{aligned}
\end{equation}
with the connectivity results of $\Theta^*(m)$, and $\mathcal{P}^*(m)$ changes with respect to $m$. For 1-dimensional case, $\mathcal{P}^{*}$ is a singleton, so $\mathcal{P}^*(m)=\mathcal{P}^*$ for any $m\ge M^*$ and we observe limited connectivity phase transitions.
\section{Proofs for Appendix}

\subsection{Proofs for Appendix~\ref{app:convex}}\label{app:connectivity_convex}
In general, convex problems are easier to deal with than non-convex problems. \citet{kim_exploring_2025} showed that the staircase connectivity of global optimal solutions to the original non-convex training loss minimization problem can be found by analyzing the connectivity of global optimal solutions to its convex formulation. We apply this strategy to our problem with 1-dimensional input data. Thanks to the simplicity coming from 1-dimensional input, we provide a simpler proof without using all the notations introduced by \citet{kim_exploring_2025}.\\
\\
A benefit from deriving connectivity via convex formulation is that we can derive the connectivity without knowing the explicit form of $\Theta^*(m)$. Therefore, we pretend as if we did not know the explicit form (\ref{eq:explicit}) throughout Section~\ref{app:connectivity_convex}. \\
\\
Firstly, we find the convex formulation of our non-convex problem \eqref{eq:g_min_problem}.
\begin{proposition}\label{prop:convex} (Proposition~\ref{prop:convex_main} in Appendix~\ref{app:convex})
Consider the convex problem given as a cone-constrained group LASSO
\newcommand{\A}{(v_1-t_1)D(S) + (v_2-t_2)D(S^{c})}
\begin{equation}
\label{eq:conv_min}
\begin{aligned}
\min_{v_1,v_2,t_1,t_2}
\frac12 \Bigl\lVert
\Bigl((v_1-t_1)D(S) + (v_2-t_2)D(S^{c})\Bigr)\frac{X}{\alpha} - Y
\Bigr\rVert_2^2 
\quad + \beta\bigl(|v_1|+|v_2|+|t_1|+|t_2|\bigr) \\
\text{s.t.}\quad
v_1 \ge 0,\; t_1 \ge 0,\; v_2 \le 0,\; t_2 \le 0
\end{aligned}
\end{equation}
where $D(S)=\operatorname{Diag}(\mathbf{1}[X\geq 0])$ and $D(S^{c})=\operatorname{Diag}(\mathbf{1}[X\leq 0])$.\\
The convex problem (\ref{eq:conv_min}) and the non-convex problem (\ref{eq:g_min_problem}) have identical optimal value when $m\geq~M^{*}=\sum_{i\in\{1,2\}:v_i^{*}\neq0}1+\sum_{i\in\{1,2\}:t_i^{*}\neq0}1$ where $\{v^{*}_i, t^{*}_i\}_{i=1}^{2}$ is an optimal solution to (\ref{eq:conv_min}). 
\end{proposition}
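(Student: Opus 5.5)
The plan is to show that both problems have the same optimal value as the reduced problem \eqref{eq:gamma_min} of Lemma~\ref{lem:gamma_min}, with $\gamma_P=v_1-t_1$ and $\gamma_N=v_2-t_2$. For this I would prove two inequalities: the convex optimum is at least the $\gamma$-problem optimum, and the non-convex optimum equals the $\gamma$-problem optimum whenever $m\ge M^*$.

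\textbf{Convex side.} For feasible $(v_1,v_2,t_1,t_2)$, the data-fit term of \eqref{eq:conv_min} depends only on $\gamma_P\coloneq v_1-t_1$ and $\gamma_N\coloneq v_2-t_2$, since $D(S)X=X_S$ and $D(S^c)X=X_{S^c}$. The penalty satisfies
\[
|v_1|+|t_1|\ge|v_1-t_1|,\qquad |v_2|+|t_2|\ge |v_2-t_2|.
\]
Equality is attained by taking one of each pair to be zero, e.g.\ $v_1=(\gamma_P)_+$ and $t_1=(\gamma_P)_-$, and similarly with the signs reversed for $(v_2,t_2)$; these choices respect the sign constraints. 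Hence the optimal value of \eqref{eq:conv_min} equals that of \eqref{eq:gamma_min}. Lemma~\ref{lem:gamma_min} then gives the unique minimizer $(\gamma_P^*,\gamma_N^*)$. The number of nonzero entries among $v_i^*,t_i^*$ at the convex optimum is therefore $1[\gamma_P^*\neq0]+1[\gamma_N^*\neq0]$. This identifies $M^*$, and it is worth double-checking that the optimal convex solution is unique so that $M^*$ is well defined; the strict inequality in the triangle bound whenever both entries of a pair are nonzero gives this.

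\textbf{Non-convex side.} The lower bound is the chain \eqref{ineq:zero}–\eqref{ineq:sgn} from the proof of Theorem~\ref{thm:global_optim_explicit}. It shows that for every $\theta$, $L(\theta)$ is at least the objective of \eqref{eq:gamma_min} evaluated at $\gamma_P=\sum_{u_i\ge0}u_i\omega_i$ and $\gamma_N=\sum_{u_i\le0}u_i\omega_i$. So the non-convex optimum is at least the convex optimum, with no restriction on $m$. (The paper says we should avoid the explicit form \eqref{eq:explicit}, but this inequality chain is independent of it.)

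\textbf{Achievability, the step needing $m\ge M^*$.} I would build a network from the convex optimum using one neuron per nonzero convex variable. If $v_1^*>0$, put $u=\sqrt{v_1^*}$ and $\omega=\sqrt{v_1^*}$. If $t_1^*>0$, put $u=\sqrt{t_1^*}$ and $\omega=-\sqrt{t_1^*}$. For $v_2^*<0$, put $u=-\sqrt{|v_2^*|}$ and $\omega=\sqrt{|v_2^*|}$, so that $u\omega=v_2^*$. For $t_2^*<0$, put $u=-\sqrt{|t_2^*|}$ and $\omega=-\sqrt{|t_2^*|}$, so that $u\omega=-t_2^*$. All remaining neurons are set to zero. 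This uses $M^*\le m$ neurons. The neurons with $u>0$ produce exactly $\gamma_P^*X_S/\alpha$, those with $u<0$ produce $\gamma_N^*X_{S^c}/\alpha$, and each neuron's penalty $\frac{\beta}{2}(u^2+\omega^2)$ equals $\beta$ times the absolute value of its convex variable. Thus $L(\theta)$ equals the convex objective at its optimum, and the two optimal values coincide.

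The main obstacle is bookkeeping. One must check the ReLU identity $(Xu)_+=uX_S$ for $u\ge0$ and $(Xu)_+=uX_{S^c}$ for $u\le0$, which holds because $X$ is one-dimensional. One must also track the sign conventions for $v_2,t_2$. Once these are verified, both inequalities close.
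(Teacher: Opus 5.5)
Your proof is correct, but it takes a different route from the paper's.

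\textbf{The paper's route.} The paper argues by citation. For one-dimensional data with entries of both signs, the only activation patterns are $D(S)$, $D(S^c)$ and $I_n$. It applies Theorem~1 of Pilanci and Ergen (with input $X/\alpha$) to get the cone-constrained group LASSO \eqref{eq:crude_convex}, which has the same optimal value for $m\ge M^*$. It then notes that the cone constraints force $v_3=t_3=0$ and reduce to the sign constraints of \eqref{eq:conv_min}.

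\textbf{Your route.} You prove the value equality from scratch by sandwiching both problems against the two-variable problem \eqref{eq:gamma_min}. The sign-respecting split with $|v_1|+|t_1|\ge|v_1-t_1|$ identifies the convex value with the $\gamma$-value. The chain \eqref{ineq:zero}--\eqref{ineq:sgn} gives the lower bound for every $m$; in Pilanci and Ergen's argument, weak duality does this job. Your $M^*$-neuron network is exactly $\Psi$ of Definition~\ref{def:Psi}, and it gives the matching upper bound.

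\textbf{What your route buys.} The argument is elementary and self-contained. It also yields uniqueness of the convex minimizer and $M^*=1[\gamma_P^*\neq0]+1[\gamma_N^*\neq0]$, i.e.\ Proposition~\ref{prop:convex_soln} and Remark~\ref{rmk:equiv}, essentially for free. Since your construction works for any optimal convex point, uniqueness is not even needed for the value equality.

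\textbf{What it costs.} Your argument rests on Lemma~\ref{lem:gamma_min} and the lower-bound half of the proof of Theorem~\ref{thm:global_optim_explicit}. That is the same $\gamma$-reduction that drives the paper's first, explicit proof. There is no circularity, but the convex-formulation proof of connectivity in Appendix~\ref{app:connectivity_convex} is then no longer an independent second route. Your argument also uses one-dimensionality essentially, since it needs each neuron's output to be $uX_S$ or $uX_{S^c}$. The paper's citation keeps that route independent and mirrors the general-$d$ framework of Kim et al. The price is that it depends on an external theorem.
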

\begin{proof}
    By our assumption in Appendix~\ref{app:convex}, $X$ has at least one positive element and one negative element. Hence,
    \[
    \{ \operatorname{Diag}(\mathbf{1}[Xu\ge0]~|~u\in\mathbb{R}\}=\{\operatorname{Diag}(\mathbf{1}[X\ge0]),~\operatorname{Diag}(\mathbf{1}[X\le 0]),~I_n\}.
    \]
    \eqref{eq:g_min_problem} can be equivalently written as 
    \begin{align}\label{eq:loss_function_app}
L(\theta)= \frac{1}{2}\|\sum_{j=1}^{m}(\frac{X}{\alpha}u_j)_{+}\omega_{j}-Y\|_{2}^{2}+\frac{\beta}{2}\sum_{j=1}^{m}(u_j^{2}+\omega_{j}^{2})
\end{align}
    Consider the following convex problem
    \begin{align}\label{eq:crude_convex}
    \min_{\{v_i,t_i\}_{i=1}^3}
\frac12 \Bigl\lVert
\Bigl((v_1-t_1)D(S) + (v_2-t_2)D(S^{c}) + (v_3-t_3)I_n)\Bigr)\frac{X}{\alpha} - Y
\Bigr\rVert_2^2 
\quad + \beta\sum_{i=1}^{3}\bigl(|v_i|+|t_i|\bigr) \notag \\
\text{s.t.}\quad
(2D_i-I_n)Xv_i\ge 0,\quad (2D_i-I_n)Xt_i\ge0,~\forall i\in[3].~~~
(D_1=D(S),~D_2=D(S^c),~D_3=I_n)
    \end{align}
By applying Theorem~1 in \citet{pilanci_neural_2020} to our problem with $d=1$ and input matrix $\frac{X}{\alpha}$, the convex problem (\ref{eq:crude_convex}) and the non-convex problem (\ref{eq:g_min_problem}) have identical optimal values if $m\geq~M^{*}=\sum_{i\in[3]:v_i^{*}\neq0}1+\sum_{i\in[3]:t_i^{*}\neq0}1$ where $\{v^{*}_i, t^{*}_i\}_{i=1}^{2}$ is an optimal solution to (\ref{eq:conv_min}).
As $X$ has both positive and negative elements,
\begin{align*}
&(2D_i-I_n)Xv_i\ge 0,\quad (2D_i-I_n)Xt_i\ge0,~\forall i\in[3]\\
\iff& v_1 \ge 0,\; t_1 \ge 0,\; v_2 \le 0,\; t_2 \le 0,\; v_3=t_3=0
\end{align*}
Rewriting (\ref{eq:crude_convex}) with this condition gives (\ref{eq:conv_min}).
\end{proof}

To find the connectivity of $\Theta^*(m)$, we use this convex formulation. We find the solution set $\mathcal{P}^*$ to the convex problem (\ref{eq:conv_min}) is a singleton.

\begin{proposition}\label{prop:convex_soln}(Proposition~\ref{prop:convex_soln_main} in Appendix~\ref{app:convex})
The set of global optimal solutions 
\begin{align*}
\mathcal{P}^{*}
=
\left\{
(v_1,v_2,t_1,t_2)\;\middle|\;
\operatorname*{arg\,min}_{\substack{
v_1,v_2,t_1,t_2\in\mathbb{R}\\
v_1,t_1\ge 0\\
v_2,t_2\le 0
}}
L_{\mathrm{conv}}(v_1,v_2,t_1,t_2)
\right\}
\end{align*}
for the convex problem (\ref{eq:conv_min}) 
is 
$\mathcal{P}^{*}=\{(v^{*}_1, v^{*}_2, t^{*}_1, t^{*}_2)\}$ where
\begin{align*}
&(v_1^{*}, t_1^{*})=  \left\{
\begin{array}{ll}
\left(\alpha\frac{X_S^{T}Y-\alpha\beta}{\|X_S\|_2^{2}},0\right)~~~~\text{if}~~~X_S^{T}Y>\alpha\beta~~~\\
(0,0)~~~~~~~~~~~~~~\text{if}~~~-\alpha\beta\leq X_S^{T}Y\leq \alpha\beta\\
\left(0,-\alpha\frac{X_S^{T}Y+\alpha\beta}{\|X_S\|_2^{2}}\right)~~~\text{if}~~~X_S^{T}Y<-\alpha\beta
\end{array}
\right.
\\
&(v_2^{*}, t_2^{*})=  \left\{
\begin{array}{ll}
\left(0, \alpha\frac{-X_{S^{c}}^{T}Y+\alpha\beta}{\|X_{S^c}\|_2^{2}}\right)~~~~\text{if}~~~X_{S^{c}}^{T}Y>\alpha\beta,~~~\\
(0,0)~~~~~~~~~~~~~~~~\text{if}~~~-\alpha\beta\leq X_{S^{c}}^{T}Y\leq \alpha\beta,\\
\left(\alpha\frac{X_{S^{c}}^{T}Y+\alpha\beta}{\|X_{S^{c}}\|_2^{2}}, 0\right)~~~~~~\text{if}~~~X_{S^{c}}^{T}Y<-\alpha\beta.
\end{array}
\right.
  \end{align*}
\end{proposition}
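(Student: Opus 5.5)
The plan is to exploit the fact that the convex problem (\ref{eq:conv_min}) decouples completely into two independent two-variable problems. Write $\gamma_1=v_1-t_1$ and $\gamma_2=v_2-t_2$. The data-fit term is
\[
\tfrac12\|\gamma_1 X_S/\alpha+\gamma_2 X_{S^c}/\alpha-Y\|_2^2 .
\]
Since $X_S$ and $X_{S^c}$ have disjoint supports (entries with $x_i=0$ contribute nothing), this splits as
\[
\tfrac12\|\gamma_1X_S/\alpha-Y_S\|^2+\tfrac12\|\gamma_2X_{S^c}/\alpha-Y_{S^c}\|^2 ,
\]
plus a constant. The penalty also separates as $\beta(|v_1|+|t_1|)+\beta(|v_2|+|t_2|)$. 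So $\mathcal{P}^*$ is the product of the solution sets of the $(v_1,t_1)$-subproblem and the $(v_2,t_2)$-subproblem. I treat the first one and obtain the second by symmetry, with sign flips.

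\textbf{Step 1: reduce to a scalar lasso.} On the feasible cone $v_1,t_1\ge0$, the penalty equals $\beta(v_1+t_1)$. By the triangle inequality this is at least $\beta|v_1-t_1|=\beta|\gamma_1|$. Equality holds if and only if $\min(v_1,t_1)=0$, i.e.\ $v_1=\gamma_1^+$ and $t_1=\gamma_1^-$. Hence the minimal value of the subproblem equals the minimum over $\gamma_1\in\mathbb{R}$ of
\[
\tfrac12\|\gamma_1X_S/\alpha-Y\|^2+\beta|\gamma_1|
\]
(up to the decoupled constant). Any minimizer $(v_1,t_1)$ must attain equality in the triangle inequality, and its $\gamma_1$ must minimize the scalar problem.

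\textbf{Step 2: solve the scalar problem.} This objective is strictly convex in $\gamma_1$ because $\|X_S\|_2\neq0$, so its minimizer is unique. The computation in Lemma~\ref{lem:gamma_min} gives
\[
\gamma_1^*=\alpha\,\mathcal{S}_{\alpha\beta}(X_S^\top Y)/\|X_S\|_2^2 .
\]
Combining this with the forced pairing $v_1=(\gamma_1^*)^+$, $t_1=(\gamma_1^*)^-$ yields uniqueness of $(v_1^*,t_1^*)$. Reading off the three cases gives:
\begin{itemize}
\item if $X_S^\top Y>\alpha\beta$: $v_1^*=\alpha(X_S^\top Y-\alpha\beta)/\|X_S\|^2$ and $t_1^*=0$;
\item if $|X_S^\top Y|\le\alpha\beta$: both are zero;
\item if $X_S^\top Y<-\alpha\beta$: $t_1^*=-\alpha(X_S^\top Y+\alpha\beta)/\|X_S\|^2\ge0$.
\end{itemize}

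\textbf{Step 3: the $(v_2,t_2)$ block.} This block is identical except that the cone is $v_2,t_2\le0$. Here the penalty $\beta(|v_2|+|t_2|)=\beta(-v_2-t_2)$ is at least $\beta|\gamma_2|$, with equality if and only if $v_2=-\gamma_2^-$ and $t_2=\gamma_2^+$ (note the sign flip). With $\gamma_2^*=\alpha\,\mathcal{S}_{\alpha\beta}(X_{S^c}^\top Y)/\|X_{S^c}\|^2$, this gives the stated formulas:
\begin{itemize}
\item if $X_{S^c}^\top Y>\alpha\beta$: $\gamma_2^*>0$, so $v_2^*=0$ and $t_2^*=-\gamma_2^*$, which is $\alpha(-X_{S^c}^\top Y+\alpha\beta)/\|X_{S^c}\|^2$;
\item if $X_{S^c}^\top Y<-\alpha\beta$: $v_2^*=\gamma_2^*$ and $t_2^*=0$.
\end{itemize}

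The main point requiring care is Step 1's equivalence, in particular the uniqueness claim. One must argue that a minimizer of the lifted problem cannot have both $v_1,t_1>0$: otherwise, shrinking both by $\min(v_1,t_1)$ keeps $\gamma_1$ fixed and strictly lowers the penalty. One must also double-check the sign bookkeeping in the $v_2,t_2\le0$ block, since there the $t$-variable absorbs the positive part of $\gamma_2$.
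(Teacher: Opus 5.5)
Your proof is correct, but it is organized differently from the paper's. The paper writes the subgradient (KKT) conditions for $v_1$ and $t_1$ directly:
$0\in (v_1-t_1)\|X_S/\alpha\|_2^2 - X_S^\top Y/\alpha + \beta\,\partial|v_1|$ and $0\in (t_1-v_1)\|X_S/\alpha\|_2^2 + X_S^\top Y/\alpha + \beta\,\partial|t_1|$.
It then reads off the three cases under the sign constraints and handles $(v_2,t_2)$ ``by replacing $X_S$ by $X_{S^c}$''.

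You instead make the block decoupling explicit: $X_S$ and $X_{S^c}$ have disjoint supports, which the paper uses silently when it writes $\partial_{v_1}L_{\mathrm{conv}}$ with no cross term. You then collapse each block to the scalar lasso via $v_1+t_1\ge|v_1-t_1|$ on the cone. This is the same lifting trick the paper uses in the proof of Theorem~\ref{thm:global_optim_explicit}, and it lets Lemma~\ref{lem:gamma_min} do the computation.

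What your route buys is an explicit uniqueness argument. $L_{\mathrm{conv}}$ is not strictly convex in $(v_1,t_1)$. The paper's KKT conditions also omit the normal-cone terms for $v_1,t_1\ge 0$, so one still owes an argument that every constrained minimizer satisfies them, and the phrase ``uniquely defined'' is asserted rather than derived. Your combination of strict convexity in $\gamma_1$ with tightness of the triangle inequality settles this cleanly. The paper's route is shorter and does not depend on the earlier lemma.

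One small fix: in Step 3 you state the equality case as $t_2=\gamma_2^+$. Since $t_2\le 0$, it must be $t_2=-\gamma_2^+$ (with $v_2=-\gamma_2^-$). Your case bullets already use the correct sign, so only that summary line needs correcting.
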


\begin{proof}
\begin{equation*}
\begin{aligned}
L_{conv}(v_1,v_2,t_1,t_2)=
\frac12 \Bigl\lVert
\Bigl((v_1-t_1)D(S) + (v_2-t_2)D(S^{c})\Bigr)\frac{X}{\alpha} - Y
\Bigr\rVert_2^2 
+ \beta\bigl(|v_1|+|v_2|+|t_1|+|t_2|\bigr). 
\end{aligned}
\end{equation*}
By the KKT condition, at an optimal,
\begin{align*}
    0\in\partial_{v_1}L_{conv}=(v_1-t_1)\Big\|\frac{X_S}{\alpha}\Big\|^2-\frac{X_S^{T}Y}{\alpha}+\beta\partial|v_1|,\\
    0\in\partial_{t_1}L_{conv}=(t_1-v_1)\Big\|\frac{X_S}{\alpha}\Big\|^2+\frac{X_S^{T}Y}{\alpha}+\beta\partial|t_1|.\\
\end{align*}
To satisfy the above conditions and the constraint $v_1,t_1\geq 0$, optimal $v_1^*,t_1^*$ are uniquely defined as
\begin{align*}
&(v_1^{*}, t_1^{*})=  \left\{
\begin{array}{ll}
\left(\alpha\frac{X_S^{T}Y-\alpha\beta}{\|X_S\|_2^{2}},0\right)~~~~\text{if}~~~X_S^{T}Y>\alpha\beta~~~\\
(0,0)~~~~~~~~~~~~~~\text{if}~~~-\alpha\beta\leq X_S^{T}Y\leq \alpha\beta\\
\left(0,-\alpha\frac{X_S^{T}Y+\alpha\beta}{\|X_S\|_2^{2}}\right)~~~\text{if}~~~X_S^{T}Y<-\alpha\beta
\end{array}
\right.
\end{align*}
We can find optimal $v_2^*$, $t_2^*$ by replacing $X_S$ by $X_{S^c}$ and by replacing the constraint by $v_2,t_2\le 0$.
\end{proof}

We state a constraint about the form of solutions in $\Theta^*(m)$. This constraint comes from the $\ell_2$-regularization.
\begin{proposition}\label{prop:basic_result_2}
    $\forall(u_i, \omega_i)_{i=1}^{m}\in\Theta^*(m)$, $|u_i|=|\omega_i|$ holds for all $i\in[m]$.
\end{proposition}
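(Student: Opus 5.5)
The plan is a local perturbation argument that does not use the explicit form of $\Theta^*(m)$; this matches the spirit of Appendix~\ref{app:connectivity_convex}, where we pretend not to know \eqref{eq:explicit}. Fix $\theta=(u_i,\omega_i)_{i=1}^m\in\Theta^*(m)$ and an index $i$. For $c>0$, replace neuron $i$ by $(cu_i,\omega_i/c)$ and leave the other neurons unchanged. This is the positive rescaling symmetry of ReLU, since $(x\,cu_i)_+\,\omega_i/c=(xu_i)_+\omega_i$. The network output $f_\theta(X)$, and hence the data-fit term in \eqref{eq:g_min_problem}, is unchanged for every $c>0$. Only the regularizer changes, through the term $\frac{\beta}{2}\bigl(c^2u_i^2+\omega_i^2/c^2\bigr)$.

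\emph{Case $u_i\neq0$ and $\omega_i\neq0$.} The function $g(c)=c^2u_i^2+\omega_i^2c^{-2}$ on $(0,\infty)$ has the unique minimizer $c^*=\sqrt{|\omega_i|/|u_i|}$. Since $\beta>0$ and $\theta$ is a global minimizer, we need $g(1)\le g(c^*)$, which forces $c^*=1$, that is $|u_i|=|\omega_i|$. Equivalently, by AM--GM, $u_i^2+\omega_i^2\ge 2|u_i\omega_i|$ with equality iff $|u_i|=|\omega_i|$; strict inequality would let us decrease $L$ strictly without changing the fit.

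\emph{Degenerate case where exactly one of $u_i,\omega_i$ is zero.} If $u_i=0$, the neuron contributes nothing to the output, since $(Xu_i)_+=0$. Setting $\omega_i=0$ keeps the fit term and lowers the regularizer by $\frac{\beta}{2}\omega_i^2>0$ when $\omega_i\neq0$, so optimality forces $\omega_i=0$. If instead $\omega_i=0$ and $u_i\neq0$, the neuron again contributes nothing, so setting $u_i=0$ strictly decreases $L$. Either way $u_i=\omega_i=0$, and the claim $|u_i|=|\omega_i|$ holds trivially.

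No step is really an obstacle. The only point requiring care is that the rescaling preserves the output exactly: this uses $c>0$ together with the positive homogeneity $(cz)_+=c\,(z)_+$, and it holds for every input, so the squared loss term is literally unchanged. Strict positivity of $\beta$ is what converts a non-strict improvement into a contradiction. Combining the two cases over all $i\in[m]$ gives the proposition.
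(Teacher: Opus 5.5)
Your proof is correct and follows the paper's own argument. The paper likewise handles the degenerate case by zeroing the coordinate that does not affect the output, and the case $u_i\omega_i\neq 0$ by the positive rescaling $(cu_i,\omega_i/c)$ combined with AM--GM. Your optimal scale $c^*=\sqrt{|\omega_i|/|u_i|}$ is in fact the correct one; the paper writes $r=|\omega_i|/|u_i|$, which is off by a square root, though the conclusion is unaffected.
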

\begin{proof}
 Take any $(u_i, \omega_i)_{i=1}^{m}\in\Theta^*(m)$. We have two cases to consider.\\
 \\
(Case 1)~$u_i=0$ (or $\omega_i=0$):\\
The choice of $\omega_i$ (or $u_i=0$) does not change the model function $f_\theta(X)$ defined in (\ref{eq:g_2ReLu}). For an optimal parameter, we need $\omega_i=0$ (or $u_i=0$) to minimize the $\ell_2$-regularization term.
This implies that $u_i=0\iff\omega_i=0$. Hence, $|u_i|=|\omega_i|$.\\
\\
(Case 2)~$u_i\neq0$:\\
From Case 1, $u_i\neq0\Rightarrow\omega_i\neq0$. $\forall r>0$, changing $(u_i, \omega_i)$ to $(ru_i, \omega_i/r)$ does not change the model function $f_\theta(X)$.  By AM-GM inequality, $r^2 u_i^2+\frac{\omega_i^2}{r^2}\geq2|u_i||\omega_i|$ with equality iff $r^2 u_i^2=\frac{\omega_i^2}{r^2}$. \\
For $r>0$, $r^2 u_i^2=\frac{\omega_i^2}{r^2}$ implies $r=\frac{|\omega_i|}{|u_i|}$. Hence, by minimality, $(u_i, \omega_i)=(ru_i, \omega_i/r)=(\frac{|\omega_i|}{|u_i|}u_i, \frac{|u_i|}{|\omega_i|}\omega_i)$. This implies $|u_i|=|\omega_i|$.
\end{proof}

We take the following steps to prove Theorem~\ref{thm:connectivity}. 
\begin{enumerate}
\item Construct functions that map elements in $\mathcal{P}^*$ and elements in $\Theta^*(m)$. (Definition~\ref{def:Psi} and Definition~\ref{def:phi})
\item Introduce the notion of Minimal Optimal Solution. (Definition~\ref{def:min_param})
\item Prove that Minimal Optimal Solutions are permutations of each other. (Lemma~\ref{lem:cvx_conn_1})
\item Prove that any optimal solution is connected to a Minimal Optimal Solution. (Lemma~\ref{lem:cvx_conn_3})
\item For $m=M^*$, prove that all the optimal solutions are Minimal Optimal Solutions. (Lemma~\ref{lem:cvx_fin_2})
\item For $m>M^*$, prove that all the permutation solutions are connected in $\Theta^*(m)$. (Lemma \ref{lem:cvx_conn_2})
\end{enumerate}
Step 3 and Step 5 together imply that $\Theta^*(M^*)$ is finite. Step 3, Step 4, and Step 6 together imply that $\Theta^*(m)$ is connected when $m>M^*$.

\begin{definition}\label{def:Psi}
    Suppose $m\geq M^*$. Define $\Psi:\mathcal{P}^*\rightarrow\Theta^*(m)$ as
    \[
    \Psi((v_i^{*}, t_i^{*})_{i=1}^{2})
= 
\Big(
\frac{v_i^{*}}{\sqrt{|v_i^{*}|}},~ 
\sqrt{|v_i^{*}|}
\Big)_{v_i^{*} \ne 0}
\oplus
\Big(
\frac{t_i^{*}}{\sqrt{|t_i^{*}|}},~
-\sqrt{|t_i^{*}|}
\Big)_{t_i^{*} \ne 0}
\oplus (0,0)^{m - M^{*}}.
\]
\end{definition}

\begin{definition}\label{def:phi}
    Suppose $m\geq M^*$. Define $\Phi:\Theta^*(m)\rightarrow\mathcal{P}^*$ as
    \begin{align*}  
    \Phi((u_i, \omega_i)_{i=1}^{m})
&=(v_i, t_i)_{i=1}^{2}\\
&\coloneq 
\left\{
\begin{array}{ll}
v_1=\sum_{i\in\mathcal{I}}u_i|\omega_i|~~~\text{where}~~~\mathcal{I}=\{i~|~\omega_i>0, u_i>0\},\\
v_2=\sum_{i\in\mathcal{I}}u_i|\omega_i|~~~\text{where}~~~\mathcal{I}=\{i~|~\omega_i>0, u_i<0\},\\
t_1=\sum_{i\in\mathcal{I}}u_i|\omega_i|~~~\text{where}~~~\mathcal{I}=\{i~|~\omega_i<0, u_i>0\},\\
t_2=\sum_{i\in\mathcal{I}}u_i|\omega_i|~~~\text{where}~~~\mathcal{I}=\{i~|~\omega_i<0, u_i<0\}.
\end{array}
\right.\\
&\left(\sum_{i\in\mathcal{I}}u_i|\omega_i|=0~~~\text{if}~~~\mathcal{I}=\emptyset.\right)
\end{align*}

\end{definition}

For simplicity, we take $\alpha=1$ in the following arguments. We can prove the same connectivity result for our case by replacing $X$ by $\frac{X}{\alpha}$ ($\alpha>0$).
\begin{proposition}\label{prop:map}
    Suppose $m\geq M^*$. The maps $\Psi$ and $\Phi$ are well-defined.
\end{proposition}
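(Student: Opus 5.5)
The plan is to check two things: $\Psi$ lands in $\Theta^*(m)$, and $\Phi$ lands in $\mathcal{P}^*$. Both reduce to one identity: evaluating the nonconvex loss \eqref{eq:loss_function_app} at the image of $\Psi$ gives $L_{conv}$, and evaluating $L_{conv}$ at the image of $\Phi$ gives $L$. Combined with the equality of optimal values in Proposition~\ref{prop:convex} (valid since $m\ge M^*$), this yields optimality in each case. As in the text, take $\alpha=1$.

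\textbf{$\Psi$ is well-defined.} The output of $\Psi$ lists exactly $M^*$ nonzero neurons padded by $m-M^*$ zero neurons. Since $m\ge M^*$, this is a point of $\mathbb{R}^{2m}$, and it is independent of orderings up to the chosen listing. We then compute the contribution of each neuron to $\sum_j (Xu_j)_+\omega_j$ and to the regularizer.
\begin{itemize}[leftmargin=*]
\item For $v_1^*>0$: $u=\sqrt{v_1^*}>0$ and $\omega=\sqrt{v_1^*}$. Since $(Xu)_+=uX_S$, the contribution is $v_1^* X_S$.
\item For $v_2^*<0$: $u=-\sqrt{|v_2^*|}<0$ and $(Xu)_+=uX_{S^c}$ (entries with $x_i\le 0$ give $x_iu\ge0$). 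The contribution is $u\omega X_{S^c}=v_2^*X_{S^c}$.
\item The $t$-neurons carry $\omega=-\sqrt{|t_i^*|}$. By the same computation they contribute $-t_1^*X_S$ and $-t_2^*X_{S^c}$.
\item Entries with $x_i=0$ contribute $0$ in every case, so the overlap of $D(S)$ and $D(S^c)$ is harmless.
\end{itemize}
Each nonzero neuron has $u^2+\omega^2=2|v|$ (resp.\ $2|t|$), so the regularizer equals $\beta(|v_1^*|+|v_2^*|+|t_1^*|+|t_2^*|)$. Hence $L(\Psi(p))=L_{conv}(p)$, which is the convex optimum. By Proposition~\ref{prop:convex} this equals the nonconvex optimum, so $\Psi(p)\in\Theta^*(m)$.

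\textbf{$\Phi$ is well-defined.} Take $\theta\in\Theta^*(m)$. By Proposition~\ref{prop:basic_result_2}, $|u_i|=|\omega_i|$ for all $i$; in particular $u_i=0\iff\omega_i=0$, so every neuron is either zero or lies in exactly one of the four sign classes used to define $\Phi$.
\begin{itemize}[leftmargin=*]
\item \emph{Constraints.} The sign constraints hold automatically: $v_1,t_1$ are sums of $u_i|\omega_i|$ with $u_i>0$, so they are $\ge0$. Likewise $v_2,t_2\le0$. Thus $\Phi(\theta)$ is feasible.
\item \emph{Output.} Repeating the contribution computation above, neurons with $u_i>0$ give $u_i\omega_iX_S=\pm u_i|\omega_i|X_S$, with sign $+$ when $\omega_i>0$ and $-$ when $\omega_i<0$. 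Summing, $f_\theta(X)=\bigl((v_1-t_1)D(S)+(v_2-t_2)D(S^c)\bigr)X$, and similarly for the $S^c$ part.
\item \emph{Regularizer.} Using $|u_i|=|\omega_i|$, the regularizer is $\frac{\beta}{2}\sum_i(u_i^2+\omega_i^2)=\beta\sum_i|u_i\omega_i|$. Inside each of the four classes all terms $u_i|\omega_i|$ share a sign, so this sum equals $\beta(|v_1|+|v_2|+|t_1|+|t_2|)$.
\end{itemize}
Thus $L_{conv}(\Phi(\theta))=L(\theta)$, which is the nonconvex optimum. By Proposition~\ref{prop:convex} this equals the convex optimum, so $\Phi(\theta)\in\mathcal{P}^*$.

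\textbf{Main obstacle.} The delicate step is the bookkeeping of activation patterns. One must check that for $u>0$ (resp.\ $u<0$) the ReLU really selects $D(S)$ (resp.\ $D(S^c)$) with the correct sign of $u$, including zero inputs. One must also check that no cancellation occurs inside a sign class, since that is what turns $\sum|u_i\omega_i|$ into $|v|+|t|$. Once these are verified, the rest is the optimal-value equality of Proposition~\ref{prop:convex}.
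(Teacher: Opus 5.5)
Your proposal is correct and follows essentially the same route as the paper. Both arguments reduce well-definedness to the identities $L(\Psi(p))=L_{conv}(p)$ and $L_{conv}(\Phi(\theta))=L(\theta)$, the latter using $|u_i|=|\omega_i|$ from Proposition~\ref{prop:basic_result_2} and the absence of sign cancellation within each class, and then invoke the optimal-value equality of Proposition~\ref{prop:convex}; your explicit check that $\Phi(\theta)$ satisfies $v_1,t_1\ge 0$ and $v_2,t_2\le 0$ fills in a feasibility step the paper leaves implicit.
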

\begin{proof}
The function values for $\Phi$ and $\Psi$ are uniquely determined for each input, so it is enough to show that $\forall (u_i, \omega_i)_{i=1}^{m}\in \Theta^*(m)$, $\Phi((u_i, \omega_i)_{i=1}^{m})=(v_i^*, t_i^*)_{i=1}^{2}\in\mathcal{P}^*$ and for $(v_i^*,t_i^*)_{i=1}^2\in\mathcal{P}^*$,  $\Psi((v_i^{*}, t_i^{*})_{i=1}^{2})\in\Theta^*(m)$.\\
To prove the first part, by Proposition~\ref{prop:convex}, it is enough to show that $L_{conv}(\Phi((u_i, \omega_i)_{i=1}^{m}))=L((u_i, \omega_i)_{i=1}^{m})$ holds for $(u_i, \omega_i)_{i=1}^{m}\in \Theta^*(m)$. We denote $\Phi((u_i, \omega_i)_{i=1}^{m})$ by $(v_i^*, t_i^*)_{i=1}^2$.
\begin{align*}
    (v_1^*-t_1^*)D(S)X&=\left(\sum_{\substack{i:u_i>0\\ \omega_i>0}}u_i|\omega_i|-\sum_{\substack{i:u_i>0\\ \omega_i<0}}u_i|\omega_i|\right)D(S)X\\
    &=\sum_{i:u_i>0}u_i\omega_iD(S)X\\
    &=\sum_{i:u_i>0}(Xu_i)_{+}\omega_i.\\
    (v_2^*-t_2^*)D(S^c)X&=\left(\sum_{\substack{i:u_i<0\\ \omega_i>0}}u_i|\omega_i|-\sum_{\substack{i:u_i<0\\ \omega_i<0}}u_i|\omega_i|\right)D(S^c)X\\
    &=\sum_{i:u_i<0}u_i\omega_iD(S^c)X\\
    &=\sum_{i:u_i<0}(Xu_i)_{+}\omega_i.
\end{align*}
$\forall \mathcal{I}, \forall i,j\in\mathcal{I}$, $\operatorname{sign}\left(u_i|\omega_i|\right)=\operatorname{sign}(u_j|\omega_j|)$, so $|\sum_{i\in\mathcal{I}}u_i|\omega_i||=\sum_{i\in\mathcal{I}}|u_i||\omega_i|=\frac{1}{2}\sum_{i\in\mathcal{I}}u_i^2+\omega_i^2$. The last equality holds by the result from Proposition~\ref{prop:basic_result_2}. \\
Hence, $L_{conv}(\Phi((u_i, \omega_i)_{i=1}^{m}))=\frac{1}{2}\|\sum_{j=1}^{m}(Xu_j)_{+}\omega_{j}-Y\|_{2}^{2}+\beta(|v^*_1|+|v^*_2|+|t^*_1|+|t^*_2|)=L((u_i, \omega_i)_{i=1}^{m})$.\\
\\
To prove the second part, by Proposition~\ref{prop:convex}, it is enough to show that $L_{conv}((v_i^*, t_i^*)_{i=1}^2)=L(\Psi((v_i^*, t_i^*)_{i=1}^2))$ holds for $(v_i^*, t_i^*)_{i=1}^2\in \mathcal{P}^*$. We denote $\Psi((v_i^*, t_i^*)_{i=1}^2)$ by $(u_i, \omega_i)_{i=1}^{m}$.
\begin{align*}
    L(\Psi((v_i^*, t_i^*)_{i=1}^2))=&\frac{1}{2}\|\sum_{i\in\{1,2\},v_i^*\neq0}\left(X\frac{v_i^*}{\sqrt{|v_i^{*}|}}\right)_{+}\sqrt{|v_i^{*}|}-\sum_{i\in\{1,2\},t_i^*\neq0}\left(X\frac{t_i^*}{\sqrt{|t_i^*|}}\right)_{+}\sqrt{|t_i^*|}-Y\|_{2}^{2}\\
    &+\frac{\beta}{2}\sum_{i\in\{1,2\},v_i^*\neq0}\left(\left(\frac{v_i^*}{\sqrt{|v_i^{*}|}}\right)^2+\left(\sqrt{|v_i^{*}|}\right)^2\right)+\frac{\beta}{2}\sum_{i\in\{1,2\},t_i^*\neq0}\left(\left(\frac{t_i^*}{\sqrt{|t_i^*|}}\right)^2+\left(\sqrt{|t_i^*|}\right)^2\right)\\
    =&\frac{1}{2}\|\sum_{i\in\{1,2\},v_i^*\neq0}\left(Xv_i^*\right)_{+}-\sum_{i\in\{1,2\},t_i^*\neq0}\left(Xt_i^*\right)_{+}-Y\|_{2}^{2}+\beta\sum_{i\in\{1,2\},v_i^*\neq0}|v_i^{*}|+\beta\sum_{i\in\{1,2\},t_i^*\neq0}|t_i^*|\\
    =&\frac{1}{2}\|\sum_{i\in\{1,2\},v_i^*\neq0}\left(Xv_i^*\right)_{+}-\sum_{i\in\{1,2\},t_i^*\neq0}\left(Xt_i^*\right)_{+}-Y\|_{2}^{2}+\beta\sum_{i\in\{1,2\}}(|v_i^{*}|+|t_i^*|)
\end{align*}
As $v_1^*,t_1^*\ge 0$ and $v_2^*,t_2^*\le 0$, $\sum_{i\in\{1,2\},v_i^*\neq0}\left(Xv_i^*\right)_{+}=D(S)Xv_1^*+D(S^c)Xv_2^*$ and \\$\sum_{i\in\{1,2\},t_i^*\neq0}\left(Xt_i^*\right)_{+}=D(S)Xt_1^*+D(S^c)Xt_2^*$. Hence, $L(\Psi((v_i^*, t_i^*)_{i=1}^2))=L_{conv}((v_i^*, t_i^*)_{i=1}^2)$.
\end{proof}

\begin{definition}\label{def:min_param}
The set of Minimal Optimal Solutions is defined for $m\ge M^*$ as 
\[
\Theta^*_{min}(m)\coloneq\{(u_j, \omega_j)_{j=1}^{m}|\forall p\neq q\in[m],~\omega_p\omega_q>0\Rightarrow u_pu_q<0\}
\]
\end{definition}
A Minimal Optimal Solution has the least possible number of active neurons.
The Lemma~\ref{lem:cvx_conn_1} proves that all the Minimal Optimal Solutions are permutations of each other.
\begin{lemma}\label{lem:cvx_conn_1}
Denote the set of all the permutations on $\{1,~...~,m\}$ by $S_m$. $\forall \pi\in S_m$, we call $(u_{\pi(i)},\omega_{\pi(i)})_{i=1}^m$ a permutation of $(u_{i},\omega_{i})_{i=1}^m$.
Then, every element in $\Theta^*_{min}(m)$ is a permutation of $\Psi((v^{*}_i, t^{*}_i)_{i=1}^{2})$.
\end{lemma}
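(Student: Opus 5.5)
We read the statement as concerning elements of $\Theta^*_{min}(m)\cap\Theta^*(m)$, i.e.\ optimal parameters satisfying the minimality condition of Definition~\ref{def:min_param}. The plan is to show that any such $\theta=(u_j,\omega_j)_{j=1}^m$ has exactly $M^*$ active neurons, with the same values as the active neurons of $\Psi((v^*_i,t^*_i)_{i=1}^2)$; a reordering then matches the two.

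\textbf{Classifying active neurons.} By Proposition~\ref{prop:basic_result_2}, each neuron of $\theta$ satisfies $|u_j|=|\omega_j|$. Hence it is either inactive, $(u_j,\omega_j)=(0,0)$, or active with $u_j\neq0\neq\omega_j$. We sort the active neurons into four classes by the sign pair $(\operatorname{sign}u_j,\operatorname{sign}\omega_j)\in\{(+,+),(-,+),(+,-),(-,-)\}$. These classes are exactly the index sets $\mathcal{I}$ used in Definition~\ref{def:phi} to define $v_1,v_2,t_1,t_2$.

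\textbf{Each class has at most one neuron.} Suppose $p\neq q$ lie in the same class. Then $\omega_p\omega_q>0$ and $u_pu_q>0$, which contradicts the minimality condition ($\omega_p\omega_q>0\Rightarrow u_pu_q<0$). This rules out the $(+,+)/(+,+)$, $(-,+)/(-,+)$, $(+,-)/(+,-)$ and $(-,-)/(-,-)$ pairs.

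\textbf{Matching values.} By Proposition~\ref{prop:map}, $\Phi(\theta)\in\mathcal{P}^*$, and $\mathcal{P}^*$ is a singleton by Proposition~\ref{prop:convex_soln}. So $\Phi(\theta)=(v^*_i,t^*_i)_{i=1}^2$.

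Take the $(+,+)$ class. If it contains a neuron $j$, then $v_1^*=u_j|\omega_j|=u_j^2>0$, so $u_j=\sqrt{|v_1^*|}=v_1^*/\sqrt{|v_1^*|}$ and $\omega_j=\sqrt{|v_1^*|}$. If the class is empty, $v_1^*=0$. Thus the class is nonempty if and only if $v_1^*\neq0$, and then its neuron equals the corresponding entry of $\Psi$.

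The other three classes are handled the same way:
\begin{itemize}
\item $(-,+)$: $v_2^*=u_j|\omega_j|=-u_j^2$, giving $u_j=v_2^*/\sqrt{|v_2^*|}$ and $\omega_j=\sqrt{|v_2^*|}$.
\item $(+,-)$: $t_1^*=u_j^2$, giving $u_j=t_1^*/\sqrt{|t_1^*|}$ and $\omega_j=-\sqrt{|t_1^*|}$.
\item $(-,-)$: $t_2^*=-u_j^2$, giving $u_j=t_2^*/\sqrt{|t_2^*|}$ and $\omega_j=-\sqrt{|t_2^*|}$.
\end{itemize}
In each case the entry matches Definition~\ref{def:Psi}.

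\textbf{Conclusion.} The multiset of neurons of $\theta$ therefore consists of exactly the $M^*$ nonzero entries listed in $\Psi((v^*_i,t^*_i)_{i=1}^2)$, together with $m-M^*$ copies of $(0,0)$. Two $m$-tuples with the same multiset of entries differ by some $\pi\in S_m$, which is the claim.

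The main obstacle is the third step: it relies on $\Phi$ landing in the singleton $\mathcal{P}^*$. That in turn rests on the zero-gap result of Proposition~\ref{prop:convex} and on $m\ge M^*$, both of which we take as given. We also need to be careful about sign bookkeeping when $v_2^*,t_2^*\le0$.
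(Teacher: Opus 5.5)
Your proof is correct and takes essentially the same route as the paper. Minimality forces each sign class $\mathcal{I}$ of Definition~\ref{def:phi} to contain at most one neuron, $\Phi(\theta)$ lands in the singleton $\mathcal{P}^*$, and $|u_j|=|\omega_j|$ (Proposition~\ref{prop:basic_result_2}) lets you read each active neuron off $(v_i^*,t_i^*)$. This is exactly the paper's computation showing $\Psi(\Phi(\theta))$ is a permutation of $\theta$. Your explicit restriction to $\Theta^*_{min}(m)\cap\Theta^*(m)$ (with $m\ge M^*$) is also what the paper's proof implicitly assumes.
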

\begin{proof}
$\Psi((v^{*}_i, t^{*}_i)_{i=1}^{2})\in\Theta_{min}^*(m)$ by its construction as Definition~\ref{def:Psi}, $v_1^*,t_1^*\ge 0$ and $v_2^*,t_2^*\le 0$. Hence, it is enough to show that $\forall (u_i, \omega_i)_{i=1}^{m}\in \Theta^*_{min}(m)$, $(u_i, \omega_i)_{i=1}^{m}$ is a permutation of $\Psi((v_i^{*}, t_i^{*})_{i=1}^{2})$.\\
For any element in $\Theta^*_{min}(m)$, the minimality shown in Definition~\ref{def:min_param} implies that $\mathcal{I}$ introduced in Definition~\ref{def:phi} is a singleton or an empty set. Take any $(u_i, \omega_i)_{i=1}^{m}\in \Theta^*_{min}(m)$. As $\mathcal{I}$ is a singleton,
\begin{align*}  
    \Phi((u_i, \omega_i)_{i=1}^{m})
&=(v^*_i, t^*_i)_{i=1}^{2}\\
&=
\left\{
\begin{array}{ll}
v^*_1=u_i|\omega_i|~~~\text{if}~~~\exists!~i~s.t.~\omega_i>0\wedge u_i>0,~0~otherwise,\\
t^*_1=u_i|\omega_i|~~~\text{if}~~~\exists!~i~s.t.~\omega_i<0\wedge u_i>0,~0~otherwise,\\
v^*_2=u_i|\omega_i|~~~\text{if}~~~\exists!~i~s.t.~\omega_i>0\wedge u_i<0,~0~otherwise,\\
t^*_2=u_i|\omega_i|~~~\text{if}~~~\exists!~i~s.t.~\omega_i<0\wedge u_i<0,~0~otherwise.
\end{array}
\right.
\end{align*}
Hence, $M^*=\sum_{i\in\{1,2\}:v_i^{*}\neq0}1+\sum_{i\in\{1,2\}:t_i^{*}\neq0}1=\sum_{i=1}^{m}1(u_i\omega_i\neq0)$.\\
We write  $P=\sum_{i=1}^{m}1(\omega_i>0)$ and $N=\sum_{i=1}^{m}1(\omega_i<0)$. Using the result from Proposition~\ref{prop:basic_result_2}, $P+Q=\sum_{i=1}^{m}1(u_i\omega_i\neq0)$, so $M^*=P+Q$. Denote the $P$ positive elements $\{\omega_i|\omega_i>0\}$ by $\{\omega_{j_1},~...~,\omega_{j_P}\}$. Denote the $N$ negative elements $\{\omega_i|\omega_i<0\}$ by $\{\omega_{k_1},~...~,\omega_{k_N}\}$.\\
Then, 
\begin{align*}
\Psi((v_i^{*}, t_i^{*})_{i=1}^{2})=
\Psi(\Phi((u_i, \omega_i)_{i=1}^{m}))=
\Big(
\frac{u_{j_i}|\omega_{j_i}|}{\sqrt{|u_{j_i}\omega_{j_i}|}},~ 
\sqrt{|u_{j_i}\omega_{j_i}|}
\Big)_{i=1}^P
\oplus
\Big(
\frac{u_{k_i}|\omega_{k_i}|}{\sqrt{|u_{k_i}\omega_{k_i}|}},~
-\sqrt{|u_{k_i}\omega_{k_i}|}
\Big)_{i=1}^{Q}
\oplus (0,0)^{m - M^{*}}.
\end{align*}
From Proposition~\ref{prop:basic_result_2}, $\forall j,~\sqrt{|u_j\omega_j|}=|\omega_j|$. Therefore, 
\begin{align*}
\Psi((v_i^{*}, t_i^{*})_{i=1}^{2})
=\Psi(\Phi((u_i, \omega_i)_{i=1}^{m}))
&=
\left(
u_{j_i},~ |\omega_{j_i}|
\right)_{i=1}^P
\oplus
\left(
u_{k_i},~ -|\omega_{k_i}|
\right)_{i=1}^{N}
\oplus (0,0)^{m - M^{*}}\\
&=
\left(
u_{j_i},~ \omega_{j_i}
\right)_{i=1}^P
\oplus
\left(
u_{k_i},~ \omega_{k_i}
\right)_{i=1}^{N}
\oplus (0,0)^{m - M^{*}}
\end{align*}
This is a permutation of $(u_i, \omega_i)_{i=1}^{m}$.
\end{proof}

\begin{lemma}\label{lem:cvx_conn_3}
    For any $(u_i, \omega_i)_{i=1}^{m}\in\Theta^{*}(m)$, there exists a point in $\Theta^{*}_{min}(m)$ that is connected to $(u_i, \omega_i)_{i=1}^{m}$.
\end{lemma}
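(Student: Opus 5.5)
We argue by an explicit merging process, in the spirit of the merging paths of \citet{kim_exploring_2025} and of Step 2 in the first proof of Theorem~\ref{thm:connectivity}. We use only Proposition~\ref{prop:basic_result_2} and the fact that $\Phi$ maps $\Theta^*(m)$ into the singleton $\mathcal{P}^*$ (Propositions~\ref{prop:convex_soln} and~\ref{prop:map}); the explicit form \eqref{eq:explicit} is not used.

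Take $\theta=(u_i,\omega_i)_{i=1}^m\in\Theta^*(m)$. By Proposition~\ref{prop:basic_result_2}, every neuron is either inactive, $(u_i,\omega_i)=(0,0)$, or active with $|u_i|=|\omega_i|\neq 0$. Each active neuron therefore lies in exactly one of four sign classes: $(\operatorname{sign}u_i,\operatorname{sign}\omega_i)\in\{(+,+),(+,-),(-,+),(-,-)\}$. These are precisely the index sets $\mathcal{I}$ of Definition~\ref{def:phi}. The point $\theta$ lies in $\Theta^*_{min}(m)$ if and only if each class contains at most one index, since two actives with $\omega_p\omega_q>0$ and $u_pu_q>0$ share a class.

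If $\theta\notin\Theta^*_{min}(m)$, pick two indices $k\neq l$ in the same class, with signs $(\sigma,\tau)$. Set $r=\sqrt{u_k^2+u_l^2}$ and choose $\vartheta\in(0,\pi/2)$ with $|u_k|=r\cos\vartheta$ and $|u_l|=r\sin\vartheta$. For $s\in[0,1]$, define the path
\begin{align*}
(u_k(s),\omega_k(s))&=\bigl(\sigma r\cos(\vartheta(1-s)),\ \tau r\cos(\vartheta(1-s))\bigr),\\
(u_l(s),\omega_l(s))&=\bigl(\sigma r\sin(\vartheta(1-s)),\ \tau r\sin(\vartheta(1-s))\bigr),
\end{align*}
and keep all other neurons fixed. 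Along this path both neurons keep their signs (or $l$ becomes inactive at $s=1$), so $D(S)$ or $D(S^c)$ multiplies them in the same way throughout. The output contribution of the two neurons is $\sigma\tau(u_k(s)^2+u_l(s)^2)$ times the relevant $D X$, which equals $\sigma\tau r^2$ for all $s$. The regularization contribution of the two neurons is $u_k(s)^2+u_l(s)^2=r^2$ for all $s$, also constant.

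Hence $L$ is constant along the path, so the whole path stays in $\Theta^*(m)$. At $s=1$ the class of $(\sigma,\tau)$ has one fewer active neuron, since $l$ becomes $(0,0)$. We iterate: the number of active neurons strictly decreases at each merge, so after finitely many merges every class has at most one member. Concatenating the paths gives a continuous path in $\Theta^*(m)$ to a point of $\Theta^*_{min}(m)$.

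The only delicate point is verifying that the path stays optimal, which requires that the ReLU activation pattern is unchanged along it. This holds because $d=1$: $(Xu)_+=u\,D(S)X$ for $u>0$, $(Xu)_+=u\,D(S^c)X$ for $u<0$, and a neuron passing through $u=0$ contributes $0$ with $\omega=0$ as well. One might also worry that the endpoint should match $\Phi(\theta)=(v^*,t^*)$. This is automatic, since $\Phi$ is built from the class sums $\sum u_i|\omega_i|$, which the merging preserves.
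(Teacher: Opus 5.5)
Your proof is correct and takes essentially the paper's approach. You merge two active neurons of the same sign class along a path that keeps every neuron balanced ($|u|=|\omega|$) and preserves both the class sum of $u_i\omega_i$ and the $\ell_2$ term, then iterate until each class has at most one member; in fact your rotation path traces the same curve as the paper's $C(t)$, which transfers $t\,u_2\omega_2$ linearly from neuron $2$ to neuron $1$, only with a different parametrization.
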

\begin{proof}
    It is enough to show that $\forall (u_i, \omega_i)_{i=1}^{m}\in\Theta^{*}(m)\setminus \Theta^{*}_{min}(m)$, we can construct a continuous path that connects $(u_i, \omega_i)_{i=1}^{m}$ and a point in $\Theta^{*}_{min}(m)$.\\
    As $(u_i, \omega_i)_{i=1}^{m}\notin \Theta^{*}_{min}(m)$, wlog, $\omega_1\omega_2>0$ and $u_1u_2>0$. Denote $\operatorname{sign}(\omega_1)=\operatorname{sign}(\omega_2)$ by $s$. Define a path
    \[
    C(t)=\left(\frac{u_1 |\omega_1|+t u_2|\omega_2|}{\sqrt{|u_1\omega_1+t u_2\omega_2|}}, \sqrt{|u_1 \omega_1+t u_2\omega_2|}s\right)\oplus\left(\sqrt{1-t}\frac{u_2|\omega_2|}{\sqrt{|u_2\omega_2|}}, \sqrt{(1-t)|u_2\omega_2|}s\right)\oplus(u_j, \omega_j)_{j=3}^{m}
    \]
    where $t\in[0,1]$.\\
    $C(t)$ is a part of the path connecting $(u_i, \omega_i)_{i=1}^{m}$ and a point in $\Theta^{*}_{min}(m)$. To show this, we prove the following claims.
    \begin{enumerate}
        \item $C(t)$ is well-defined and is continuous.
        \begin{proof}
            $\omega_1\omega_2>0$ and $u_1u_2>0$ imply that $\operatorname{sign}(u_1\omega_1)=\operatorname{sign}(u_2\omega_2)\neq0$. Hence, $|u_1\omega_1+t u_2\omega_2|\neq 0$ for $t\in[0, 1]$. Also,  $\omega_2\neq0$ implies that $|u_2\omega_2|\neq 0$ by Proposition~\ref{prop:basic_result_2}. The well-definedness of $C(t)$ implies that $C(t)$ is continuous on $[0, 1]$. 
        \end{proof}
        \item The number of zero neurons in $C(1)$ is more than that of $C(0)$.
        \begin{proof}
            By direct calculation, $C(0)=(u_i, \omega_i)_{i=1}^m$ and\\ $C(1)=\left(\frac{u_1|\omega_1|+u_2|\omega_2|}{\sqrt{|u_1\omega_1+u_2\omega_2|}}, \sqrt{|u_1 \omega_1+ u_2\omega_2|}s\right)\oplus\left(0,0\right)\oplus(u_j, \omega_j)_{j=3}^{m}$. As $(u_1,\omega_1)\neq(0,0)$ and $(u_2,\omega_2)\neq(0,0)$, the claim is true.
        \end{proof}
        \item $C(t)$ is a path in $\Theta^*(m)$. 
        \begin{proof}
            By substituting $\theta=C(t)$ to $f_\theta(X)$ in \eqref{eq:g_2ReLu}, the sum of the first two terms is
            \begin{align*}
                &\left(X\frac{u_1 |\omega_1|+t u_2|\omega_2|}{\sqrt{|u_1\omega_1+t u_2\omega_2|}}\right)_{+}\sqrt{|u_1\omega_1+t u_2\omega_2|}s+\left(X\sqrt{1-t}\frac{u_2|\omega_2|}{\sqrt{|u_2\omega_2|}}\right)_{+}\sqrt{(1-t)|u_2\omega_2|}s\\
                =&\left(X (u_1 |\omega_1|+t u_2|\omega_2|)\right)_{+}s+(1-t)\left(Xu_2|\omega_2|\right)_{+}s\\
                =&\left(X u_1 |\omega_1|\right)_{+}s+\left(t X u_2|\omega_2|\right)_{+}s+(1-t)\left(Xu_2|\omega_2|\right)_{+}s~~~~~(\text{since }\operatorname{sign}(u_1|\omega_1|)=\operatorname{sign}(u_2|\omega_2|))\\
                =&(Xu_1)_{+}\omega_1+(Xu_2)_{+}\omega_2
            \end{align*}
            Hence, changing $(u_i, \omega_i)_{i=1}^{m}$ to $C(t)$ does not change the model function.\\
            \begin{align*}
    &\left(\frac{u_1 |\omega_1|+t u_2|\omega_2|}{\sqrt{|u_1\omega_1+t u_2\omega_2|}}\right)^2+\left( \sqrt{|u_1 \omega_1+t u_2\omega_2|}s\right)^2+\left(\sqrt{1-t}\frac{u_2|\omega_2|}{\sqrt{|u_2\omega_2|}}\right)^2+\left(\sqrt{(1-t)|u_2\omega_2|}s\right)^2\\
    =&\frac{(u_1 |\omega_1|+t u_2|\omega_2|)^2}{|u_1\omega_1+t u_2\omega_2|}+|u_1 \omega_1+t u_2\omega_2|+(1-t)\frac{(u_2|\omega_2|)^2}{|u_2\omega_2|}+(1-t)|u_2\omega_2|\\
    =&\frac{(u_1 |\omega_1|+t u_2|\omega_2|)^2}{|u_1\omega_1+t u_2\omega_2|}+|u_1 \omega_1+t u_2\omega_2|+2(1-t)|u_2\omega_2|\\
    =&2|u_1 \omega_1+t u_2\omega_2|+2(1-t)|u_2\omega_2|~~~~~\text{(since }\operatorname{sign}(\omega_1)=\operatorname{sign}(\omega_2))\\
    =& 2|u_1 \omega_1|+2|u_2\omega_2|~~~~~\text{(since $\operatorname{sign}(u_1\omega_1)=\operatorname{sign}(u_1\omega_1)$)}\\
    =&u_1^2+\omega_1^2+u_2^2+\omega_2^2
    \end{align*}
    Hence, changing $(u_i, \omega_i)_{i=1}^{m}$ to $C(t)$ does not change the $\ell_2$-regularization term.
    \\
    From the above arguments, changing $(u_i, \omega_i)_{i=1}^{m}$ to $C(t)$ does not change the loss value i.e. $L(u_i, \omega_i)_{i=1}^{m})=L(C(t))$. Hence, $\forall t\in[0,1]$, $C(t)$ is in $\Theta^*(m)$.
    \end{proof}

    \end{enumerate}
We can see that $C(t)$ merges the two active neurons $\{(u_1, \omega_1), (u_2, \omega_2)\}$ to generate one active neuron $\left(\frac{u_1|\omega_1|+u_2|\omega_2|}{\sqrt{|u_1\omega_1+u_2\omega_2|}}, \sqrt{|u_1 \omega_1+ u_2\omega_2|}s\right)$ and one inactive neuron (0,0). We repeat this merging process until we cannot find such a pair, i.e., we reach a point in $\Theta_{min}^*(m)$. This process should terminate since the merging process strictly decreases the number of active neurons. When the merging process ends, we concatenate all the paths. Then, we have a continuous path in $\Theta^*(m)$ starting from $(u_i, \omega_i)_{i=1}^{m}$. At the end of the path, we have a point in $\Theta^*_{min}(m)$.
\end{proof}

\begin{lemma}\label{lem:cvx_fin_2}
$\Theta^*(M^*)=\Theta^*_{min}(M^*)$ 
\end{lemma}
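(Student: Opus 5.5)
The plan is to read $\Theta^*_{min}(M^*)$ as the set of \emph{optimal} parameters satisfying the minimality condition of Definition~\ref{def:min_param}, i.e.\ $\Theta^*_{min}(M^*)\cap\Theta^*(M^*)$, which is the reading used in Lemma~\ref{lem:cvx_conn_1}. Under this reading the inclusion $\Theta^*_{min}(M^*)\subseteq\Theta^*(M^*)$ is immediate. The whole task is the inclusion $\Theta^*(M^*)\subseteq\Theta^*_{min}(M^*)$, which I will prove by counting active neurons.

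Fix $\theta=(u_i,\omega_i)_{i=1}^{M^*}\in\Theta^*(M^*)$.
\begin{enumerate}
\item By Proposition~\ref{prop:basic_result_2}, $|u_i|=|\omega_i|$ for every $i$. Hence each neuron is either inactive, with $(u_i,\omega_i)=(0,0)$, or active, with $u_i\neq0$ and $\omega_i\neq0$.
\item Each active neuron falls into exactly one of the four index classes $\mathcal{I}$ of Definition~\ref{def:phi}, determined by the signs of $u_i$ and $\omega_i$.
\item By Proposition~\ref{prop:map}, $\Phi(\theta)\in\mathcal{P}^*$. By Proposition~\ref{prop:convex_soln}, $\mathcal{P}^*$ is the singleton $\{(v_1^*,v_2^*,t_1^*,t_2^*)\}$, so $\Phi(\theta)=(v^*_i,t^*_i)_{i=1}^2$.
\item Inside a fixed class $\mathcal{I}$, all terms $u_i|\omega_i|$ are nonzero and share the sign of $u_i$. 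So the corresponding coordinate of $\Phi(\theta)$ is nonzero if and only if that class is nonempty.
\item Therefore the number of nonempty classes is exactly the number of nonzero coordinates among $v_1^*,v_2^*,t_1^*,t_2^*$, which is $M^*$ by definition.
\end{enumerate}

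Let $k$ be the number of active neurons. Every nonempty class contains at least one active neuron, so $k\geq M^*$. On the other hand there are only $m=M^*$ neurons, so $k\le M^*$. Hence $k=M^*$, and each of the $M^*$ nonempty classes contains exactly one active neuron (in particular, all neurons are active).

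Now verify the minimality condition. Suppose $p\neq q$ and $\omega_p\omega_q>0$. Then both neurons are active, so $u_p,u_q\neq0$ by Proposition~\ref{prop:basic_result_2}. If $u_pu_q>0$, then $p$ and $q$ have the same signs of $u$ and $\omega$, so they lie in the same class. This contradicts that each class contains at most one neuron. Hence $u_pu_q<0$, and $\theta\in\Theta^*_{min}(M^*)$.

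The main obstacle is step 4: showing that a nonempty class forces a nonzero coordinate of $\Phi(\theta)$, i.e.\ that no cancellation occurs within a class. This relies on the same-sign structure of each $\mathcal{I}$ together with $|u_i|=|\omega_i|$.

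A second point to handle is the degenerate case $M^*=0$. There $m=0$ is not meaningful in the paper's indexing, so the statement should be restricted to $M^*\ge1$, which is consistent with Theorem~\ref{thm:connectivity}(3).
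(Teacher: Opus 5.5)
Your argument is correct, and it proves the nontrivial inclusion $\Theta^*(M^*)\subseteq\Theta^*_{min}(M^*)$ by a more direct route than the paper.

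The paper argues by contradiction. It takes a non-minimal optimum $A$ and runs the merging path of Lemma~\ref{lem:cvx_conn_3} to reach a minimal optimum $B$ that contains an inactive neuron. It then contradicts the fact, taken from the proof of Lemma~\ref{lem:cvx_conn_1}, that a minimal optimum has exactly $M^*$ active neurons. That count is derived there only for minimal solutions. In that case every sign class of Definition~\ref{def:phi} has at most one element, so cancellation cannot occur.

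You instead show that cancellation is impossible for an \emph{arbitrary} optimum. Within a class, the terms $u_i|\omega_i|$ are nonzero and share the sign of $u_i$. The step you flagged as the main obstacle is in fact immediate from the strict sign conditions that define each class. Hence the populated classes are exactly the support of the unique point of $\mathcal{P}^*$, and pigeonhole with $m=M^*$ finishes the proof.

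Your route avoids the merging construction, and with it the check that merging stays inside $\Theta^*(m)$. It also gives slightly more: for every $m\ge M^*$, an optimum populates exactly the sign classes matching the nonzero coordinates of $(v_1^*,v_2^*,t_1^*,t_2^*)$, so it has at least $M^*$ active neurons. The paper's proof is shorter only because it reuses lemmas already built for the connectivity theorem.

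Your caveats are accurate. Read literally, Definition~\ref{def:min_param} contains non-optimal points, for example the zero vector when $M^*\ge 1$. So the equality holds only when $\Theta^*_{min}$ is intersected with the optimal set, and the paper's proof likewise establishes only the one inclusion. The case $M^*=0$ must also be excluded.
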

\begin{proof}
    Assume that there exists $A\in\Theta^{*}(M^*)\setminus \Theta^{*}_{min}(M^*)$. By applying the merging process defined in Lemma~\ref{lem:cvx_conn_3} to $A$, we get a point $B\in\Theta^{*}_{min}(M^*)$ that has at least one inactive neuron $(0,0)$. So, the number of non-zero neurons in $B$ is at most $M^*-1$. From the proof in Lemma~\ref{lem:cvx_conn_1}, for $B=(u_i,\omega_i)_{i=1}^{m}\in\Theta^{*}_{min}(M^*)$, $M^*=\sum_{i=1}^{m}1(u_i\omega_i\neq0)$ i.e. the number of non-zero neurons is $M^*$. 
\end{proof}

\begin{lemma}\label{lem:cvx_conn_2}
For $m\geq M^*+1$, all permutation solutions are connected. 
\end{lemma}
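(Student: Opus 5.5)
We will show that for $m\ge M^*+1$, every permutation of $\Psi((v_i^*,t_i^*)_{i=1}^2)$ is connected to $\Psi((v_i^*,t_i^*)_{i=1}^2)$ itself inside $\Theta^*(m)$. By Lemma~\ref{lem:cvx_conn_1}, these permutations are exactly the elements of $\Theta^*_{min}(m)$. Write $\theta^\circ=\Psi((v_i^*,t_i^*)_{i=1}^2)$. It has exactly $M^*$ active neurons, so since $m\ge M^*+1$ it has at least one inactive neuron $(0,0)$, say at index $j$. Neither the model output nor the regularizer depends on the ordering of neurons. Hence, if $\theta$ lies in $\Theta^*(m)$ and $\gamma$ is a path in $\Theta^*(m)$, then applying a fixed permutation to every point of $\gamma$ again gives a path in $\Theta^*(m)$. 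It therefore suffices to connect any minimal solution $\theta$ to $T\theta$ for a generating set of transpositions $T$. Any permutation is a product of such transpositions, and the paths can be concatenated after relabeling at each step.

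The generating set will be the star transpositions $\mathcal{T}_j=\{(i,j):i\neq j\}$, where $j$ is an index of an inactive neuron. If $j$ is used this way, then every intermediate minimal point must again have an inactive neuron at $j$. This is arranged by applying, at each stage, the transposition relative to whatever index is currently inactive. Equivalently, we show that for every minimal $\theta$ with $\theta_j=(0,0)$ and every $i$, $\theta$ is connected to $(i,j)\theta$. Transpositions generate $S_m$, and any transposition $(a,b)$ can be written as $(a,j)(b,j)(a,j)$ while the inactive slot moves along. So the whole argument reduces to swapping a single active neuron with an empty slot.

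For the swap, take $\theta_i=(u_i,\omega_i)$ active and $\theta_j=(0,0)$, and define
\[
\theta_i(s)=\sqrt{1-s}\,(u_i,\omega_i),\qquad \theta_j(s)=\sqrt{s}\,(u_i,\omega_i),\qquad s\in[0,1],
\]
keeping all other neurons fixed. This is a special case of the reverse of the merging path in Lemma~\ref{lem:cvx_conn_3}: it splits one active neuron into two neurons with the same sign pattern. The output is preserved by positive homogeneity of ReLU: $\left(X\sqrt{c}\,u_i\right)_+\sqrt{c}\,\omega_i=c\,(Xu_i)_+\omega_i$, and the weights $(1-s)+s=1$ restore $(Xu_i)_+\omega_i$. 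The regularizer is also preserved, since $(1-s)(u_i^2+\omega_i^2)+s(u_i^2+\omega_i^2)=u_i^2+\omega_i^2$. The loss is therefore constant along the path, so the path stays in $\Theta^*(m)$, and it runs from $\theta$ to $(i,j)\theta$.

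The only delicate point is the bookkeeping in the second step. After each swap the inactive index moves, so we must check that the decomposition into star transpositions can always be taken relative to the current inactive index. I would handle this by induction on the length of the word. Alternatively, one can observe directly that the swap paths make the graph on $\Theta^*_{min}(m)$ connected: from any arrangement, each active neuron can be moved into an empty slot in turn, which reproduces any target arrangement. In this sense the existence of at least one empty slot, guaranteed by $m>M^*$, is the key hypothesis.
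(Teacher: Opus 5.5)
Your proposal is correct and follows essentially the same route as the paper. Both reduce to star transpositions about an inactive index $j$ and connect $\theta$ to $(i,j)\theta$ via the splitting path $\sqrt{1-s}\,(u_i,\omega_i)$, $\sqrt{s}\,(u_i,\omega_i)$, which is exactly the paper's path once $|u_i|=|\omega_i|$ is used. Your explicit handling of the fact that the empty slot moves after each swap (the free-cell argument on $\Theta^*_{\min}(m)$) fills in a step that the paper's appeal to $\mathcal{T}_j$ generating $S_m$ leaves implicit.
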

 The idea of the proof is inspired by \citet{kim_exploring_2025}. We prove it using notions of group theory, similarly to our other proofs (e.g., the proof for Theorem~\ref{thm:noweight_connectivity} (3), the first proof for Theorem~\ref{thm:connectivity}).
\begin{proof}
We use $S_m$ to denote the symmetric group on $\{1,2,~...~,m\}$. $S_m$ is the set of all permutations on $\{1,2,~...~,m\}$.\\
Using Lemma~\ref{lem:cvx_conn_1} and Lemma~\ref{lem:cvx_conn_3}, it is enough to show that 
\begin{align*}
\forall(u_i,\omega_i)_{i=1}^{m}\in\Theta_{min}^*(m),~\forall\sigma\in S_m,~\exists \text{a continuous path in $\Theta^*(m)$ connecting }\\(u_i,\omega_i)_{i=1}^{m} \text{ and }(u_{\sigma(i)},\omega_{\sigma(i)})_{i=1}^{m}.
\end{align*}
We take any $(u_i,\omega_i)_{i=1}^{m}\in\Theta_{min}^*(m)$. From the same argument in Lemma~\ref{lem:cvx_conn_1}, $m>M^*=\sum_{i=1}^{m}1(u_i\omega_i\neq0)$ implies that $(u_i,\omega_i)_{i=1}^{m}$ has at least one inactive neuron (0,0). Denote the index for the inactive neuron as $j\in[m]$, so $(u_j,\omega_j)=(0,0)$. As for all $j\in[m]$, $\mathcal{T}_j=\{(i,j) \mid i \in [m]~s.t.~i \neq j\}$ (We use $(i,j)$ to denote a transposition) generates $S_m$, it is enough to show 
\[
\forall T\in\mathcal{T}_j~\text{where}~u_j\neq0,~\exists \text{ a continuous path in $\Theta^*(m)$ connecting }(u_i,\omega_i)_{i=1}^{m} \text{ and }(u_{T(i)},\omega_{T(i)})_{i=1}^{m}.
\]
For $(u_{T(j)},\omega_{T(j)})=(0,0)$, applying $T$ does not change the parameters. So, it is enough to think about the case where $(u_{T(j)},\omega_{T(j)})\neq(0,0)$
We construct a path 
    \[
    C(t)=(u_j(t), \omega_j(t))_{j=1}^{m}\in(\mathbb{R}\times \mathbb{R})^{m},~t\in[0,1]
    \]
s.t. 
\begin{align*}
    (u_i(t), \omega_i(t))=
    \left\{
\begin{array}{ll}
    (u_i, \omega_i)~~~\text{if}~~~i\neq j\text{ and }i\neq T(j),\\
    \left(u_{T(j)}|\omega_{T(j)}|\sqrt{\frac{t}{|u_{T(j)}\omega_{T(j)|}}}, \sqrt{t|u_{T(j)}\omega_{T(j)}|}s\right)~~~\text{if}~~~i=j,\\
    \left(u_{T(j)}|\omega_{T(j)}|\sqrt{\frac{1-t}{|u_{T(j)}\omega_{T(j)|}}}, \sqrt{(1-t)|u_{T(j)}\omega_{T(j)}|}s\right)~~~\text{if}~~~i=T(j)
\end{array}
\right.
\end{align*}
where $s=\operatorname{sign}(\omega_{T(j)})$. We prove the following claims.
    \begin{enumerate}
        \item $C(t)$ is well-defined and is continuous.
        \begin{proof}
            Proposition~\ref{prop:basic_result_2} and $(u_{T(j)},\omega_{T(j)})\neq(0,0)$ imply that $|u_{T(j)}\omega_{T(j)}|\neq0$. So, division by $|u_{T(j)}\omega_{T(j)}|$ is possible and well-defined. The well-definedness of $C(t)$ implies that $C(t)$ is continuous on $[0, 1]$. 
        \end{proof}
        \item $C(0)=(u_i, \omega_i)_{i=1}^m$ and $C(1)=(u_{T(i)}, \omega_{T(i)})_{i=1}^m$
        \begin{proof}
            By direct calculation.
        \end{proof}
        \item $C(t)$ is a continuous path in $\Theta^*(m)$. 
        \begin{proof}
            \begin{align*}
                &\left(Xu_{T(j)}|\omega_{T(j)}|\sqrt{\frac{t}{|u_{T(j)}\omega_{T(j)|}}}\right)_{+}\sqrt{t|u_{T(j)}\omega_{T(j)}|}s\\
                &+\left(Xu_{T(j)}|\omega_{T(j)}|\sqrt{\frac{1-t}{|u_{T(j)}\omega_{T(j)|}}}\right)_{+}\sqrt{(1-t)|u_{T(j)}\omega_{T(j)}|}s\\
                =&t\left(Xu_{T(j)}\right)_{+}\omega_{T(j)}+(1-t)\left(Xu_{T(j)}\right)_{+}\omega_{T(j)}\\
                =&\left(Xu_{T(j)}\right)_{+}\omega_{T(j)}\\
                =&\left(Xu_j\right)_{+}\omega_j+\left(Xu_{T(j)}\right)_{+}\omega_{T(j)}
            \end{align*}
            Hence, changing $(u_i, \omega_i)_{i=1}^{m}$ to $C(t)$ does not change the function $f(X)$.
            \begin{align*}
    &\left(u_{T(j)}|\omega_{T(j)}|\sqrt{\frac{t}{|u_{T(j)}\omega_{T(j)|}}}\right)^2+\left(\sqrt{t|u_{T(j)}\omega_{T(j)}|}s\right)^2\\
    &+\left(u_{T(j)}|\omega_{T(j)}|\sqrt{\frac{1-t}{|u_{T(j)}\omega_{T(j)|}}}\right)^2+\left(\sqrt{(1-t)|u_{T(j)}\omega_{T(j)}|}s\right)^2\\
    &=t|u_{T(j)}\omega_{T(j)}|+(1-t)|u_{T(j)}\omega_{T(j)}|\\
    &=|u_{T(j)}\omega_{T(j)}|\\
    &\leq u_j^2+\omega_j^2+u_{T(j)}^2+\omega_{T(j)}^2
    \end{align*}
    Hence, changing $(u_i, \omega_i)_{i=1}^{m}$ to $C(t)$ does not increase the $\ell_2$ term. By the optimality of $(u_i, \omega_i)_{i=1}^{m}$, equality holds in the last inequality.\\
    \\
    From the above arguments, changing $(u_i, \omega_i)_{i=1}^{m}$ to $C(t)$ does not change the loss value, so $\forall t\in[0,1]$, $C(t)$ is in $\Theta^*(m)$.
        \end{proof}

    \end{enumerate}

From above arguments, $C(t)$ is a continuous path in $\Theta^*(m)$ connecting $(u_i,\omega_i)_{i=1}^{m}$ and $(u_{T(i)},\omega_{T(i)})_{i=1}^{m}$.
\end{proof}

\begin{corollary}\label{col:perm}
\begin{align*}
&\forall(u_i,\omega_i)_{i=1}^{m}\in\Theta^*(m)~s.t.~\exists~k\in[m]~s.t~(u_k,v_k)=(0,0) ,~\forall\sigma\in S_m,\\
&\exists \text{a continuous path in $\Theta^*(m)$ connecting }(u_i,\omega_i)_{i=1}^{m} \text{ and }(u_{\sigma(i)},\omega_{\sigma(i)})_{i=1}^{m}.
\end{align*}
\end{corollary}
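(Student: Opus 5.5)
The plan is to reduce the corollary to the transposition argument already carried out in the proof of Lemma~\ref{lem:cvx_conn_2}. The only input that proof actually used about the starting point was the existence of an inactive neuron $(u_j,\omega_j)=(0,0)$, together with Proposition~\ref{prop:basic_result_2}. Membership in $\Theta^*_{\min}(m)$ was never used. So we fix $(u_i,\omega_i)_{i=1}^m\in\Theta^*(m)$ with $(u_k,\omega_k)=(0,0)$ and an arbitrary $\sigma\in S_m$.

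First we note that path-connectedness in $\Theta^*(m)$ is an equivalence relation. We also note that $\Theta^*(m)$ is invariant under permutations of neurons, since both the model output and the $\ell_2$ term are permutation invariant. Since $\mathcal{T}_k=\{(i,k)\mid i\neq k\}$ generates $S_m$, we write $\sigma=T_r\circ\cdots\circ T_1$ with $T_\ell\in\mathcal{T}_k$.

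The one subtlety is that after applying some transpositions, the inactive neuron may no longer sit at index $k$. To handle this, we track the current position of a zero neuron. Every permuted point $(u_{\pi(i)},\omega_{\pi(i)})_i$ still contains a $(0,0)$ neuron, namely at index $\pi^{-1}(k)$. We therefore prove a slightly stronger one-step statement: for any $\theta\in\Theta^*(m)$ with a zero neuron at index $j$, and any transposition $(i,j)$, the point $\theta$ is connected in $\Theta^*(m)$ to $\theta\circ(i,j)$. Since the transpositions through the zero-neuron index of the current point generate $S_m$ (equivalently, $\{(i,j)\}$ for any fixed $j$ generates $S_m$, and we re-express each step relative to the current zero index), induction on word length gives the corollary. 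Concretely, one can also rewrite any transposition $(a,b)$ with $a,b\neq j$ as $(a,j)(b,j)(a,j)$, each of whose factors swaps with the current zero slot.

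For the one-step claim, if $(u_i,\omega_i)=(0,0)$ the swap is trivial. Otherwise we use the path $C(t)$ from the proof of Lemma~\ref{lem:cvx_conn_2}, which splits neuron $i$ into $\sqrt{t}$ and $\sqrt{1-t}$ scaled copies placed in slots $j$ and $i$. That calculation shows the network output is unchanged along $C(t)$. Using $|u_i|=|\omega_i|$ from Proposition~\ref{prop:basic_result_2}, it also shows the regularization term stays equal to $u_i^2+\omega_i^2$, since the quantity $|u_i\omega_i|$ appearing in that computation should read $2|u_i\omega_i|$ once both coordinates are summed. Hence $L$ is constant along the path, and the path stays in $\Theta^*(m)$.

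The main obstacle is the bookkeeping that the generating transpositions must always involve a slot that is currently zero. The fix above, composing transpositions via the current zero index, resolves it.
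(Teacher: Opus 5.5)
Your route is the paper's. The paper obtains the corollary directly from the proof of Lemma~\ref{lem:cvx_conn_2}, where minimality is used only to supply an inactive neuron, and the same splitting path $C(t)$ does all the work. Your correction of the norm computation is right and actually needed. The two slots contribute $2t|u_i\omega_i|$ and $2(1-t)|u_i\omega_i|$, so the total is $2|u_i\omega_i|=u_i^2+\omega_i^2$ by Proposition~\ref{prop:basic_result_2}. With the paper's value $|u_i\omega_i|$, the inequality would be strict and the ``equality by optimality'' step would be contradictory.

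There is one slip in your bookkeeping aside. The identity $(a,b)=(a,j)(b,j)(a,j)$ does not have every factor passing through the current zero slot. After the first swap $(a,j)$ the zero sits in slot $a$, so the middle swap $(b,j)$ exchanges two slots neither of which is zero, and your one-step claim does not cover it. A sequence that does stay on the zero slot is: apply $(a,j)$, then $(a,b)$, then $(b,j)$.

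You do not need this, though. Write $\theta^{\pi}=(u_{\pi(i)},\omega_{\pi(i)})_{i=1}^m$, so that $(\theta^{\pi})^{\rho}=\theta^{\pi\circ\rho}$, and use the permutation invariance of $\Theta^*(m)$ that you already noted. A path from $\theta$ to $\theta^{T}$ with $T\in\mathcal{T}_k$ then relabels pointwise into a path from $\theta^{\pi}$ to $\theta^{T\circ\pi}$. Hence the set of $\pi$ for which $\theta$ is connected to $\theta^{\pi}$ is closed under $\pi\mapsto T\circ\pi$ for $T\in\mathcal{T}_k$, so it equals $S_m$. You therefore only ever need paths out of the original $\theta$, whose zero is at $k$. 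This transport is also what silently justifies the paper's reduction to transpositions in $\mathcal{T}_j$.
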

\begin{proof}
This is a corollary from the proof of Lemma~\ref{lem:cvx_conn_2}.
\end{proof}

\textbf{The second proof for Theorem~\ref{thm:connectivity}}
\begin{proof} (The second proof)\\
 \[M^*=1[\mathcal{S}_{\alpha\beta}(X_S^TY)\neq0)+1[\mathcal{S}_{\alpha\beta}(X_{S^c}^TY)\neq0]=1[|\gamma_P^*|\neq0]+1[|\gamma_N^*|\neq0].\]
(1) $M^*=0$:\\
 $M^*=0$, implies $|\gamma_P^*|=0$ and $|\gamma_N^*|=0$. For $(u_i,\omega_1)_{i=1}^{m}\in\Theta^*(m)$,
   $\sum_{i:\,u_i\ge0} u_i^{2}=0$ and $\sum_{i:\,u_i\le0} u_i^2=0$, so $\forall i\in[m],~u_i=0$. Also, $\forall i\in[m],~|u_i|=|\omega_i|$, so $\forall i\in[m],~\omega_i=0$.\\
\\
(2) $m<M^*$:\\
 $\Theta^*(m)=\emptyset$ because we need at least $M^*$ non-zero neurons for $\sum_{i:\,u_i\ge0} u_i^{2}=|\gamma_P^*|$ and $\sum_{i:\,u_i\le0} u_i^{2}=|\gamma_N^*|$ to hold.\\
 \\
 (3) $m=M^*$:\\
 As a set of permutations of a finite number of neurons is a finite set, Lemma~\ref{lem:cvx_conn_1} implies that $\Theta^*_{min}(m)$ is a finite set for all $m$. By this and Lemma~\ref{lem:cvx_fin_2}, $\Theta^*(M^*)$ is a finite set.\\
 \\
 (4) $m>M^*$:\\
 By Lemma~\ref{lem:cvx_conn_1} and Lemma~\ref{lem:cvx_conn_2}, all the elements in $\Theta_{min}^*(m)$ are connected in $\Theta^*(m)$. By this and Lemmma~\ref{lem:cvx_conn_3}, all elemets in $\Theta^*(m)$ are connected, so $\Theta^*(m)$ is connected.
\end{proof}
\section{Visualizations of global minima}\label{app:visualizations}
The Figure~\ref{fig:varphi_3} (a) (b) show samples from $\varphi^*(3)$ and (c) (d) show all the points in $\Theta^*(3)$. We can visually see that adding $\ell_2$-regularization limits the set of globally optimal parameters. We also provide visualizations for global minima of other data sets and the regularization coefficient (Figure~\ref{fig:theta_2D}, \ref{fig:theta_inverse}, \ref{fig:theta_2m}).
\begin{figure}[t]
\centering
\begin{minipage}[b]{0.24\textwidth}
    \centering
    \includegraphics[width=\linewidth]{Visualization/varphi_u.pdf}

    \small (a) Projection of $\varphi^*(m)$ in $u$-coordinates
\end{minipage}\hfill
\begin{minipage}[b]{0.24\textwidth}
    \centering
    \includegraphics[width=\linewidth]{Visualization/varphi_w.pdf}

    \small (b) Projection of $\varphi^*(3)$ in $\omega$-coordinates
\end{minipage}\hfill
\begin{minipage}[b]{0.24\textwidth}
    \centering
    \includegraphics[width=\linewidth]{Visualization/theta_u24.pdf}

    \small (c) Projection of $\Theta^*(3)$ in $u$-coordinates
\end{minipage}\hfill
\begin{minipage}[b]{0.24\textwidth}
    \centering
    \includegraphics[width=\linewidth]{Visualization/theta_w24.pdf}

    \small (d) Projection of $\Theta^*(3)$ in $\omega$-coordinates
\end{minipage}

\caption{(Figure~\ref{fig:main_varphi_3} in main) (a) (b) \textbf{Projections of samples from $\varphi^*(3)$ onto $u$ and $\omega$ coordinates} (Theorem~\ref{prop:sqrd_loss}). The blue points are samples from $\varphi^*(3)$ for $X=[1,-1]^{T}$, $Y=[-\frac{2}{3},\frac12]^{T}$, $\alpha=3^1$. $u$ values are sampled with constraint that its $\ell_{\infty}$ norm is in [$10^{-100}, 2.0$]. For each sample of $u$, $\omega$ that satisfies the constraints is sampled from the $\omega$ coordinate. (c) (d) \textbf{Projections of $\Theta^*(3)$ onto $u$ and $\omega$ coordinates} (Theorem~\ref{thm:global_optim_explicit_main}). The dataset and scaling is the same $X=[1,-1]^{T}$, $Y=[-\frac{2}{3},\frac12]^{T}$, $\alpha=3^1$, and we additionally have weight decay $\beta=3^{-2}$ ($\gamma_P^*=-1$, $\gamma_N^*=-0.5$). The black dots correspond to Minimal Optimal Solutions defined in Eq.~\ref{eq:theta-min}. Different colors for $u$ represent different sign patterns. For each $u$, corresponding $\omega$ has the same color in $\omega$-coordinates.}
\label{fig:varphi_3}
\end{figure}

\begin{figure}
\centering
\includegraphics[width=0.8\linewidth]{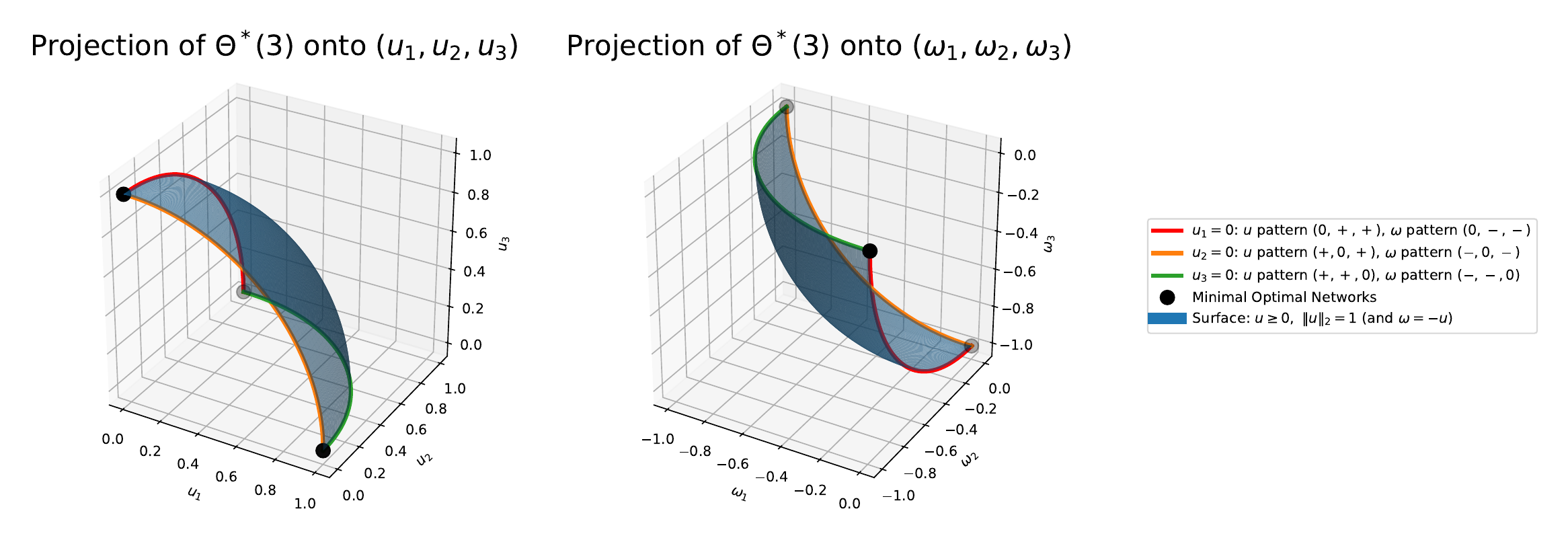}
\caption{The projections of $\Theta^*(3)$ onto $u$ and $\omega$ coordinates for $X=[1,-1]^{T}$, $Y=[-\frac{4}{9},\frac{1}{18}]^{T}$, $\alpha=3^1$, $\beta=3^{-3}$ ($\gamma_P^*=-1$, $\gamma_N^*=0$). The black dots correspond to Minimal Optimal Solutions defined in \eqref{eq:theta-min}. Both the surface and the colored boundaries are globally optimal parameters. For each $u$, corresponding $\omega$ has the same color in $\omega$ coordinate.}
\label{fig:theta_2D}
\end{figure}

\begin{figure}
\centering
\includegraphics[width=0.8\linewidth]{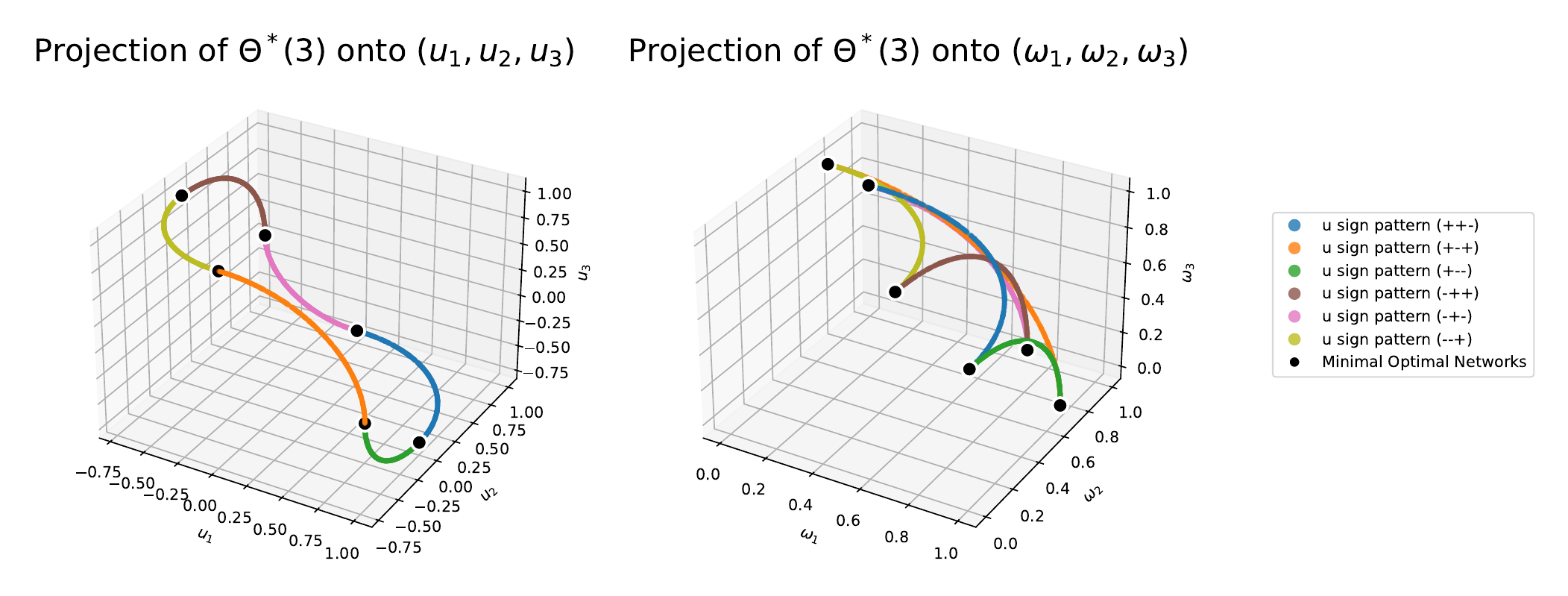}
\caption{The projections of $\Theta^*(3)$ onto $u$ and $\omega$ coordinates for $X=[1,-1]^{T}$, $Y=[\frac{2}{3},\frac{1}{2}]^{T}$, $\alpha=3^1$, $\beta=3^{-2}$ ($\gamma_P^*=1$, $\gamma_N^*=-0.5$). The black dots correspond to Minimal Optimal Solutions defined in \eqref{eq:theta-min}. For each $u$, corresponding $\omega$ has the same color in $\omega$ coordinate.}
\label{fig:theta_inverse}
\end{figure}

\begin{figure}
\centering
\includegraphics[width=0.6\linewidth]{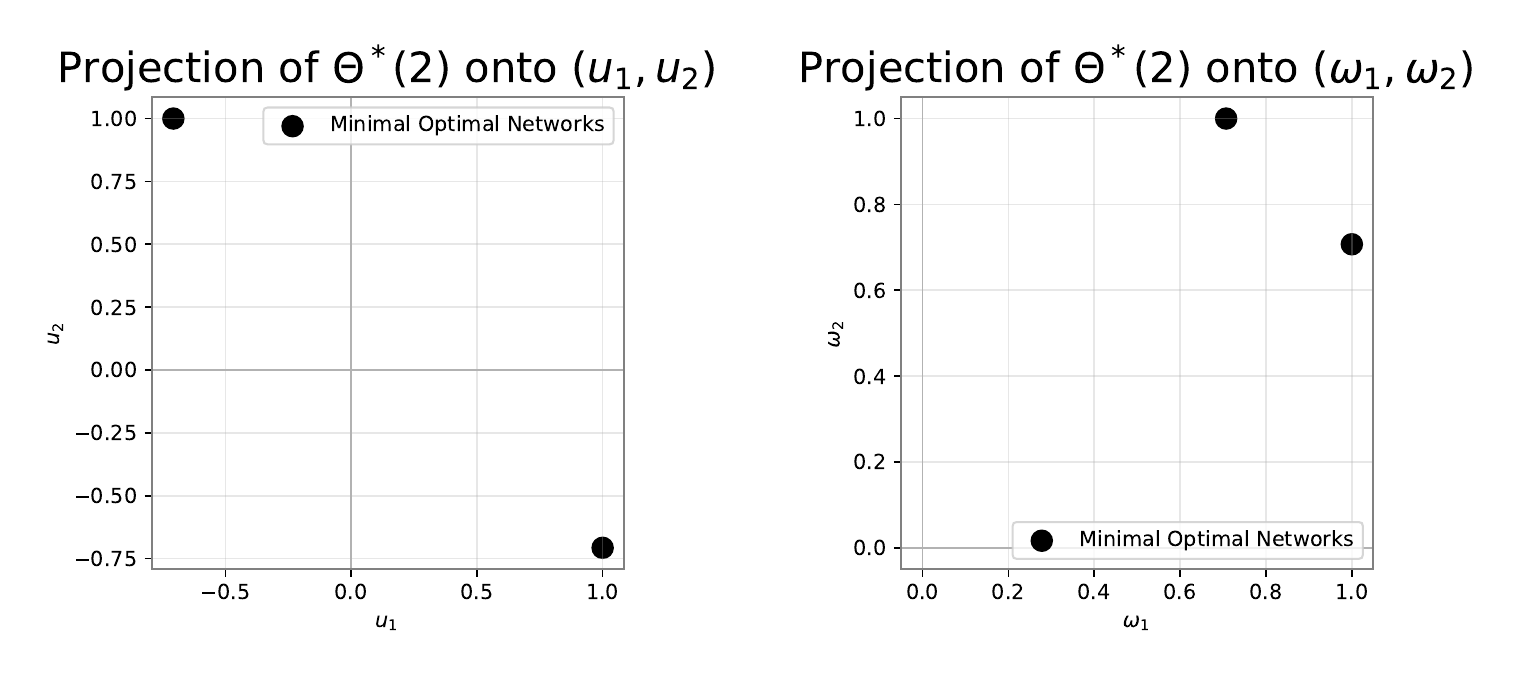}
\caption{The projections of $\Theta^*(2)$ onto $u$ and $\omega$ coordinates for $X=[1,-1]^{T}$, $Y=[1,\frac{4}{3}]^{T}$, $\alpha=2^1$, $\beta=2^{-2}$ ($\gamma_P^*=1$, $\gamma_N^*=-0.5$). The black dots correspond to Minimal Optimal Solutions defined in \eqref{eq:theta-min}. As $m=M^*$, $\Theta^*(m)$ is a finite set.}
\label{fig:theta_2m}
\end{figure}

\section{More results from Numerical Experiments}\label{app:ne}
We train two-layer ReLU neural networks \eqref{eq:g_2ReLu} with different widths ($m$=64, 128, 256, 512, 1024, 2048) using the Yacht Hydrodynamics data \cite{yacht_hydrodynamics_243} (squared loss), and MNIST \cite{MNIST} (cross entropy loss). For Yacht Hydrodynamics data \cite{yacht_hydrodynamics_243} (6-dimensional input), the network is trained with squared loss. For MNIST data \cite{MNIST} (784-dimensional input), the network is trained with the cross-entropy loss. We train them with stochastic gradient descent (SGD), and AdamW \cite{loshchilov2018decoupled} using $\beta$ as weight decay coefficient. For SGD, adding $\beta$ as a weight decay coefficient is equivalent to adding the explicit $\ell_2$-regularization. Whereas, for AdamW, we use squared loss or cross entropy as the minimization object and use $\beta$ as a weight decay to push learned parameters towards zero. We use 0.9 as the first moment decay and 0.999 as the second moment decay for AdamW. For all the experiments, the learning rate is set to be 0.01.\\
 \\
All the plots show final test loss, which does not include the $\ell_2$-regularization term, (above) and final $\ell_2$-norm of parameters (bottom, $y$-values taken in logarithmic scale base 2). The plots show the mean of values obtained from three independent experiments.\\
\\
We show the results when we trained the network with SGD (left) and AdamW (right). Across all the experimental settings, training with AdamW prevents the learned parameter from collapsing to zero. 
\subsection{Yacht Hydrodynamics data}
\begin{itemize}
    \item Figures~\ref{fig:yacht_widths_page1} and Figure \ref{fig:yacht_widths_page2} show results for networks with scaling and initialization in the neural tangent kernel setting \citep{jacot_neural_2020, arora2019exact, chizat_lazy_2020, luo_phase_2020} ($a=0.5$, $b_1=b_2=0$).
    \item Figures~\ref{fig:yacht_MF_page1} and Figure \ref{fig:yacht_MF_page2} show results for networks with scaling and initialization in the mean field setting \citep{mei2018mean, ChizatB18, luo_phase_2020, sirignano2020mean} ($a=0.5$, $b_1=b_2=0$).
    \item Figures~\ref{fig:yacht_sv1_NTK} and Figure \ref{fig:yacht_sv1_MF} show results for small initialization ($b_1=b_2=0.5$) for different scalings of the network ($a=0.5$ for Figure\ref{fig:yacht_sv1_NTK} and $a=1$ for Figure \ref{fig:yacht_sv1_MF}).
\end{itemize}

\begin{figure*}[p]
\centering
\setlength{\tabcolsep}{2pt}
\renewcommand{\arraystretch}{0.4}

\begin{tabular}{cc}
\includegraphics[width=0.46\textwidth]{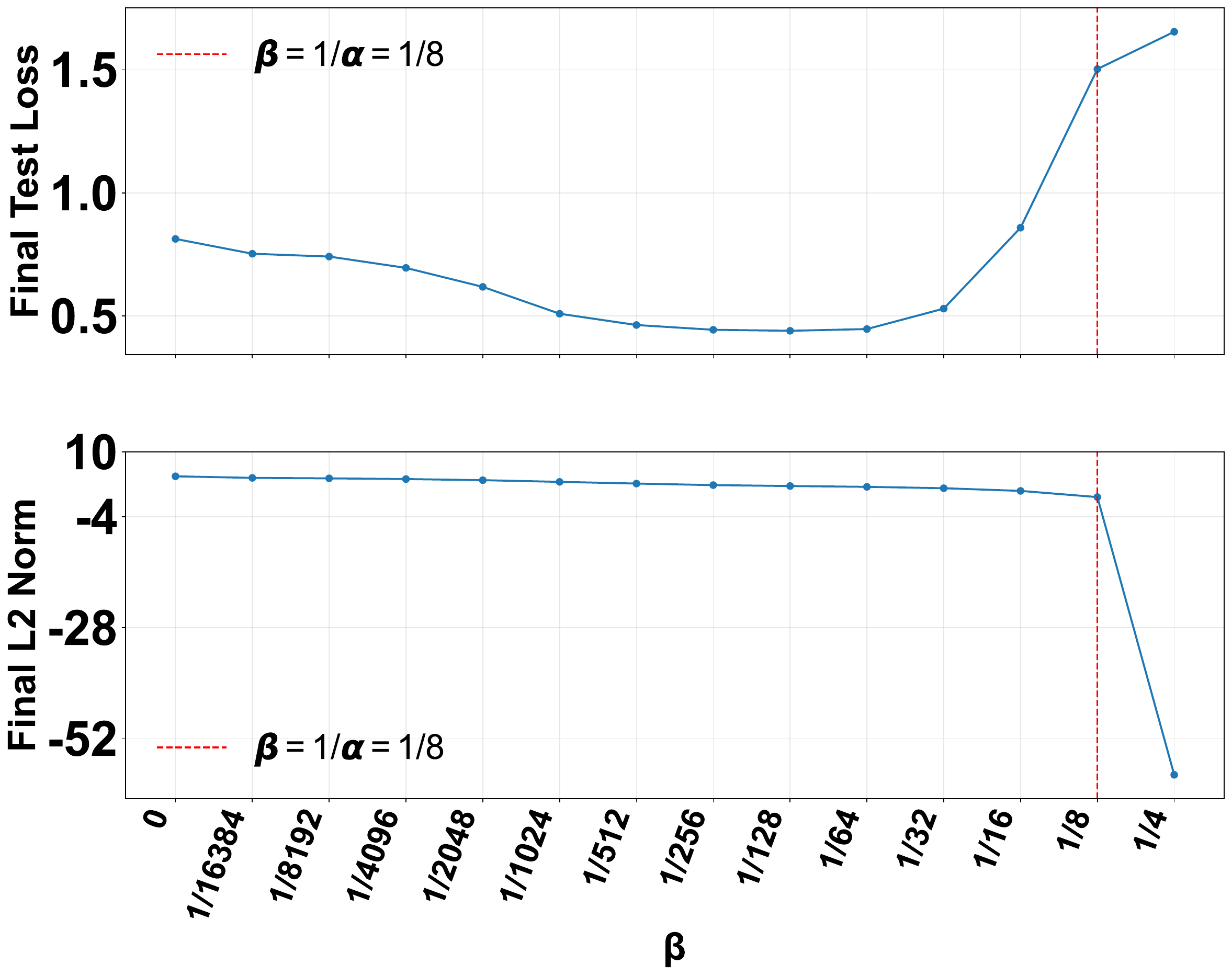} &
\includegraphics[width=0.46\textwidth]{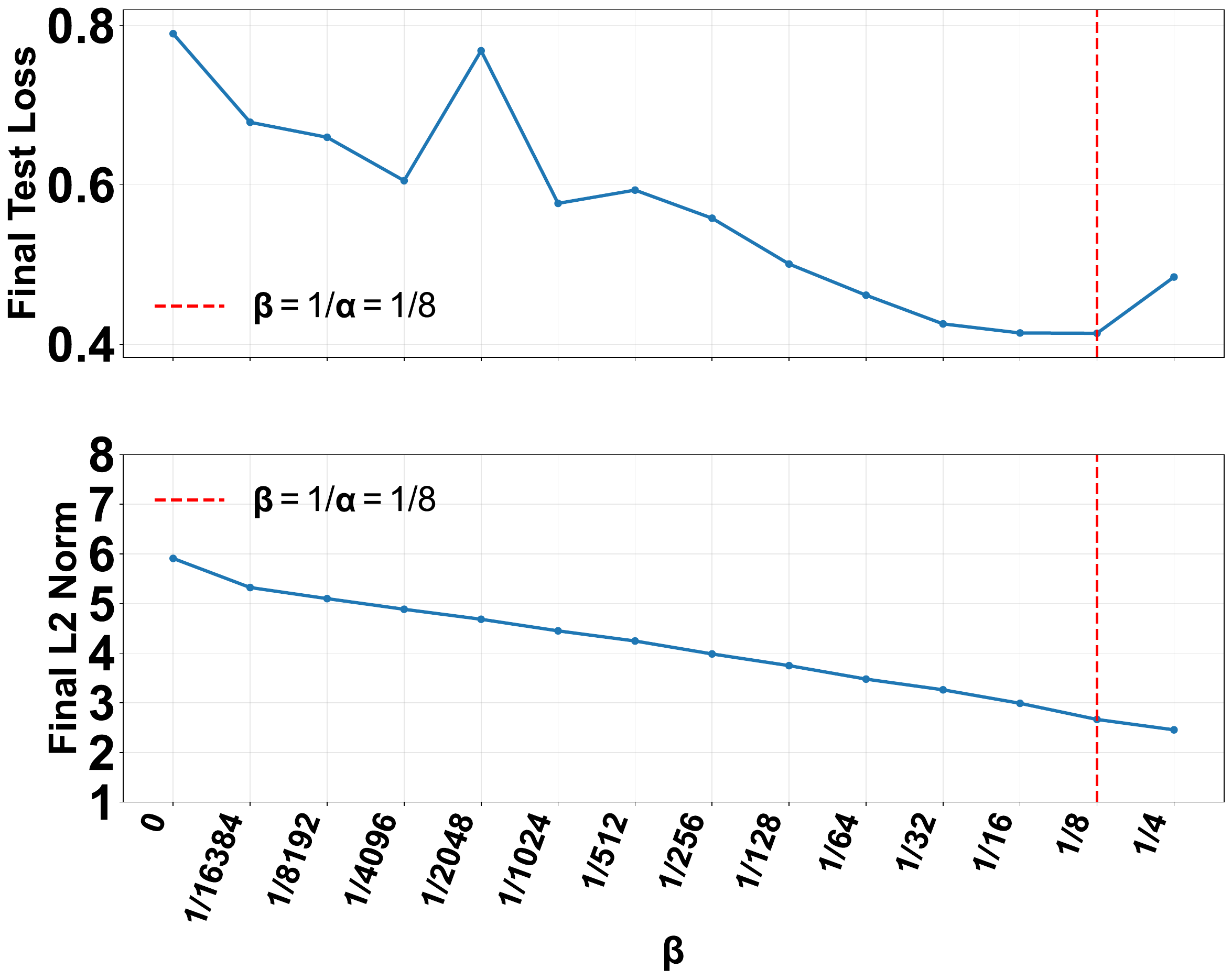} \\[-0.3em]

\includegraphics[width=0.46\textwidth]{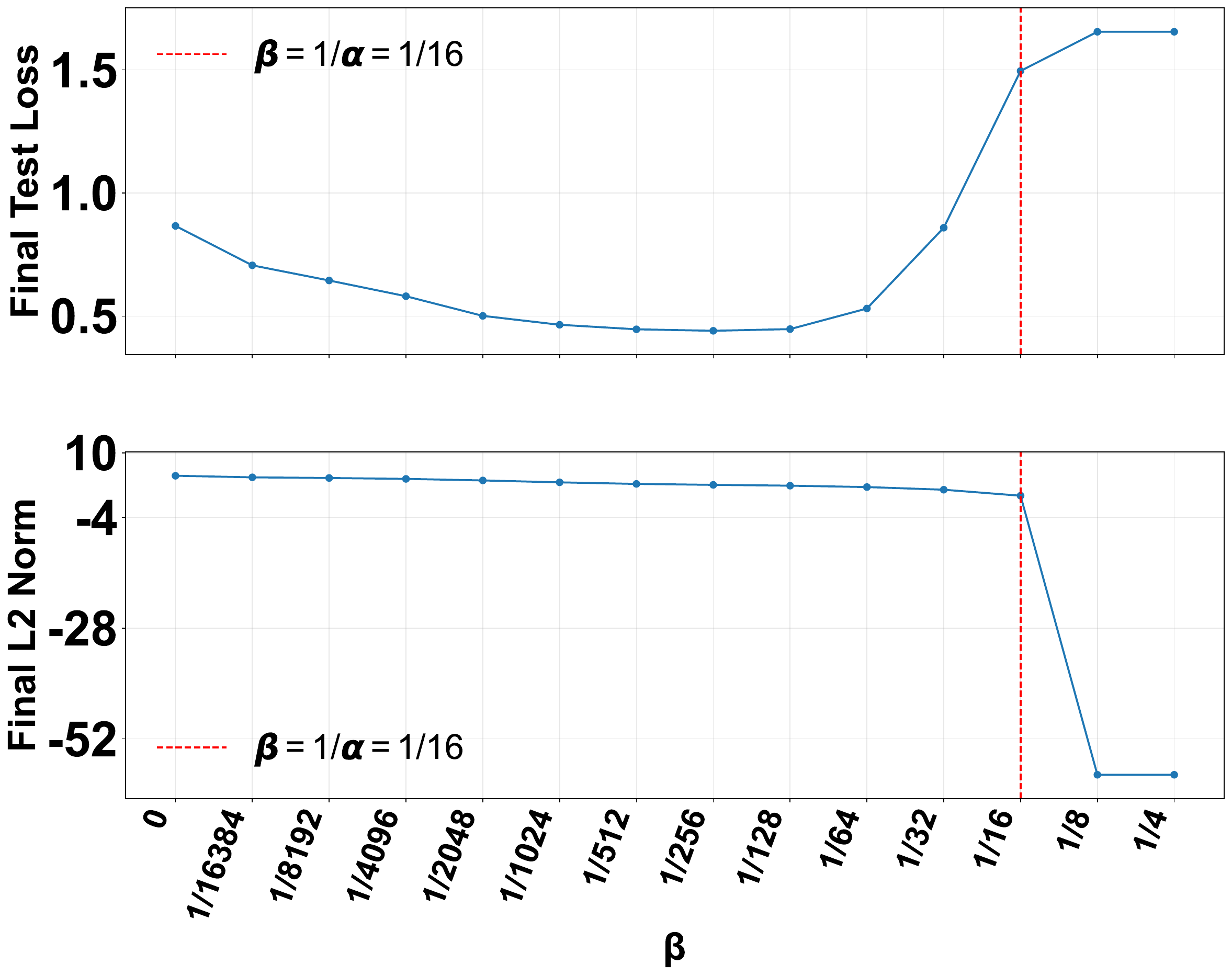} &
\includegraphics[width=0.46\textwidth]{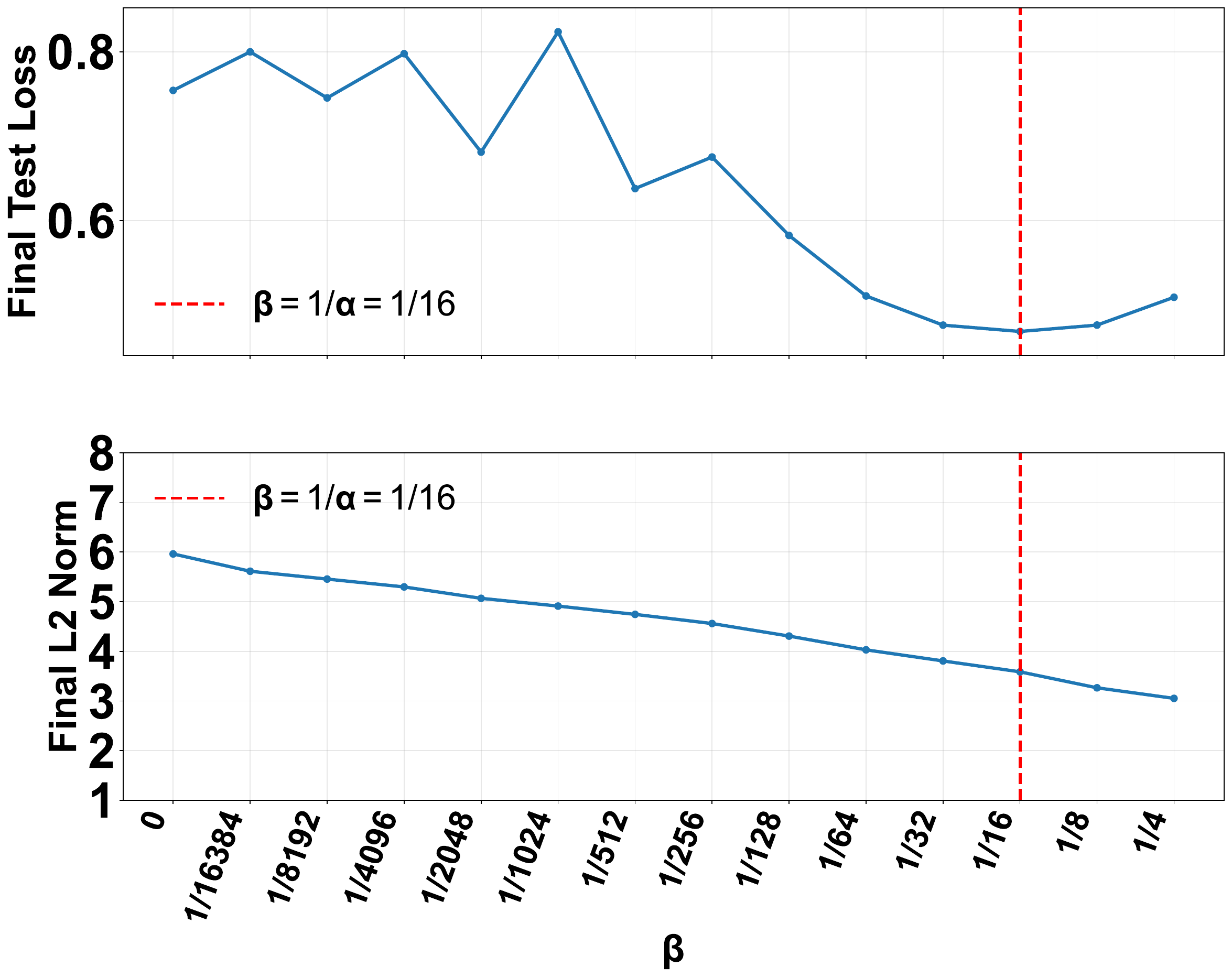} \\[-0.3em]

\includegraphics[width=0.46\textwidth]{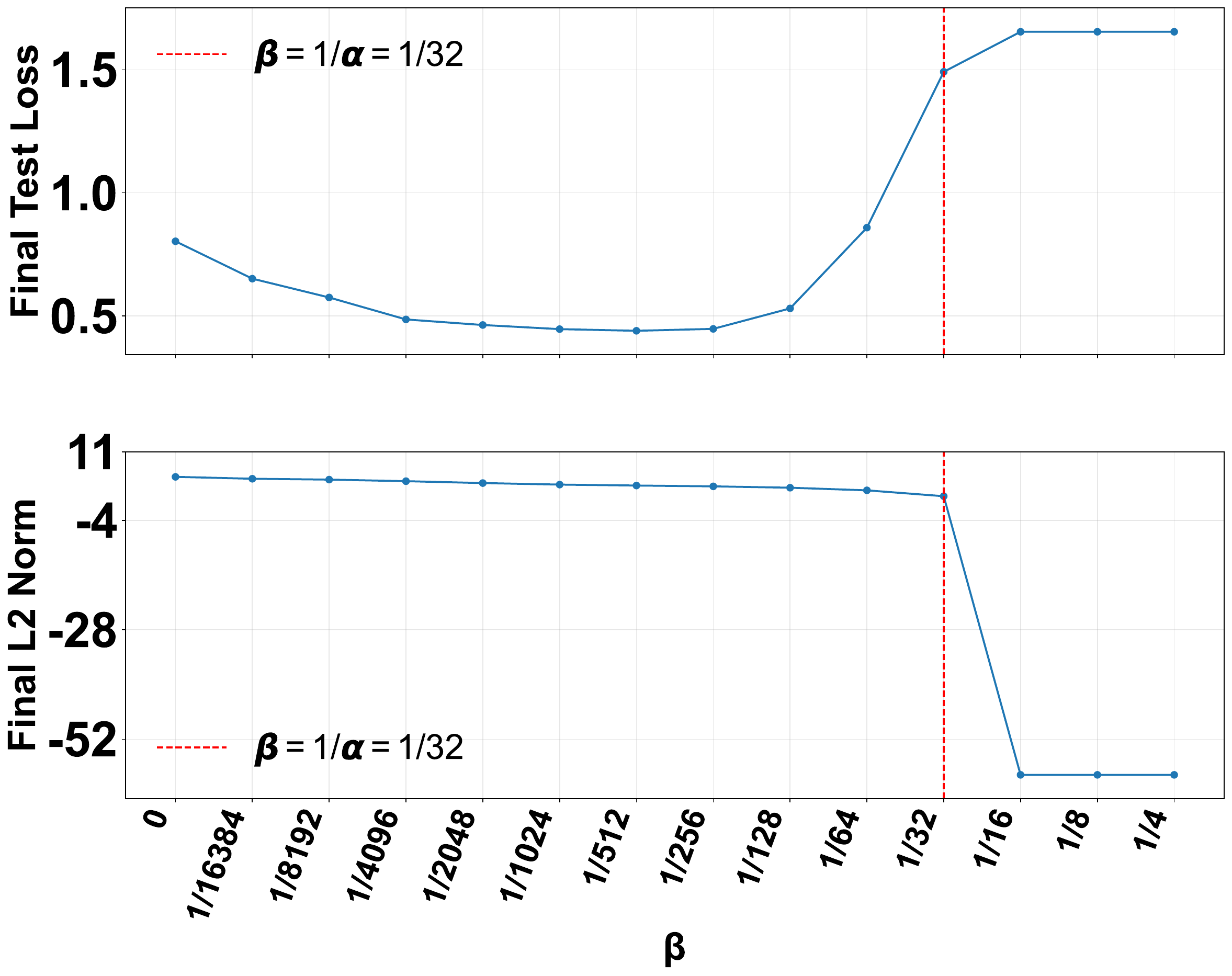} &
\includegraphics[width=0.46\textwidth]{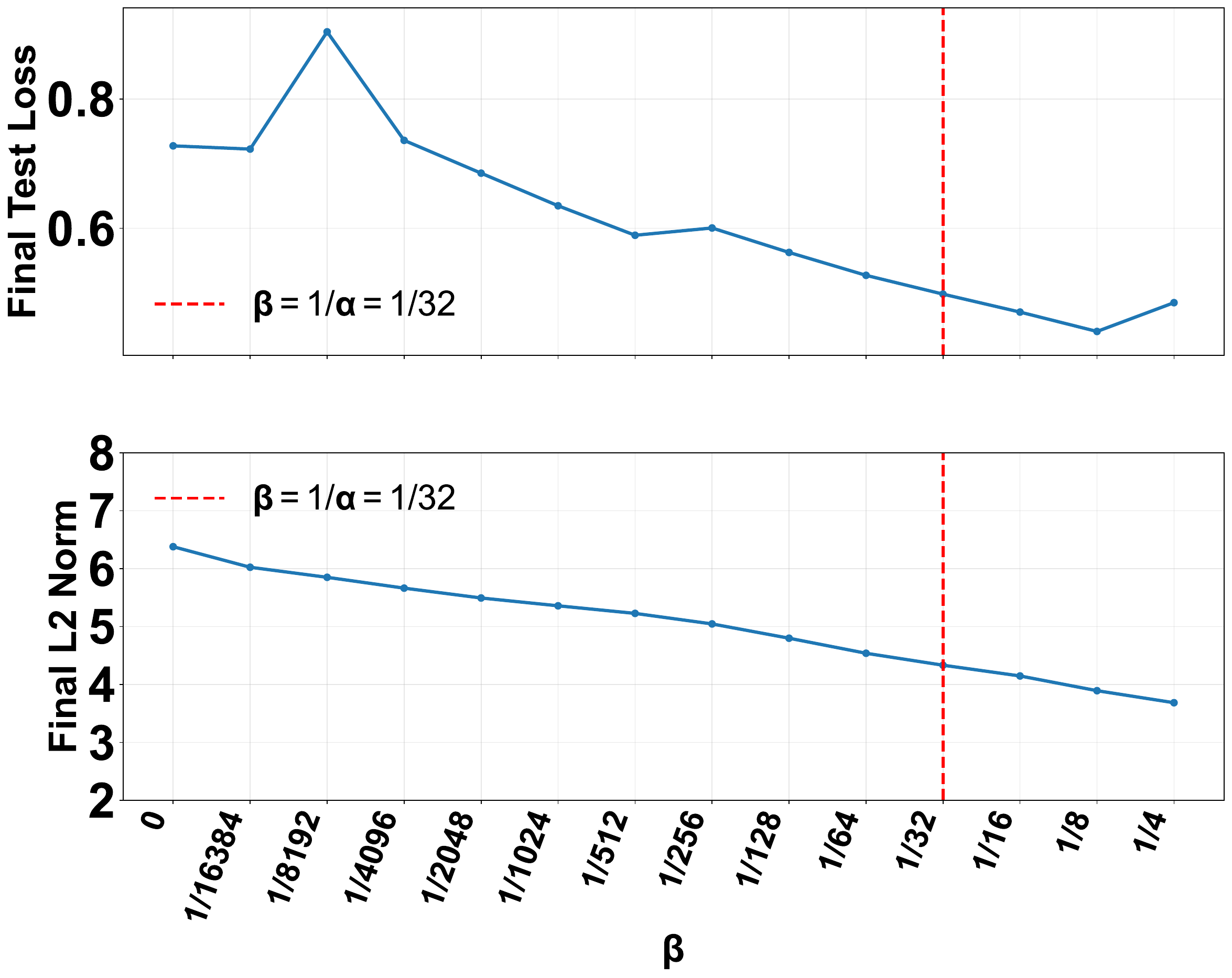}
\end{tabular}

\caption{Two-layer ReLU networks initialized by $\tau_1=\tau_2=0.01$ ($b_1=b_2=0$), scaled by $1/\alpha=1/\sqrt{m}~(a=0.5)$ (NTK regime) and trained for 40000 epochs on Yacht Hydrodynamics. Left column: SGD. Right column: AdamW. Rows show widths 64, 256, and 1024.}
\label{fig:yacht_widths_page1}
\end{figure*}

\begin{figure*}[p]
\centering
\setlength{\tabcolsep}{2pt}
\renewcommand{\arraystretch}{0.4}

\begin{tabular}{cc}
\includegraphics[width=0.46\textwidth]{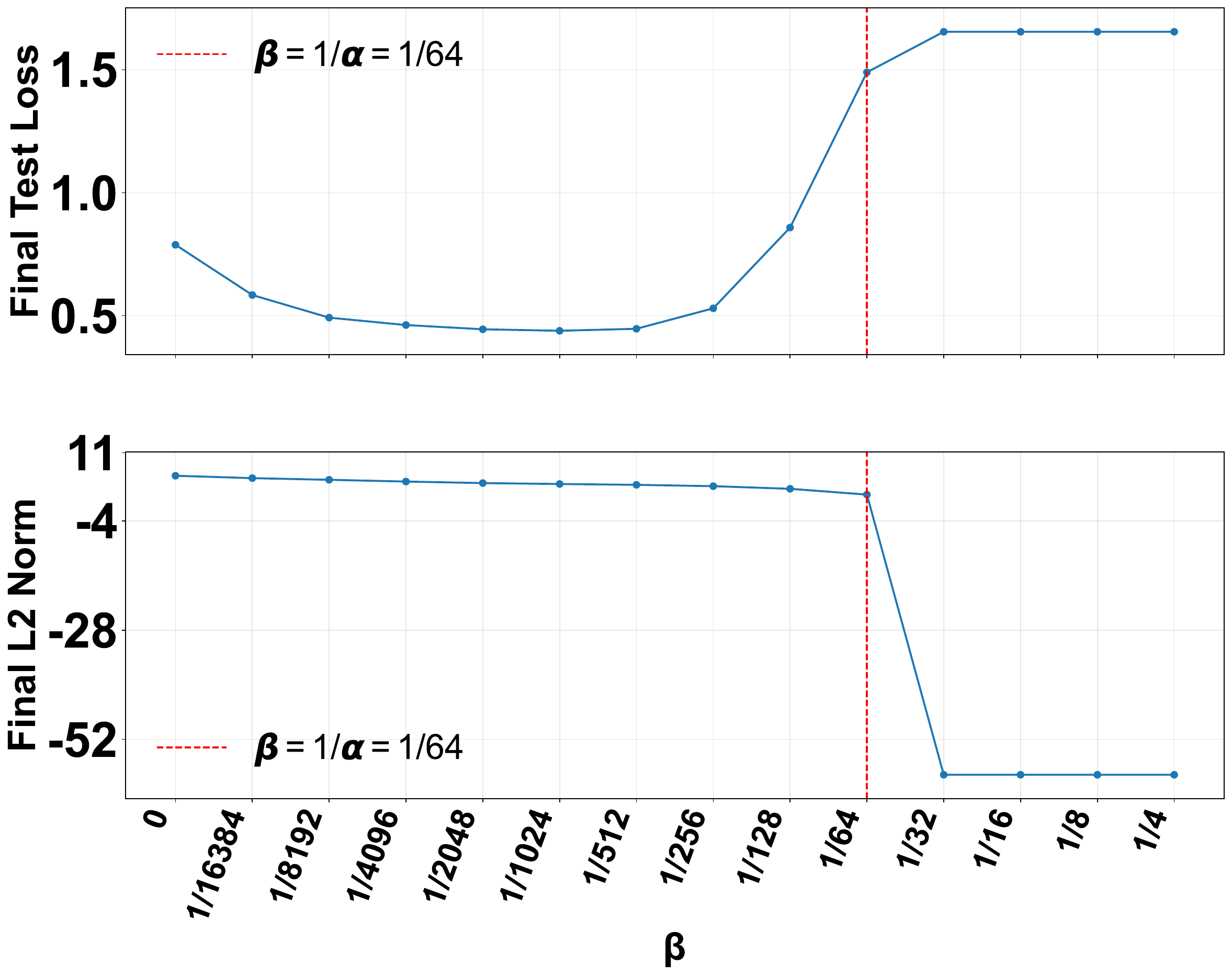} &
\includegraphics[width=0.46\textwidth]{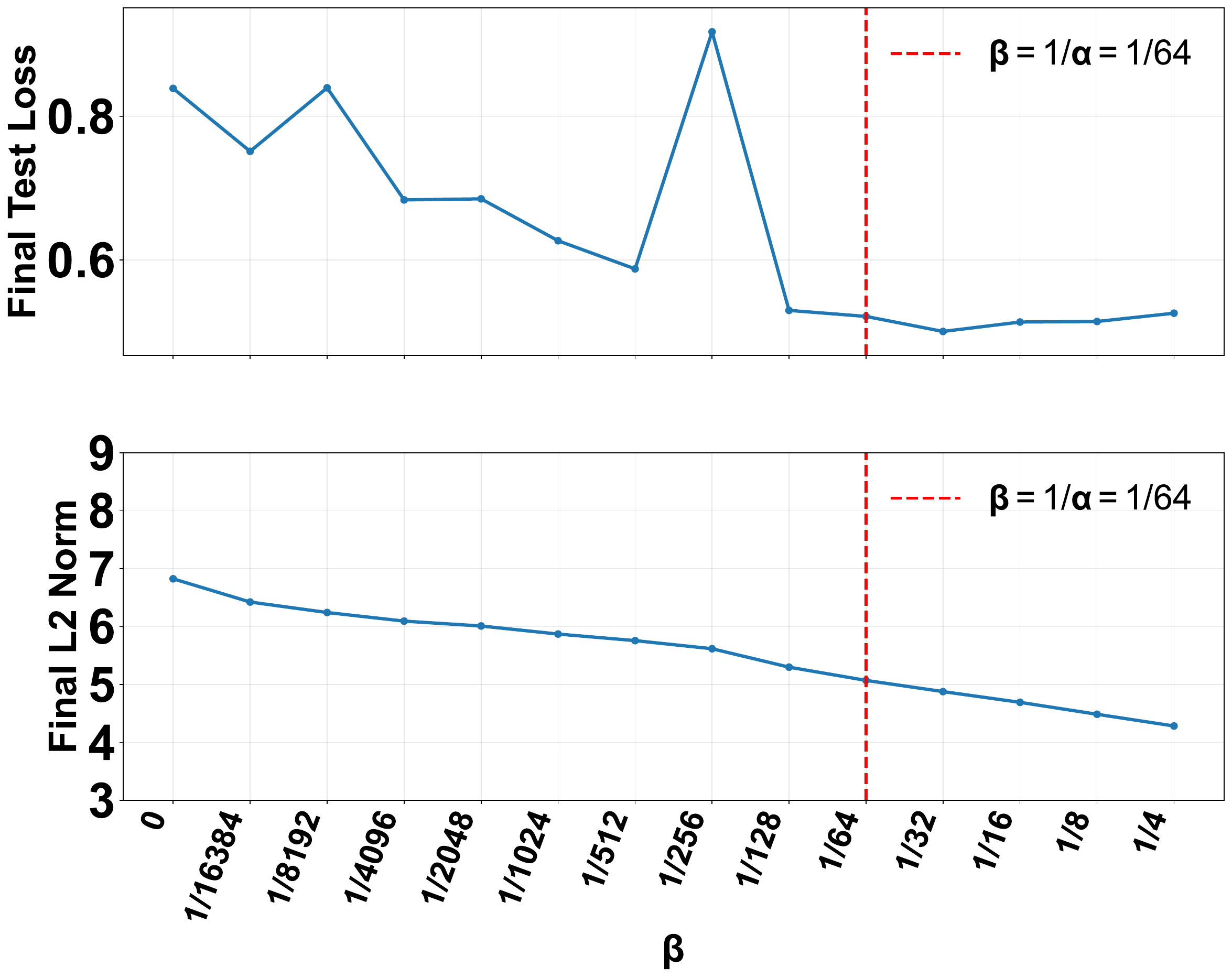} \\[-0.3em]

\includegraphics[width=0.46\textwidth]{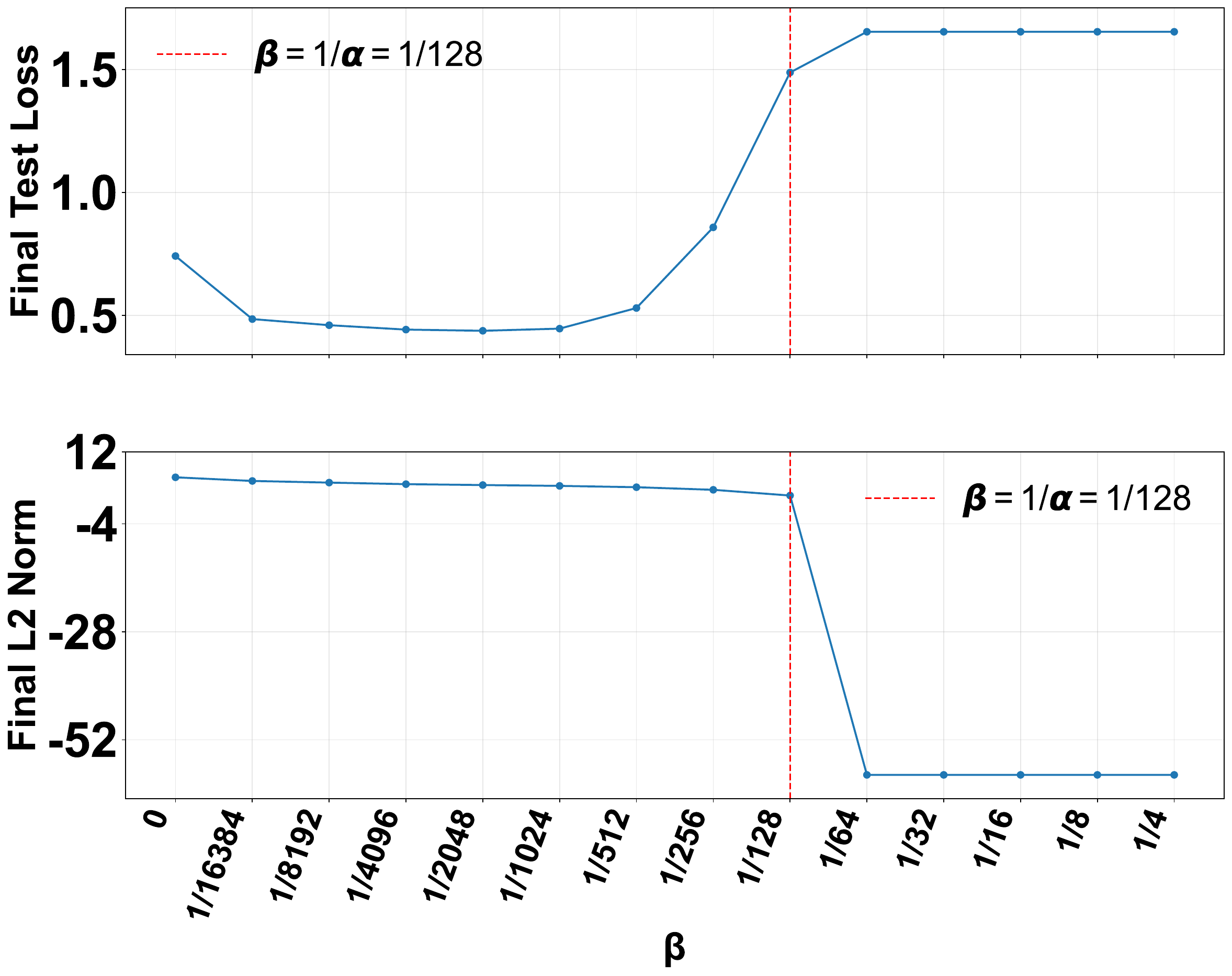} &
\includegraphics[width=0.46\textwidth]{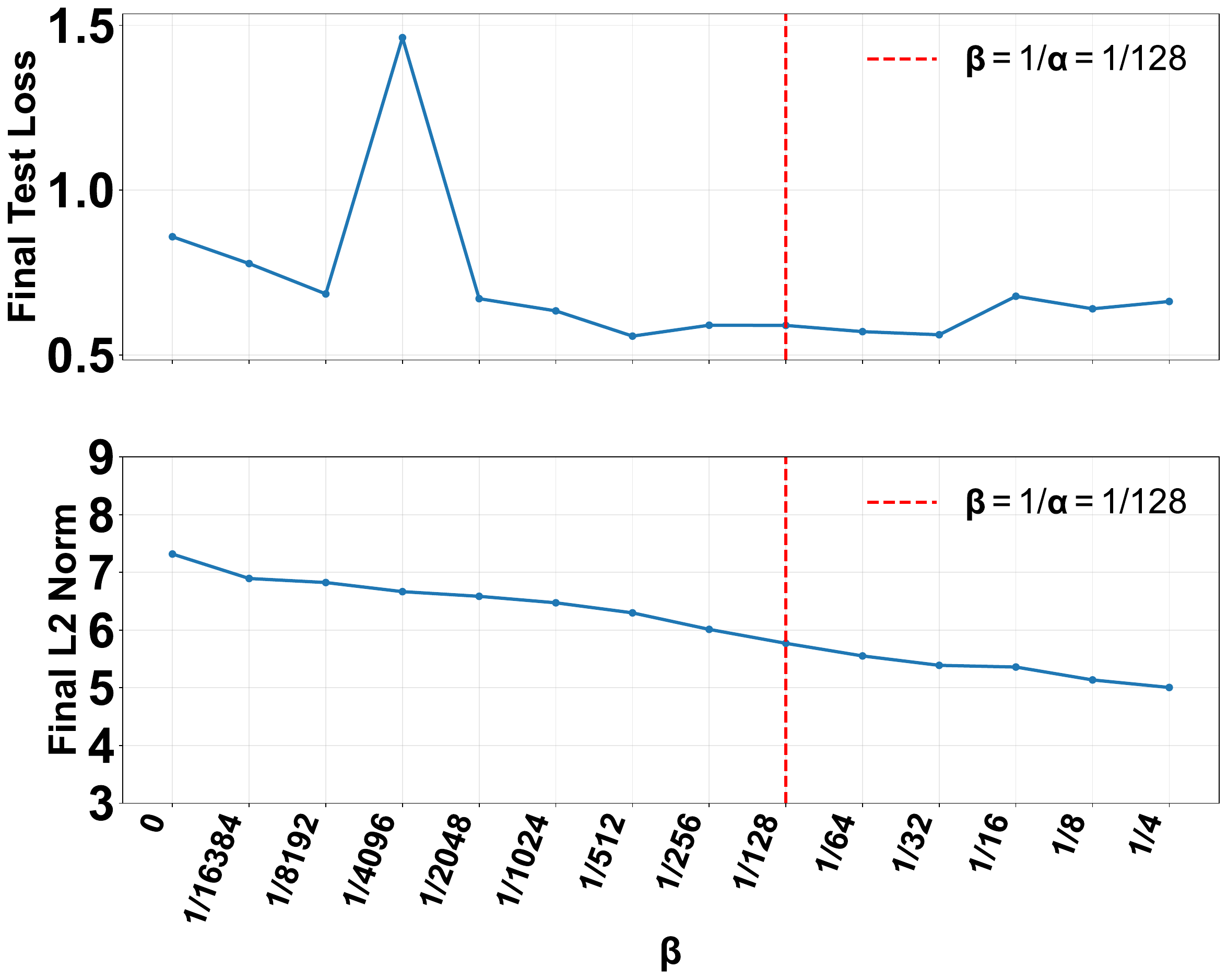}
\end{tabular}

\caption{Two-layer ReLU networks initialized by $\tau_1=\tau_2=0.01$ ($b_1=b_2=0$), scaled by $1/\alpha=1/\sqrt{m}~(a=0.5)$ (NTK regime) and trained for 40000 epochs on Yacht Hydrodynamics. Left column: SGD. Right column: AdamW. Rows show widths 4096 and 16384.}
\label{fig:yacht_widths_page2}
\end{figure*}

\begin{figure*}[p]
\centering
\setlength{\tabcolsep}{2pt}
\renewcommand{\arraystretch}{0.4}

\begin{tabular}{cc}
\includegraphics[width=0.46\textwidth]{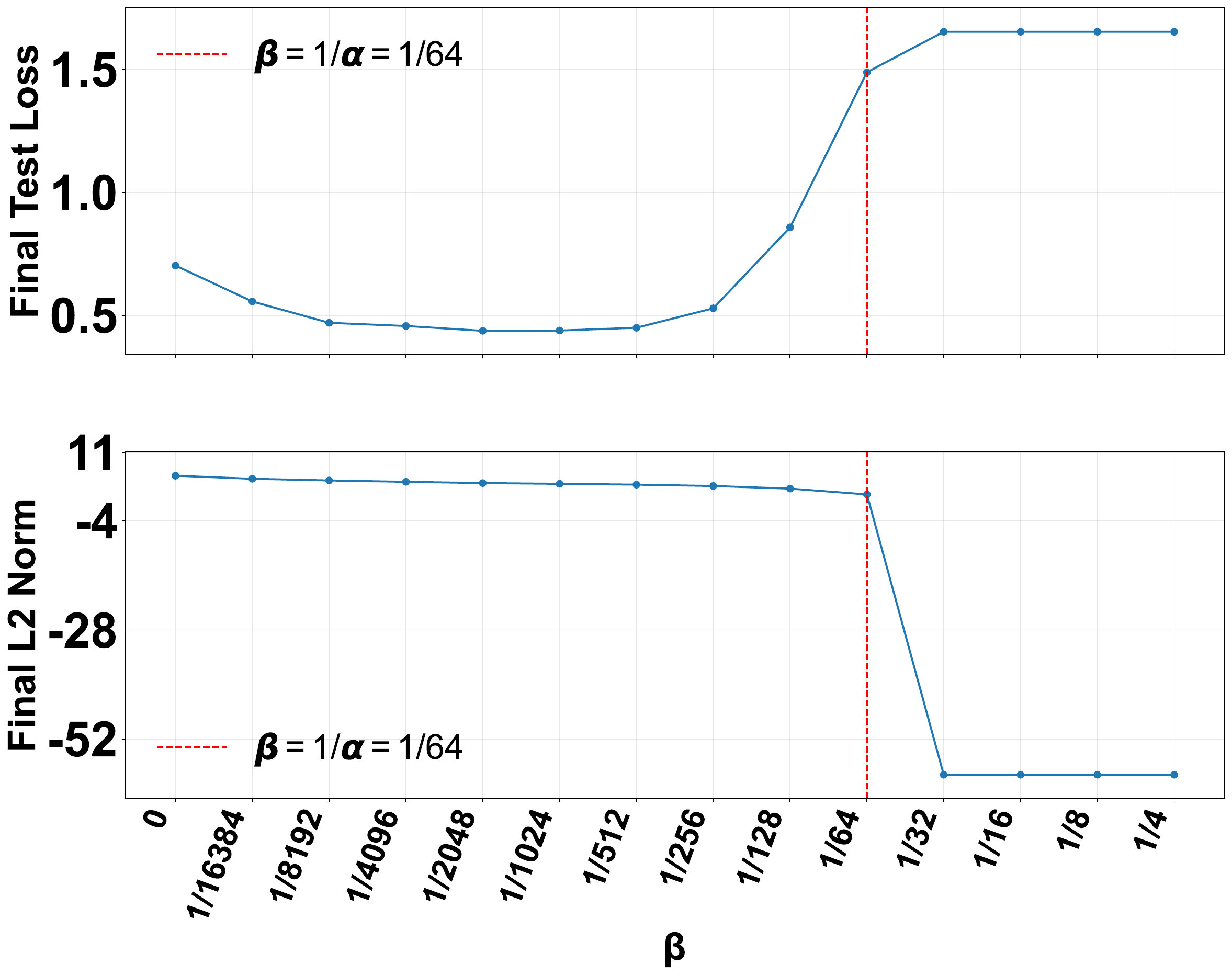} &
\includegraphics[width=0.46\textwidth]{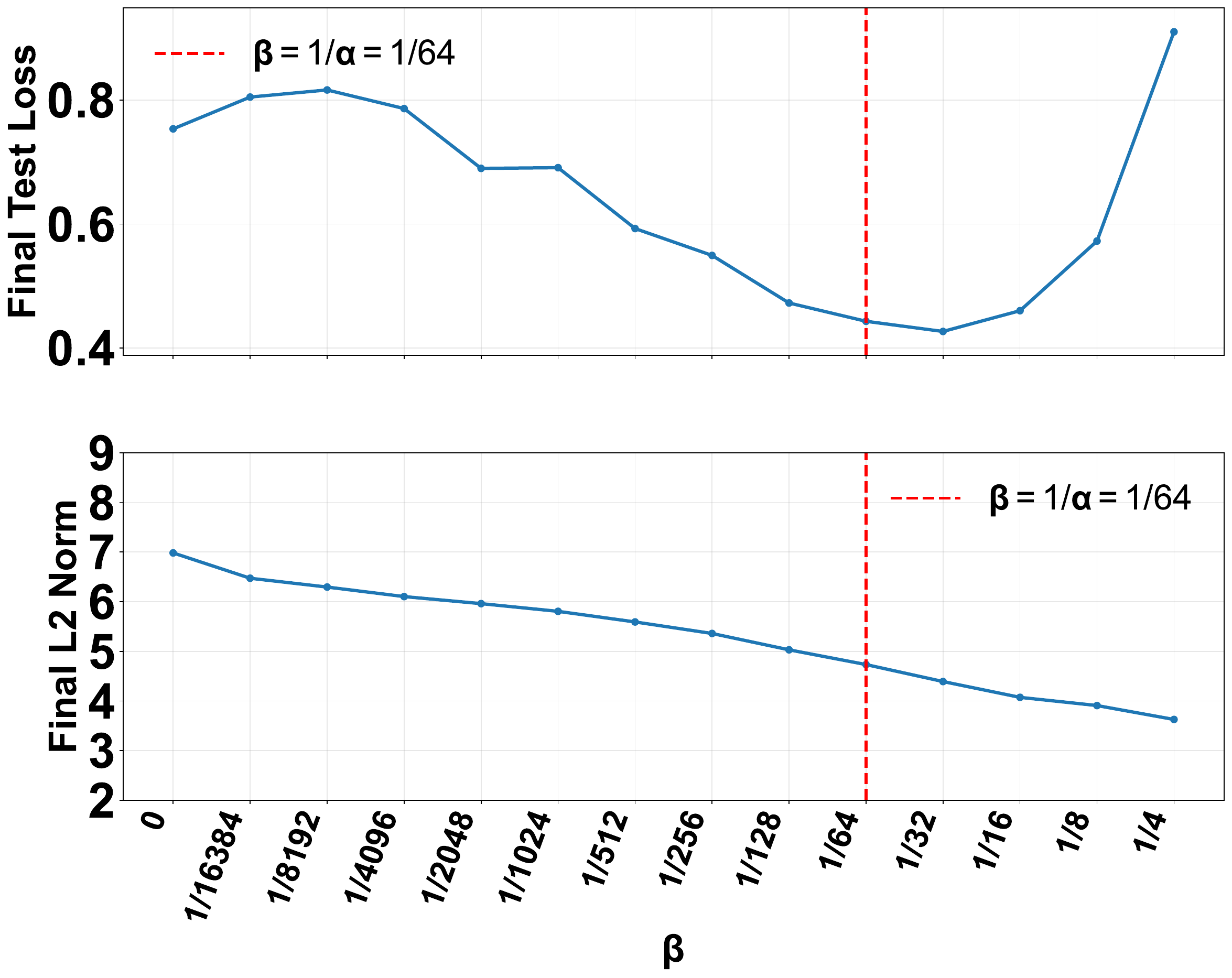} \\[-0.3em]

\includegraphics[width=0.46\textwidth]{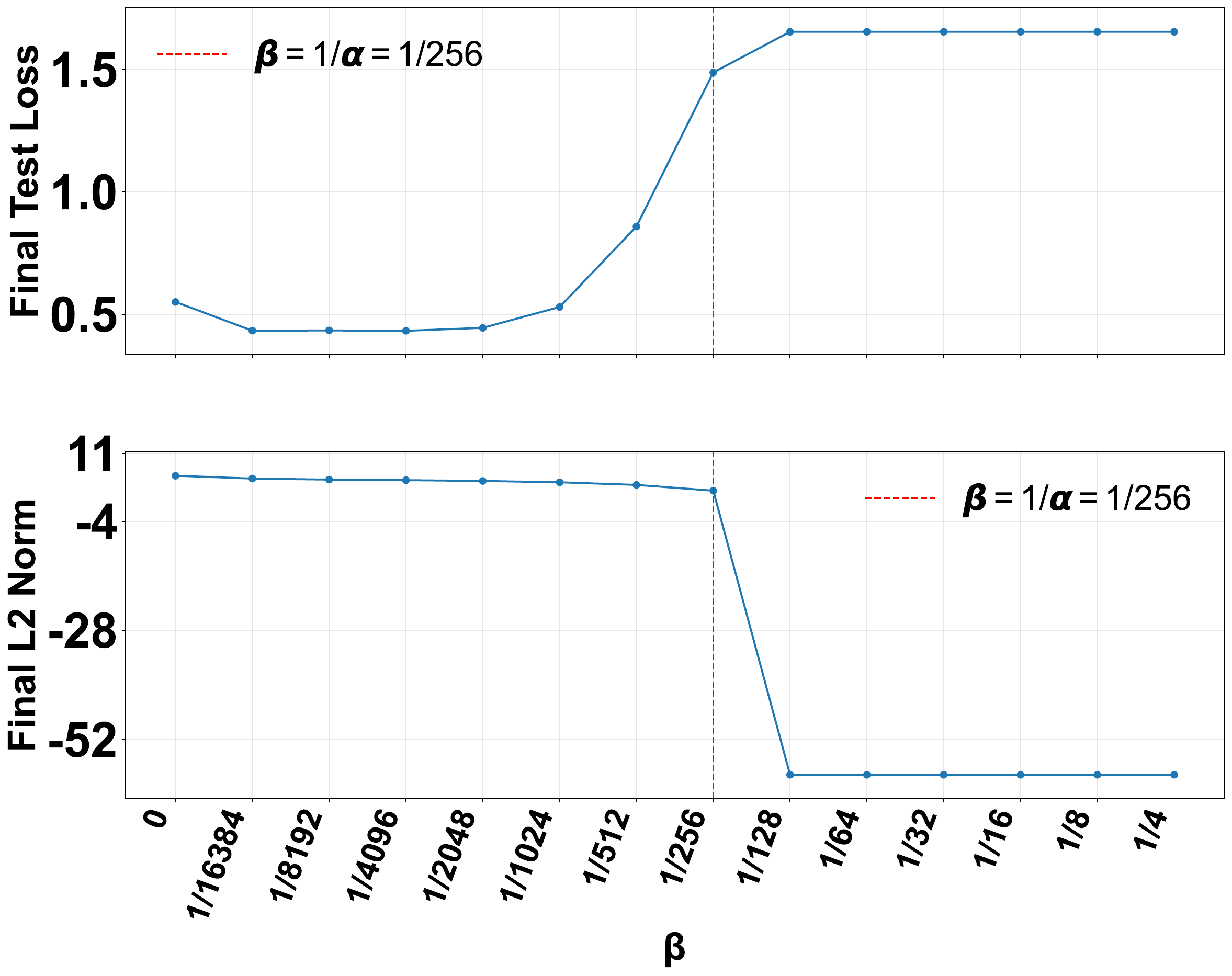} &
\includegraphics[width=0.46\textwidth]{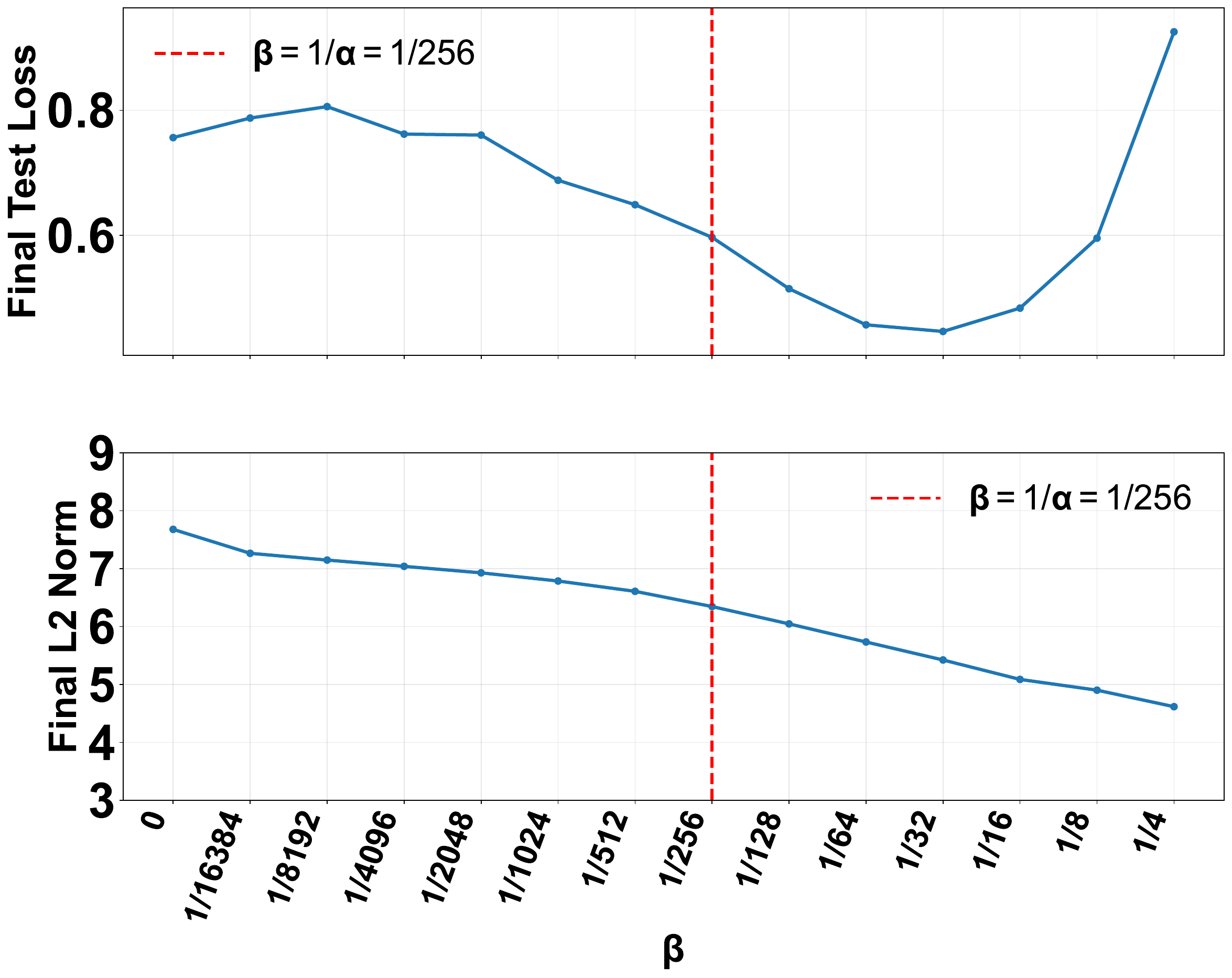} \\[-0.3em]

\includegraphics[width=0.46\textwidth]{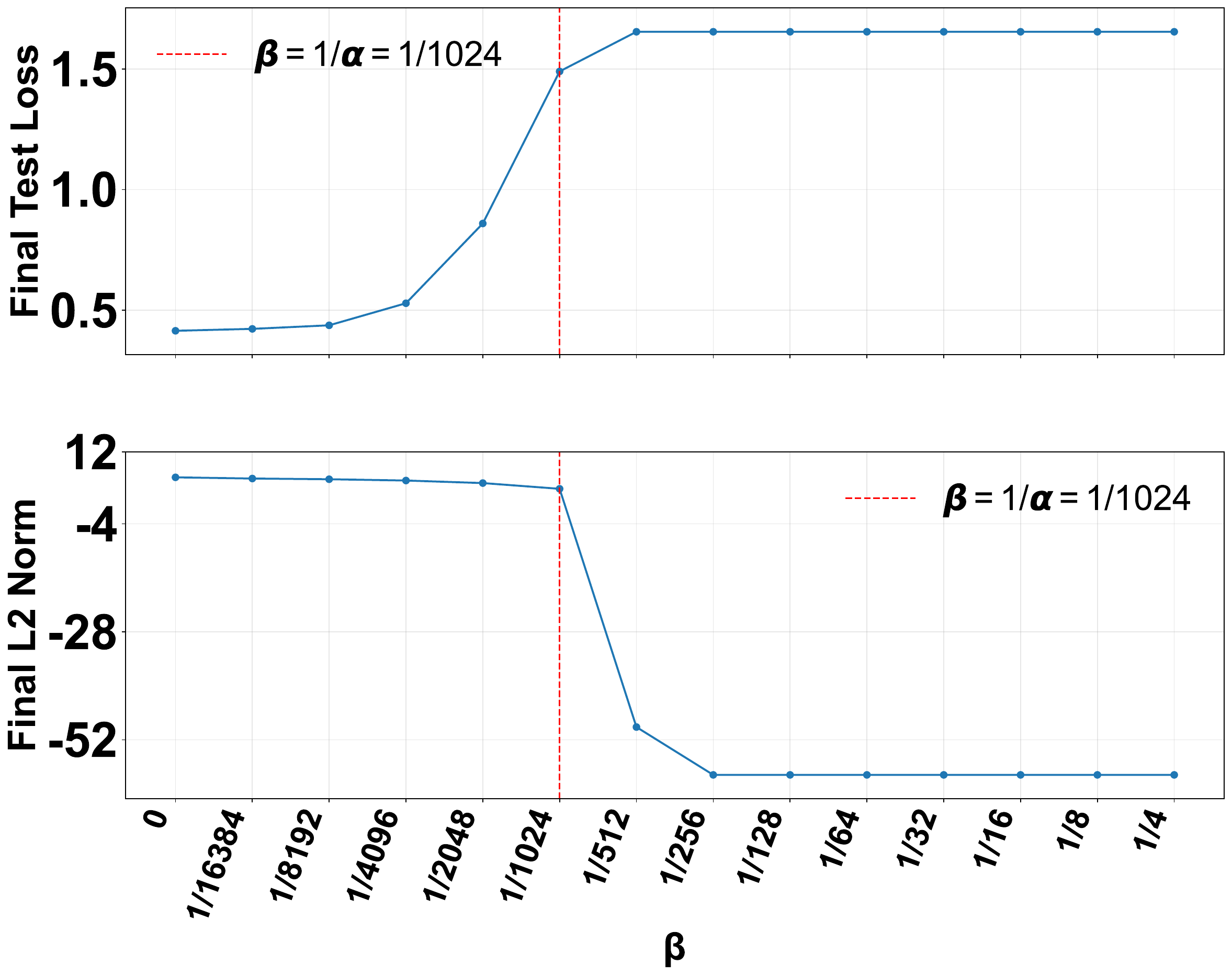} &
\includegraphics[width=0.46\textwidth]{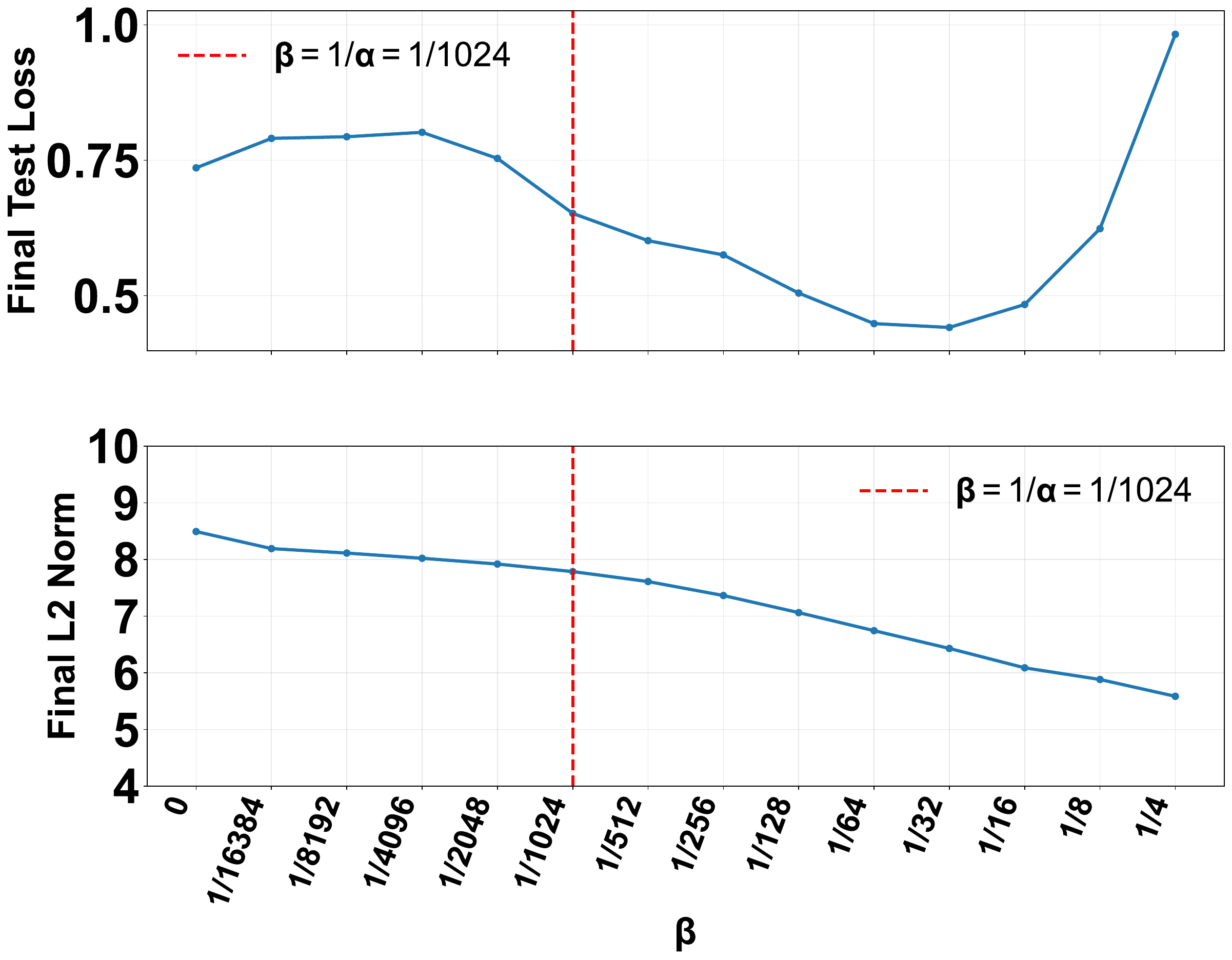}
\end{tabular}

\caption{Two-layer ReLU networks initialized by $\tau_1=\tau_2=0.01$ ($b_1=b_2=0$), scaled by $1/\alpha=1/m~(a=1)$ (mean field regime) and trained for 40000 epochs on Yacht Hydrodynamics. Left column: SGD. Right column: AdamW. Rows show widths 64, 256, and 1024.}
\label{fig:yacht_MF_page1}
\end{figure*}

\begin{figure*}[p]
\centering
\setlength{\tabcolsep}{2pt}
\renewcommand{\arraystretch}{0.4}

\begin{tabular}{cc}
\includegraphics[width=0.46\textwidth]{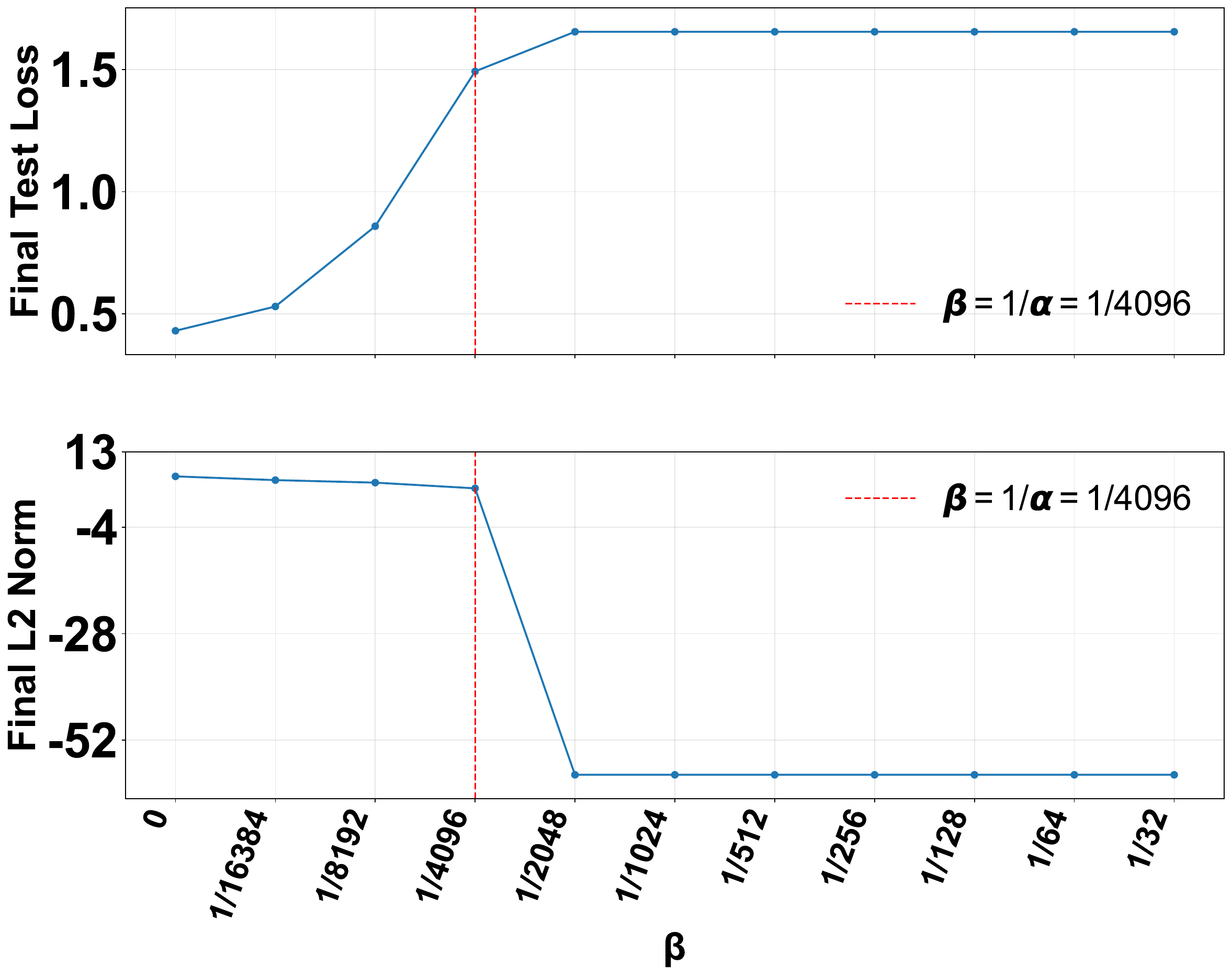} &
\includegraphics[width=0.46\textwidth]{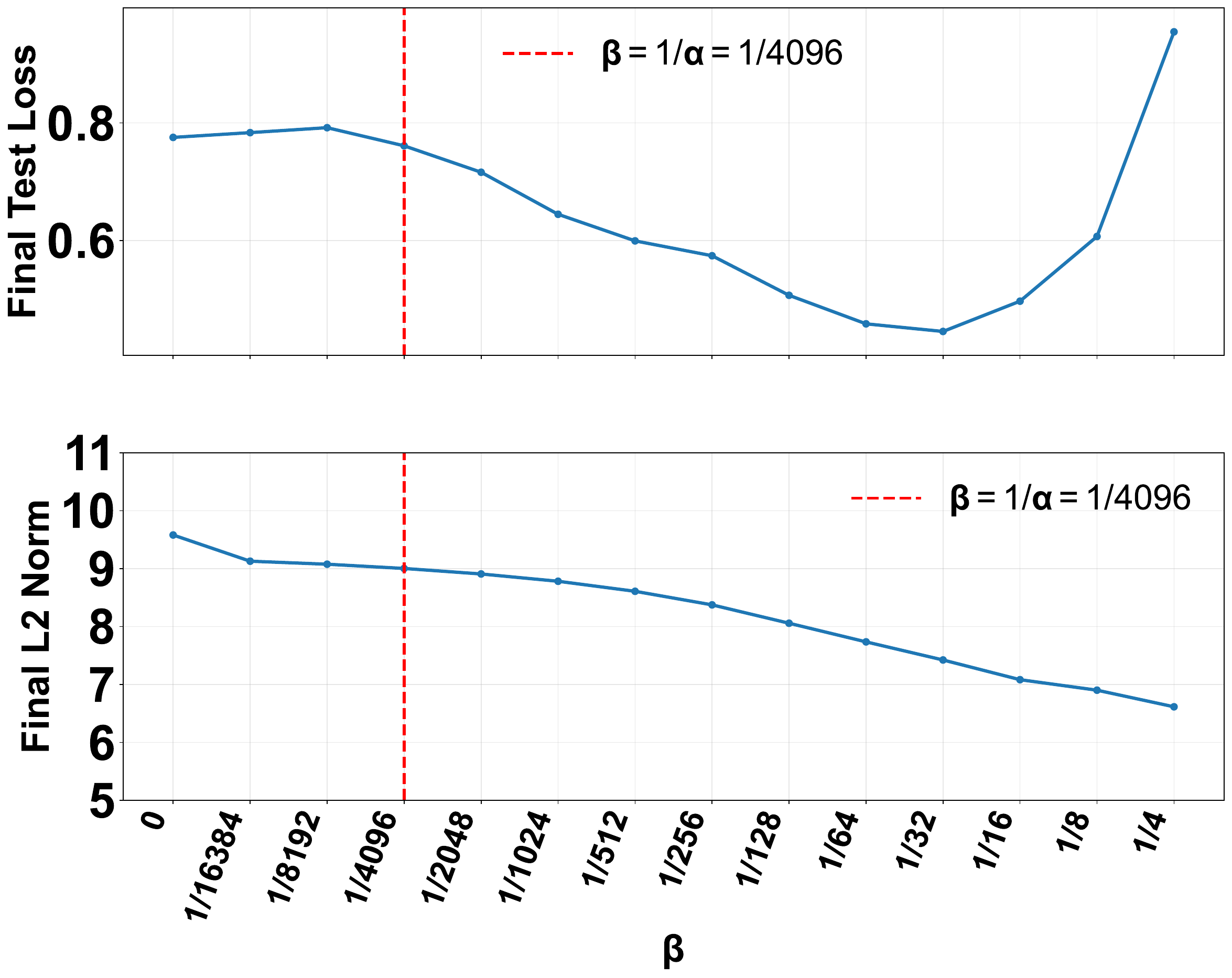} \\[-0.3em]

\includegraphics[width=0.46\textwidth]{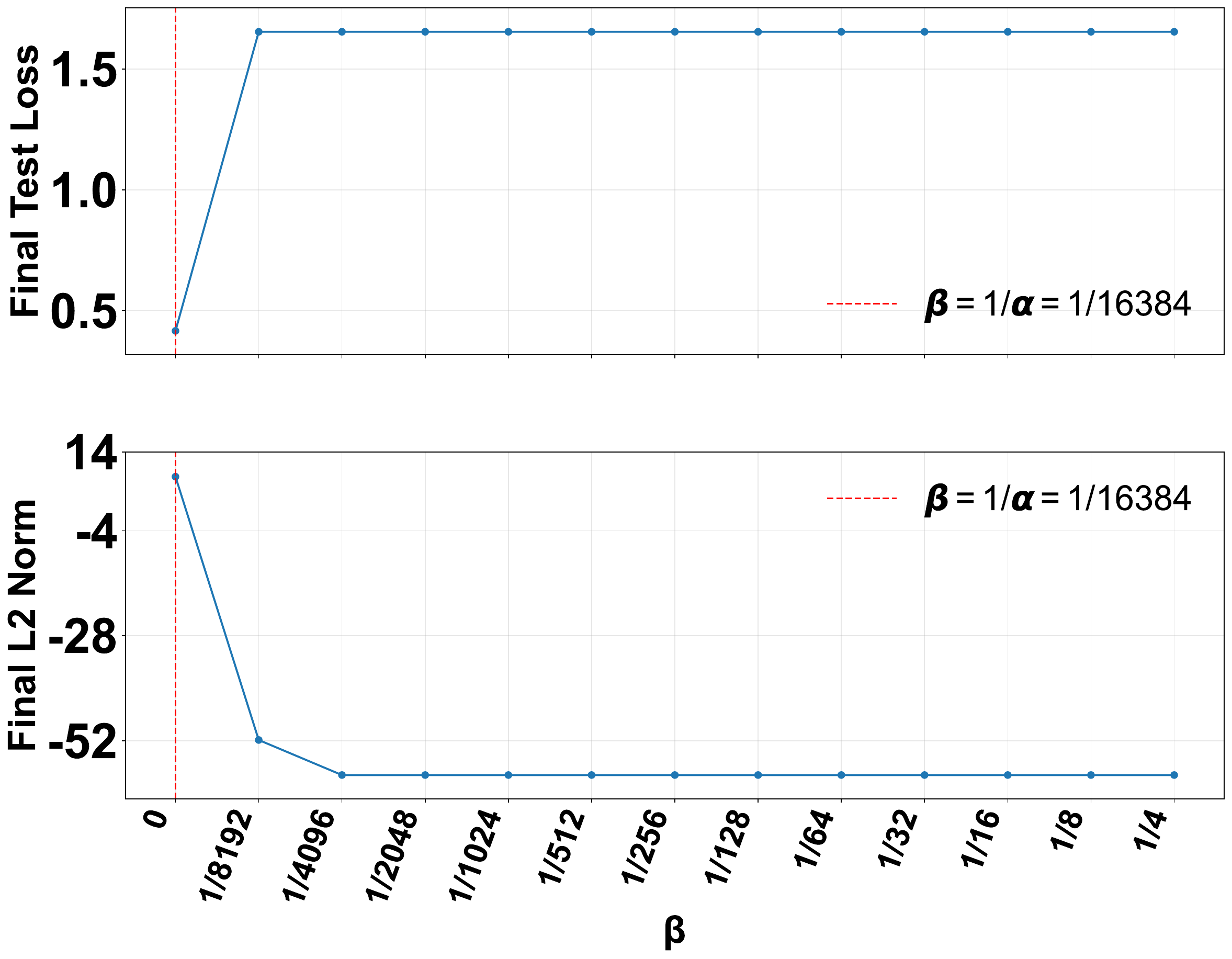} &
\includegraphics[width=0.46\textwidth]{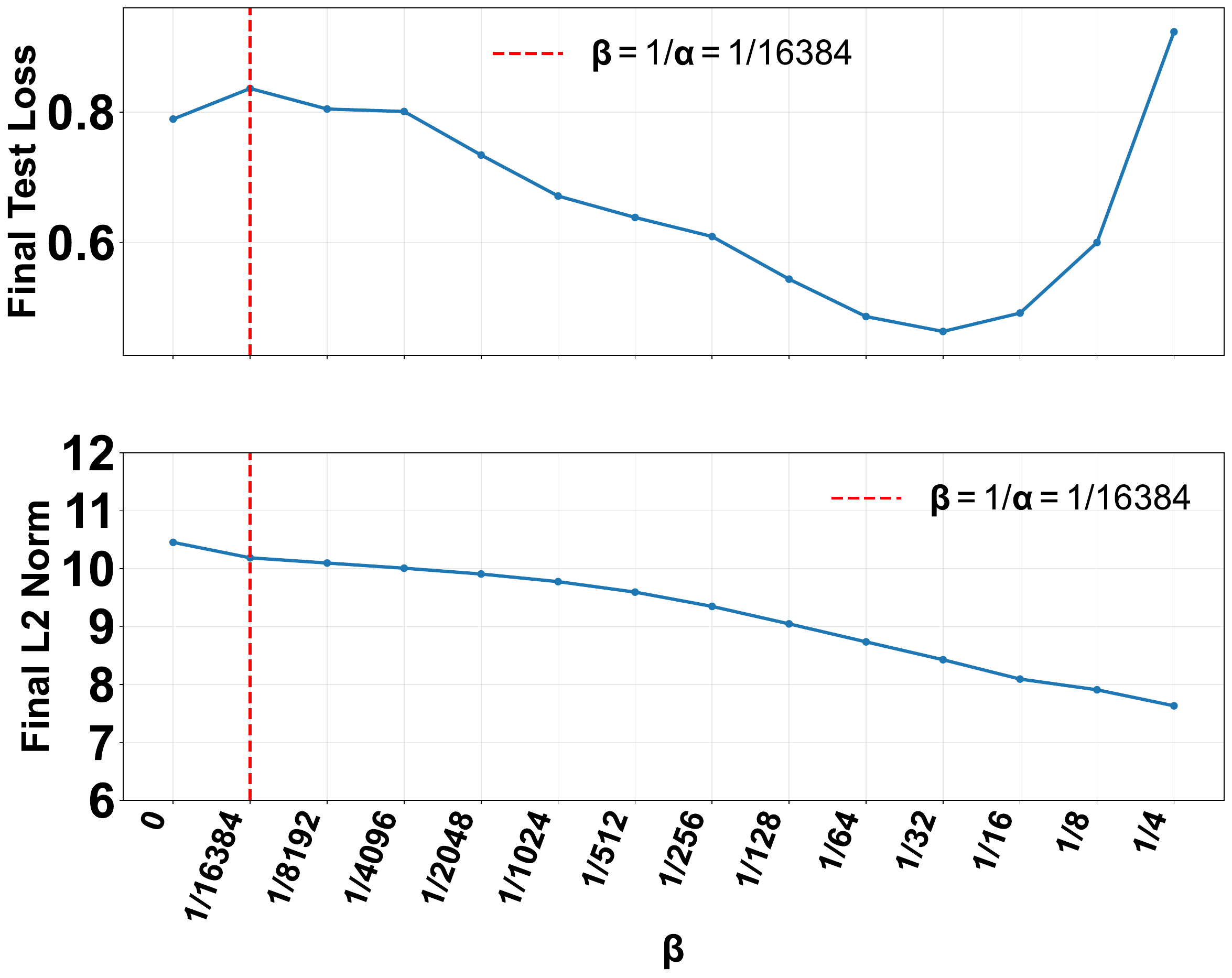}
\end{tabular}

\caption{Two-layer ReLU networks initialized by $\tau_1=\tau_2=0.01$ ($b_1=b_2=0$), scaled by $1/\alpha=1/m~(a=1)$ (mean field regime) and trained on Yacht Hydrodynamics. Left column: SGD. Right column: AdamW. Rows show widths 4096 (240000 epochs) and 16384 (700000 epochs).}
\label{fig:yacht_MF_page2}
\end{figure*}

\begin{figure*}[p]
\centering
\setlength{\tabcolsep}{2pt}
\renewcommand{\arraystretch}{0.4}

\begin{tabular}{cc}
\includegraphics[width=0.46\textwidth]{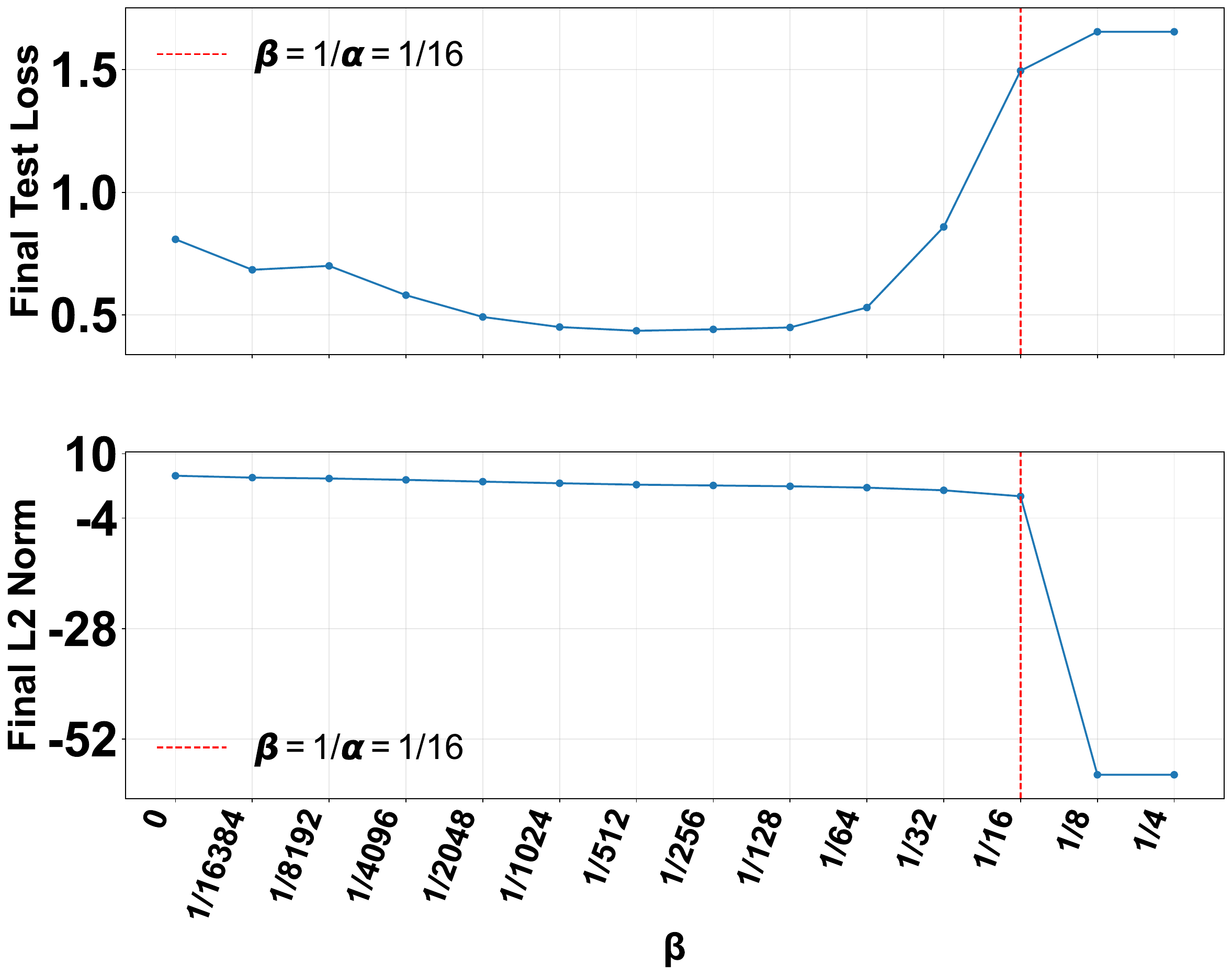} &
\includegraphics[width=0.46\textwidth]{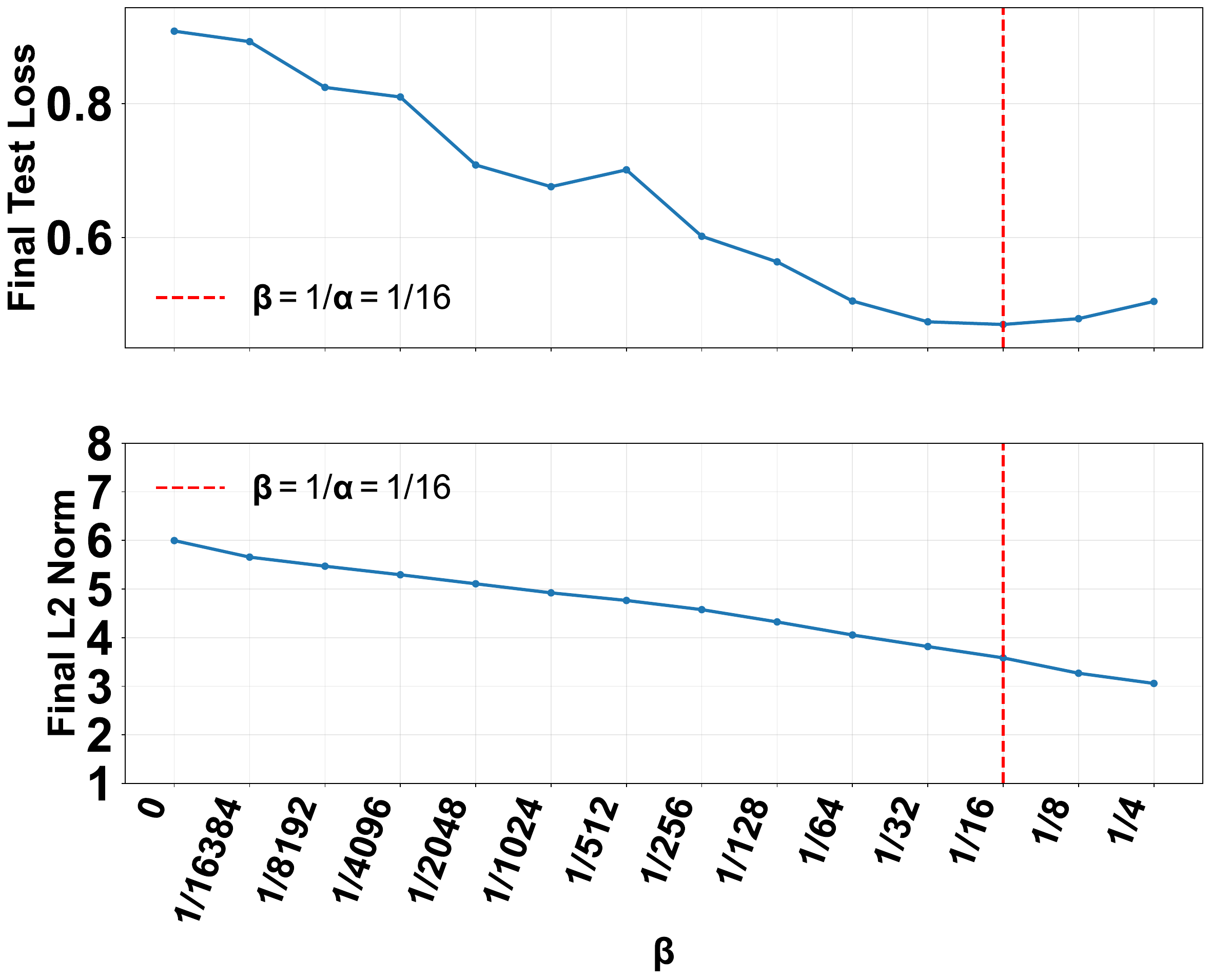} \\[-0.3em]

\includegraphics[width=0.46\textwidth]{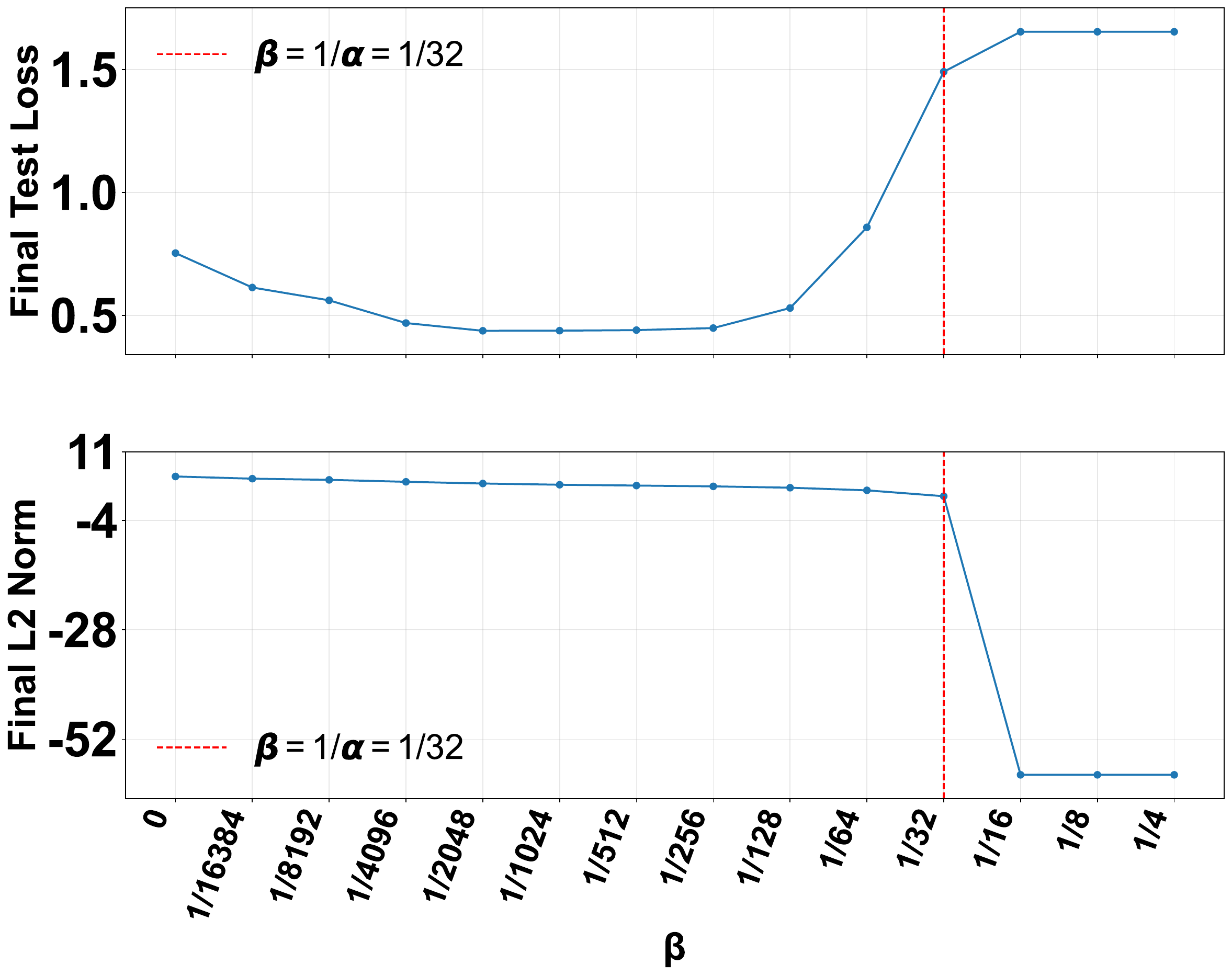} &
\includegraphics[width=0.46\textwidth]{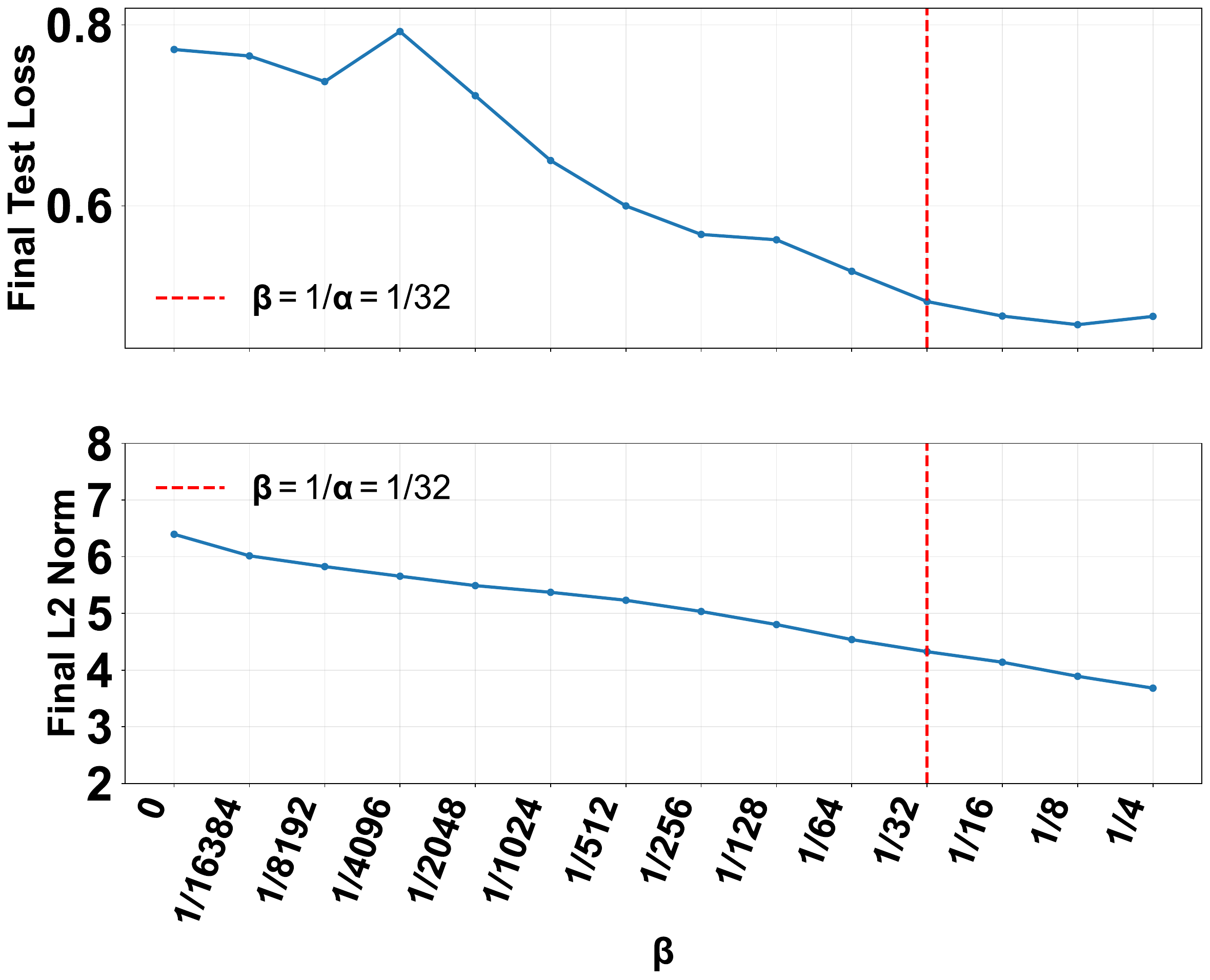}
\end{tabular}

\caption{Two-layer ReLU networks initialized by $\tau_1=\tau_2=0.01/\sqrt{m}$ ($b_1=b_2=0.5$), scaled by $1/\alpha=1/\sqrt{m}~(a=0.5)$ and trained for 40000 epochs on Yacht Hydrodynamics. Left column: SGD. Right column: AdamW. Rows show widths 256 and 1024.}
\label{fig:yacht_sv1_NTK}
\end{figure*}

\begin{figure*}[p]
\centering
\setlength{\tabcolsep}{2pt}
\renewcommand{\arraystretch}{0.4}

\begin{tabular}{cc}
\includegraphics[width=0.46\textwidth]{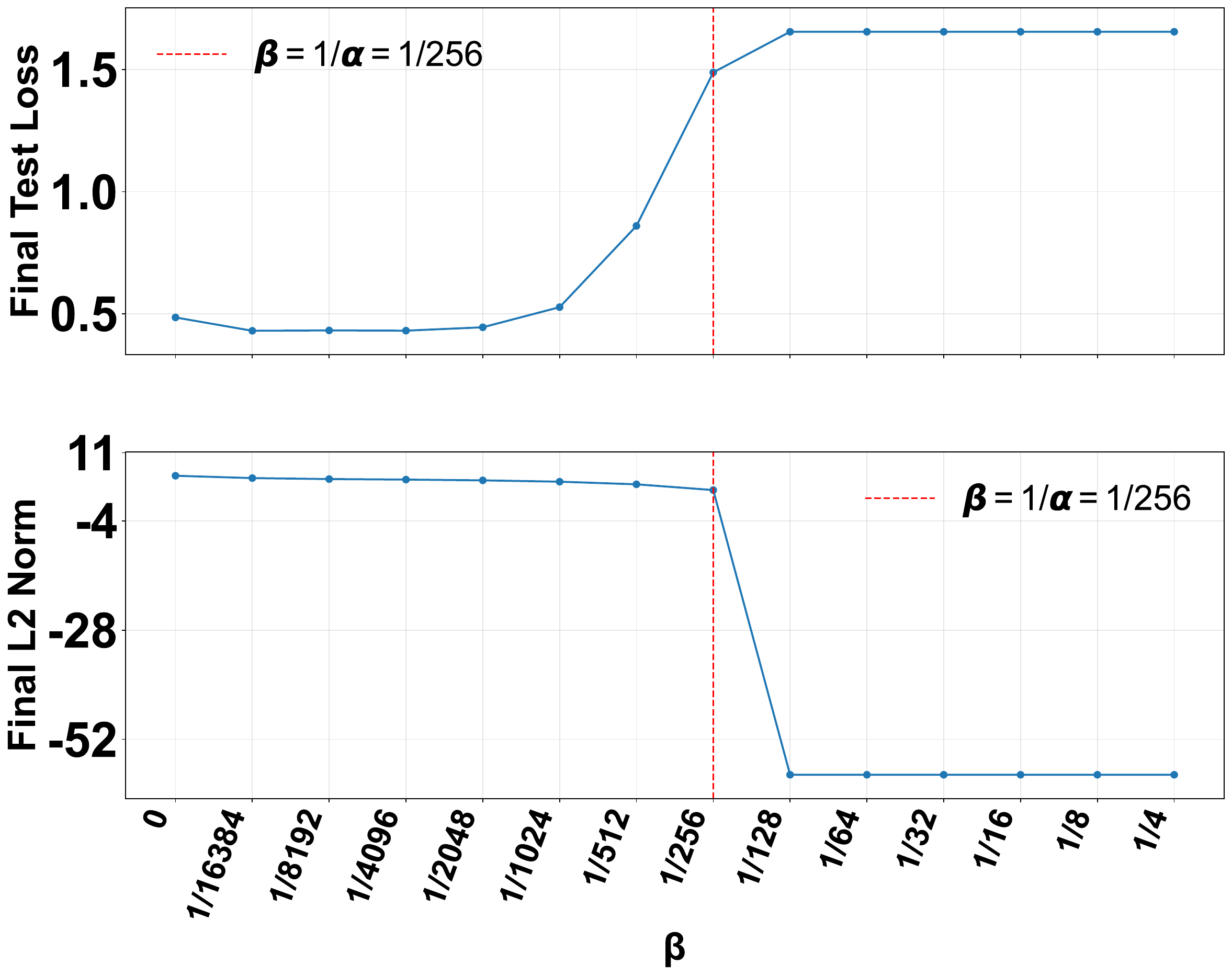} &
\includegraphics[width=0.46\textwidth]{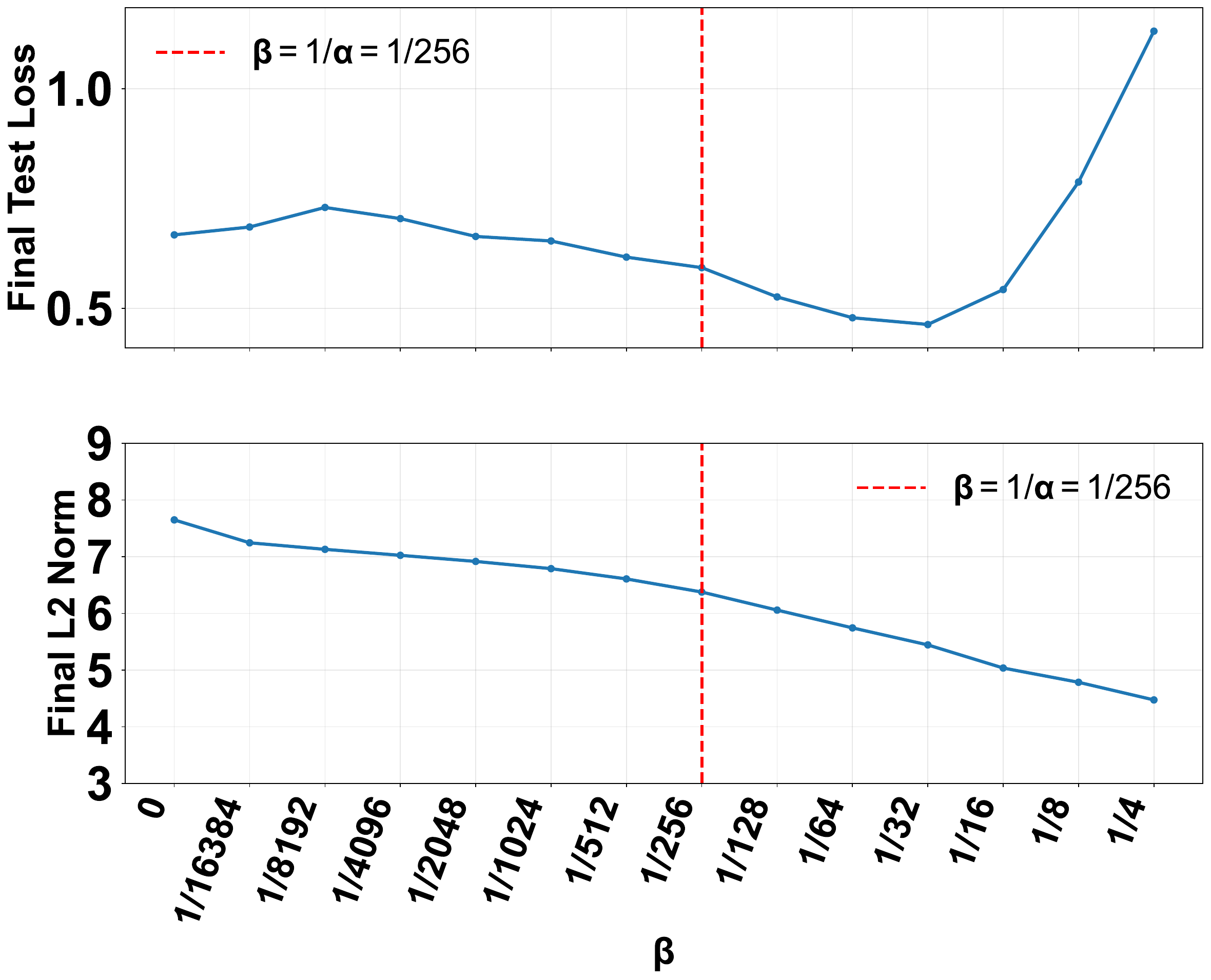} \\[-0.3em]

\includegraphics[width=0.46\textwidth]{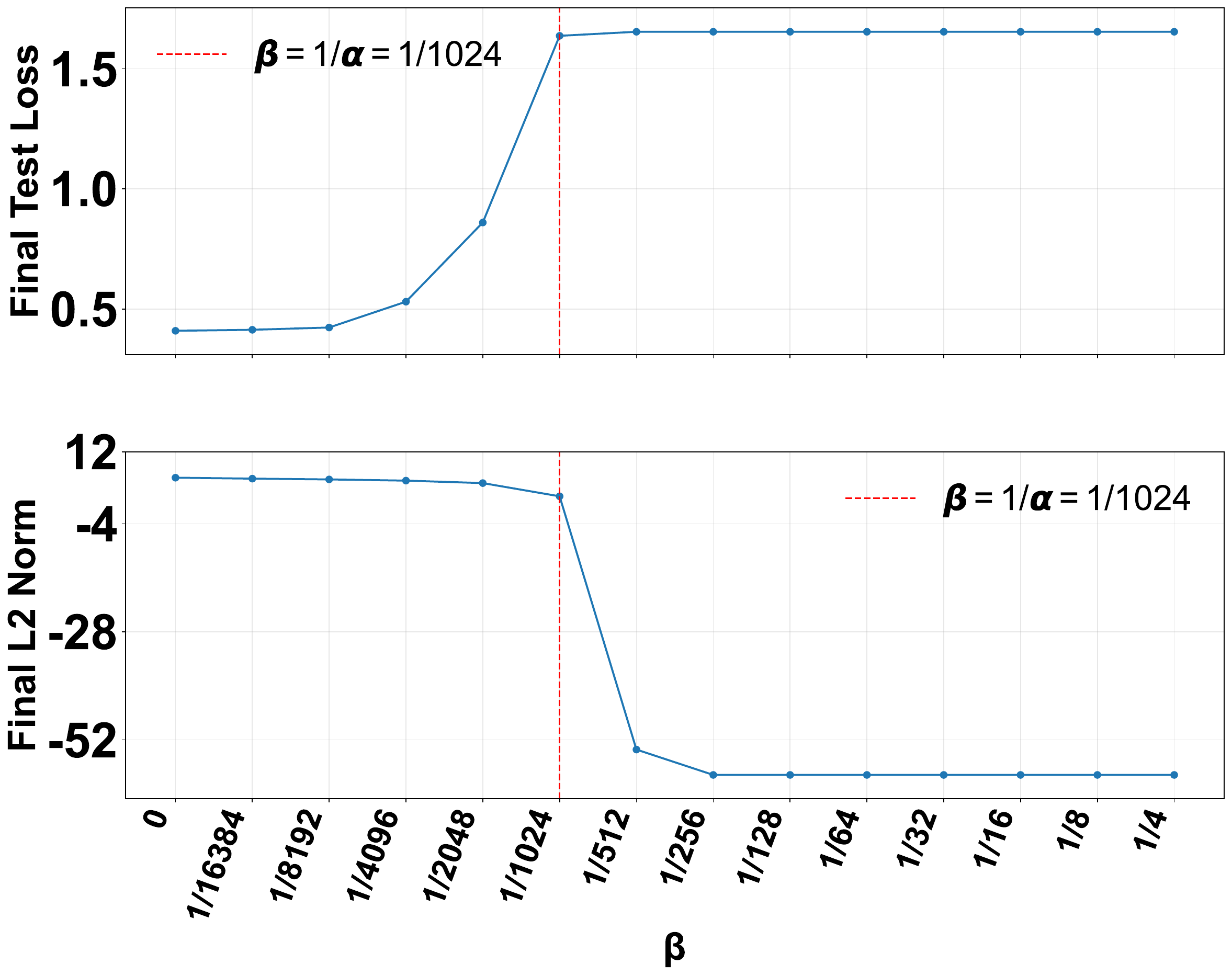} &
\includegraphics[width=0.46\textwidth]{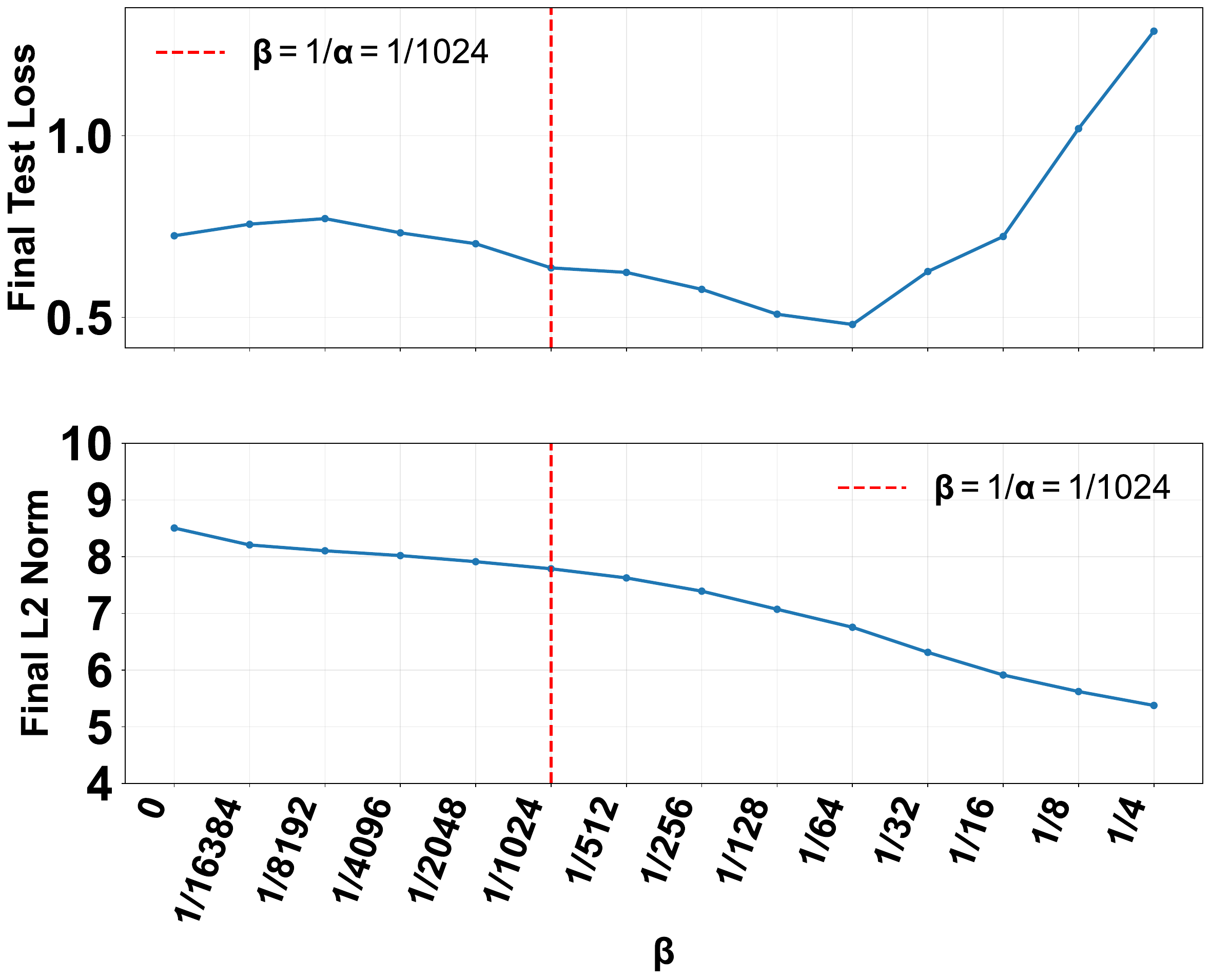}
\end{tabular}

\caption{Two-layer ReLU networks initialized by $\tau_1=\tau_2=0.01/\sqrt{m}$ ($b_1=b_2=0.5$), scaled by $1/\alpha=1/m~(a=1)$ and trained for 40000 epochs on Yacht Hydrodynamics. Left column: SGD. Right column: AdamW. Rows show widths 256 and 1024.}
\label{fig:yacht_sv1_MF}
\end{figure*}

\subsection{MNIST}
\begin{itemize}
    \item Figures~\ref{fig:MNIST_NTK_page1} and Figure \ref{fig:MNIST_NTK_page2} show results for networks with scaling and initialization in the neural tangent kernel setting \citep{jacot_neural_2020, arora2019exact, chizat_lazy_2020, luo_phase_2020} ($a=0.5$, $b_1=b_2=0$).
    \item Figures~\ref{fig:MNIST_MF_page1} and Figure \ref{fig:MNIST_MF_page2} show results for networks with scaling and initialization in the mean field setting \citep{mei2018mean, ChizatB18, luo_phase_2020, sirignano2020mean} ($a=0.5$, $b_1=b_2=0$).
\end{itemize}

\begin{figure*}[p]
\centering
\setlength{\tabcolsep}{2pt}
\renewcommand{\arraystretch}{0.4}

\begin{tabular}{cc}
\includegraphics[width=0.46\textwidth]{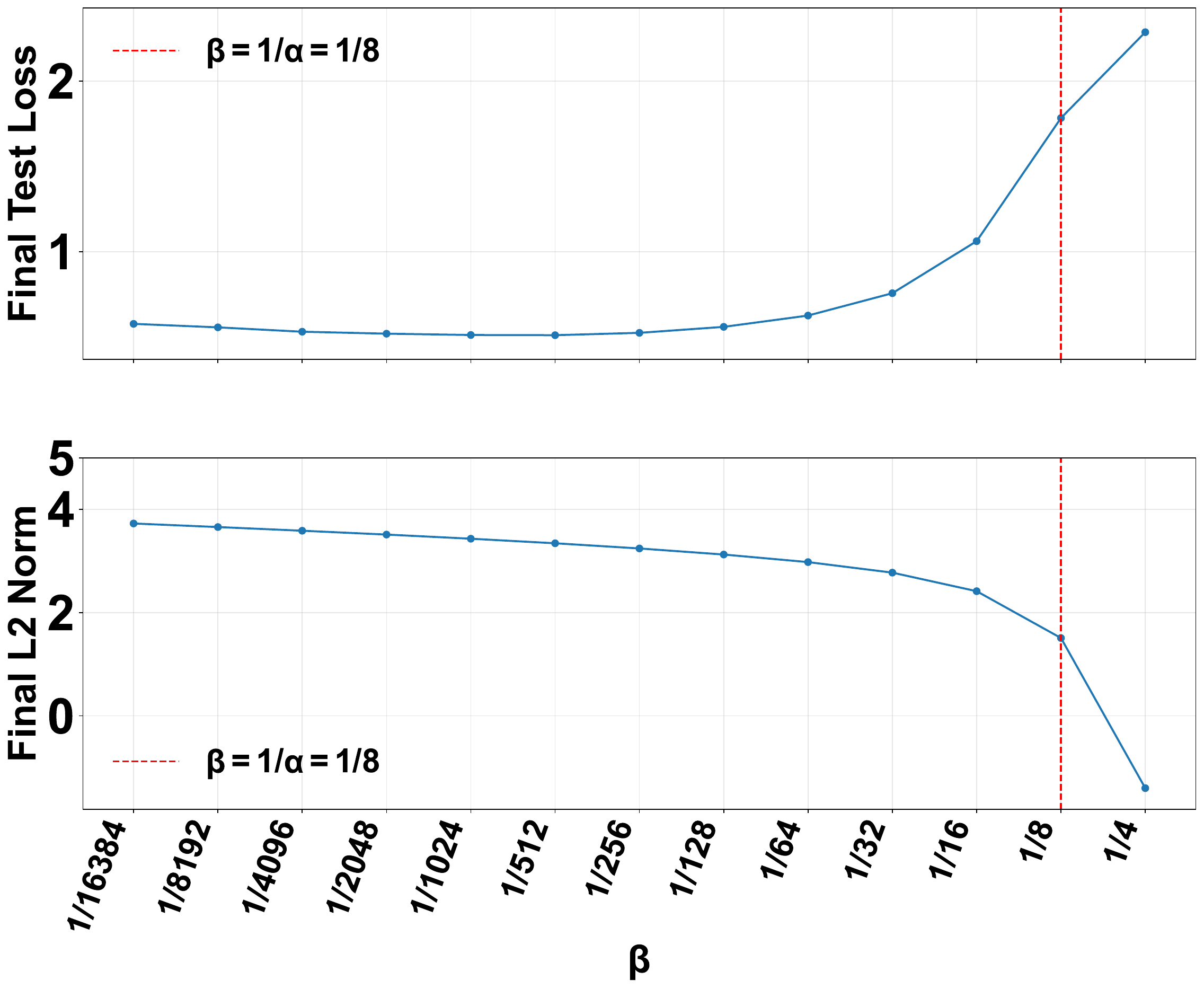} &
\includegraphics[width=0.46\textwidth]{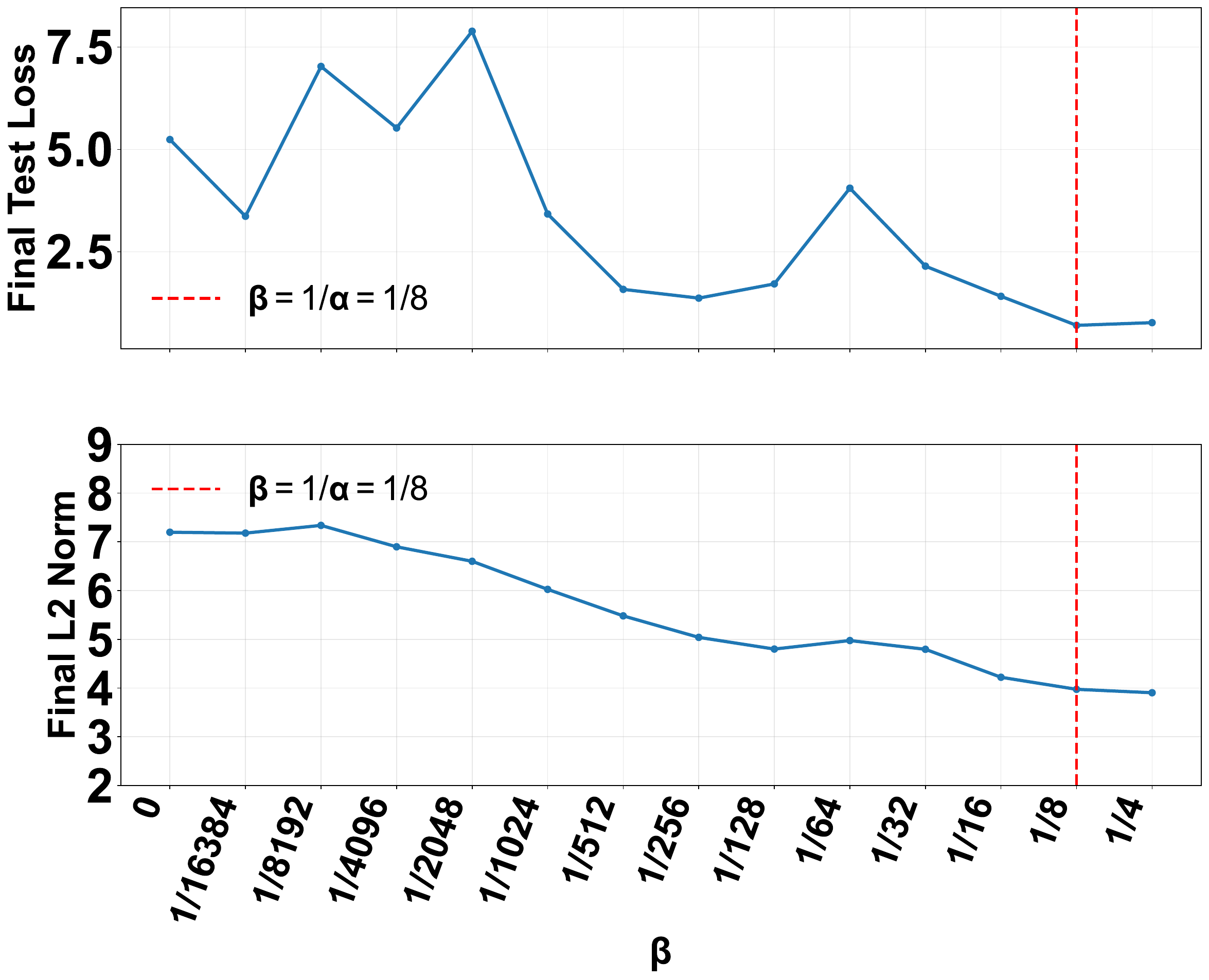} \\[-0.3em]

\includegraphics[width=0.46\textwidth]{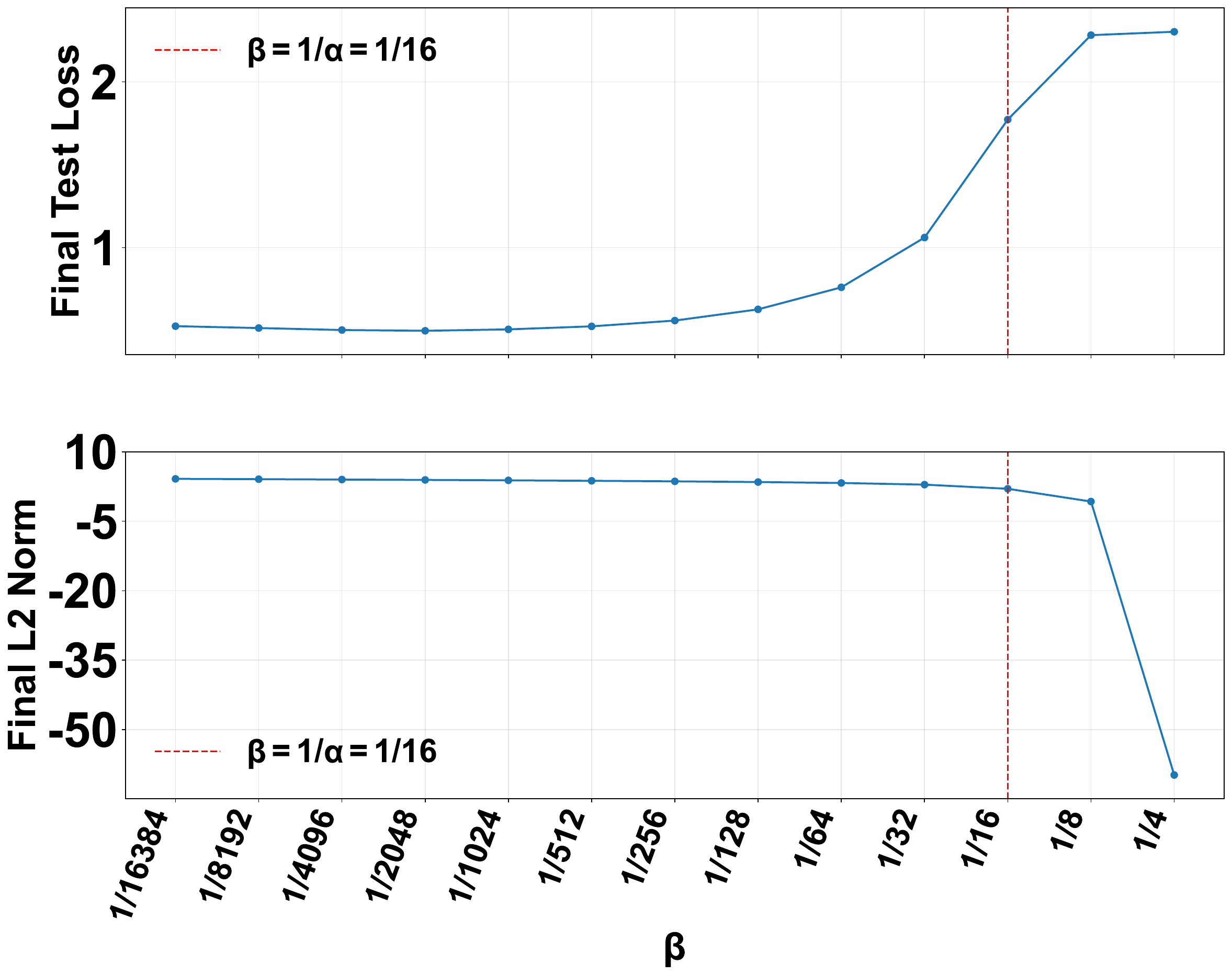} &
\includegraphics[width=0.46\textwidth]{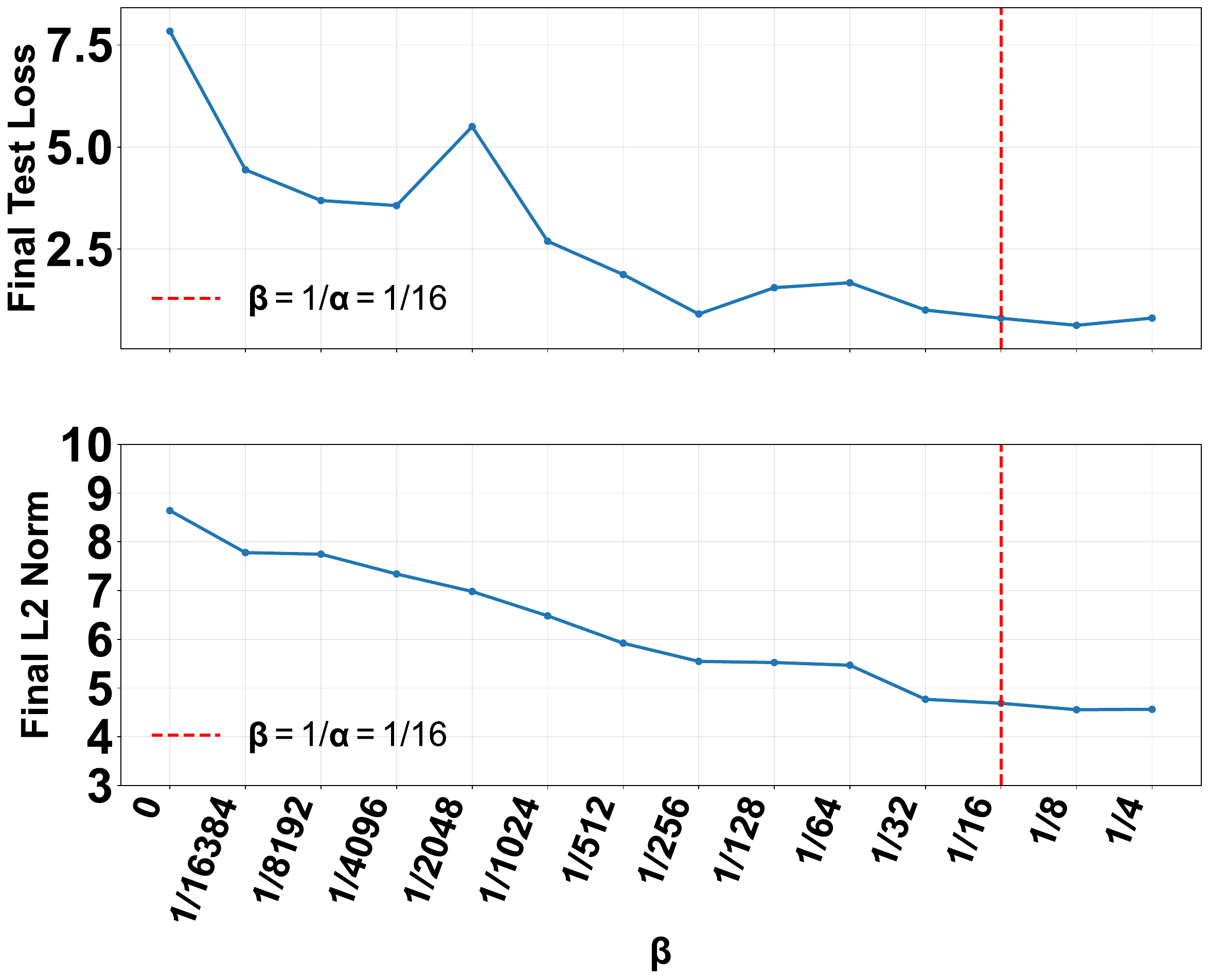} \\[-0.3em]

\includegraphics[width=0.46\textwidth]{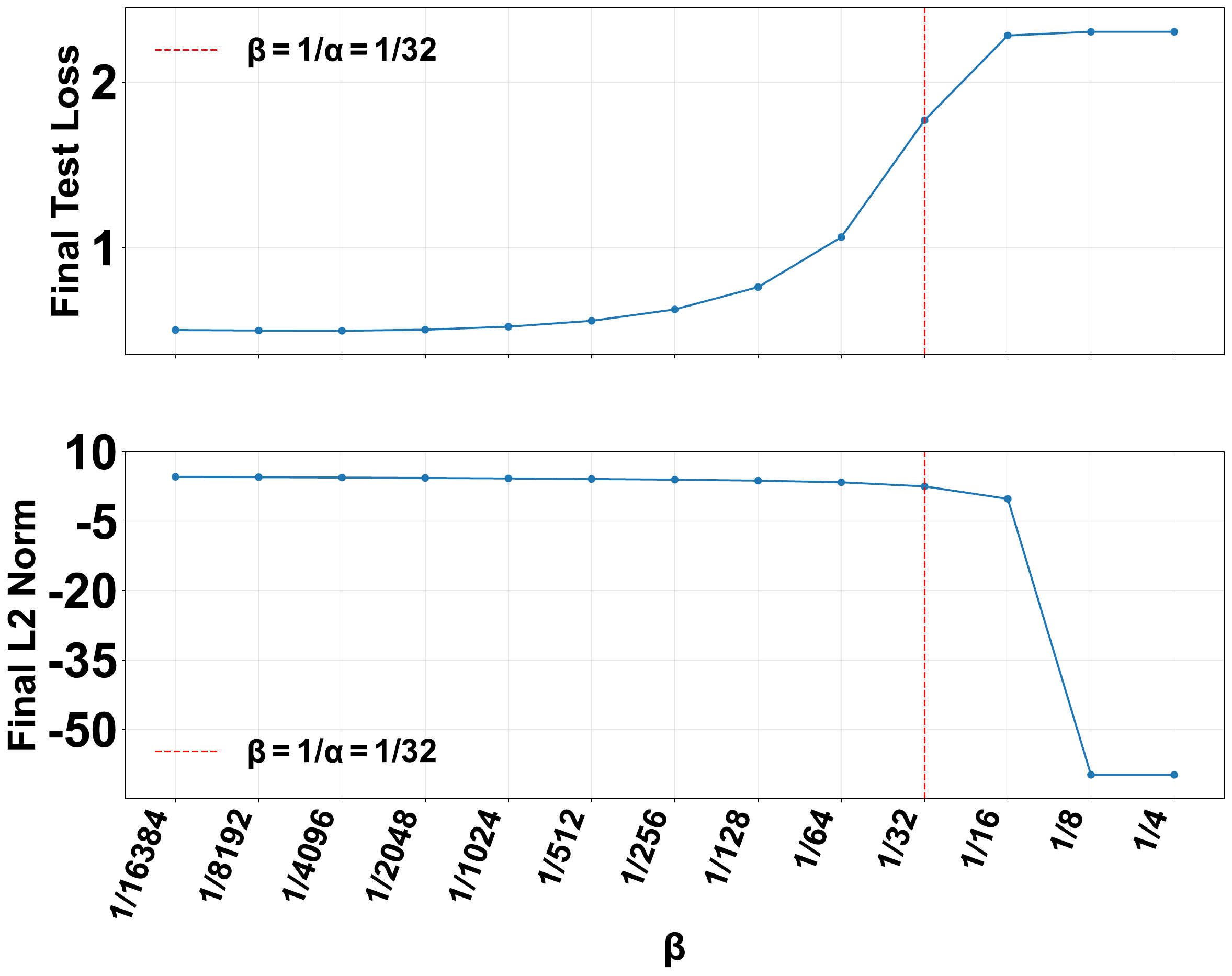} &
\includegraphics[width=0.46\textwidth]{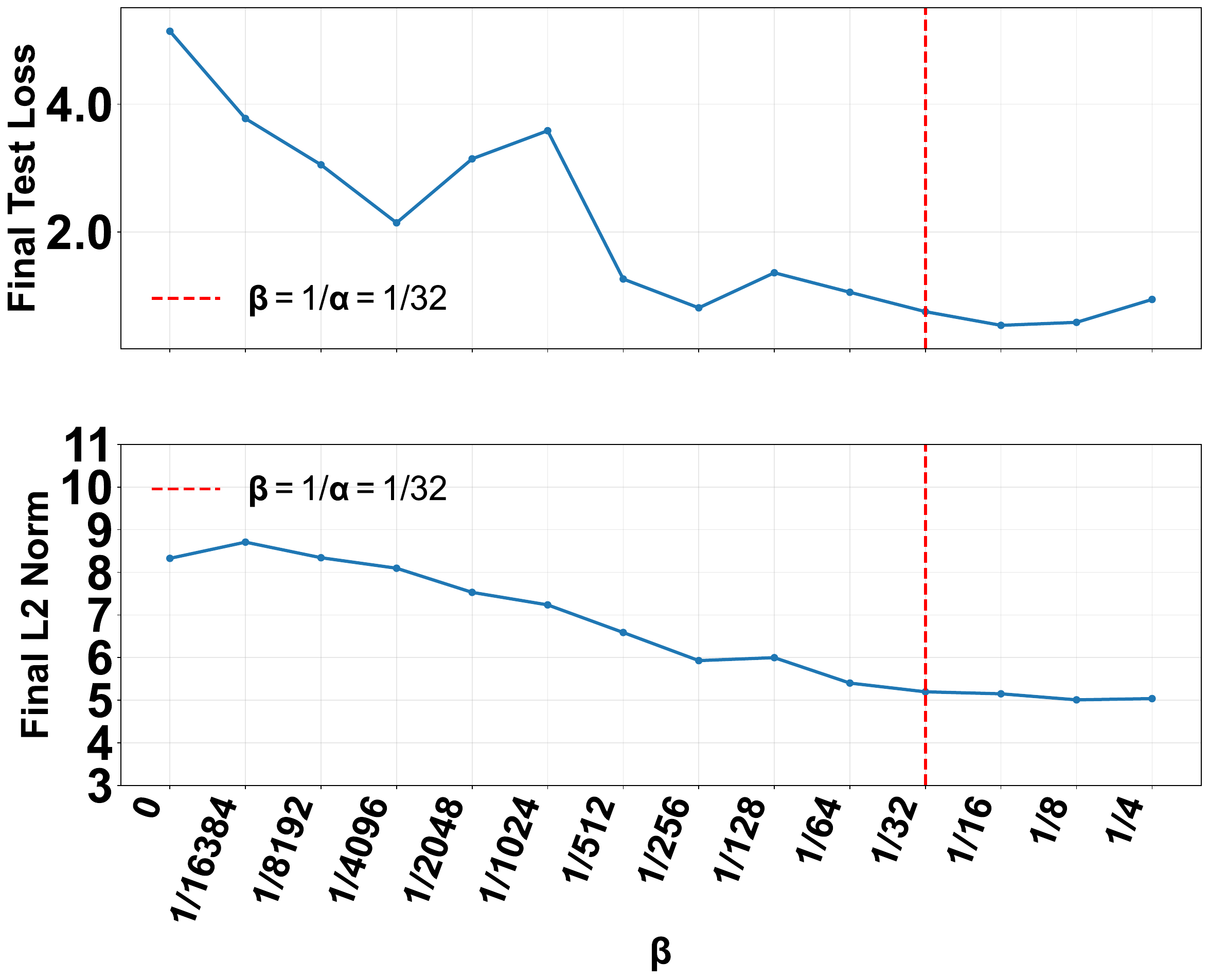}
\end{tabular}

\caption{Two-layer ReLU networks initialized by $\tau_1=\tau_2=0.01$ ($b_1=b_2=0$), scaled by $1/\alpha=1/\sqrt{m}~(a=0.5)$ (NTK regime) and trained for 100000 epochs on MNIST. Left column: SGD. Right column: AdamW. Rows show widths 64, 256, and 1024.}
\label{fig:MNIST_NTK_page1}
\end{figure*}

\begin{figure*}[p]
\centering
\setlength{\tabcolsep}{2pt}
\renewcommand{\arraystretch}{0.4}

\begin{tabular}{cc}
\includegraphics[width=0.46\textwidth]{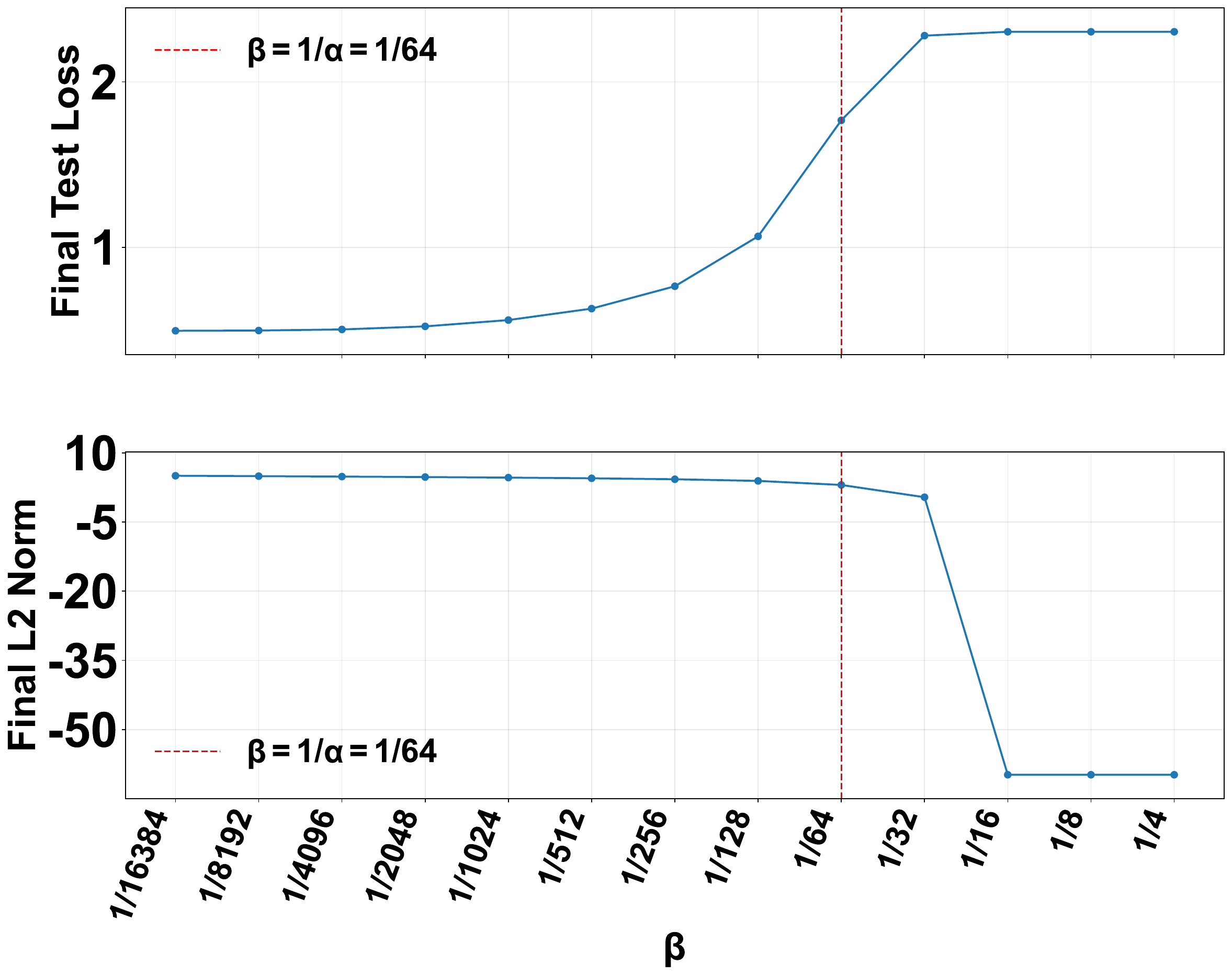} &
\includegraphics[width=0.46\textwidth]{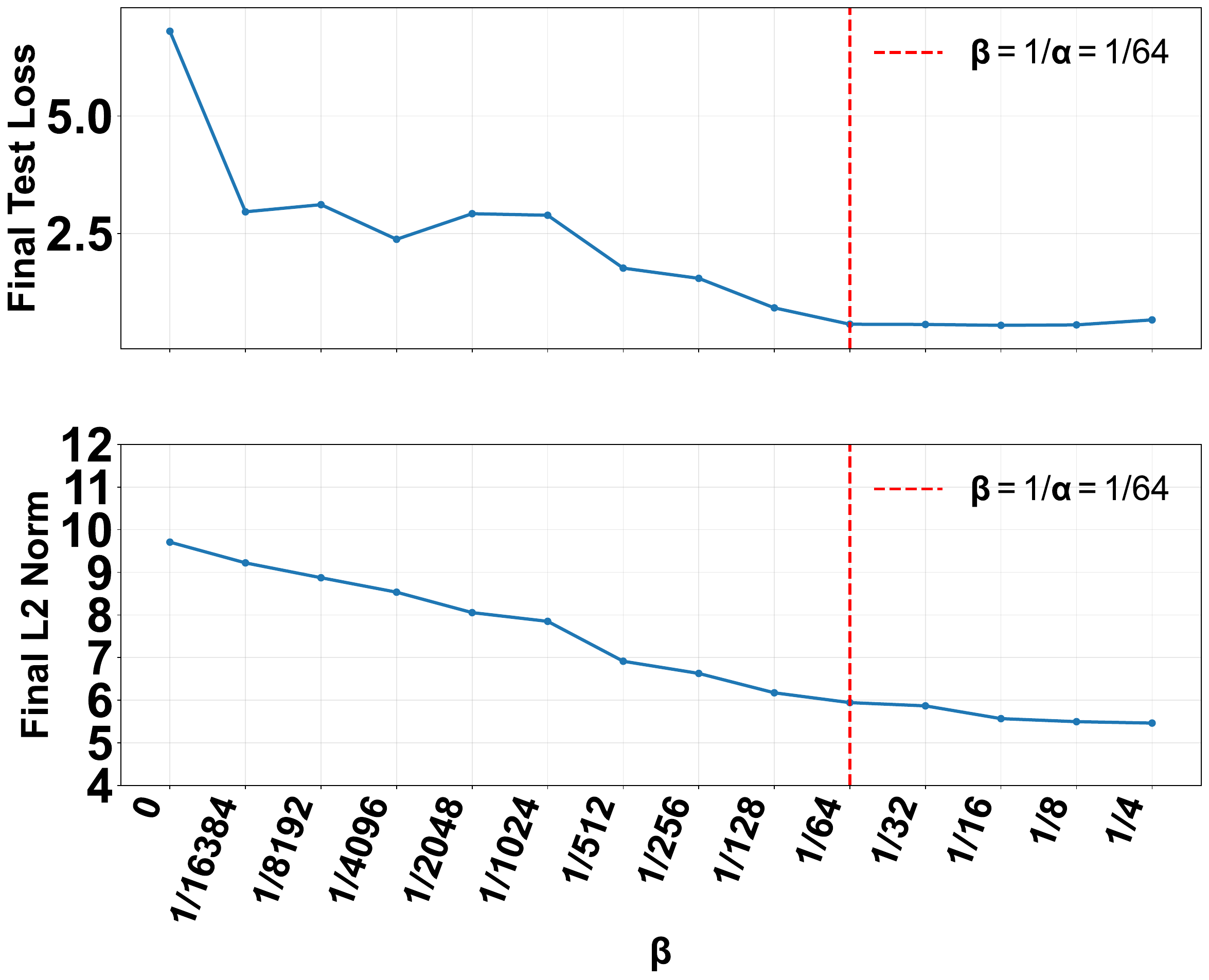} \\[-0.3em]

\includegraphics[width=0.46\textwidth]{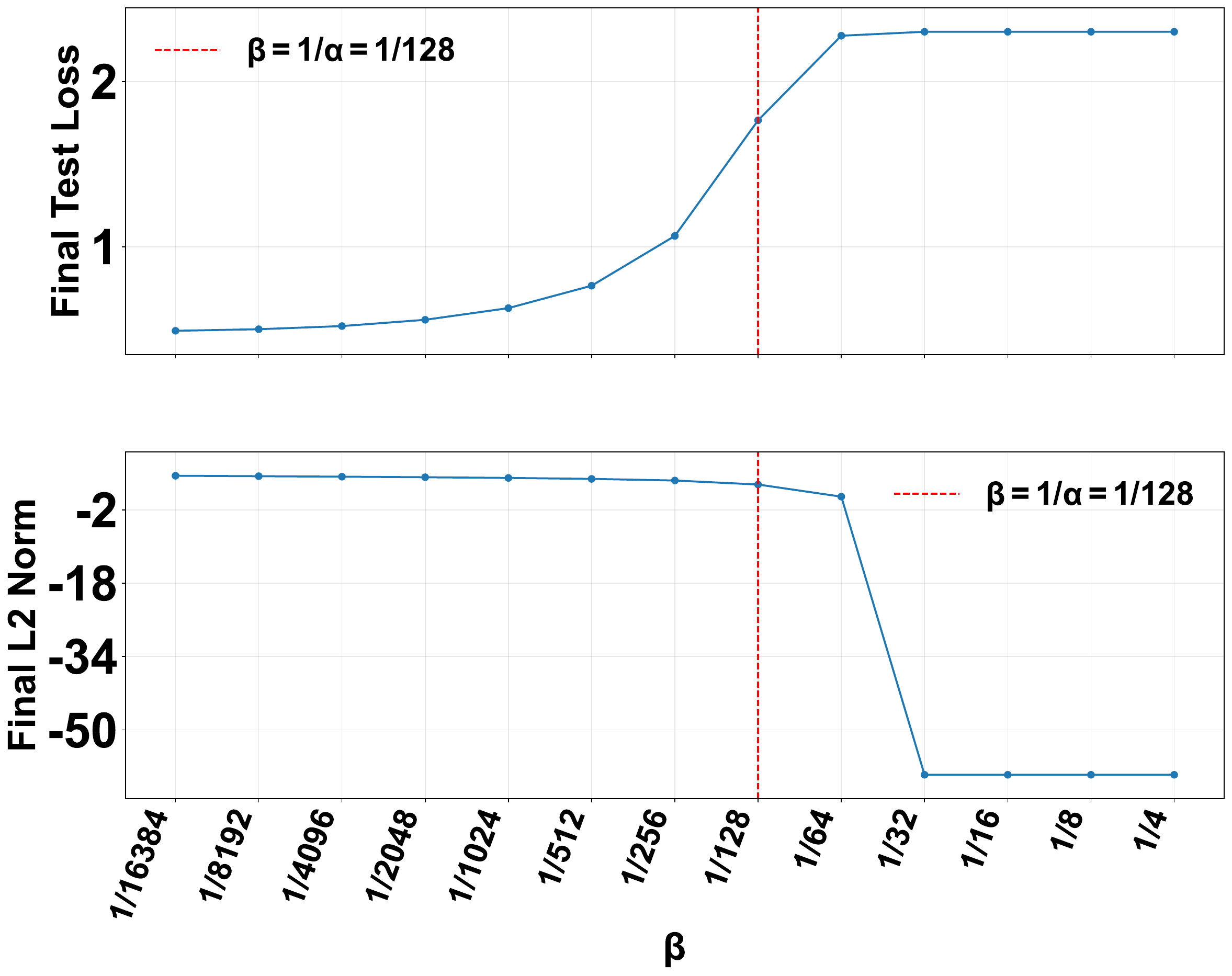} &
\includegraphics[width=0.46\textwidth]{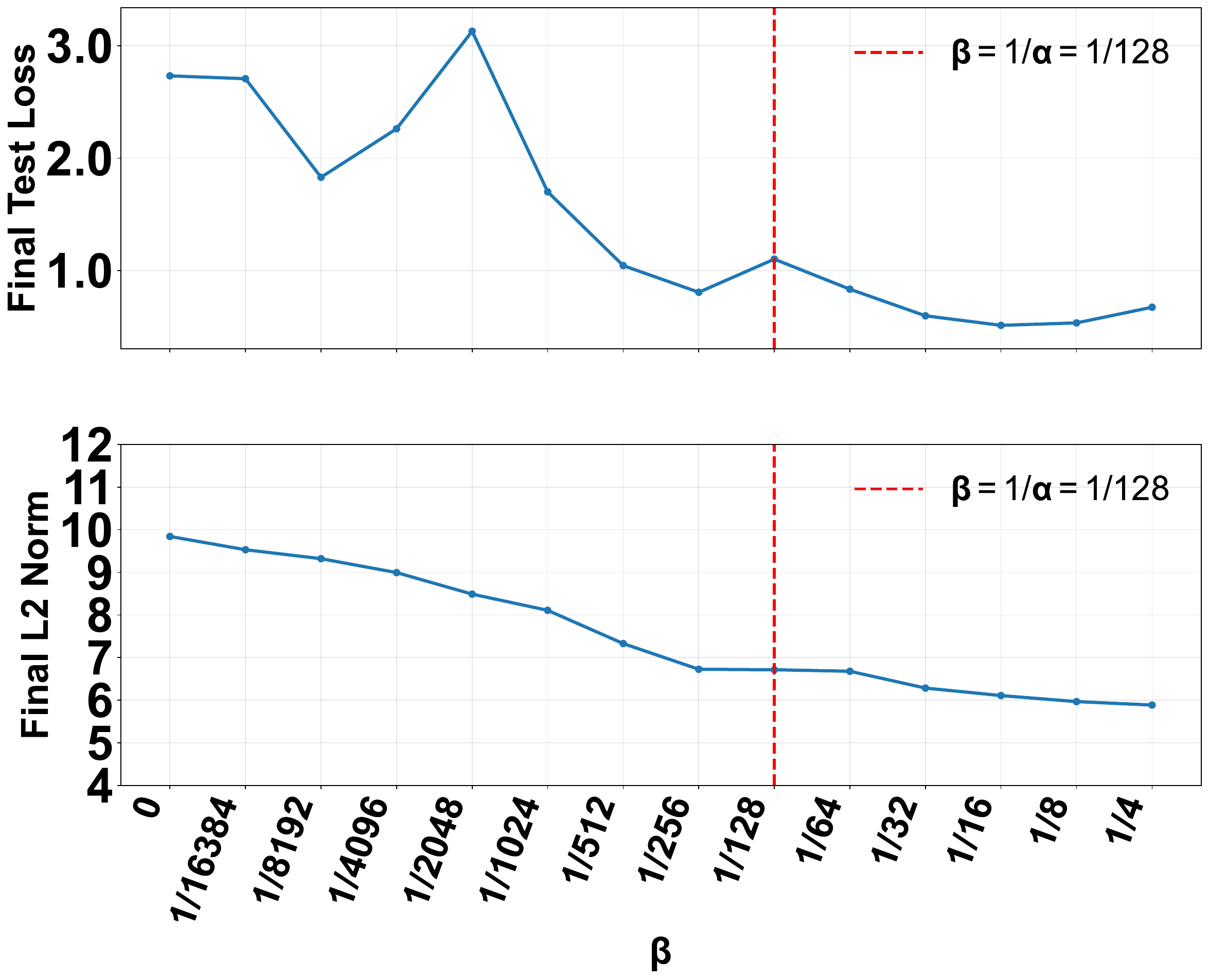}
\end{tabular}

\caption{Two-layer ReLU networks initialized by $\tau_1=\tau_2=0.01$ ($b_1=b_2=0$), scaled by $1/\alpha=1/\sqrt{m}~(a=0.5)$ (NTK regime) and trained for 100000 epochs on MNIST. Left column: SGD. Right column: AdamW. Rows show widths 4096 and 16384.}
\label{fig:MNIST_NTK_page2}
\end{figure*}

\begin{figure*}[p]
\centering
\setlength{\tabcolsep}{2pt}
\renewcommand{\arraystretch}{0.4}

\begin{tabular}{cc}
\includegraphics[width=0.46\textwidth]{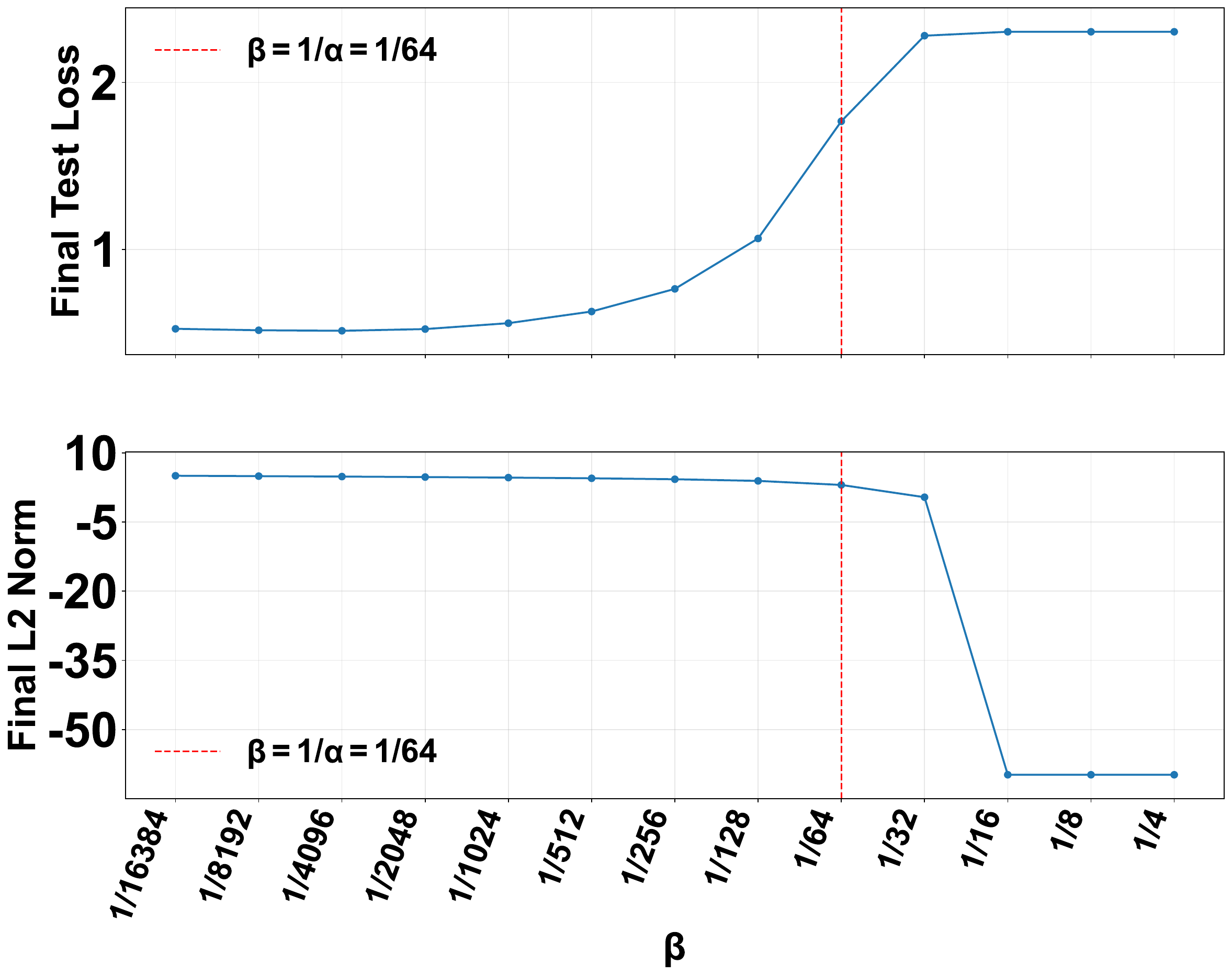} &
\includegraphics[width=0.46\textwidth]{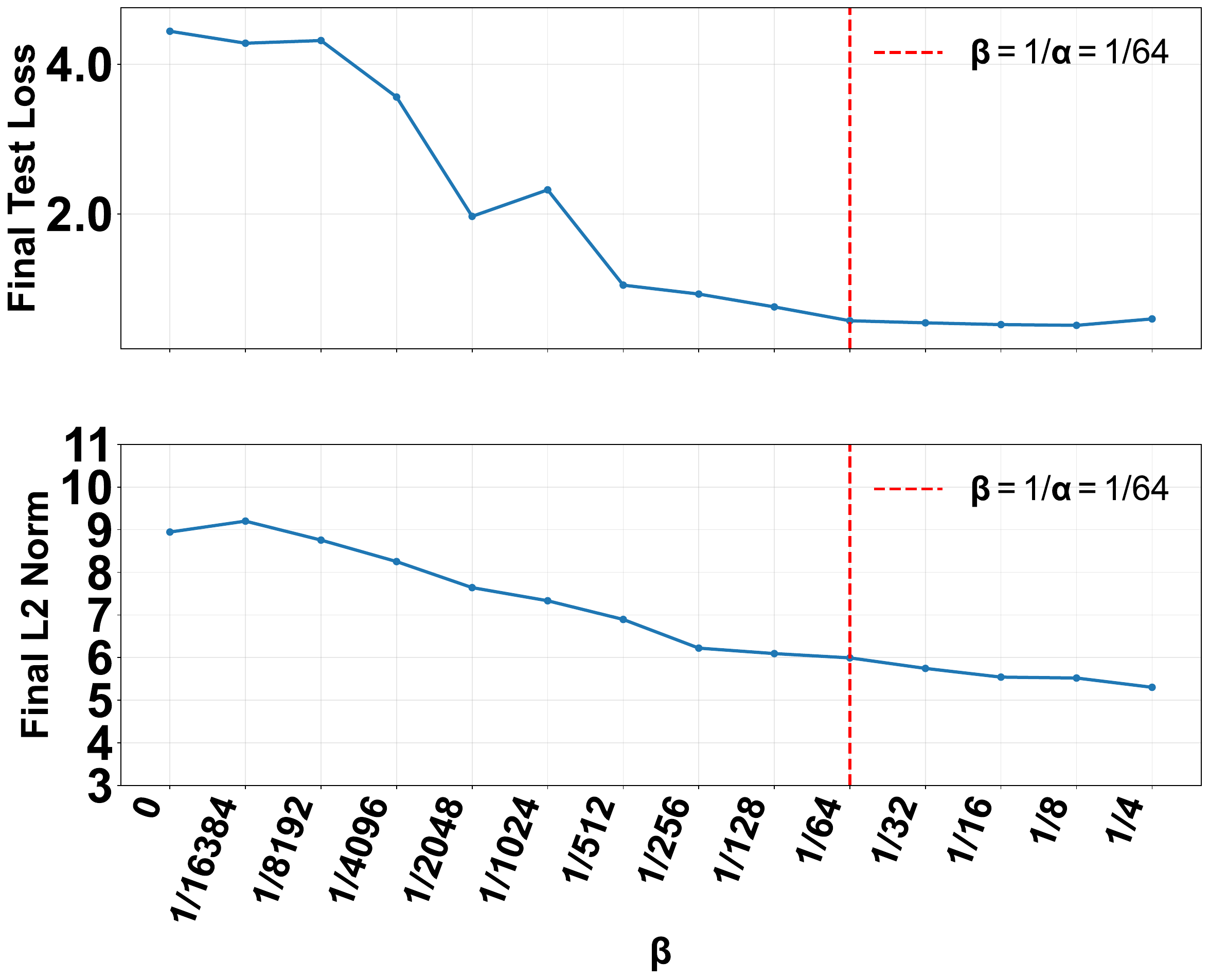} \\[-0.3em]

\includegraphics[width=0.46\textwidth]{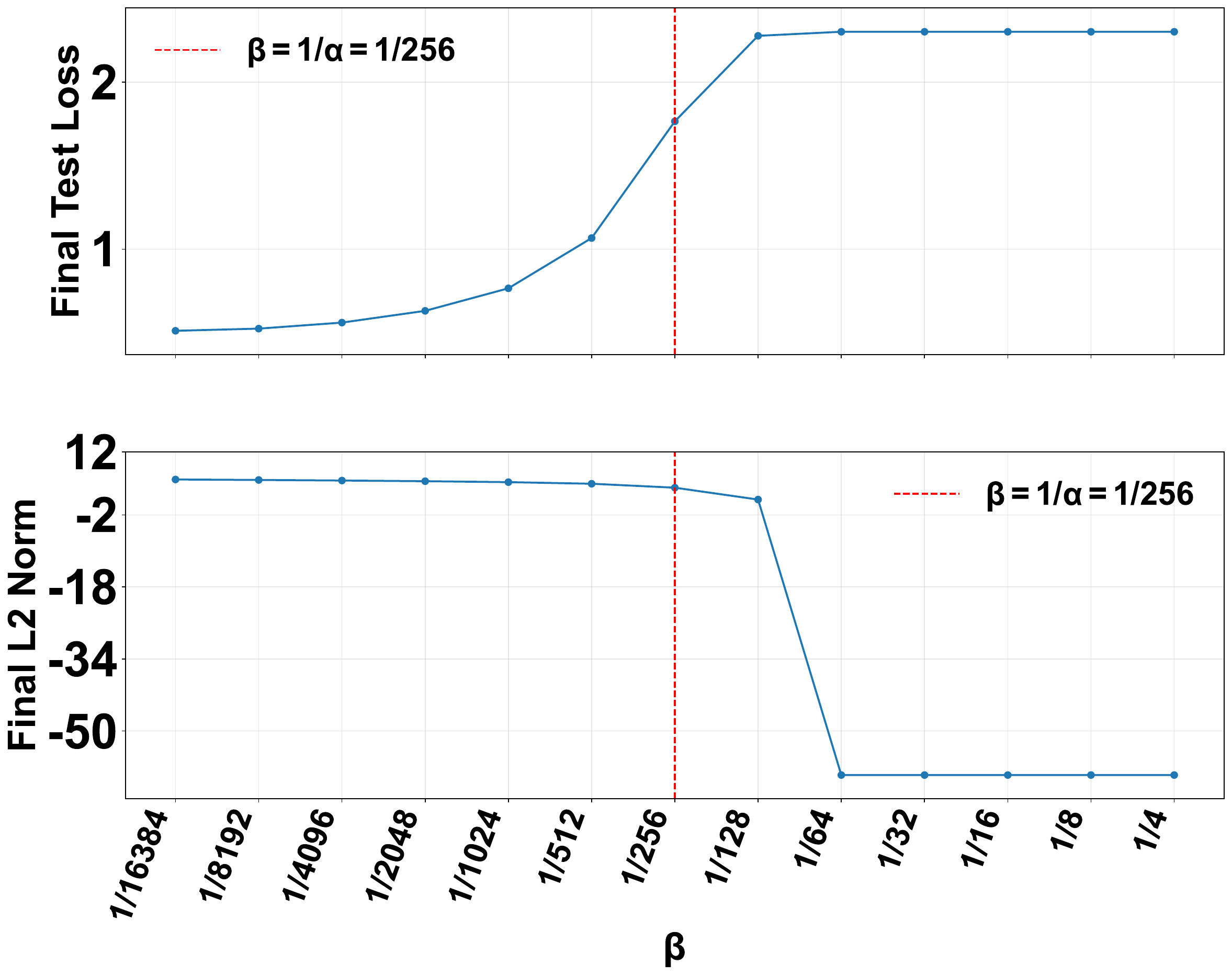} &
\includegraphics[width=0.46\textwidth]{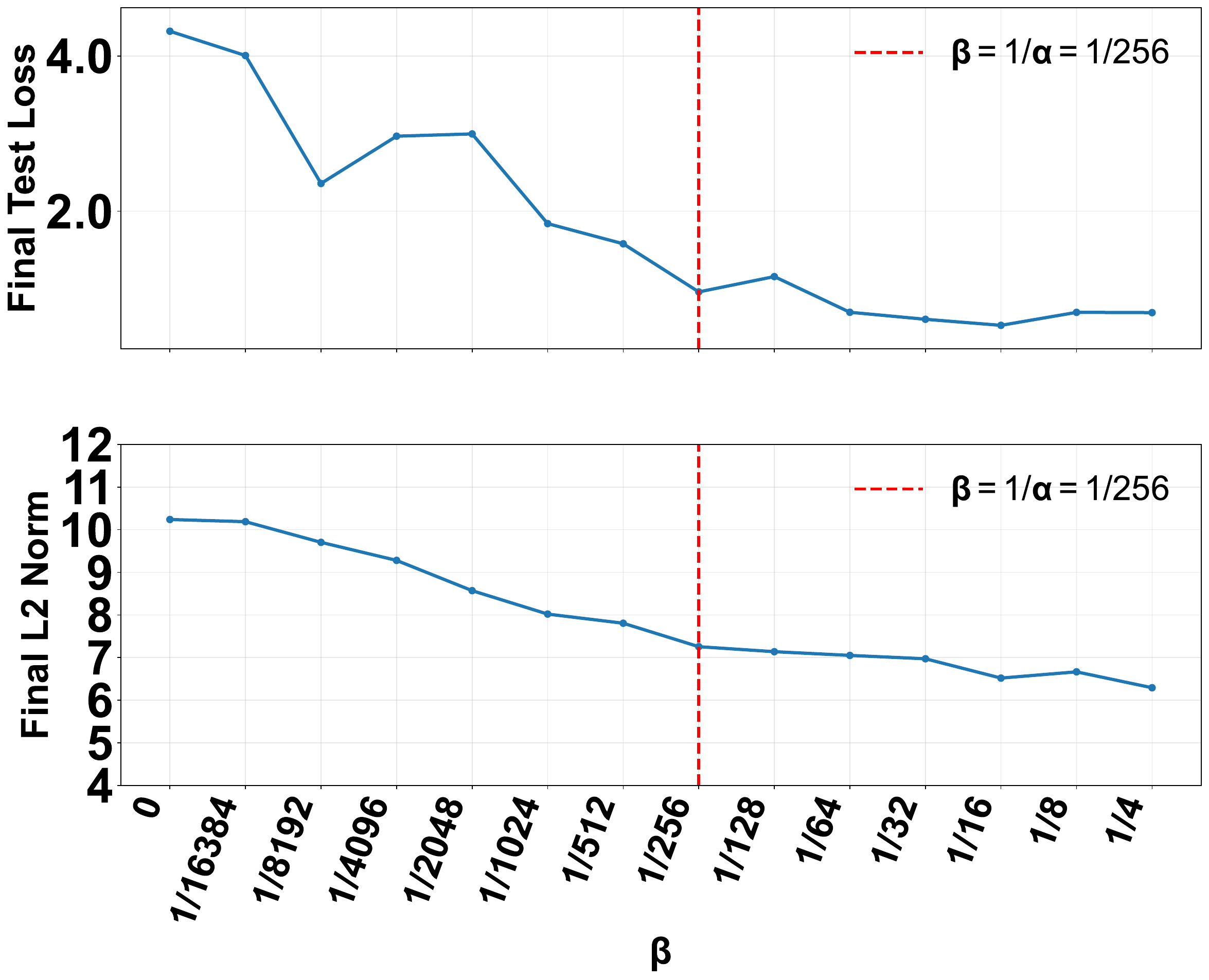} \\[-0.3em]

\includegraphics[width=0.46\textwidth]{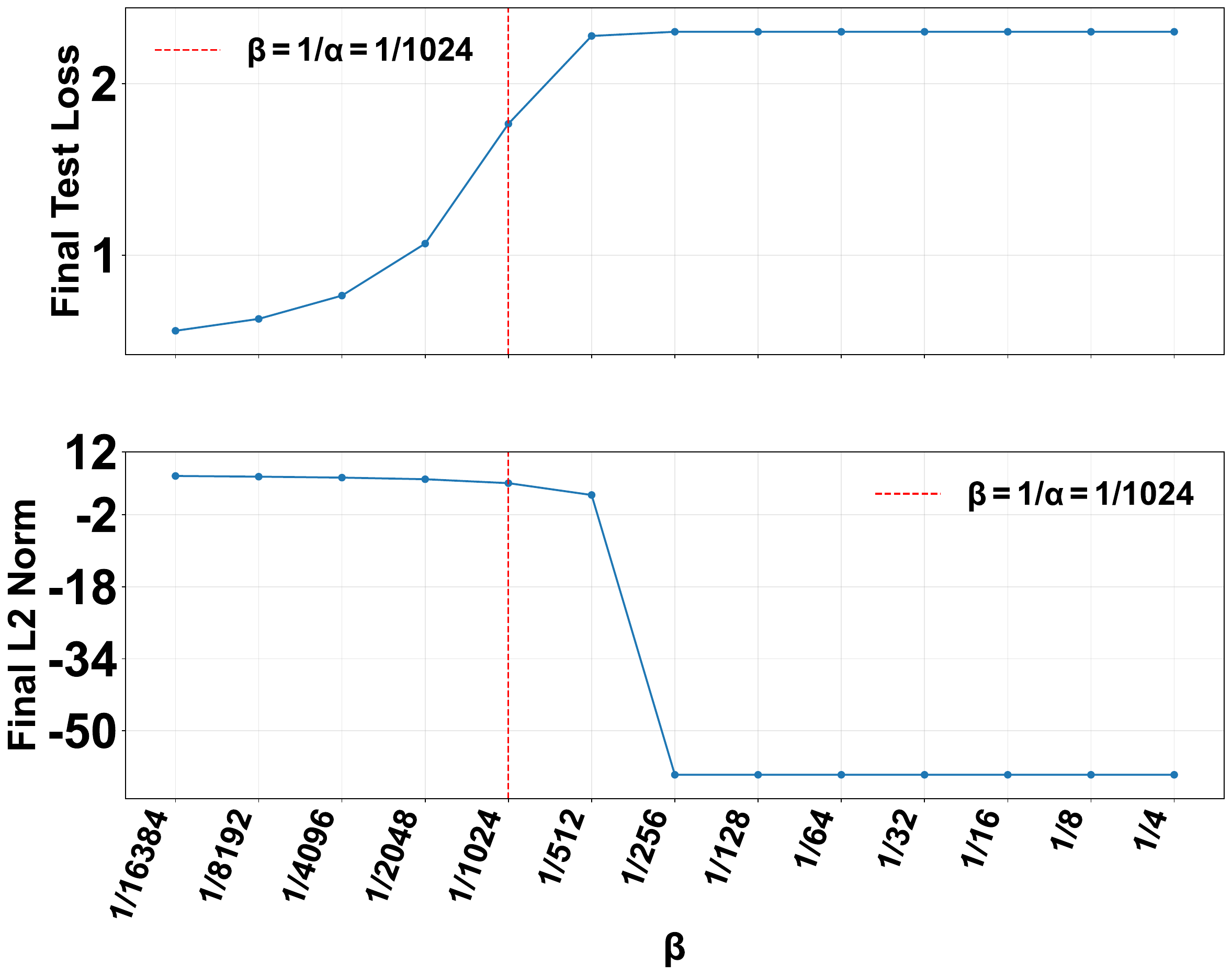} &
\includegraphics[width=0.46\textwidth]{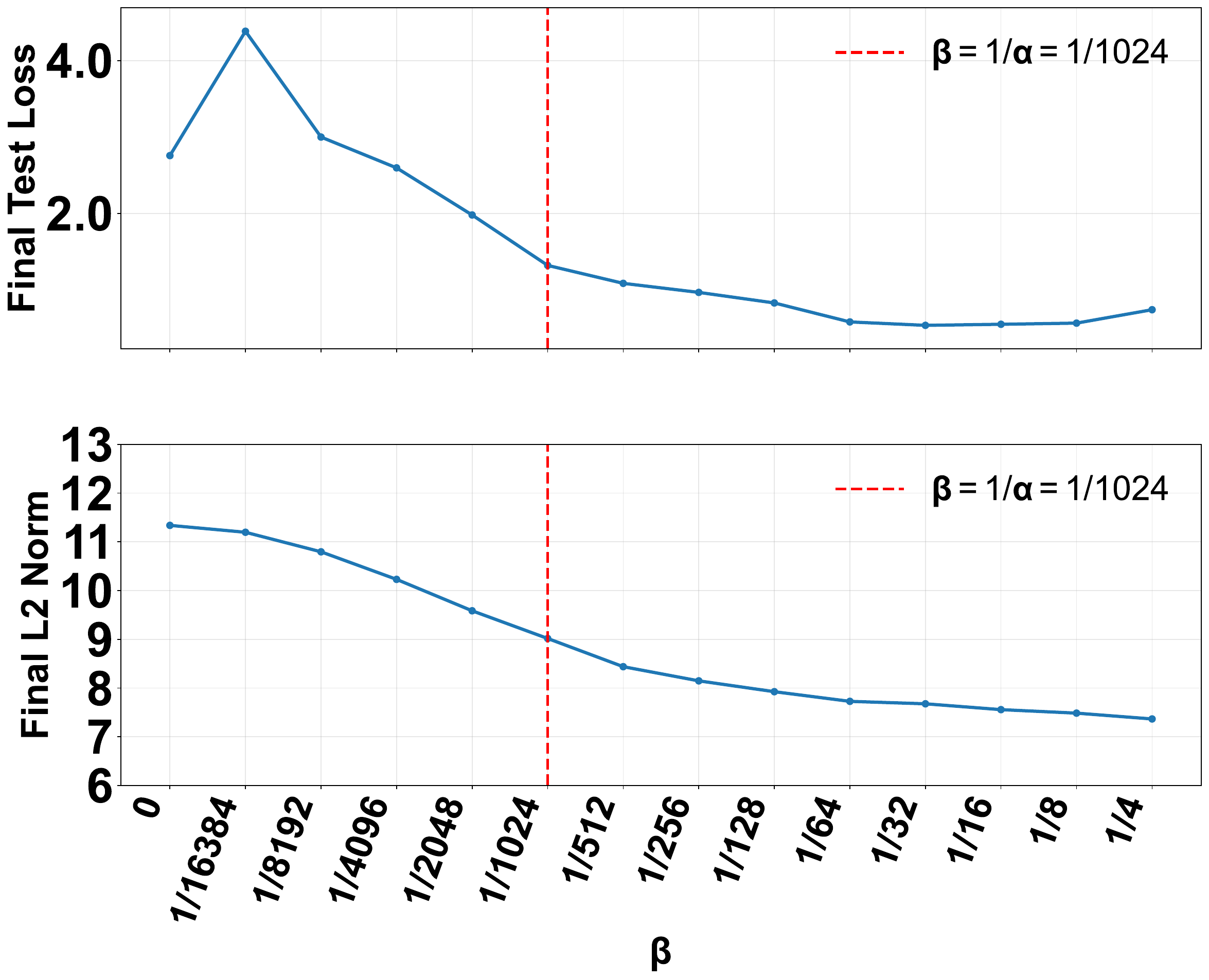}
\end{tabular}

\caption{Two-layer ReLU networks initialized by $\tau_1=\tau_2=0.01$ ($b_1=b_2=0$), scaled by $1/\alpha=1/m~(a=1)$ (mean field regime) and trained for 100000 epochs on MNIST. Left column: SGD. Right column: AdamW. Rows show widths 64, 256, and 1024.}
\label{fig:MNIST_MF_page1}
\end{figure*}

\begin{figure*}[p]
\centering
\setlength{\tabcolsep}{2pt}
\renewcommand{\arraystretch}{0.4}

\begin{tabular}{cc}
\includegraphics[width=0.46\textwidth]{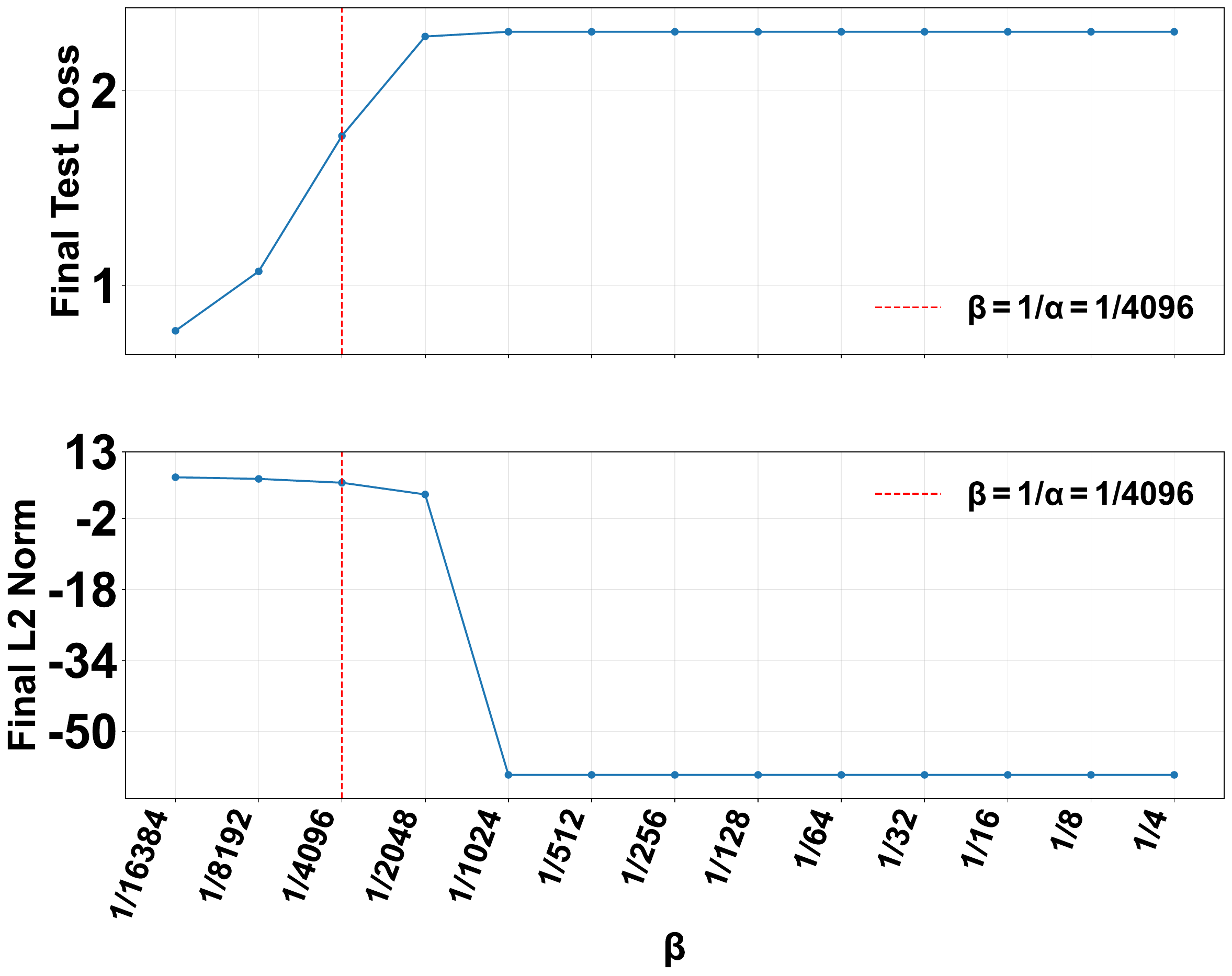} &
\includegraphics[width=0.46\textwidth]{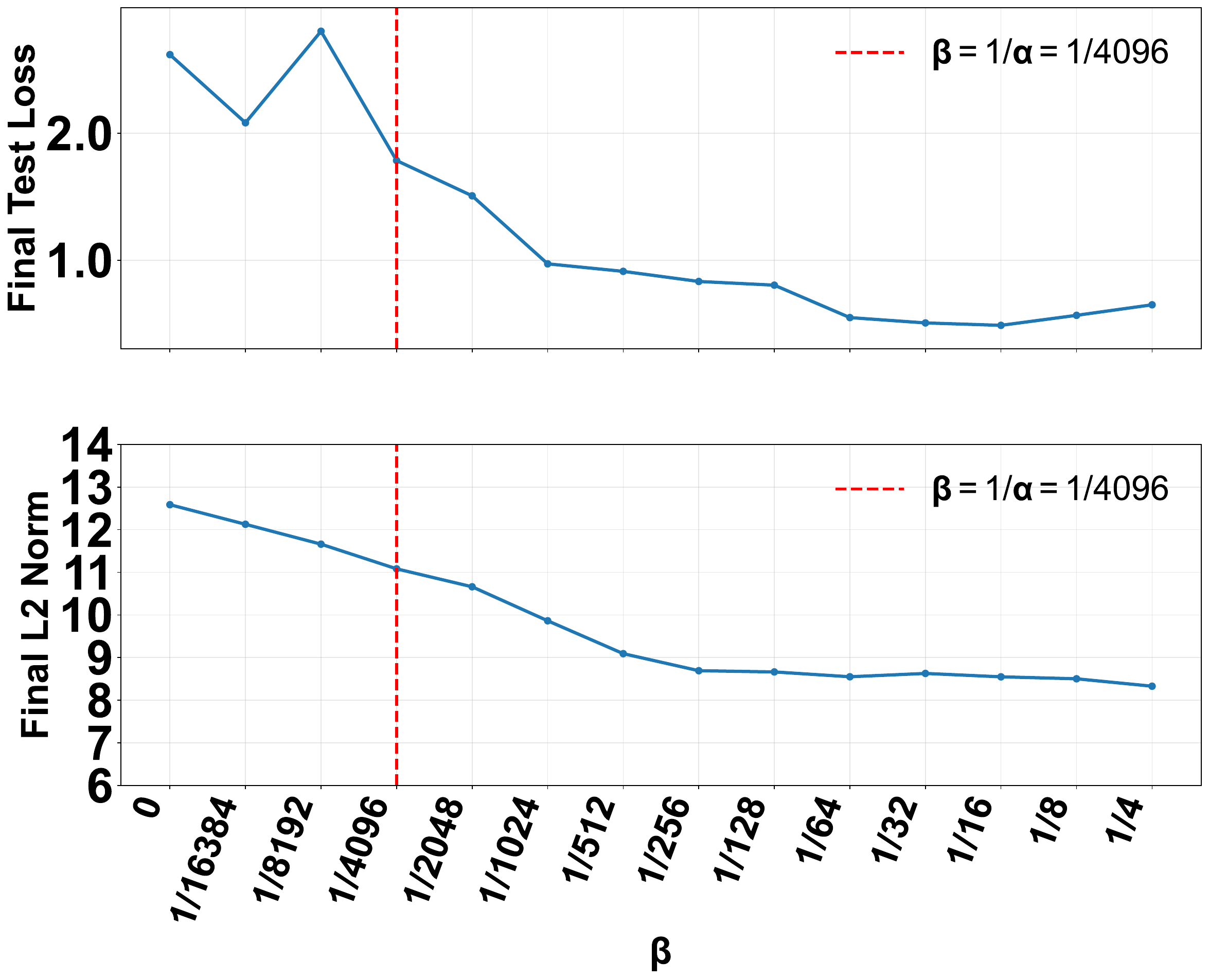} \\[-0.3em]

\includegraphics[width=0.46\textwidth]{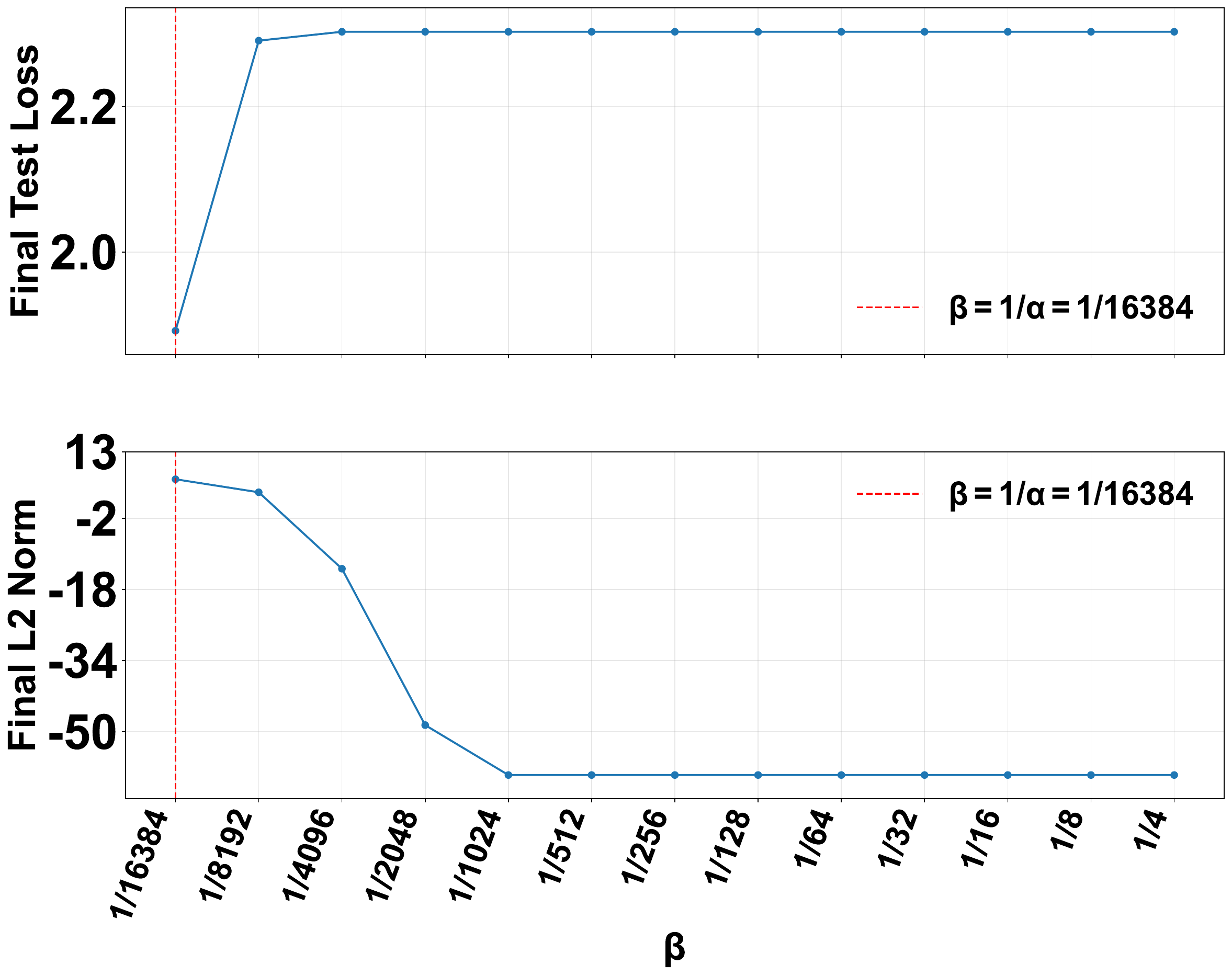} &
\includegraphics[width=0.46\textwidth]{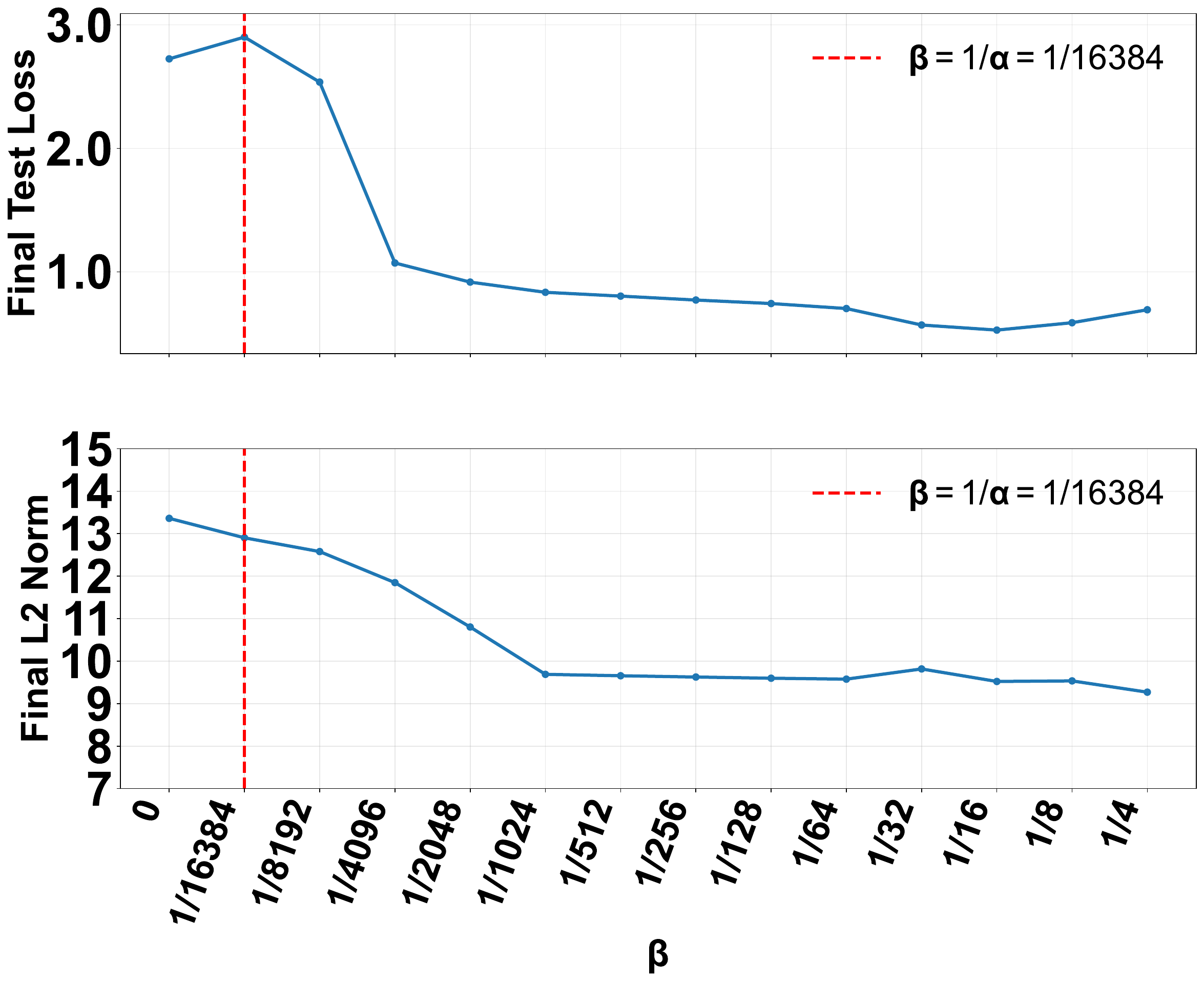}
\end{tabular}

\caption{Two-layer ReLU networks initialized by $\tau_1=\tau_2=0.01$ ($b_1=b_2=0$), scaled by $1/\alpha=1/m~(a=1)$ (mean field regime) and trained for 100000 epochs on MNIST. Left column: SGD. Right column: AdamW. Rows show widths 4096 and 16384.}
\label{fig:MNIST_MF_page2}
\end{figure*}

\subsection{Computing Environment}
Experiments ran on NVIDIA A100-SXM4-80GB GPUs (CUDA 12.2) and AMD EPYC 7763 CPUs. Table~\ref{tab:runtime-env} provides detailed hardware and software specifications.

\begin{table}[t!]
\centering
\caption{Runtime hardware and software.}
\label{tab:runtime-env}
\begin{tabular}{ll}
\toprule
\multicolumn{2}{l}{\textbf{CPU}}\\
\quad Model name & AMD EPYC 7763 64-Core Processor\\
\quad \# CPU(s) & 128\\
\midrule
\multicolumn{2}{l}{\textbf{GPU}}\\
\quad Product Name & NVIDIA A100-SXM4-80GB\\
\quad CUDA Version & 12.2\\
\midrule
\multicolumn{2}{l}{\textbf{PyTorch}}\\
\quad Version & 2.7.1\\
\bottomrule
\end{tabular}
\end{table}

\end{document}